\numberwithin{equation}{section}
\theoremstyle{plain}
\newenvironment{subeqnarray}
{\begin{subequations}\begin{eqnarray}}%
{\end{eqnarray}\end{subequations}\hskip-4.0pt}
\def\fatnorm#1{|\kern-.2ex|\kern-.2ex| #1 |\kern-.2ex|\kern-.2ex|}
\newcommand{\twonorm}[1]{\left\lVert#1\right\rVert_2}
\newcommand{\shnorm}[1]{\lVert#1\rVert_2}
\newcommand{\stnorm}[1]{\lVert#1\rVert}
\newcommand{\shtwonorm}[1]{\lVert#1\rVert_2}
\newcommand{\fnorm}[1]{\left\lVert#1\right\rVert_{F}}
\newcommand{\norm}[1]{\left\lVert#1\right\rVert}
\newcommand{\abs}[1]{\left\lvert#1\right\rvert}
\newcommand{\Sp}{\mathbb{S}}
\newcommand{\T}{\mathcal{T}}
\newcommand{\V}{\mathcal{V}}
\newcommand{\N}{{\mathcal N}}
\newcommand{\cov}{\textsf{Cov}}
\newcommand{\SNR}{\ensuremath\textsf{S/N}}
\newcommand{\signal}{\ensuremath\textsf{S}}
\newcommand{\noise}{\ensuremath\textsf{N}}
\newcommand{\spin}{\textsf{span}}
\newcommand{\rn}{\rho_n}
\newcommand{\half}{\ensuremath{\frac{1}{2}}}
\newcommand{\inv}[1]{\frac{1}{#1}}
\newcommand{\ip}[1]{\;\langle{\,#1\,}\rangle\;}
\newcommand{\size}[1]{\ensuremath{\left|#1\right|}}
\newcommand{\onenorm}[1]{\ensuremath{\left\|#1\right\|_1}}
\newcommand{\maxnorm}[1]{\ensuremath{\left\|#1\right\|_{\max}}}
\newcommand{\expct}[1]{\ensuremath{\mathbb E}\left(#1\right)}
\newcommand{\silent}[1]{}
\newcommand{\mvec}[1]{\rm{vec}\left\{\,#1\,\right\}}
\newcommand{\ve}{\varepsilon}
\def\qed{\hskip1pt $\;\;\scriptstyle\Box$}
\def\Ber{\mathop{\text{Bernoulli}\kern.2ex}}
\def\corr{\mathop{\text{corr}\kern.2ex}}
\def\prec{\mathop{\text{precision}\kern.2ex}}
\def\recall{\mathop{\text{recall}\kern.2ex}}
\def\cov{\mathop{\text{Cov}\kern.2ex}}
\def\mnorm{\mathcal{N}_{f,m}\kern.2ex}
\def\var{\mathop{\text{Var}\kern.2ex}}
\def\ess{\mathop{\text{ess}\kern.2ex}}
\def\dom{\mathop{\text{dom}\kern.2ex}}
\def\lin{\mathop{\text{lin}\kern.2ex}}
\newcommand{\func}[1]{\ensuremath{\mathrm{#1}}}
\newcommand{\diag}{\func{diag}}
\newcommand{\RE}{\textnormal{\textsf{RE}}}
\let\hat\widehat
\let\tilde\widetilde
\newcommand{\tr}{{\rm tr}}
\def\E{{\mathbb E}}
\newcommand{\prob}[1]{\ensuremath{\mathbb P}\left(#1\right)}
\newcommand{\beq}{\begin{equation}}
\newcommand{\eeq}{\end{equation}}
\newcommand{\ben}{\begin{eqnarray}}
\newcommand{\een}{\end{eqnarray}}
\newcommand{\bnum}{\begin{enumerate}}
\newcommand{\enum}{\end{enumerate}}
\newcommand{\bit}{\begin{itemize}}
\newcommand{\eit}{\end{itemize}}
\newcommand{\bens}{\begin{eqnarray*}}
\newcommand{\eens}{\end{eqnarray*}}
\newcommand{\R}{\mathbb{R}}
\newcommand{\Sc}{\ensuremath{S^c}}
\newcommand{\ora}{{\rm oracle}}
\newcommand{\SMR}{\ensuremath\textsf{S/M}}
\newcommand{\bignoise}{\ensuremath\textsf{M}}
\newcommand{\nl}{\nu_{\ell}}
\newcommand{\bal}{\bar\alpha_{\ell}}
\newcommand{\al}{\alpha_{\ell}}
\newcommand{\au}{\alpha_{u}}
\newcommand{\static}{\text{\rm stat}}
\newcommand{\e}{\epsilon}
\newcommand{\bare}{\bar{\epsilon}_{\static}}
\newcommand{\vp}{\varpi}
\newcommand{\grad}{\nabla}
\newcommand{\pen}{\ensuremath{\rho_{\lambda}}}
\newcommand{\loss}{\ensuremath\mathcal{L}_n}
\newcommand{\PaulBhalf}{\ensuremath{\hat\tau_B^{1/2}}}
\newcommand{\z}{\ensuremath{\varrho}}
\newcommand{\onem}{\textstyle \frac{1}{m}}
\newcommand{\onen}{\textstyle \frac{1}{n}}
\newcommand{\Ball}{{B}}
\newcommand{\B}{\mathcal{B}}
\newcommand{\A}{\mathcal{A}}
\newcommand{\AR}{{\operatorname{AR(1)}}}
\newcommand{\U}{\Upsilon}
\newcommand{\up}{\upsilon}
\newcommand{\ind}{\mathbb{I}}
\newtheorem{theorem}{Theorem}
\newtheorem{assumption}{Assumption}[section]
\newcommand{\Cone}{{\rm Cone}}
\newcommand{\cone}{{\rm Cone}}
\def\supp{\mathop{\text{\rm supp}\kern.2ex}}
\def\conv{\mathop{\text{\rm conv}\kern.2ex}}
\def\span{\mathop{\text{\rm span}\kern.2ex}}
\def\argmin{\mathop{\text{arg\,min}\kern.2ex}}
\def\absconv{\mathop{\text{\rm absconv}\kern.2ex}}
\def\half{\frac{1}{2}}
\let\hat\widehat
\def\W{\Cone}
\newtheorem{lemma}[theorem]{Lemma}
\newtheorem{definition}[assumption]{Definition}
\newtheorem{remark}[assumption]{Remark}
\newtheorem{corollary}[theorem]{Corollary}
\def\qed{\hskip1pt $\;\;\scriptstyle\Box$}
\newenvironment{proofof}[1]{\hspace*{20pt}{\it Proof}{ of #1}.\hskip10pt}{\qed\vskip5pt}
\newenvironment{proofof2}{\hskip10pt}{\qed\vskip5pt}
\begin{document}

\begin{frontmatter}
\title{Errors-in-variables models with dependent measurements}
\runtitle{Errors-in-variables}

\begin{aug}
\author{\fnms{Mark} \snm{Rudelson}\thanksref{t2}\ead[label=e1]{rudelson@umich.edu}}
\and
\author{\fnms{Shuheng} \snm{Zhou}\thanksref{t3}\ead[label=e2]{shuhengz@umich.edu}}

\address{Department of Mathematics, Department of Statistics\\
University of Michigan, Ann Arbor, MI 48109 \\
\printead{e1,e2}}

\thankstext{t2}{
Mark Rudelson is partially supported by NSF grant DMS 1161372 and
USAF Grant FA9550-14-1-0009. 
}
\thankstext{t3}{Shuheng Zhou is supported in part by NSF under Grant DMS-1316731 and 
Elizabeth Caroline Crosby Research Award from the Advance Program at the University of Michigan.}
\runauthor{Rudelson and Zhou}

\affiliation{University of Michigan}

\end{aug}

\begin{abstract}

Suppose that we observe $y \in \R^n$ and $X \in \R^{n \times m}$ in the
following errors-in-variables model: \begin{eqnarray*} y & =  & X_0 \beta^* +
\e \\ X & = & X_0 + W \end{eqnarray*} where $X_0$ is an $n \times m$ design
matrix with independent subgaussian row vectors, $\e \in \R^n$ is a noise
vector and $W$ is a mean zero $n \times m$ random noise matrix with independent
subgaussian column vectors, independent of $X_0$ and $\e$.  This model is
significantly different from those analyzed in the literature in the sense that
we allow the measurement error for each covariate to be a dependent vector
across its $n$ observations. Such error structures appear in the science
literature when modeling the trial-to-trial fluctuations in response strength
shared across a set of neurons.

Under sparsity and restrictive eigenvalue type of conditions, we show that one
is able to recover a sparse vector $\beta^* \in \R^m$ from the model given a
single observation matrix $X$ and the response vector $y$.  We establish
consistency in estimating $\beta^*$ and obtain the rates of convergence in the
$\ell_q$ norm, where $q = 1, 2$ for the Lasso-type estimator, and for $q \in
[1, 2]$ for a Dantzig-type Conic programming estimator.  We show error bounds
which approach that of the regular Lasso and the Dantzig selector in case the
errors in $W$ are tending to 0.  We analyze the convergence rates of the
gradient descent methods for solving the nonconvex programs and show that the
composite gradient descent algorithm is guaranteed to converge at a geometric
rate to a neighborhood of the global minimizers: the size of the neighborhood
is bounded by the statistical error in the $\ell_2$ norm. Our analysis reveals
interesting connections between computational and statistical efficiency and the
concentration of measure phenomenon in random matrix theory. We provide
simulation evidence illuminating the theoretical predictions.

\end{abstract}

\begin{keyword}[class=MSC]
\kwd[Primary ]{60K35}
\kwd{60K35}
\kwd[; secondary ]{60K35}
\end{keyword}

\begin{keyword}
\kwd{Errors-in-variable models}
\kwd{measurement error data}
\kwd{subgaussian concentration}
\kwd{matrix variate distributions}
\kwd{nonconvexity}
\end{keyword}
\tableofcontents
\end{frontmatter}

\section{Introduction}
The matrix variate normal model has a long history in psychology and social sciences.
In recent years,  it is becoming increasingly popular in biology and genomics,
neuroscience, econometric theory, image and signal processing, wireless
communication, and machine learning; 
see for example~\cite{Dawid81,GV92,Dut99,WJS08,BCW08,Yu09,Efr09,AT10,KLLZ13}
and references therein. 
We call the random matrix $X$, which contains $n$ rows and $m$
columns a single data matrix, or one instance from the matrix variate
normal distribution. We say that an $n \times m$ random matrix $X$ follows a matrix normal distribution with a separable covariance matrix
$\Sigma_X = A \otimes B$ and mean $M \in \R^{n \times m}$, which we write 
$X_{n \times m} \sim \N_{n,m}(M, A_{m \times m} \otimes B_{n \times n}).$
This is equivalent to say $\mvec{X}$ follows a multivariate normal distribution with mean
$\mvec{M}$ and covariance $\Sigma_X = A \otimes B$. 
Here, $\mvec{X}$ is formed by stacking the columns of $X$ into a vector in $\R^{mn}$.
Intuitively, $A$ describes the covariance between columns of $X$, while
$B$  describes the covariance between rows of $X$. See~\cite{Dawid81,GV92} for more characterization and examples.

In this paper, we introduce the related sum of Kronecker product models to encode
the covariance structure of a matrix variate distribution. The
proposed models and methods incorporate ideas from recent advances in
graphical models, high-dimensional regression model with observation
errors, and matrix decomposition. 
Let $A_{m \times m}, B_{n \times n}$ be symmetric positive definite 
covariance matrices. Denote the Kronecker sum of $A =(a_{ij})$ 
and $B = (b_{ij})$ by
\bens
\Sigma & = & A \oplus B := A \otimes I_n + I_m \otimes B \\
& = &
\left[
\begin{array}{cccc} 
a_{11}I_n + B & a_{12} I_n & \ldots & a_{1m} I_n \\
a_{21} I_n & a_{22} I_n +  B & \ldots & a_{2m} I_n \\
\ldots &  & & \\
a_{m1} I_n & a_{m2} I_n & \ldots & a_{mm} I_n + B
\end{array} \right]_{(m n) \times (m n)}
\eens
where $I_n$ is an $n \times n$ identity matrix.
This covariance model arises naturally from the context
of errors-in-variables regression model defined as follows.

Suppose that we observe $y \in \R^n$ and $X \in \R^{n \times m}$ in the following model:
\begin{subeqnarray}
\label{eq::oby}
y & =  & X_0 \beta^* + \e \\
\label{eq::obX}
X & = & X_0 + W
\end{subeqnarray}
where $X_0$ is an $n \times m$ design matrix with independent row vectors, $\e \in \R^n$ is a noise vector and 
$W$ is a mean zero $n \times m$ random noise matrix, independent of $X_0$ and $\e$, with independent column
vectors $\omega^1, \ldots, \omega^m$.

In particular, we are interested in the additive model of $X = X_0 + W$ such that
\ben
\label{eq::dataplus}
\mvec{X} \sim \N(0, \Sigma) \; \; \text{ where } \; \; 
\Sigma = A \oplus B := A \otimes I_n + I_m \otimes B
\een
where  we use one covariance component $A \otimes
I_n$ to describe the covariance of matrix $X_0 \in \R^{n \times m}$,
which is considered as the {\it signal} matrix, and the other
component $I_m \otimes B$ to describe that of the {\it noise matrix }
$W \in \R^{n \times m}$, where $\E \omega^j \otimes \omega^j = B$ for all $j$,
where $\omega^j$ denotes the $j^{th}$ column vector of $W$.
Our focus is on deriving the statistical properties of two estimators for estimating $\beta^*$ in
\eqref{eq::oby} and~\eqref{eq::obX} despite the presence of the
additive error $W$ in the observation matrix $X$.
We will show that  our theory and analysis works with a model much more general than
that in \eqref{eq::dataplus}, which we will define in
Section~\ref{sec::method}.

Before we go on to define our estimators, we now use an example to motiviate~\eqref{eq::dataplus} and its
subgaussian generalization in~\eqref{eq::subgdata}.
Suppose that there are $n$ patients in a particular study, for which
 we use $X_0$ to model the ``systolic blood pressure'' and $W$ to model
the seasonal effects. In this case, $X$ models the fact that 
among the $n$ patients we measure, each patient has its own row vector
of observed set of blood pressures across time, and each column vector
in $W$ models the seasonal variation on top of the true signal at a
particular day/time. Thus we consider $X$ as measurement of $X_0$ with
$W$ being the observation error. 
That is, we model the seasonal effects on blood pressures across a
set of patients in a particular study with a vector of dependent
entries. Thus $W$ is a matrix which consists of repeated independent 
sampling of spatially dependent vectors, if we regard the individuals 
as having spatial coordinates, for example, through their geographic locations.
We will come back to discuss this example in Section~\ref{sec::discuss}.

\subsection{The model and the method} 
\label{sec::method}
We first need to define an independent isotropic vector with {\em
  subgaussian} 
marginals as in Definition~\ref{def:psi2-vector}.
For a vector $y = (y_1, \ldots, y_p)$ in $\R^p$, denote by 
$\twonorm{y} = \sqrt{\sum_{j} y_j^2}$ the length of $y$.
\begin{definition}
\label{def:psi2-vector} 
Let $Y$ be a random vector in $\R^p$
\bnum
\item
$Y$ is called isotropic
if for every $y \in \R^p$, $\expct{\abs{\ip{Y, y}}^2} = \twonorm{y}^2$.
\item
$Y$ is $\psi_2$ with a constant $\alpha$ if for every $y \in \R^p$,
\beq
\norm{\ip{Y, y}}_{\psi_2} := \;
\inf \{t: \expct{\exp(\ip{Y,y}^2/t^2)} \leq 2 \}
\; \leq \; \alpha \twonorm{y}.
\eeq
\enum
The  $\psi_2$ condition on a scalar random variable $V$ is equivalent to
the subgaussian tail decay of $V$, which means
$\prob{|V| >t} \leq 2 \exp(-t^2/c^2), \; \; \text{for all} \; \; t>0.$
\end{definition}
Throughout this paper, we use $\psi_2$ vector, a vector with subgaussian marginals
and subgaussian vector interchangeably.

\noindent{\bf The model.}
Let $Z$ be an $n \times m$ random matrix with independent entries $Z_{ij}$ satisfying
$\E Z_{ij} = 0$, $1 = \E Z_{ij}^2 \le \norm{Z_{ij}}_{\psi_2} \leq K$.  
Let $Z_1, Z_2$ be independent copies of $Z$.
Let 
\ben
\label{eq::subgdata}
X = X_0 + W
\een
such that $X_0 = Z_1 A^{1/2}$ is the design matrix with independent subgaussian
row vectors, and  $W =B^{1/2} Z_2$ is a random noise matrix with independent subgaussian
column vectors. 

Assumption (A1) allows the covariance model in~\eqref{eq::dataplus}
and its subgaussian variant in~\eqref{eq::subgdata} to be identifiable. 
\bnum
\item[(A1)]
We assume $\tr(A) = m$ is a known parameter, where $\tr(A)$ denotes
the trace of matrix $A$.
\enum
In the Kronecker sum model, we could assume we know 
$\tr(B)$, in order not to assume knowing $\tr(A)$. 
Assuming one or the other is known is unavoidable as the covariance model is not identifiable otherwise. 
Moreover, by knowing $\tr(A)$, we can construct an estimator for
$\tr(B)$:
\ben
\label{eq::trBest}
\hat\tr(B) &= &
\onem \big(\fnorm{X}^2 - n \tr(A)\big)_{+} \; \; \;
\text{ and define } \; \; 
\hat\tau_B  := \onen \hat\tr(B) \ge 0 
\een
where $(a)_{+} = a \vee 0$ and 
$\fnorm{X}^2 := \sum_i \sum_j X_{ij}^2$. 
We first introduce the corrected Lasso estimator, adapted from those as
considered in~\cite{LW12}.

Suppose that $\hat\tr(B)$ is an estimator for $\tr(B)$; for example, 
as constructed in~\eqref{eq::trBest}. Let 
\ben
\label{eq::hatGamma}
\hat\Gamma & = & 
\inv{n} X^T X - \inv{n} \hat\tr(B)  I_{m}\;\;
\; \text{ and } \;
\hat\gamma \; = \; \onen X^T y.
\een
For a chosen penalization parameter $\lambda \geq 0$, and parameters
$b_0$ and $d$, we consider the following  regularized estimation with the $\ell_1$-norm penalty,
\begin{eqnarray}
\label{eq::origin} \; \; 
\hat \beta & = & \argmin_{\beta: \norm{\beta}_1 \le b_0 \sqrt{d}} \frac{1}{2} \beta^T \hat\Gamma \beta 
- \ip{\hat\gamma, \beta} + \lambda \|\beta\|_1, \; \;
\end{eqnarray}
which is a variation of the Lasso \citep{Tib96} or the Basis
Pursuit~\citep{Chen:Dono:Saun:1998} estimator.
Although in our analysis, we set $b_0 \ge \twonorm{\beta^*}$ and
$d = \size{\supp(\beta^*)} :=  \size{\{j: \beta_j^* \not=0\}}$ for
simplicity, in practice, both $b_0$
and $d$ are understood to be parameters chosen to provide an upper bound on the $\ell_2$ norm and the sparsity of  the true $\beta^*$.

For a vector $\beta \in \R^m$, denote by $\norm{\beta}_{\infty} :=
\max_j \abs{\beta_j}$.
Recently,~\cite{BRT14} discussed the following conic programming
compensated matrix uncertainly (MU) selector
, which is a variant of the Dantzig selector~\citep{CT07,RT10,RT13}. Adapted to our setting, it is
defined as follows. Let $\lambda, \mu, \tau >0$, 
\begin{eqnarray}
\label{eq::Conic} \; \; 
\hat \beta & = & \arg\min\big\{\norm{\beta}_1 +\lambda t\; :\; (\beta, t) \in \U\big\} 
\text{ where } \; \\
\nonumber
\U &= & \left\{(\beta, t) \; : \; \beta \in \R^m, \norm{\hat\gamma - \hat\Gamma \beta}_{\infty}
\le \mu t + \omega, \twonorm{\beta} \le t\right\}
\end{eqnarray}
where $\hat\gamma$ and $\hat\Gamma$ are as defined in \eqref{eq::hatGamma} 
with $\mu \sim \sqrt{\frac{\log m}{n}}$, $\omega \sim \sqrt{\frac{\log m}{n}}$. We refer to this estimator as the 
Conic programming estimator from now on.

\subsection{Gradient descent algorithms}

In order to obtain fast, approximate solutions
to the optimization goal as in~\eqref{eq::origin2}, we adopt the computational framework
of~\cite{ANW12,LW12}, namely, the composite gradient descent method
due to Nesterov~\cite{Nesterov07} to analyze our computational and statistical errors in
an integrated manner. First we denote the population and empirical loss functions by
\ben
\label{eq::poploss}
\mathcal{L}(\beta) = \half \beta^T \Sigma_x \beta - \beta^{*T} \Sigma_x \beta 
\; \quad \text{ and } \quad 
\loss(\beta) = \half \beta^T \hat\Gamma \beta - \hat\gamma^{T} \beta
\een
respectively.
We consider regularizers that are separable across all coordinates and
write 
\bens
\pen(\beta) = \sum_{i=1}^m \pen(\beta_i).
\eens
Throughout this paper, we denote by 
$$\phi(\beta) =\half \beta^T \hat\Gamma \beta - \hat\gamma^{T}
\beta + \rho_{\lambda}(\beta).$$
From the formulation~\eqref{eq::origin}, the corrected linear
regression estimator is given by minimizing the penalized loss
function $\phi(\beta)$ subject to the constraint that $g(\beta) \le R$:
\ben
\label{eq::origin2}
\hat\beta \in \argmin_{\beta \in \R^m, g(\beta) \le R} 
\left\{\half \beta^T \hat\Gamma \beta - \hat\gamma^{T} \beta + \rho_{\lambda}(\beta)\right\}
\een
where $g(\beta)$ is a convex function, which is allowed to be
identical to $\onenorm{\beta}$ and $R$ is a second tuning parameter that is
chosen to confine the solution $\hat\beta$ within the $\ell_1$ ball of
radius $R$, while at the same time ensuring that $\beta^*$ is a
feasible solution. 
The gradient descent method generates a sequence
$\{\beta^t\}_{t=0}^{\infty}$ of iterates by first initializing to some
parameter $\beta^0 \in \R^m$, and then for $t=0, 1, 2, \ldots$,
applying the recursive updates:
\ben
\label{eq::optlinear}
\beta^{t+1} = \argmin_{\beta \in \R^m, g(\beta) \le R} 
\left\{\loss(\beta^t) + \ip{\grad \loss(\beta^t),
  \beta-\beta^t} + \frac{\zeta}{2} \twonorm{\beta -\beta^t}^2 + \rho_{\lambda}(\beta) 
\right\}
\een
where $\zeta$ is the step size parameter.

More generally, we consider loss function $\loss: \R^m \to \R$ and $\pen$ which are possibly
nonconvex and consider the regularized M-estimator of the form
\ben
\label{eq::global}
\hat\beta \in \argmin_{\beta \in \R^m, g(\beta) \le R} 
\left\{\loss(\beta; X) + \rho_{\lambda}(\beta)\right\}
\een
where $\pen: \R^m \to \R$ is a regularizer depending on a tuning
parameter $\lambda > 0$.
Because of this potential nonconvexity, we also include a side constraint in the form of $g(\beta) \le R$, where
\ben
\label{eq::gconv}
g(\beta) :=\inv{\lambda}\left\{\rho_{\lambda}(\beta)   +\frac{ \mu}{2} \twonorm{\beta}^2 \right\}
\een
so that this choice of $g$ is convex for properly chosen parameter
$\mu \ge 0$ for a class of weakly convex penalty functions $\rho$~\cite{Vial82};
See Assumption~1 in~\cite{LW15} where properties of $g$ and
$\rho_{\lambda}$ are stated in terms of the univariate function $\rho_{\lambda}: \R
\to \R$ and the parameter $\mu \ge 0$. 
While our results hold for the general nonconvex penalty
$\rho_{\lambda}$ that is weakly convex in the sense that \eqref{eq::gconv} holds for
some parameter $\mu > 0$, we focus our discussion to the choice of $\rho_{\lambda}(\beta) =
\lambda \onenorm{\beta}$ and $\mu =0$ in the present paper.

\subsection{Our contributions}
We provide a unified analysis of the rates of convergence for both the
corrected Lasso estimator~\eqref{eq::origin} and the Conic programming estimator \eqref{eq::Conic},
which is a Dantzig selector-type, although under slightly different conditions. 
We will show the rates of convergence in the $\ell_q$ norm for $q =1, 2$
for estimating a sparse vector $\beta^* \in \R^m$ in the
model~\eqref{eq::oby} and~\eqref{eq::obX} using the corrected Lasso
estimator~\eqref{eq::origin} 
in Theorems~\ref{thm::lasso} and~\ref{thm::lassora},  
and the Conic programming estimator~\eqref{eq::Conic} in
Theorems~\ref{thm::DS} and~\ref{thm::DSoracle} 
for $1\le q\le 2$.
We also show bounds on the predictive errors for the Conic programming estimator.
The bounds we derive in Theorems~\ref{thm::lasso} and~\ref{thm::DS}  focus on cases
where the errors in $W$ are not too small in their magnitudes  in the
sense that $\tau_B := \tr(B)/n$ 
is bounded from below. For the extreme case when $\tau_B$
approaches $0$, one hopes to recover bounds close to those for the
regular Lasso or the Dantzig selector since the effect of the noise in matrix $W$ on the procedure 
becomes negligible. We show in Theorems~\ref{thm::lassora} 
and~\ref{thm::DSoracle} that this is indeed the case. These results are new to the best of our knowledge.

Let $Z_1, Z_2$ be independent
subgaussian random matrices with independent entries
(cf.~\eqref{eq::subgdata}). 
In Theorems~\ref{thm::lasso} to~\ref{thm::DSoracle}, 
we consider the regression model in \eqref{eq::oby} and \eqref{eq::obX} with subgaussian random design, 
where $X_0 = Z_1 A^{1/2}$ is a subgaussian random matrix with independent row
vectors, and $W =B^{1/2} Z_2$ is an $n \times m$ random noise matrix with
independent column vectors, This model is significantly different from
those analyzed in the literature. For example, unlike the present work, the authors in~\cite{LW12} apply 
Theorem~\ref{thm::main} which states a general result on statistical
convergence properties of the estimator~\eqref{eq::origin} to cases
where $W$ is composed of independent subgaussian row vectors, 
when the row vectors of $X_0$ are either independent or follow a
Gaussian vector auto-regressive model.  See also~\cite{RT10,RT13,BRT14} for the corresponding results on
the compensated MU selectors, variations on the Conic programming estimator~\eqref{eq::Conic}. 

The second key difference between our framework and
the existing work is that we assume that only one observation matrix $X$ with the single measurement
error matrix $W$ is available. Assuming (A1) allows us to estimate $\E W^T W$ as required
in the estimation procedure \eqref{eq::hatGamma} directly, given the
knowledge that $W$ is composed of independent column vectors.  
In contrast, existing work needs to assume that the covariance matrix $\Sigma_W := \onen \E W^T W$ of the independent row vectors
of $W$ or its functionals are either known a priori, or can be 
estimated from a dataset independent of $X$, or from replicated $X$
measuring the same $X_0$; see for example~\cite{RT10,RT13,BRT14,LW12,
  carr:rupp:2006}. Although the model we consider is different from those in the
literature, the identifiability issue, which arises from the fact that
we observe the data under an additive error model, is common.
Such repeated measurements are not always available or costly to 
obtain in practice~\cite{carr:rupp:2006}. We will explore such
tradeoffs in future work.

A noticeable exception is the work of~\cite{CC13}, which deals with
the scenario when the noise covariance is not assumed to be known. 
We now elaborate on their result, which is a variant of the orthogonal
matching pursuit (OMP) algorithm~\citep{Tropp:04,TG07}.
Their support recovery result, that is, recovering the support set of
$\beta^*$, applies only to the case when both signal matrix and the
measurement error matrix have isotropic subgaussian row vectors. In
other words, they assume independence among both rows and columns in $X$ 
($X_0$ and $W$). Moreover, their algorithm requires the knowledge
of the sparsity parameter $d$, which is the number of non-zero entries
in $\beta^*$, as well as a $\beta_{\min}$ condition: $\min_{j \in
  \supp{\beta^*}}  \abs{\beta^*_j} = \Omega\left(\sqrt{\frac{\log
      m}{n}}(\twonorm{\beta^*}+1)\right)$. 
Under these conditions, they recover essentially the same $\ell_2$-error bounds
as in the current work, and~\cite{LW12}, where the covariance $\Sigma_W$
is assumed to be known.

Finally, we present in Theorems~\ref{thm::opt-linear} and~\ref{thm::corrlinear}
the optimization error for the gradient descent algorithms in
solving~\eqref{eq::global} and more specifically~\eqref{eq::origin}.
Let $\hat\beta$ be a global optimizer of~\eqref{eq::global}.
Let $\lambda_{\max}(A)$ and $\lambda_{\min}(A)$ be the
largest and smallest eigenvalues, and  $\kappa(A)$ be the condition
number for matrix $A$. Let $0 < \kappa< 1$ be a contraction factor to
be defined in \eqref{eq::definekappa}.
Similar to the work of~\cite{ANW12,LW12}, we show that the geometric convergence is
not guaranteed to an arbitrary precision, but only to an accuracy
related to statistical precision of the problem,  measured by the
$\ell_2$ error: $\shtwonorm{\hat\beta - \beta^*}^2 =:\ve_{\static}^2$ between the global
optimizer $\hat\beta$ and the true parameter $\beta^*$.

More precisely, our analysis guarantees geometric convergence of the sequence 
 $\{\beta^t\}_{t=0}^{\infty}$ to a parameter $\beta^*$
up to a neighborhood of radius defined through the statistical error bound $\ve_{\static}^2$
\bens
\delta^2 \asymp \frac{\ve_{\static}^2}{1-\kappa} \frac{d \log  m}{n},
\eens
where $\kappa$ is a contraction coefficient to be
defined~\eqref{eq::definekappa},  so that for all $t \ge T^*(\delta)$
as in~\eqref{eq::Tstar}, $\al \asymp \lambda_{\min}(A)$ and $\au \asymp \lambda_{\max}(A)$,
\bens
\twonorm{\beta^t - \hat\beta}^2 \le 
\frac{4 \delta^2 }{\al} + \frac{\al \ve_{\static}^2}{4}
+\frac{4 \delta^4}{b_0^2 \al \lambda_{\max}(A)} = O(\ve_{\static}^2)
\eens
for $\lambda, \zeta \ge \au$ appropriately chosen, $R =\tilde{O}(\sqrt{\frac{n}{\log
    m}})$ and $n =\tilde\Omega\left(d \log m\right)$,  where the
$\tilde{O}(\cdot)$ and $\tilde\Omega(\cdot)$ symbols hide spectral parameters regarding $A$ and $B$.
To quantify such results, we first need to introduce some conditions in Section~\ref{sec::conditions}.
See Theorem~\ref{thm::opt-linear}  and Corollary~\ref{coro::deer} for the precise conditions and statements.

\subsection{Discussion}
\label{sec::discuss}
The theory on matrix variate normal data show that having replicates will allow one to estimate more
complicated graphical structures and achieve faster rates of
convergence under less restrictive assumptions~\cite{Zhou14a}.
Our consistency results in the present work deal with only a single
random matrix following the model \eqref{eq::subgdata}, assuming that
$\tr(A)$ is known. With replicates, this assumption can be lifted off immediately.
Assume there exists a replicate
\ben
\label{eq::error}
 \tilde{X} = X_0 + \tilde{W},
\een
then we can use  $\tilde{X} -X =  \tilde{W} - W$
to estimate $B$ using existing methods.
The rationale for considering such an option is one may have a
repeated measurement of $X_0$ for which the errors $W$ and $\tilde{W}$
follow the same error distribution.
Such external data or knowledge of the noise distribution is needed
in order to do inference under such additive measurement error
model~\cite{carr:rupp:2006}.

The second key modeling question is: would each row vector in $W$ for a
particular patient across all time points be a correlated normal or
subgaussian vector as well? It is our conjecture that combining the
newly developed techniques, namely, the concentration of measure
inequalities we have derived in the current framework with techniques
from existing work~\cite{Zhou14a}, we can handle the case when $W$ follows a matrix
normal distribution with a separable covariance matrix  $\Sigma_W = C
\otimes B$, where $C$ is an $m \times m$ positive semi-definite
covariance matrix.  Moreover, for this type of ''seasonal effects'' as 
the measurement errors, the time varying covariance model 
would make more sense to model $W$, which we elaborate in the second example.

In neuroscience applications, population encoding 
refers to the information contained in the combined activity of
multiple neurons~\cite{KassVB05}.
The relationship between population encoding and correlations is complicated and is an area of 
active investigation, see for example~\cite{RC14a,CK11}.
It becomes more often that repeated measurements (trials) simultaneously recorded
across a set of neurons and over an ensemble of stimuli are
available. In this context, one can use a random matrix
$X_0 \sim \N_{n,m}(\mu, A \otimes B)$  which follows a matrix-variate normal
distribution, or its subgaussian correspondent, 
to model the ensemble of mean response variables, e.g.,
the membrane potential, corresponding to the cross-trial average 
over a set of experiments. Here we use $A$ to model the task correlations and $B$ 
to model the baseline correlation structure among all pairs of neurons at the {\it signal } level.
It has been observed that the onset of stimulus and task events not only
change the cross-trial mean response in $\mu$, but also alter the structure
and correlation of the {\it noise } for a set of neurons,
which correspond to the trial-to-trial fluctuations of the neuron responses.
We use $W$ to model such task-specific trial-to-trial fluctuations of
a set of neurons  recorded over the time-course of a variety of tasks.
Models as in~\eqref{eq::oby} and~\eqref{eq::obX} are useful in predicting the response of set of 
neurons based on the current and past mean responses of all neurons.
Moreover, we could incorporate non-i.i.d. non-Gaussian $W = [w_1,
\ldots, w_m]$ with $w_t = B^{1/2}(t) z(t)$, where $z(1), \ldots, z(m)$ 
are independent isotropic subgaussian random vectors and $B(t) \succ 0$ for all $t$,
to model the time-varying correlated noise as observed in the
trial-to-trial fluctuations. 
It is possible to combine the techniques developed in the present
paper with those in~\cite{ZLW08,Zhou14a} to develop estimators for
$A$, $B$ and the time varying $B(t)$, 
which is itself an interesting topic, however, beyond the scope of the current work.

In summary, oblivion in $\Sigma_W$ and a general dependency
condition in the data matrix $X$ are not simultaneously allowed in
existing work. In contrast, while we assume that $X_0$ is composed of
independent subgaussian row vectors, we allow rows of $W$ to be
dependent, which brings dependency to the row vectors of the
observation matrix $X$.
In the current paper, we focus on the proof-of-the-concept on using
the Kronecker sum covariance and additive model to model two way
dependency in data matrix $X$, and derive bounds in statistical and
computational convergence for~\eqref{eq::origin} and~\eqref{eq::Conic}.
In some sense, we are considering a parsimonious model for fitting
observation data with two-way dependencies:  we use the signal
matrix to encode column-wise dependency among covariates in $X$, 
and error matrix $W$ to explain its row-wise dependency.
When replicates of $X$ or $W$ are available, we are able to study more
sophisticated models and inference problems, some of which are
described earlier in this section.

We leave the investigation of this more general modeling framework and
relevant statistical questions to future work. 
We refer to~\cite{carr:rupp:2006}
for an excellent survey of the classical as well as modern
developments in measurement error models. 
In future work, we will also extend the estimation methods to the settings where the covariates are
measured with multiplicative errors which are shown to be reducible to
the additive error problem as studied in the present work~\cite{RT13,LW12}. Moreover, we are interested in applying the
analysis and concentration of measure results developed in the current
paper and in our ongoing work to the more general contexts and settings where measurement
error models are introduced and investigated; see for
example~\cite{DLR77,CGG85,Stef:1985,HWang86,Full:1987,Stef:1990,CW91,CGL93,Cook:Stef:1994,Stef:Cook:1995,ICF99,LHC99,Str03,XY07,HM07,LL09,ML10,AT10,SSB14,SFT14,SFT14b} and references therein.

\noindent{\bf Notation.}
Let $e_1,\ldots, e_p$ be the canonical basis of $\R^p$.
For a set $J \subset \{1, \ldots, p\}$, denote
$E_J = \spin\{e_j: j \in J\}$.
For a matrix $A$, we use $\twonorm{A}$ to denote its operator norm.
For a set $V \subset \R^p$,
we let $\conv V$ denote the convex hull of $V$. For a finite set
$Y$, the cardinality is denoted by $|Y|$. Let $\Ball_1^p$, $\Ball_2^p$
and $S^{p-1}$ be the unit $\ell_1$ ball, the unit Euclidean ball and the
unit sphere respectively.
For a matrix $A = (a_{ij})_{1\le i,j\le m}$, let $\norm{A}_{\max} =
\max_{i,j} |a_{ij}|$ denote  the entry-wise max norm. 
Let $\norm{A}_{1} = \max_{j}\sum_{i=1}^m\abs{a_{ij}}$ 
denote the matrix $\ell_1$ norm.
The Frobenius norm is given by $\norm{A}^2_F = \sum_i\sum_j a_{ij}^2$. 
Let $|A|$ denote the determinant and ${\rm tr}(A)$ be the trace of $A$.
The operator or $\ell_2$ norm $\twonorm{A}^2$ is given by
$\lambda_{\max}(AA^T)$.  
For a matrix $A$, denote by $r(A)$ the effective rank
$\tr(A)/\twonorm{A}$. Let ${\fnorm{A}^2 }/{\twonorm{A}^2}$ denote the
stable rank for matrix $A$.
We write $\diag(A)$ for a diagonal matrix with the same diagonal as
$A$.  For a symmetric matrix $A$, let $\Upsilon(A) =
\left(\upsilon_{ij}\right)$ where $\upsilon_{ij}   = \ind(a_{ij}
\not=0)$, 
where $\mathbb{I}(\cdot)$ is the indicator function.
Let $I$ be the identity matrix.  For two numbers $a, b$, $a \wedge b
:= \min(a, b)$ and $a \vee b := \max(a, b)$. 
For a function $g: \R^m \to \R$, we write $\grad g$ to denote a
gradient or subgradient, if it exists. 
We write $a \asymp b$ if $ca \le b \le Ca$ for some positive absolute
constants $c,C$ which are independent of $n, m$ or sparsity
parameters. Let $(a)_+ := a \vee 0$.
We write $a =O(b)$ if $a \le Cb$ for some positive absolute
constants $C$ which are independent of $n, m$ or sparsity
parameters.
The absolute constants $C, C_1, c, c_1, \ldots$ may change line by line.

\section{Assumptions and preliminary results}
\label{sec::conditions}
We will now define some parameters related to the restricted and
sparse eigenvalue conditions that are needed to state our main
results. We also state a preliminary result in
Lemma~\ref{lemma::REcomp} regarding the relationships between the
two conditions in Definitions~\ref{def:memory} and~\ref{def::lowRE}.
\begin{definition}
\label{def:memory}
\textnormal{\bf (Restricted eigenvalue condition $\RE(s_0, k_0, A)$).}
Let $1 \leq s_0 \leq p$, and let $k_0$ be a positive number.
We say that a $q \times p$ matrix $A$ satisfies $\RE(s_0, k_0, A)$
 condition with parameter $K(s_0, k_0, A)$ if for any $\upsilon
 \not=0$,
\beq
\inv{K(s_0, k_0, A)} := 
\min_{\stackrel{J \subseteq \{1, \ldots, p\},}{|J| \leq s_0}}
\min_{\norm{\upsilon_{J^c}}_1 \leq k_0 \norm{\upsilon_{J}}_1}
\; \;  \frac{\norm{A \upsilon}_2}{\norm{\upsilon_{J}}_2} > 0.
\eeq
where $\upsilon_{J}$ represents the subvector of $\upsilon \in \R^p$
confined to a subset $J$ of $\{1, \ldots, p\}$.
\end{definition}
It is clear that when $s_0$ and $k_0$ become smaller,
this condition is easier to satisfy.
We also consider the following variation of the baseline $\RE$ condition.
\begin{definition}{\textnormal{(Lower-$\RE$ condition)~\cite{LW12}}}
\label{def::lowRE}
The matrix $\Gamma$ satisfies a Lower-$\RE$ condition with curvature
$\alpha >0$ and tolerance $\tau > 0$ if 
\bens
\theta^T \Gamma \theta \ge 
\alpha \twonorm{\theta}^2 - \tau \onenorm{\theta}^2 \; \;  \forall \theta \in \R^m.
\eens
\end{definition}
where $\onenorm{\theta} := \sum_{j} \abs{\theta_j}$.
As $\alpha$ becomes smaller, or as $\tau$ becomes larger, the
Lower-$\RE$ condition is easier to be satisfied.
\begin{lemma}
\label{lemma::REcomp}
Suppose that the Lower-$\RE$ condition holds for $\Gamma := A^T A$
with $\alpha, \tau > 0$ such that $\tau (1 + k_0)^2 s_0 \le \alpha/2$.
Then the $\RE(s_0, k_0, A)$ condition holds for $A$ with 
\bens
\inv{K(s_0, k_0, A)} \ge \sqrt{\frac{\alpha}{2}} >0.
\eens
Assume that $\RE((k_0+1)^2, k_0, A)$ holds.
Then the Lower-$\RE$ condition holds for  $\Gamma = A^T A$ with 
\bens
\alpha =\inv{(k_0 + 1)K^2(s_0, k_0, A)} > 0
\eens
where $s_0 = (k_0+1)^2$, and $\tau > 0$ which satisfies
\ben
\label{eq::lemmatauchoice}
\lambda_{\min}(\Gamma) \ge \alpha - \tau s_0/4.
\een
The condition above holds for any 
$\tau \ge \frac{4}{(k_0 + 1)^3K^2(s_0, k_0, A)} - \frac{4 \lambda_{\min}(\Gamma)}{(k_0+1)^2}$.
\end{lemma}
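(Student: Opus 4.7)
The plan is to prove the two implications separately. For the forward direction (Lower-$\RE$ implies $\RE(s_0, k_0, A)$), I would fix $\theta \neq 0$ together with a set $J$ of size at most $s_0$ satisfying $\onenorm{\theta_{J^c}} \le k_0 \onenorm{\theta_J}$, and apply Cauchy--Schwarz to obtain $\onenorm{\theta} \le (1+k_0)\onenorm{\theta_J} \le (1+k_0)\sqrt{s_0}\,\twonorm{\theta_J} \le (1+k_0)\sqrt{s_0}\,\twonorm{\theta}$. Substituting $\onenorm{\theta}^2 \le (1+k_0)^2 s_0 \twonorm{\theta}^2$ into the Lower-$\RE$ bound and using the hypothesis $\tau(1+k_0)^2 s_0 \le \alpha/2$ absorbs the tolerance term into half the curvature term and produces $\theta^T A^T A \theta \ge (\alpha/2)\twonorm{\theta}^2 \ge (\alpha/2)\twonorm{\theta_J}^2$. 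Taking square roots and optimizing over $J$ and $\theta$ in the cone gives $1/K(s_0, k_0, A) \ge \sqrt{\alpha/2}$, finishing this direction.

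For the converse direction I would take an arbitrary $\theta \in \R^m$, set $s_0 = (k_0+1)^2$, and let $J_0$ index the $s_0$ coordinates of largest absolute value, so that $|\theta_j| \le \onenorm{\theta_{J_0}}/s_0$ for every $j \in J_0^c$, and then split on whether the cone condition holds along $\theta$. In Case 1, $\onenorm{\theta_{J_0^c}} \le k_0 \onenorm{\theta_{J_0}}$, the standard $\ell_\infty$--$\ell_1$ estimate $\twonorm{\theta_{J_0^c}}^2 \le (\onenorm{\theta_{J_0}}/s_0)\onenorm{\theta_{J_0^c}}$ combined with $\onenorm{\theta_{J_0}}^2 \le s_0 \twonorm{\theta_{J_0}}^2$ yields $\twonorm{\theta_{J_0^c}}^2 \le k_0 \twonorm{\theta_{J_0}}^2$, hence $\twonorm{\theta_{J_0}}^2 \ge \twonorm{\theta}^2/(k_0+1)$. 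Applying $\RE(s_0, k_0, A)$ along the cone then gives $\twonorm{A\theta}^2 \ge \twonorm{\theta_{J_0}}^2/K^2 \ge \alpha \twonorm{\theta}^2$ with $\alpha = 1/((k_0+1)K^2)$, which subsumes Lower-$\RE$ for any $\tau \ge 0$.

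In Case 2, $\onenorm{\theta_{J_0^c}} > k_0 \onenorm{\theta_{J_0}}$, so $\RE$ cannot be applied to $\theta$. Instead I would fall back on the crude bound $\theta^T \Gamma \theta \ge \lambda_{\min}(\Gamma)\twonorm{\theta}^2$ and absorb the shortfall $\alpha - \lambda_{\min}(\Gamma)$ into the tolerance term $\tau \onenorm{\theta}^2$. Since the case hypothesis forces $\onenorm{\theta_{J_0}} \le \onenorm{\theta}/(k_0+1)$, combining $\twonorm{\theta_{J_0}}^2 \le \onenorm{\theta_{J_0}}^2$ and $\twonorm{\theta_{J_0^c}}^2 \le (\onenorm{\theta_{J_0}}/s_0)\onenorm{\theta}$ produces $\twonorm{\theta}^2 \le \onenorm{\theta}^2 (k_0+2)/((k_0+1)^3) \le (4/s_0)\onenorm{\theta}^2$, where the last step uses $s_0 = (k_0+1)^2$ and holds for all $k_0 \ge 0$. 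Therefore $\tau \onenorm{\theta}^2 \ge (\tau s_0/4)\twonorm{\theta}^2 \ge (\alpha - \lambda_{\min}(\Gamma))\twonorm{\theta}^2$, which is exactly what is needed; solving $\tau s_0/4 \ge \alpha - \lambda_{\min}(\Gamma)$ in the parameters $k_0$ and $K$ recovers the explicit lower bound on $\tau$ in the statement.

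The main obstacle I anticipate is getting the constants right in Case 2: one must combine two different bounds---the Cauchy--Schwarz estimate on $J_0$ and the $\ell_\infty$ dominance on $J_0^c$---to produce the sharper ratio $\onenorm{\theta}^2 \ge (s_0/4)\twonorm{\theta}^2$, since the trivial bound $\onenorm{\theta} \ge \twonorm{\theta}$ is insufficient. The choice $s_0 = (k_0+1)^2$ is precisely what makes this combination work out.
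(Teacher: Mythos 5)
Your proof is correct. Part I is the same argument as the paper's: bound $\onenorm{\theta}^2 \le (1+k_0)^2 s_0 \twonorm{\theta_J}^2$ via Cauchy--Schwarz and absorb the tolerance term using $\tau(1+k_0)^2 s_0 \le \alpha/2$. For Part II you take a mildly but genuinely different route. The paper dichotomizes on the ratio $\onenorm{x}/\twonorm{x}$ against $R=(k_0+1)/2$: when $\onenorm{x}\le R\twonorm{x}$ a non-increasing-rearrangement argument shows $x$ lies in the cone for the top-$s_0$ set (so $\RE$ applies), and when $\onenorm{x}\ge R\twonorm{x}$ the bound $\onenorm{x}^2 \ge (s_0/4)\twonorm{x}^2$ is immediate from the case hypothesis. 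You instead dichotomize directly on whether the cone condition $\onenorm{\theta_{J_0^c}}\le k_0\onenorm{\theta_{J_0}}$ holds for the top-$s_0$ set $J_0$: your Case 1 then needs no work beyond the standard estimate $\twonorm{\theta_{J_0}}^2\ge\twonorm{\theta}^2/(1+k_0)$, while your Case 2 requires the extra computation combining $\twonorm{\theta_{J_0}}^2\le\onenorm{\theta}^2/(k_0+1)^2$ with the $\ell_\infty$ dominance on $J_0^c$ to recover $\onenorm{\theta}^2\ge (s_0/4)\twonorm{\theta}^2$, which you carry out correctly (the inequality $(k_0+2)/(k_0+1)\le 4$ indeed holds for all $k_0\ge 0$). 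The two dichotomies genuinely differ on the set of vectors that satisfy the cone condition but have $\onenorm{\theta}>R\twonorm{\theta}$ — the paper handles these via $\lambda_{\min}(\Gamma)$, you handle them via $\RE$ — yet both yield identical constants, so the difference is one of bookkeeping: your split trivializes the $\RE$ case at the cost of the computation in the spread case, and the paper's does the reverse.
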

The first part of Lemma \ref{lemma::REcomp} means that, if 
$k_0$ is fixed, then smaller values of $\tau$ guarantee $\RE(s_0, k_0,
A)$ holds with larger $s_0$, that is, a stronger $\RE$ condition. 
The second part of the Lemma implies that a weak $\RE$ condition
implies that the Lower-$\RE$ (LRE) holds with a large $\tau$. 
On the other hand, if one assumes $\RE((k_0+1)^2,k_0,A)$ holds with a
large value of  $k_0$ (in other words, a strong $\RE$ condition), this
would imply LRE with a small $\tau$. In short, the two
conditions are similar but require tweaking the parameters. 
Weaker $\RE$ condition implies LRE condition holds with a
larger $\tau$, and Lower-$\RE$ condition with a smaller $\tau$, that
is, stronger LRE implies stronger $\RE$.
We prove Lemma \ref{lemma::REcomp} in Section~\ref{sec::proofoflemmaREcomp}.
\begin{definition}{\textnormal{(Upper-$\RE$ condition)~\cite{LW12}}}
\label{def::upRE}
The matrix $\Gamma$ satisfies an upper-$\RE$ condition with smoothness $\tilde\alpha >0$ and tolerance $\tau > 0$ if 
\bens
\theta^T \Gamma \theta \le \tilde\alpha \twonorm{\theta}^2 + \tau
\onenorm{\theta}^2 \; \;  \forall \theta \in \R^m.
\eens
\end{definition}

\begin{definition}
\label{def::sparse-eigen}
Define the largest and smallest
$d$-sparse eigenvalue of a $p \times q$ matrix $A$ to be
\ben
\label{eq::eigen-Sigma}
\rho_{\max}(d, A) & := &
\max_{t \not= 0; d-\text{sparse}} \; \;\shtwonorm{A
  t}^2/\twonorm{t}^2, \text{ where } \; \; d< p, \\
\label{eq::eigen-Sigma-min}
 \text{ and } \; \;
\rho_{\min}(d, A) & := &
\min_{t \not= 0; d-\text{sparse}} \; \;\shtwonorm{A t}^2/\twonorm{t}^2.
\een
\end{definition}

Before stating some general result for 
the optimization program~\eqref{eq::global} and its implications for
the Lasso-type estimator~\eqref{eq::origin} in terms of statistical
and optimization errors, we need to introduce some more notation and
the following assumptions.
Let $a_{\max} = \max_{i} a_{ii}$ and $b_{\max} =\max_{i} b_{ii}$ be the maximum diagonal entries of $A$ and $B$ respectively.
In general, under (A1), one can think of $\lambda_{\min}(A) \le 1$ and 
for $s \ge 1$,
\ben
\label{eq::eigencond}
1 \le a_{\max} \le \rho_{\max}(s, A) \le \lambda_{\max}(A),
\een
where $\lambda_{\mathrm{max}}(A)$ denotes the maximum eigenvalue of
$A$. 
\bnum
\item[(A2)]
The minimal eigenvalue $\lambda_{\min}(A)$
of the covariance matrix $A$ is bounded: $1 \ge \lambda_{\min}(A) > 0$.
\item[(A3)]
Moreover, we assume that the condition number $\kappa(A)$ is upper bounded
by $O\left(\sqrt{\frac{n}{\log m}}\right)$ and $\tau_B = O(\lambda_{\max}(A))$.
\enum
Throughout the rest of the paper, $s_0 \ge 32$ is understood to 
be the largest integer chosen such that the following inequality still holds:
\ben
 \label{eq::s0cond}
\sqrt{s_0} \vp(s_0) \le \frac{\lambda _{\min}(A)}{32 C}\sqrt{\frac{n}{\log m}}
\; \text{ where  }\; \vp(s_0) := \rho_{\max}(s_0, A)+\tau_B
\een 
where we denote by $\tau_B = \tr(B)/n$ and $C$ is to be defined.
Denote by
\ben
\label{eq::defineM}
M_A = \frac{64 C \vp(s_0)}{\lambda_{\min}(A)} \ge 64 C.
\een
Throughout this paper, 
we denote by $\A_0$ the event that the modified gram
matrix $\hat\Gamma$ as defined in~\eqref{eq::hatGamma}
satisfies the Lower as well as Upper $\RE$ conditions with
\bens
\text{curvature} && 
\alpha = \frac{5}{8}\lambda_{\min}(A), \;\text{smoothness} \; \; \tilde\alpha
= \frac{11}{8}\lambda_{\max}(A) \\
\text{ and tolerance } && 
\frac{384 C^2 \vp(s_0)^2}{\lambda_{\min}(A)}\frac{\log m}{n}
 \le  \tau :=  \frac{\lambda_{\min}(A) - \alpha}{s_0}  
\le \frac{396 C^2 \vp^2(s_0+1)}{\lambda_{\min}(A)}\frac{\log m}{n}
\eens
for $\alpha, \tilde\alpha$ and $\tau$ as defined in
Definitions~\ref{def::lowRE} and \ref{def::upRE}, and $C, s_0,
\vp(s_0)$ in~\eqref{eq::s0cond}. 

To bound the optimization errors, we show that the corrected linear
regression loss function~\eqref{eq::poploss} satisfies the following {\it Restricted
  Strong Convexity (RSC)} and {\it Restricted Smoothness (RSM)} conditions
when the sample size and effective rank of matrix $B$ satisfy certain
lower bounds (cf. Theorem~\ref{thm::lasso}); namely, for all vectors
$\beta_0, \beta_1 \in \R^m$ and
\bens
\T(\beta_1, \beta_0) & := & \loss(\beta_1) - \loss(\beta_0) - \ip{\grad\loss(\beta_0), \beta_1 - \beta_0},
\eens
we show that for some parameters $(\al,  \tau_{\ell}(\loss))$ and  $(\au, \tau_{u}(\loss))$,
\begin{eqnarray}
\label{eq::lowerRSC}
\T(\beta_1, \beta_0) 
& \ge & \frac{\alpha_{\ell}}{2} \twonorm{\beta_1 - \beta_0}^2 -
\tau_{\ell}(\loss) \onenorm{\beta_1 - \beta_0}^2 \quad \text{ and } \quad \\
\label{eq::upperRSC}
\T(\beta_1, \beta_0) &  \le &  
\frac{\alpha_u}{2} \twonorm{\beta_1 - \beta_0}^2  + \tau_{u}(\loss)
\onenorm{\beta_1 - \beta_0}^2.
\end{eqnarray}
Applied to~\eqref{eq::global}, the
composite gradient descent procedure of~\cite{Nesterov07} produces
a sequence of iterates $\{\beta^t\}_{t=0}^{\infty}$ via the updates
\ben
\label{eq::optupdatelinear}
\beta^{t+1} = \argmin_{\beta \in \R^m, g(\beta) \le R} 
\left\{
\half\twonorm{\beta-\left(\beta^t - \frac{\grad
        \loss(\beta^t)}{\zeta} \right)}^2   + \frac{\rho_\lambda(\beta)}{\zeta}
\right\}
\een
where $\inv{\zeta}$ is the step size. 
Let  $\nl = 64 d \tau_{\ell}(\loss)$ and  $\bar{\alpha}_{\ell} := \al - \nl$.
We show that the composite gradient updates exhibit a type of globally
geometric convergence in terms of the compound contraction coefficient
\ben
\label{eq::definekappa}
\kappa & = & \frac{1-\frac{\bar\alpha_{\ell}}{4\zeta} + \z}{1-\z}, \; \; 
\text{ where } \; \;
 \z := \frac{2\nu(d, m, n)}{\alpha_{\ell}- \nl} :=\frac{128 d \tau_u(\loss)}{\bar\alpha_{\ell}}
\een  
where $\nl < \al/C$ for some $C > 1$ to be specified. 
Let $\tau(\loss) = \tau_{\ell}(\loss)  \vee \tau_u(\loss)$. Define
\ben
\label{eq::definexi}
\xi & := &
\frac{2  \tau(\loss)}{1-\z}  \left(
\frac{\bar\alpha_{\ell}}{4 \zeta}  + 2 \z + 5\right)  > 10 \tau(\loss).
\een
For simplicity, we present in Theorem~\ref{thm::opt-linear} the case for
$\rho_{\lambda}(\beta) =\lambda \onenorm{\beta}$ only. 
\begin{theorem}
\label{thm::opt-linear}
Consider the optimization program~\eqref{eq::global} 
for a radius $R$ such that $\beta^*$ is feasible.
Let $g(\beta) = \inv{\lambda} \rho_{\lambda}(\beta)$ where 
$\rho_{\lambda}(\beta) = \lambda \onenorm{\beta}$.
Suppose that the loss function $\loss$ satisfies the RSC/RSM
conditions~\eqref{eq::lowerRSC} and~\eqref{eq::upperRSC} with parameters 
$(\alpha_{\ell}, \tau_{\ell}(\loss))$ and $(\alpha_{u}, \tau_{u}(\loss))$ respectively. 
Let $\z$, $\kappa$ and $\xi$ be defined as in~\eqref{eq::definekappa}
and~\eqref{eq::definexi} respectively.
Suppose that the regularization parameter is chosen such that for
$\zeta \ge \au$ 
\ben
\label{eq::optlambda}
\lambda & \ge &  
\max\left\{12 \maxnorm{\grad\loss(\beta^*)}, 
\frac{16 R \xi}{(1 - \kappa)}\right\}.
\een
Suppose that $\kappa < 1$. 
Suppose that $\hat\beta$ is a global minimizer of~\eqref{eq::global}.
Then for any step size parameter $\zeta \ge \alpha_u$ and 
tolerance parameter 
\ben
\label{eq::statloss}
\delta^2 & \ge &   \frac{ c \ve_{\static}^2}{1-\kappa} \frac{d \log m}{n} =:\bar{\delta}^2,
\quad \text{ where }  \; \; \ve_{\static}^2  = \twonorm{\hat\beta -
  \beta^*}^2,
\een
the following hold for all $t \ge T^*(\delta)$
\ben
\label{eq::loss}
\phi(\beta^t) - \phi(\hat\beta) & \le &  \delta^2, \quad \text{and for } \; \e^2 = \frac{16
  \delta^4}{\lambda^2} \wedge 4 R^2, \\
\label{eq::twoerror}
\twonorm{\beta^t - \hat\beta}^2
&  \le &  \frac{2}{\bar\alpha_{\ell}}
\left(\delta^2 + 4 \nu \ve_{\static}^2 +
4 \tau(\loss) \e^2 \right),
\een
where $\nu= 64 d \tau(\loss)$,
$\tau(\loss) \asymp \frac{\log m}{n}$,
and
\ben
\label{eq::Tstar}
T^*(\delta) =   \frac{2 \log(\frac{\phi(\beta^0) -
    \phi(\hat\beta)}{\delta^2})}{\log (1/\kappa)} + 
\log\log\left(\frac{\lambda R}{\delta^2}\right)\left(1 +
  \frac{\log 2}{\log (1/\kappa)} \right).
\een
\end{theorem}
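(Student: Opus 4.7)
The plan is to adapt the composite-gradient analysis of Agarwal--Negahban--Wainwright and Loh--Wainwright to our setting, where the loss is possibly nonconvex but satisfies the restricted strong-convexity and smoothness conditions~\eqref{eq::lowerRSC}--\eqref{eq::upperRSC}. The skeleton has four layers: (i) a sparse-cone containment for the iterates, (ii) absorption of the tolerance terms using approximate sparsity, (iii) a one-step contraction on the objective gap, and (iv) conversion of the objective gap to the $\ell_2$ bound~\eqref{eq::twoerror}.

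First I would establish a cone-type control on $\Delta^t := \beta^t - \hat\beta$ and on $\hat\Delta := \hat\beta - \beta^*$. Because $\beta^*$ is $d$-sparse and feasible, and because $\lambda \ge 12\maxnorm{\grad\loss(\beta^*)}$, a standard argument using optimality of $\hat\beta$ and of each proximal subproblem~\eqref{eq::optupdatelinear} shows that the off-support $\ell_1$-mass of $\Delta^t$ is dominated by on-support mass plus an additive slack controlled by the current objective gap. The second lower bound $\lambda \ge 16R\xi/(1-\kappa)$ feeds exactly into this slack and keeps the argument from unraveling over many iterations, where $g(\beta^t)\le R$ from feasibility. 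The hard part here is handling the nonconvex case without a strict cone condition; this is why the argument must be inductive, combining the feasibility constraint with a shrinking $\ell_1$-radius.

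With that in hand, I would absorb the $\tau\onenorm{\cdot}^2$ tolerance terms into effective strong convexity and smoothness along the trajectory. Using $\onenorm{\Delta}^2 \lesssim d \twonorm{\Delta}^2$ up to a controlled additive error, the RSC/RSM inequalities yield effective curvature $\bal = \al - 64 d \tau_\ell(\loss)$ and inflated smoothness that together produce precisely the compound contraction factor $\kappa$ in~\eqref{eq::definekappa} and the multiplier $\xi$ in~\eqref{eq::definexi}. The choice of $s_0$, $\vp(s_0)$, and the order $\tau(\loss)\asymp \log m /n$ ensures $\nl < \al/C$ and hence $\kappa<1$, which is the reason we need $n=\tilde\Omega(d\log m)$.

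Next, the one-step recursion. Combining the optimality of $\beta^{t+1}$ in~\eqref{eq::optupdatelinear} with the upper RSC/RSM bound gives a Nesterov-type surrogate inequality, and subtracting $\phi(\hat\beta)$ together with the effective strong convexity of step~(ii) yields
\begin{equation*}
\phi(\beta^{t+1}) - \phi(\hat\beta) \;\le\; \kappa\bigl(\phi(\beta^t)-\phi(\hat\beta)\bigr) + \xi\bigl(\ve_{\static}^2 + \e_t^2\bigr),
\end{equation*}
where $\e_t$ is a slack term tied to the $\ell_1$-radius from step~(i). Iterating this recursion bounds $\phi(\beta^t) -\phi(\hat\beta)$ by $\kappa^t(\phi(\beta^0)-\phi(\hat\beta))$ plus a residual of order $\xi\ve_{\static}^2/(1-\kappa)$, which is precisely the floor $\bar\delta^2$ in~\eqref{eq::statloss}. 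The logarithmic term in $T^*(\delta)$ is the number of iterations needed to drive the geometric part below $\delta^2$; the additional $\log\log(\lambda R/\delta^2)$ contribution comes from a two-stage doubling argument, in which the slack $\e_t$ is progressively tightened from $4R^2$ down to $16\delta^4/\lambda^2$ in outer epochs, each requiring logarithmically many inner iterations.

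Finally, I would convert the objective-gap bound~\eqref{eq::loss} to the $\ell_2$-distance bound~\eqref{eq::twoerror}. The effective strong convexity derived in step~(ii), applied at $\beta^t$ and $\hat\beta$, gives
\begin{equation*}
\tfrac{\bal}{2}\twonorm{\beta^t-\hat\beta}^2 \;\le\; \phi(\beta^t)-\phi(\hat\beta) + \tau(\loss)\onenorm{\beta^t-\hat\beta}^2 + 2\nu\,\ve_{\static}^2,
\end{equation*}
and the $\onenorm{\cdot}^2$ term is bounded by $\e^2 = (16\delta^4/\lambda^2)\wedge 4R^2$ from the cone control (smaller radius when the gap is small, otherwise the feasibility bound $2R$). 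The main obstacle throughout is step~(i): maintaining the $\ell_1$-geometry of $\beta^t$ without convexity, since one cannot invoke a single cone condition but must combine the feasibility constraint $g(\beta^t)\le R$, the lower bound $\lambda \ge 16R\xi/(1-\kappa)$, and an induction on the objective gap — precisely the interplay that forces both the $\log$ and $\log\log$ terms in $T^*(\delta)$.
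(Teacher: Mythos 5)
Your proposal follows essentially the same route as the paper's proof: an iterated cone bound on $\beta^t-\hat\beta$ with slack tied to the current objective gap (the paper's Lemma~\ref{lemma::oneextra}), absorption of the $\tau\onenorm{\cdot}^2$ tolerances into the effective curvature $\bar\alpha_\ell$ to get the one-step contraction with factor $\kappa$ and additive term $\xi(\bar\e_{\static}^2+\e^2)$ (Lemma~\ref{lemma::twoextra}), an epoch-based induction in which the slack is driven down doubly-geometrically from $4R^2$ to $16\delta^4/\lambda^2$ — yielding both the $\log$ and $\log\log$ terms in $T^*(\delta)$ — and finally conversion of the objective gap to the $\ell_2$ bound via the effective RSC inequality (Lemma~\ref{lemma::Delta1}). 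The only difference is that the paper imports the two key lemmas from Loh--Wainwright rather than rederiving them, so your plan is a correct blueprint of the same argument.
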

We prove Theorem~\ref{thm::opt-linear} in
Section~\ref{sec::proofofoptlinear}. 
Theorem~\ref{thm::opt-linear} is
similar in spirit to the main result Theorem 2 in~\cite{ANW12} that
deals with a convex loss function, and 
Theorem 3 in~\cite{LW15} on a similar setting to the present work. 
Compared to~\cite{LW15}, we simplified the
condition on $\lambda$ by not imposing an upper bound. Moreover, we
present refined analysis on the sample requirement and illuminate its
dependence upon the condition number $\kappa(A)$ and the tolerance
parameter $\tau$ when applied to the corrected linear regression
problem~\eqref{eq::origin2}. It is understood throughout the paper that for the same $C$ as in \eqref{eq::defineM},
\ben
\label{eq::definetau0}
\tau \asymp \tau_0 \frac{\log m}{n}, \; \; \text{ where } \; \; 
\tau_0 \asymp \frac{400  C^2 \vp(s_0  +  1)^2}{\lambda_{\min}(A)}
\approx M_A^2 \lambda_{\min}(A)/10
\een
and it is helpful to consider $M_A$ as being upper bounded by
$O(\kappa(A))$ in view of \eqref{eq::eigencond}  and (A3).
Toward this end, we prove in Section~\ref{sec::opt} that under event
$\A_0 \cap \B_0$, the RSC and RSM conditions as stated in
Theorem~\ref{thm::opt-linear} hold with $\alpha_{\ell} \asymp
\lambda_{\min}(A)$ and $\alpha_u  \asymp \lambda_{\max}(A)$ and
$\tau_{\ell}(\loss) = \tau_{u}(\loss) \asymp \tau$;
then we have  for all $t \ge T^*(\delta)$ as defined in~\eqref{eq::Tstar} and
for $\delta^2 \asymp \frac{\ve_{\static}^2}{1-\kappa} \frac{d \log
  m}{n}$,
\ben
\label{eq::errortconclude}
\twonorm{\beta^t - \hat\beta}^2 
& \le &  
\frac{4}{\alpha_{\ell}}
\delta^2 + \frac{\alpha_{\ell}}{4}  \ve_{\static}^2 + 
O\left(\frac{\delta^2 \ve_{\static}^2}{b_0^2}
\right),
\een
where $0< \kappa < 1$ so long as $\zeta \asymp
\lambda_{\max}(A)$ and  $n =\Omega(\kappa(A)  M_A^2 d  \log m)$.

We now check the conditions on $\lambda$ in
Theorem~\ref{thm::opt-linear}.
First, we note that both types of conditions on $\lambda$ are also required in 
the present paper for the statistical error bounds shown in Theorems~\ref{thm::lasso} and~\ref{thm::lassora}.
We state in Theorem~\ref{thm::main} a deterministic result from~\cite{LW12} on the
statistical error for the corrected linear model, which requires that
\ben
\label{eq::statlambda}
\lambda \ge 2
\maxnorm{\grad\loss(\beta^*)}  \; \text{ and } \quad
\lambda \ge 4 b_0 \sqrt{d} \tau
\asymp 4 R \tau \; \text{for } \tau := \tau_0 \frac{\log  m}{n}
\een
as defined in \eqref{eq::definetau0} and $d \tau \le  \frac{\alpha_{\ell}}{32}$
in order to obtain the statistical error bound for the 
corrected linear model  at the order of 
\ben
\label{eq::statloss}
\ve_{\static}^2  = \twonorm{\hat\beta - \beta^*}^2  \asymp
\frac{400}{\al^2}\lambda^2 d.
\een 
Under suitable conditions on the sample size $n$ and the effective 
rank of matrix $B$ to be stated in Theorem~\ref{thm::lasso}, we show 
that  for the loss function~\eqref{eq::poploss},  the RSC and RSM conditions hold under event $\A_0$
 (cf. Lemma~\ref{lemma::lowerREI}) following the Lower and
Upper-RE conditions as derived in Lemma~\ref{lemma::lowerREI},
\bens
\bal \approx \al \asymp\frac{\lambda_{\min}(A)}{2}, \quad
\au \asymp \frac{3\lambda_{\max}(A) }{2}, \quad \text{and} \quad  \tau(\loss) \asymp \tau.
\eens 
Compared with the lower bound imposed on $\lambda$ as
in~\eqref{eq::statlambda} that we use to derive statistical error bounds,
the penalty now involves a term $\frac{\xi}{1-\kappa}$ that crucially depends on the condition
number $\kappa(A)$ in \eqref{eq::optlambda};
Assuming that $\zeta \ge \au$, then the second condition in \eqref{eq::optlambda} on
$\lambda$ implies that  
\ben
\nonumber
\lambda &  = & \Omega(R \tau(\loss) \kappa(A))
\; \; \text{ given } \; \; \\ 
\label{eq::lowerkappa}
\frac{\xi}{1-\kappa} 
& \ge &  40 \tau(\loss) \frac{\zeta}{\bal} + 2 \tau(\loss)  \asymp
\tau \kappa(A),
 \een
which now depends explicitly on the condition number $\kappa(A)$
in addition to the radius $R \asymp b_0 \sqrt{d}$ and the tolerance
parameter $\tau$. 
This is expected given that both RSC and RSM conditions are needed in order to derive
the computational convergence bounds, while for the statistical error,
we only require the RSC (Lower RE) condition to hold.

\noindent{\bf Remarks.}
Consider the regression model in  \eqref{eq::oby} and \eqref{eq::obX}
with independent random matrices $X_0, W$ as in~\eqref{eq::subgdata},  and an error vector $\e \in \R^n$
independent of $X_0, W$,  with independent entries $\e_{j}$ satisfying
$\E \e_{j} = 0$ and  $\norm{\e_{j}}_{\psi_2} \leq M_{\e}$.
Theorem~\ref{thm::oracle} and its corollaries provide an upper bound
on the $\ell_{\infty}$ norm of the gradient $\grad\loss(\beta^*) =  \hat\Gamma \beta^* - \hat\gamma$ of the
loss function in the corrected linear model, where $\hat\Gamma$ and
$\hat\gamma$ are as defined in~\eqref{eq::hatGamma}.
Let
\ben
\label{eq::defineD0}
D_0' = \twonorm{B}^{1/2} + a_{\max}^{1/2}, \; \; \text{ and }  \; \; \;  D_{\ora} =  2(\twonorm{A}^{1/2} + \twonorm{B}^{1/2}).
\een
Specializing to the case of corrected linear models,  we have 
by Corollary \ref{coro::D2improv},  on event $\B_0$ as defined therein,
\bens
\norm{\grad\loss(\beta^*)}_{\infty} = 
\norm{\hat\Gamma \beta^* - \hat\gamma}_{\infty} \le \psi \sqrt{\frac{\log m}{n}}
\eens
where $\psi:= C_0 D_0' K \left(M_{\e}  + \tau_B^{+/2} K  \twonorm{\beta^*}
\right)$ and $\tau_B^{+/2} = \tau_B^{1/2}  +\frac{D_{\ora}}{\sqrt{m}}$ for $D_0', D_{\ora}$ as defined in \eqref{eq::defineD0}.

The bound \eqref{eq::loss} characterizes the excess loss
$\phi(\beta^t) - \phi(\hat\beta)$ for solving~\eqref{eq::origin} using
the composite gradient algorithm; moreover,  for
any iterate $\beta^t$ such that \eqref{eq::loss} holds, the
following bound on the optimization error $\beta^t -\hat\beta$ follows
immediately:
\bens
\twonorm{\beta^t - \hat\beta}^2
&  \le &  \frac{2}{\bar\alpha_{\ell}}
\left(\delta^2 + 4 \nu \ve_{\static}^2 +
\frac{64  \tau_{\ell}(\loss)\delta^4}{\lambda^2} \right),
\eens
where $\nu = 64 d \tau(\loss)$ and $4 \tau(\loss) \e^2 = 64
\tau(\loss) \frac{\delta^4}{\lambda^2}$ by definition of $\e^2$ in
view of \eqref{eq::statloss}.
Finally, we note that Theorem~\ref{thm::opt-linear} holds for a class of weakly convex penalties as
considered in~\cite{LW15} with suitable adaptation of RSC and
parameters and conditions to involve $\mu$, following exactly the same
sequence of arguments. Notable examples of such weakly convex penalty
functions are SCAD~\citep{FL01} and MCP~\citep{Zhang10}.

The rest of the paper is organized as follows.  In
Section~\ref{sec::tworesults}, we present two main results in
Theorems~\ref{thm::lasso} and~\ref{thm::DS}.  In Section~\ref{sec::smallW}, we
state more precise results which improve upon Theorems~\ref{thm::lasso}
and~\ref{thm::DS}; these results are more precise in the sense that our bounds
and penalty parameters now take $\tr(B)$, the parameter that measures the
magnitudes of errors in $W$, into consideration.  In Section~\ref{sec::opt}, we
show that the RSC and RSM conditions hold for the corrected linear loss
function and present our computational convergence bounds with regard
to~\eqref{eq::origin} in Theorem~\ref{thm::corrlinear} and
Corollary~\ref{coro::deer}.  In Section~\ref{sec::proofall}, we outline the
proof of the main theorems.  In particular, we outline the proofs for
Theorems~\ref{thm::lasso},~\ref{thm::DS},~\ref{thm::lassora}
and~\ref{thm::DSoracle} in Section~\ref{sec::proofall},~\ref{sec::lassooracle}
and~\ref{sec::DSoracle} respectively.  In Section~\ref{sec::AD}, we show a
deterministic result as well as its application to the random matrix
$\hat\Gamma - A$ for  $\hat\Gamma$ as in~\eqref{eq::hatGamma} with regards to
the upper and Lower $\RE$ conditions. 
In Section~\ref{sec::exp}, we present results from numerical simulations designed to
validate the theoretical predictions in previous sections. 
The technical details of proofs are
collected at the end of the paper.  We prove Theorem~\ref{thm::lasso} in
Section~\ref{sec::proofofthmlasso}.  We prove Theorem~\ref{thm::DS} in
Section~\ref{sec::proofofDSthm}.  We prove Theorems~\ref{thm::lassora}
and~\ref{thm::DSoracle} in Section~\ref{sec::classoproof} and
Section~\ref{sec::DSoraproof} respectively.  We defer the proof of
Theorem~\ref{thm::opt-linear} to Section~\ref{sec::proofofoptlinear}. The paper
concludes with a discussion of the results  in Section~\ref{sec::conclude}.  We
list a set of symbols we use throughout the paper in Table \ref{tab::symbol}.
Additional proofs and theoretical results are collected in the Appendix.

\begin{table}[h]
\begin{center}
\caption{symbols we used throughout the proof}
\label{tab:fpfn}
\begin{tabular}{l|l} 
\hline 
Symbol & Definition \\ \hline 
$\alpha$ & curvature: $\alpha := \frac{5}{8}\lambda_{\min}(A)$ \\
$\alpha_{\ell}$ & Lower RE/ RSC curvature  parameter: $\alpha_{\ell} = \alpha$ \\
$\alpha_{u}$ & Upper RE/ RSM parameter $\alpha_{u} \asymp
\frac{3}{2}\lambda_{\max}(A)$ \\
$\bare$  &
 $\bare = 8\sqrt{d} \ve_{\static}$ where 
$\ve_{\static} = \twonorm{\hat\beta - \beta^*}$ \\ 
$\tau_0$  & $\tau_0 \asymp \frac{400  C^2 \vp(s_0  +  1)^2}{\lambda_{\min}(A)}$ \\
$\tau = \frac{\lambda_{\min}(A) - \alpha}{s_0}$  &  tolerance parameter $\tau = \tau_0
\frac{\log m}{n}$ in Lower/Upper RE conditions \\
$\tau_B$ & $\tau_B = \tr(B)/n$ \\
$s_0 \ge 1$ & the largest integer chosen such that the following inequality still
holds: \\
& $\sqrt{s_0} \vp(s_0) \le \frac{\lambda _{\min}(A)}{32
  C}\sqrt{\frac{n}{\log m}}$ \\
$ \vp(s_0)$ & $\rho_{\max}(s_0, A)+\tau_B$ \\
$\tau_{\ell}(\loss)$
 &  tolerance parameter in RSC condition:  
$\tau_{\ell}(\loss) \asymp \tau_0 \log m /n$ \\
$\tau_{u}(\loss)$  &  tolerance parameters in RSM condition:
$\tau_{u}(\loss) \asymp \tau_0 \log m /n$ \\
$\nl$ &  $\nl = 64 d \tau_{\ell}(\loss) < \frac{\alpha_{\ell}}{60}$ \\ 
$\nu(d, m, n)$ & $\nu(d, m, n) = 64 d \tau_{u}(\loss)$ \\
$\bal$ & effective RSC coefficient $\bal = \al - \nl$ \\
$\phi(\beta)$ & loss function: $\phi(\beta) =\half \beta^T \hat\Gamma
\beta - \hat\gamma^{T} + \rho_{\lambda}(\beta)$ \\
$\grad\loss(\beta)$ & Gradient of the loss function $\hat\Gamma \beta
- \hat\gamma$ \\ 
$\rho_n$ & $\rho_n = C_0 K \sqrt{\frac{\log m}{n}}$ \\
$r_{m,n}$ & $r_{m,n} = 2 C_0 K^2 \sqrt{\frac{\log m}{mn}}$ \\
$\zeta$ & step size parameter: $\zeta \ge \alpha_{u} = 11\lambda_{\max}/8$ \\
$\z$ & contraction parameter 
$\z:= \frac{2\nu(d, m, n)}{\bar\alpha_{\ell}}=\frac{128 d \tau_u(\loss)}{\bar\alpha_{\ell}} <
\frac{\bar\alpha_{\ell}}{8\zeta}$ \\
$\kappa$ &  contraction coefficient as $\kappa := {(1-\frac{\bar\alpha_{\ell}}{4\zeta}+ \z)}{(1-\z)^{-1}} < 1$ \\
$\delta^2$ & tolerance parameter in computational errors $\delta^2 \ge  
\frac{c \ve_{\static}^2}{1-\kappa} \frac{d \log
  p}{n}$ \\
$M_A$ &  $M_A = \frac{64 C  \vp(s_0)}{\lambda_{\min}(A)}$ where $\vp(s_0) = \rho_{\max}(s_0, A)
+ \tau_B$.\\
$M_+$ &  $M_+ = \frac{32 C  \vp(s_0+1)}{\lambda_{\min}(A)}$ where $\vp(s_0+1) = \rho_{\max}(s_0+1, A)
+ \tau_B$.\\
$\xi$ &  $\xi = {2(   \tau_{\ell}(\loss) \vee \tau_{u}(\loss))} \left(\frac{\bar\alpha_{\ell}}{4 \zeta}  + 2 \z + 5\right)(1-\z)^{-1}$  \\
$\V$ &  $\V= 3eM_A^3 /2$ \\
\hline
\end{tabular}
\label{tab::symbol}
\vskip-10pt
\end{center}
\end{table} 

\section{Main results on the statistical error}
\label{sec::tworesults}
In this section,  we will state our main results in Theorems~\ref{thm::lasso}
and~\ref{thm::DS} where we consider the regression model in \eqref{eq::oby}
and \eqref{eq::obX} with random matrices $X_0, W \in \R^{n \times m}$
as defined in~\eqref{eq::subgdata}.
For the corrected  Lasso estimator, we are interested in the case where the smallest eigenvalue
of the column-wise covariance matrix $A$ does not approach $0$ too quickly
and the effective rank of the row-wise covariance matrix $B$ is 
bounded from below (cf.~\eqref{eq::trBLasso}).
More precisely, (A2) thus ensures that the Lower-$\RE$ condition as in
Definition~\ref{def::lowRE} is not vacuous. 
(A3) ensures that~\eqref{eq::s0cond} holds for some $s_0 \ge 1$.
Throughout this paper, for the corrected 
Lasso estimator, we will use the
expression 
\bens
\label{eq::definetau}
\tau := 
\frac{\lambda_{\min}(A) - \alpha}{s_0},  
\; \; \text{where} \; \; \alpha =
\frac{5}{8}\lambda_{\min}(A) \; \; \text{ and } \; \; s_0 \asymp \frac{4
  n}{M_A^2 \log m}
\eens
where $M_A$ is as defined in~\eqref{eq::defineM}.
Let 
\ben
\label{eq::defineD2}
D_0 = \sqrt{\tau_B} + a_{\max}^{1/2}  \; \; \text{ and }\; \; D_2 = 2 (\twonorm{A} + \twonorm{B}).
\een
\begin{theorem}{\textnormal{(\bf{Estimation for the corrected Lasso estimator})}}
\label{thm::lasso}
Consider the regression model in  \eqref{eq::oby} and \eqref{eq::obX}
with independent random matrices $X_0, W$ as in~\eqref{eq::subgdata},  and an error vector $\e \in \R^n$
independent of $X_0, W$,  with independent entries $\e_{j}$ satisfying
$\E \e_{j} = 0$ and  $\norm{\e_{j}}_{\psi_2} \leq M_{\e}$.
Set $n = \Omega(\log m)$.
Suppose $n \le ({\V}/{e}) m \log m$, 
where $\V$ is a constant which depends on $\lambda_{\min}(A)$,
$\rho_{\max}(s_0, A)$ and $\tr(B)/n$. 
Suppose $m$ is sufficiently large. 

Suppose (A1), (A2) and (A3) hold. 
Let $C_0, c', c_2, c_3 > 0$ be  some absolute constants. 
Suppose that  $\fnorm{B}^2/\twonorm{B}^2 \ge \log m$.
Suppose that $c' K^4 \le 1$ and
\ben
\label{eq::trBLasso}
\quad
r(B) := \frac{\tr(B)}{\twonorm{B}} & \ge & 16c' K^4 \frac{n}{\log m}
\log \frac{\V m \log m }{n}.
\een
Let $b_0, \phi$ be numbers which satisfy
\ben
\label{eq::snrcond}
\frac{M^2_{\e}}{K^2 b_0^2}   \le \phi  \le 1.
\een
Assume that the sparsity of $\beta^*$ satisfies for some $0 < \phi \le
1$
\ben
\label{eq::dlasso}
&& d:= \abs{\supp(\beta^*)} \le 
\frac{c' \phi K^4}{40 M_+^2} \frac{n}{\log  m}< n/2, \\
&& \quad
\text{ where } \quad M_{+} =  \frac{32 C \vp(s_0+1)}{\lambda_{\min}(A)}
\een
for $\vp(s_0+1) = \rho_{\max}(s_0+1,A) + \tau_B$.
 
Let $\hat\beta$ be an optimal solution to the corrected  Lasso estimator as in~\eqref{eq::origin} with 
\ben
\label{eq::psijune}
&& \lambda \ge 4 \psi \sqrt{\frac{\log m}{n}} \; \; \text{ where } \;\;
\psi  := C_0 D_2 K \left(K \twonorm{\beta^*}+ M_{\e}\right).
\een
Then for any $d$-sparse vectors $\beta^* \in \R^m$, such that
\ben
\label{eq::range}
\phi b_0^2 \le \twonorm{\beta^*}^2 \le b_0^2,
\een
we have with probability at least
$1- 4\exp\left(-\frac{c_3 n}{M_A^2 \log m}
  \log\left(\frac{\V m \log m}{n}\right)\right) -2\exp\left(- \frac{4c_2 n}{M_A^2 K^4 }\right) - 22/m^3$,
\bens
\twonorm{\hat{\beta} -\beta^*} \leq \frac{20}{\alpha}  \lambda \sqrt{d} \; \;
\text{ and } \; \norm{\hat{\beta} -\beta^*}_1 \leq \frac{80}{\alpha}
\lambda d.
\eens
\end{theorem}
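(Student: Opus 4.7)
The plan is to reduce the statement to the deterministic error bound for the corrected Lasso given in Theorem~\ref{thm::main} (a consequence of the framework of~\cite{LW12}) and then verify the two probabilistic hypotheses required by that framework: (i) the Lower-$\RE$ condition for the modified Gram matrix $\hat\Gamma$ and (ii) the entrywise bound $\lambda \ge 2 \maxnorm{\grad\loss(\beta^*)}$ on the gradient $\grad\loss(\beta^*) = \hat\Gamma\beta^* - \hat\gamma$. Both would be shown to hold on the high-probability events $\A_0$ (defined in Section~\ref{sec::conditions}) and $\B_0$ (defined alongside Corollary~\ref{coro::D2improv}).

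The first step is the gradient bound. Substituting $X = X_0 + W$ and $y = X_0\beta^* + \e$ yields the decomposition
\bens
\grad\loss(\beta^*) \;=\; \tfrac{1}{n} W^T X_0 \beta^* + \tfrac{1}{n}\bigl(W^T W - \hat{\tr}(B) I_m\bigr)\beta^* - \tfrac{1}{n} X^T \e.
\eens
On the event $\B_0$, Corollary~\ref{coro::D2improv} controls each of these three pieces entry-wise, yielding $\maxnorm{\grad\loss(\beta^*)} \le \psi \sqrt{\log m/n}$ with $\psi \asymp D_2 K\bigl(K\twonorm{\beta^*} + M_\e\bigr)$. The scaling of $\beta^*$ in \eqref{eq::range} together with the side constraint \eqref{eq::snrcond} ensures that the choice \eqref{eq::psijune} of $\lambda$ dominates twice this quantity.

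The second step is the Lower-$\RE$ condition. On $\A_0$, Lemma~\ref{lemma::lowerREI} delivers the Lower-$\RE$ inequality for $\hat\Gamma$ with curvature $\alpha = \tfrac{5}{8}\lambda_{\min}(A)$ and tolerance $\tau \asymp \tau_0 \log m/n$ as in \eqref{eq::definetau0}. The sparsity bound \eqref{eq::dlasso} is calibrated precisely so that $d\tau \le \alpha/32$; combined with \eqref{eq::range}, the companion lower bound $\lambda \ge 4 b_0\sqrt{d}\,\tau$ from \eqref{eq::statlambda} is also implied by \eqref{eq::psijune}. Theorem~\ref{thm::main} then yields both conclusions, $\twonorm{\hat\beta-\beta^*} \le (20/\alpha)\lambda\sqrt d$ and $\onenorm{\hat\beta-\beta^*} \le (80/\alpha)\lambda d$, deterministically on $\A_0 \cap \B_0$.

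The main obstacle is lower-bounding the probability of $\A_0\cap\B_0$. Unlike the iid-row setting of~\cite{LW12}, here $W$ has only independent \emph{columns} (rows of $W$ are dependent through $B$), so the entry-wise and restricted-eigenvalue concentration of $\tfrac{1}{n} X^T X$ around $A + \tau_B I_m$ cannot be read off from standard subgaussian sample-covariance bounds. One must separately control (a) the bilinear cross term $\tfrac{1}{n} W^T X_0$, which involves two independent subgaussian matrices with a Kronecker-product covariance structure, (b) the centered quadratic form $\tfrac{1}{n} W^T W - \tau_B I_m$ uniformly over $s_0$-sparse directions, and (c) the fluctuation of the plug-in $\hat{\tr}(B)$ around $\tr(B)$, which propagates into $\hat\Gamma$ via $\fnorm{X}^2/n$. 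The effective-rank hypothesis \eqref{eq::trBLasso} and the sample-size regime $n \le (\V/e)\,m\log m$ are exactly what the subgaussian concentration inequalities developed in Section~\ref{sec::AD} require in order to produce the $\sqrt{\log m/n}$ rate with the claimed tails; a union bound over these events then yields the probability asserted in the theorem.
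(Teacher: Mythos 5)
Your proposal is correct and follows essentially the same route as the paper: reduce to the deterministic bound of Theorem~\ref{thm::main}, verify the gradient bound $\maxnorm{\grad\loss(\beta^*)}\le\psi\sqrt{\log m/n}$ on $\B_0$ via Theorem~\ref{thm::oracle} (the paper invokes Corollary~\ref{coro::low-noise} rather than Corollary~\ref{coro::D2improv} for the coarser $\psi$ of \eqref{eq::psijune}, but the sharper corollary implies it), establish the Lower-$\RE$ condition on $\A_0$ via Lemma~\ref{lemma::lowerREI}, and check the calibration $d\tau\le\alpha/32$ and $\sqrt d\,\tau\le\lambda/(4b_0)$ exactly as in Lemma~\ref{lemma::dmain}. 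The probabilistic ingredients you identify --- the bilinear cross term, the centered quadratic form over sparse directions, and the fluctuation of $\hat{\tr}(B)$ --- are precisely what Theorem~\ref{thm::AD} and Lemmas~\ref{lemma::Tclaim1} and~\ref{lemma::trBest} control.
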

We give an outline of the proof
of Theorem~\ref{thm::lasso} in Section~\ref{sec::lassooutline}.
We prove Theorem~\ref{thm::lasso} in Section~\ref{sec::proofofthmlasso}.
We defer discussions on conditions appearing 
Theorem~\ref{thm::lasso} in Section~\ref{sec::discusslasso}.

For the Conic programming estimator, we impose a restricted eigenvalue condition as formulated
in~\cite{BRT09,RZ13} on $A$ and assume that the sparsity of $\beta^*$
is bounded by $o(\sqrt{n/\log m})$. These conditions will be relaxed
in Section~\ref{sec::smallW} where we allow $\tau_B$ to approach 0.
\begin{theorem}
\label{thm::DS}
Suppose (A1) holds.
Set $0< \delta < 1$. Suppose that $n < m \ll \exp(n)$ and $1\le  d_0 < n$.
Let $\lambda >0$ be the same parameter as in~\eqref{eq::Conic}.
Suppose that $\fnorm{B}^2/\twonorm{B}^2 \ge \log m$.
Suppose that the sparsity of $\beta^*$ is bounded by 
\ben
\label{eq::sqrt-sparsity}
d_0 := \abs{\supp(\beta^*)} \le c_0 \sqrt{n/\log m}
\een
for some constant $c_0>0$.
Suppose
\ben
\label{eq::samplebound}
n & \geq & \frac{2000 d K^4}{\delta^2} \log \left(\frac{60 e m}{d  \delta}\right) \; \; \text{ where} \; \;  \\
\label{eq::sparse-dim-Ahalf}
d & = & 2d_0 + 2d_0 a_{\max} \frac{16 K^2(2d_0, 3k_0, A^{1/2}) (3k_0)^2
  (3k_0 + 1)}{\delta^2}.
\een
Consider the regression model in  \eqref{eq::oby} and \eqref{eq::obX}
with $X_0$, $W$ as in~\eqref{eq::subgdata} and an error vector $\e \in
\R^n$, independent of $X_0, W$,  with independent entries $\e_{j}$ satisfying
$\E \e_{j} = 0$ and  $\norm{\e_{j}}_{\psi_2} \leq M_{\e}$.
Let $\hat\beta$ be an optimal solution to the Conic programming estimator
as  in \eqref{eq::Conic} with input $(\hat\gamma, \hat\Gamma)$ as
defined in~\eqref{eq::hatGamma}.
Recall $\tau_B := \tr(B)/n$.  Choose for $D_0, D_2$ as in~\eqref{eq::defineD2} 
and 
\bens
\mu \asymp D_2 K^2  \sqrt{\frac{\log m}{n}} \;\;\text{ and } \; \;  \omega \asymp D_0 K M_{\e} \sqrt{\frac{\log m}{n}}.
\eens
Then with probability at least $1-\frac{c'}{m^2} - 2 \exp(-\delta^2 n/2000 K^4)$, 
\ben
\label{eq::ellqnorm}
\norm{\hat{\beta} -\beta^*}_q \le C D_2  K^2  d_0^{1/q} \sqrt{\frac{\log m}{n}}
\left(\twonorm{\beta^*} + \frac{M_{\e}}{K}\right)
\een
for $2 \ge q \ge 1$.
Under the same assumptions, the predictive risk admits the following
bounds with the same probability as above,
\bens
\inv{n} \twonorm{X (\hat{\beta} -\beta^*)}^2 \le C' D_2^2  K^4  d_0 \frac{\log m}{n}
\left(\twonorm{\beta^*} + \frac{M_{\e}}{K}\right)^2
\eens
where $c', C_0, C, C' > 0$ are some absolute constants.
\end{theorem}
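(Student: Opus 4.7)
The plan is to follow the standard analysis for Dantzig-type selectors, adapted to the errors-in-variables corrections embedded in $(\hat\gamma,\hat\Gamma)$. Two ingredients drive the argument: (i) a high-probability uniform bound on $\norm{\hat\gamma - \hat\Gamma \beta^*}_\infty$ that places the true parameter inside the feasible set $\U$ for the announced choices of $\mu$ and $\omega$, and (ii) a restricted eigenvalue property for the corrected Gram matrix $\hat\Gamma$ on the cone defined by conic optimality. Concretely, I would write
\bens
\hat\gamma - \hat\Gamma \beta^* & = & \tfrac{1}{n} X^T \e \;-\; \tfrac{1}{n} X_0^T W \beta^* \;-\; \tfrac{1}{n}\bigl(W^T W - \hat\tr(B) I_m\bigr)\beta^*,
\eens
and control each term in the $\ell_\infty$ norm separately: the first via subgaussian concentration of $\tfrac{1}{n} X^T \e$ conditional on $X$, the second via bounds for the product of two independent subgaussian matrices $X_0^T W$, and the third via concentration of $\tfrac{1}{n} W^T W$ around $\tau_B I_m$ combined with the error of $\hat\tr(B)$ coming from~\eqref{eq::trBest}. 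This should reproduce, up to absolute constants, the gradient bound already recorded in Corollary~\ref{coro::D2improv}, so that $\norm{\hat\gamma - \hat\Gamma \beta^*}_\infty \le \mu \twonorm{\beta^*} + \omega$ holds with probability at least $1 - c'/m^2$ for the stated $\mu, \omega$.

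Once $(\beta^*, \twonorm{\beta^*})$ is feasible, optimality of $(\hat\beta, \hat t)$ gives the basic inequality $\onenorm{\hat\beta} + \lambda \hat t \le \onenorm{\beta^*} + \lambda \twonorm{\beta^*}$. With $h := \hat\beta - \beta^*$, $S := \supp(\beta^*)$, and using $\hat t \ge \twonorm{\hat\beta} \ge \twonorm{\beta^*} - \twonorm{h}$, this yields the cone condition $\onenorm{h_{S^c}} \le \onenorm{h_S} + \lambda \twonorm{h}$ as well as the bound $\hat t \le \twonorm{\beta^*} + \lambda^{-1}\onenorm{h_S}$. Combining these with the two feasibility constraints $\norm{\hat\Gamma \hat\beta - \hat\gamma}_\infty \le \mu \hat t + \omega$ and $\norm{\hat\Gamma \beta^* - \hat\gamma}_\infty \le \mu \twonorm{\beta^*} + \omega$ produces, by H\"older, an upper bound $h^T \hat\Gamma h \le \norm{\hat\Gamma h}_\infty \onenorm{h}$ that is at most linear in $\twonorm{h}$ (times factors involving $\mu, \omega, \lambda$ and $\sqrt{d_0}$).

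For the matching lower bound I would show that, under the sample size condition~\eqref{eq::samplebound}, the matrix $\hat\Gamma$ satisfies a Lower-RE condition with curvature $\alpha \asymp \lambda_{\min}(A)$ and tolerance $\tau \asymp (\log m)/n$; this follows by combining the deterministic result of Section~\ref{sec::AD}, applied to $\hat\Gamma - A$, with $\psi_2$ concentration for $Z_1 A^{1/2}$ and the effective-rank hypothesis $\fnorm{B}^2/\twonorm{B}^2 \ge \log m$. The sparsity condition $d_0 = O(\sqrt{n/\log m})$ ensures that the tolerance term $\tau \onenorm{h}^2$ is absorbed by $\tfrac{\alpha}{2} \twonorm{h}^2$ on the cone, so solving the resulting quadratic inequality for $\twonorm{h}$ produces the $q=2$ case of~\eqref{eq::ellqnorm}. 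The $\ell_1$ bound is then immediate from the cone condition, and interpolation between the two (H\"older on the top $d_0$ coordinates plus the cone) yields the $\ell_q$ rate for all $q \in [1,2]$. Finally the prediction error follows from $\tfrac{1}{n} \twonorm{Xh}^2 = h^T \hat\Gamma h + \tfrac{1}{n}\hat\tr(B) \twonorm{h}^2$ using that $\hat\tr(B)/n$ is close to $\tau_B = O(\lambda_{\max}(A))$ under (A3).

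The main obstacle is controlling the third summand $\tfrac{1}{n}\bigl(W^T W - \hat\tr(B) I_m\bigr)\beta^*$ in the max norm: unlike the classical setting, the columns of $W$ are independent but its rows may be arbitrarily dependent, so one cannot invoke standard covariance concentration for independent rows, and one must exploit column independence while simultaneously handling the random centering by $\hat\tr(B)$ rather than the unknown $\tr(B)$. A secondary difficulty is verifying the Lower-RE property for $\hat\Gamma$ uniformly on the cone after the subtraction of $\hat\tr(B) I_m/n$; the effective-rank hypothesis on $B$ is what prevents the noise contribution from swamping the curvature inherited from $A$, and the sparsity hypothesis~\eqref{eq::sqrt-sparsity} is what makes the Lower-RE tolerance term negligible at the level of $\twonorm{h}$.
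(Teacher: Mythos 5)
Your first two ingredients (feasibility of $(\beta^*,\twonorm{\beta^*})$ via the $\ell_\infty$ bound on $\hat\gamma-\hat\Gamma\beta^*$, and the cone condition from conic optimality) match the paper's Lemmas~\ref{lemma::DS} and~\ref{lemma::DS-cone}, and your decomposition of $\hat\gamma-\hat\Gamma\beta^*$ is correct. The gap is in your third ingredient. You propose to close the argument by establishing a Lower-RE condition for $\hat\Gamma$ with curvature $\alpha\asymp\lambda_{\min}(A)$ and tolerance $\tau\asymp(\log m)/n$, and then solving a quadratic inequality in $\twonorm{h}$. But the only place in the paper where such a Lower-RE property for $\hat\Gamma$ is proved (Lemma~\ref{lemma::lowerREI}, via Theorem~\ref{thm::AD}) requires hypotheses that Theorem~\ref{thm::DS} does not assume: it needs (A2) ($\lambda_{\min}(A)$ bounded away from zero), (A3), and — crucially — the effective-rank lower bound $r(B)=\tr(B)/\twonorm{B}\gtrsim K^4\,(n/\log m)\log(\cdot)$ of~\eqref{eq::trBLasso}. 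Theorem~\ref{thm::DS} assumes only (A1), the much weaker stable-rank condition $\fnorm{B}^2/\twonorm{B}^2\ge\log m$, and (implicitly, through the finiteness of $K(2d_0,3k_0,A^{1/2})$ in the sample-size requirement) an RE condition on $A^{1/2}$; the paper explicitly contrasts these two assumption sets in its discussion of the two estimators. The curvature of $\hat\Gamma$ simply is not available under the stated hypotheses: without the effective-rank condition on $B$, the fluctuation of $\onen W^TW-\hat\tau_B I_m$ as a quadratic form on the cone cannot be shown to be dominated by $\lambda_{\min}(A)$, which itself may be tiny.

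The paper avoids this entirely by never taking a quadratic form in $\hat\Gamma$. Instead it works with the clean Gram matrix $\Psi=\onen X_0^TX_0$ and its $\ell_q$-sensitivity $\kappa_q(d_0,k_0)$: the feasibility constraints plus the identity $X_0=X-W$ give an $\ell_\infty$ upper bound on $\norm{\Psi v}_\infty$ (Lemma~\ref{lemma::grammatrix}) in which all the corruption terms $W^TW-\hat\tr(B)I_m$, $X_0^TW$, $W^TX_0$ are controlled only entrywise in the max norm (so only the stable-rank hypothesis on $B$ is needed), while the lower bound $\norm{\Psi v}_\infty\ge\kappa_q(d_0,k_0)\norm{v}_q\ge c(q)d_0^{-1/q}\kappa_{\RE}(2d_0,k_0)\norm{v}_q$ comes from transferring $\RE(2d_0,3k_0,A^{1/2})$ to $(1/\sqrt n)Z_1A^{1/2}$ via Theorem~\ref{thm:subgaussian-T-intro} together with the sensitivity bounds of~\cite{BRT14}. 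This yields all $q\in[1,2]$ in one step, with no interpolation and no curvature of $\hat\Gamma$. To repair your proposal you would either have to add (A2), (A3) and the effective-rank condition~\eqref{eq::trBLasso} to the hypotheses (which changes the theorem), or replace the Lower-RE step by the $\ell_q$-sensitivity argument. The absorption issue you do need to track is the one the paper handles at~\eqref{eq::prelude}: since $\mu d_0\asymp c_0$ is only a constant under~\eqref{eq::sqrt-sparsity}, the term $\mu_2(2+\lambda)d_0$ must be made smaller than $\tfrac12 c(q)\kappa_{\RE}(2d_0,k_0)$ by taking $c_0$ sufficiently small; your sketch asserts absorption of the tolerance term $\tau\onenorm{h}^2$ but does not flag this constant-order obstruction.
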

We give an outline of the proof of Theorem~\ref{thm::DS} in
Section~\ref{sec::proofall} while leaving the detailed proof in
Section~\ref{sec::proofofDSthm}.

\subsection{Regarding the $M_A$ constant}
Denote by
\bens
M_A = \frac{64 C \vp(s_0)}{\lambda_{\min}(A)} \asymp \frac{\rho_{\max}(s_0, A)+\tau_B}{\lambda_{\min}(A)}
\eens
\bit
\item
(A3) ensures that $M_A$ and $M_{+}$ are upper bounded by the condition number of $A$:
$\kappa(A) := \frac{\lambda_{\max}(A)}{\lambda_{\min}(A)} = 
O\left(\sqrt{\frac{n}{\log m}}\right)$ given that $\tau_B := \tr(B)/n
= O(\lambda_{\max}(A))$. 
\item
So the condition~\eqref{eq::dlasso} 
in Theorem~\ref{thm::lasso} allows $d \asymp n/\log
  m$ in the optimal setting when the condition number $\kappa(A)$ is understood to be a
  constant. As $\kappa(A)$ increases, the conservative worst case upper bound on
$d$ needs to be adjusted correspondingly. Moreover, this adjustment is also crucial in order
to ensure the composite gradient algorithm to converge in the sense of
Theorem~\ref{thm::opt-linear}. 
We will illustrate such dependencies on $\kappa(A)$ in numerical examples in Section~\ref{sec::exp}.
\item
The condition $\tau_B = O(\lambda_{\max}(A))$ puts an upper bound on
how large the measurement error in $W$ can be. 
We do not allow the measurement error to overwhelm the signal entirely. When
$\tau_B \to 0$, we recover the ordinary Lasso bound
in~\cite{BRT09}, which we elaborate in the next two sections.
\eit
Throughout this paper, we assume that $M_{A}
\asymp M_+$, where recall $M_+ = \frac{32 C
  \vp(s_0+1)}{\lambda_{\min}(A)}$.

\subsection{Discussions}
\label{sec::discusslasso}
Throughout our analysis, we set the parameter $b_0 \ge \twonorm{\beta^*}$ and
$d = \size{\supp(\beta^*)} :=  \size{\{j: \beta_j^* \not=0\}}$ for the corrected 
Lasso estimator. In practice, both $b_0$
and $d$ are understood to be parameters chosen to provide an upper
bound on the $\ell_2$ norm and the sparsity of  the true $\beta^*$.
The parameter $0< \phi < 1$ is a parameter that we use to describe the gap between
$\twonorm{\beta^*}^2$ and its upper bound $b_0^2$. 
Denote the Signal-to-noise ratio by 
\bens
\SNR := {K^2 \twonorm{\beta^*}^2}/{M^{2}_{\e}},
\; \text{ where } \; \; \noise := M^{2}_{\e} 
 \; \text{ and
} \; \; \phi K^2 b_0^2  \le \signal := K^2 \twonorm{\beta^*}^2 \le K^2 b_0^2.
\eens
The two conditions \eqref{eq::snrcond} and \eqref{eq::range} on $b_0$
and $\phi$  imply that $\noise \le K^2 \phi b_0^2 \le \signal$. 
Notice that this could be restrictive if $\phi$ is small.
We will show in Section~\ref{sec::lassooutline} that
condition~\eqref{eq::snrcond} is not needed in order for the
$\ell_p, p=1, 2$ errors as stated in the Theorem~\ref{thm::lasso} to hold.  It was
indeed introduced so as to further simplify the expression for the condition on $d$ as shown in
\eqref{eq::dlasso}. Therefore we provide slightly more general conditions on $d$ in
\eqref{eq::dlassoproof} in Lemma~\ref{lemma::dmain},
where~\eqref{eq::snrcond} is not required.
We introduce the parameter $\phi$ so that the conditions on $d$ depend on $\phi$ and  $b_0^2$ rather than
the true signal $\twonorm{\beta^*}$ (cf. Proof of Lemmas~\ref{lemma::dmain} and~\ref{lemma::dmainoracle}).
It will also become clear in the sequel from the proof of Lemma~\ref{lemma::dmain}
(cf. \eqref{eq::dphicondition}) that
we could use $\twonorm{\beta^*}$ rather than its the lower bound $b_0^2 \phi$
in the expression for $d$. However, we choose to state the condition on $d$ as in Theorem~\ref{thm::lasso} for clarity of
our exposition. See also Theorem~\ref{thm::lassora} and Lemma~\ref{lemma::dmainoracle}.

In fact, we prove that Theorem~\ref{thm::lasso} holds with $\noise = M_{\e}^2$ and $\underline{\signal}
=\phi K^2 b_0^2$ in arbitrary orders, so long as conditions 
\eqref{eq::trBLasso} and~\eqref{eq::dlasso} or \eqref{eq::dlassoproof}
hold.  For both cases, we require that 
$\lambda  \asymp (\twonorm{A} + \twonorm{B}) K \sqrt{\signal+\noise}
\sqrt{\frac{\log m}{n}}$ as expressed in \eqref{eq::psijune}.
That is, when either the noise level $M_{\e}$ or the signal strength
$K \norm{\beta^*}$ increases, we need to increase
$\lambda$ correspondingly; moreover, when $\noise$ dominates the
signal $K^2 \twonorm{\beta^*}^2$, we have for $d \asymp \inv{M_A^2}
\frac{n}{\log m}$ as in~\eqref{eq::dlasso}, 
\bens
\twonorm{\hat{\beta} -\beta^*} /\twonorm{\beta^*} 
 =O_P\left( D_2 K^2 \sqrt{\frac{\noise}{\signal}}  \inv{\vp(s_0+1)}\right),
\eens
which eventually becomes a vacuous bound when $\noise \gg \signal$.
This bound appears a bit crude as it does not entirely discriminate
between the noise, measurement error, and the signal strength.
We further elaborate on the relationships among these three elements
in Section~\ref{sec::smallW}.
We will then present an improved bound in Theorem~\ref{thm::lassora}.
\bnum
\item
The choice of $\lambda$ for the Lasso estimator and parameters
$\mu, \omega$ for the DS-type  estimator satisfy
\bens
\lambda \asymp \mu \twonorm{\beta^*} + \omega.
\eens
This relationship is made clear through Theorem~\ref{thm::main}
regarding the corrected  Lasso estimator, which follows from Theorem
1 by~\cite{LW12}, and Lemmas~\ref{lemma::DS}
and~\ref{lemma::DSimprov} for the Conic programming estimator.
The penalty parameter $\lambda$ is chosen to bound
$\norm{\hat\gamma -  \hat\Gamma \beta^*}_{\infty}$  from above, 
which is in turn bounded in Theorem~\ref{thm::oracle}.
See Corollaries~\ref{coro::low-noise} and~\ref{coro::D2improv}, 
which are the key results in proving Theorems~\ref{thm::lasso},~\ref{thm::DS},~\ref{thm::lassora},
and~\ref{thm::DSoracle}.
\item
Throughout our analysis of Theorems
~\ref{thm::lasso} and~\ref{thm::DS}, our error bounds are stated in a
way assuming the errors in $W$ are sufficiently large 
in the sense that these bounds are optimal only when $\tau_B$ is
bounded from below by some absolute constant. 
For example,  when
$\twonorm{B}$ is bounded away from $0$, the lower bound on the
effective rank $r(B) = \tr(B)/\twonorm{B}$
implies that $\tau_B$ must also be bounded away from $0$.
More precisely,  by the condition on the effective rank as in \eqref{eq::trBLasso}, we have 
\bens
\tau_{B} = \frac{\tr(B)}{n} & \ge &
 16c' K^4 \frac{\twonorm{B}}{\log m} \log \frac{\V m \log m }{n} \; \;
 \text{ where} \; \; \V = 3eM_A^3/2.
\eens
Later, we will state our results with $\tau_B  = \tr(B)/n> 0$ being
explicitly included in the error bounds as well as the penalization parameters
and sparsity constraints.
\item
In view of the main Theorems~\ref{thm::lasso} and~\ref{thm::DS}, at
this point, we do not really think one estimator is preferable to the
other.
While the $\ell_q$ error bounds we obtain for the two estimators are 
at the same order for $q=1, 2$, the conditions under which these error
bounds are obtained are somewhat different.  In Theorem~\ref{thm::DS}, we only require that 
$\RE(2d_0,  3 k_0, A^{1/2})$ 
holds for $k_0 = 1+\lambda$ where $\lambda \asymp 1$, while in
Theorem~\ref{thm::lasso} we need the minimal eigenvalue of $A$ to be
bounded from below, namely, we need to assume that (A2) holds.
As mentioned earlier, (A2) ensures that the Lower-$\RE$ condition as in
Definition~\ref{def::lowRE} is not vacuous while (A3) ensures
that~\eqref{eq::s0cond} holds for some $s_0 \ge 1$.
Th condition \eqref{eq::trBLasso}  on the effective rank of the row-wise covariance matrix $B$ 
is also needed to establish the Lower and Upper RE conditions in
Lemma~\ref{lemma::lowerREI} for the corrected 
Lasso estimator. Moreover, for the sparsity parameter $d_0$ in 
\eqref{eq::sqrt-sparsity}, we show in Lemma~\ref{lemma::translation}
that (A2) is a sufficient condition for a type of $\RE(2d_0, 3k_0)$
condition to hold on non positive definite $\hat\Gamma$ as defined in \eqref{eq::hatGamma}.
See also Theorem~\ref{thm::AD}.
\item
In some sense, the assumptions in Theorem~\ref{thm::lasso} appear to be
slightly stronger, while at the same time yielding correspondingly
stronger results in the following sense: 
The corrected  Lasso procedure can recover a sparse model using 
$O(\log m)$ number of measurements per nonzero component 
despite the measurement error in $X$ and the stochastic noise $\e$,
while the Conic programming estimator allows only $d \asymp \sqrt{n/\log
  m}$ to achieve the error rate at the same order as the corrected  Lasso estimator.
Hence, while Conic programming estimator is conceptually more
 adaptive by not fixing an upper bound on $\twonorm{\beta^*}$ a priori,
 the price we pay seems to be a more stringent upper bound on the sparsity level. 
\item
We note that following Theorem~2 as in~\cite{BRT14}, one can
show that without the relatively restrictive sparsity
condition~\eqref{eq::sqrt-sparsity}, a bound similar to that
in~\eqref{eq::ellqnorm} holds, however, with $\twonorm{\beta^*}$ being
replaced by $\onenorm{\beta^*}$,  so long as the sample size satisfies
the condition as in \eqref{eq::samplebound}. 
However, we show in Theorem~\ref{thm::DSoracle} in Section
\ref{sec::DSoracle} that this restriction on the sparsity can be
relaxed for the Conic programming estimator~\eqref{eq::Conic}, 
when we make a different choice for the parameter $\mu$ based on a
more refined analysis.
\enum
Results similar to Theorems~\ref{thm::lasso} and~\ref{thm::DS} 
have been derived in~\cite{LW12,BRT14}, however, under different assumptions on the distribution of the noise
matrix $W$. When $W$ is a random matrix with i.i.d. subgaussian noise,
our results in Theorems~\ref{thm::lasso} and~\ref{thm::DS} will essentially recover the results in~\cite{LW12}
and~\cite{BRT14}. We compare with their results in
Section~\ref{sec::smallW} in case $B =  \tau_B I$ after we present our
improved bounds in Theorems~\ref{thm::lassora}
and~\ref{thm::DSoracle}. We refer to the paper of~\cite{BRT14} for a concise
summary of these and some earlier results.

Finally, one reviewer asked about the dependence of the tuning
parameter on properties of $A$ and $B$, namely parameters $D_0 =
\sqrt{\tau_B} + a_{\max}^{1/2}$, $D_0' = \twonorm{B}^{1/2} +
a_{\max}^{1/2}$ and $D_2 = \twonorm{A} + \twonorm{B}$.
We now state in Lemma~\ref{lemma::trBest} a sharp bound on
estimating $\tau_B$ using $\hat\tau_B$ as in \eqref{eq::trBest}, which
will provide a natural plug-in estimate for parameters such as $D_0$ that involve $\tau_B$.
\begin{lemma}
\label{lemma::trBest}
Let $m \ge 2$. Let $X$ be defined as in~\eqref{eq::subgdata} and
$\hat\tau_B$ be as defined in \eqref{eq::trBest}. Denote by $\tau_B = \tr(B)/n$ and $\tau_A = \tr(A)/m$.
Suppose that $n \vee (r(A)  r(B)) > \log m$.
Denote by $\B_6$ the event such that 
\bens
\abs{\hat\tau_B - \tau_B} & \le &  
2 C_0 K^2 \sqrt{\frac{\log m}{m n}}
\left(\frac{\fnorm{A}  }{\sqrt{m}} +\frac{\fnorm{B}  }{\sqrt{n}}
\right) =:  D_1 r_{m,m},
\eens 
where $D_1 = \frac{\fnorm{A}}{\sqrt{m}} + \frac{\fnorm{B}}{\sqrt{n}}$ and
$ r_{m,m} = 2 C_0 K^2 \sqrt{\frac{\log m}{m n}}$.
Then $\prob{\B_6} \ge 1-\frac{3}{m^3}$.

If we replace $\sqrt{\log m}$ with $\log m$ in the definition of event
$\B_6$, then we can drop the condition on $n$ or $r(A)r(B) =
\frac{\tr(A)}{\twonorm{A}} \frac{\tr(B)}{\twonorm{B}}$ to achieve 
the same bound on event $\B_6$.
\end{lemma}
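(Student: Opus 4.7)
The plan is to reduce the bound to a single application of the Hanson-Wright inequality to a quadratic form in a $2mn$-dimensional subgaussian vector, bundling $\fnorm{X_0}^2$, $\fnorm{W}^2$, and the cross term $\ip{X_0,W}$ together.

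First, I would dispense with the positive part in $\hat\tau_B$. Since $|(a)_+ - b|\le|a-b|$ for every $b\ge 0$ and $\tau_B\ge 0$, it suffices to show
\begin{equation*}
\abs{\fnorm{X}^2 - n\tr(A) - m\tr(B)} \;\le\; 2 C_0 K^2 \sqrt{\log m}\,\bigl(\sqrt{n}\,\fnorm{A}+\sqrt{m}\,\fnorm{B}\bigr).
\end{equation*}
Using $X = Z_1 A^{1/2}+B^{1/2} Z_2$, I would write $\text{vec}(X) = N \tilde Z$, where $N := [\,A^{1/2}\otimes I_n,\ I_m\otimes B^{1/2}\,]$ and $\tilde Z := (\text{vec}(Z_1),\text{vec}(Z_2))\in\R^{2mn}$ is a vector of independent mean-zero entries with $\psi_2$-norm at most $K$. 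Then $\fnorm{X}^2 = \tilde Z^\top M\tilde Z$ with $M := N^\top N$.

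The key computations are: $\tr(M) = \tr(A\otimes I_n)+\tr(I_m\otimes B) = n\tr(A)+m\tr(B) = \E\fnorm{X}^2$; the operator norm equals $\twonorm{M} = \twonorm{NN^\top} = \lambda_{\max}(A\otimes I_n+I_m\otimes B) = \twonorm{A}+\twonorm{B}$ by the standard Kronecker-sum spectrum; and
\begin{equation*}
\fnorm{M}^2 = \tr\bigl((NN^\top)^2\bigr) = n\fnorm{A}^2 + 2\tr(A)\tr(B) + m\fnorm{B}^2 \le \bigl(\sqrt{n}\fnorm{A}+\sqrt{m}\fnorm{B}\bigr)^2,
\end{equation*}
where the final inequality uses $\tr(A)\le\sqrt{m}\fnorm{A}$ and $\tr(B)\le\sqrt{n}\fnorm{B}$ (Cauchy-Schwarz on singular values). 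Applying Hanson-Wright to $\tilde Z^\top M \tilde Z - \tr M$ with $t = 2 C_0 K^2 \sqrt{\log m}(\sqrt{n}\fnorm{A}+\sqrt{m}\fnorm{B})$ immediately makes the subgaussian branch $\le 2/m^{3}$ for $C_0$ large enough.

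The main obstacle is the subexponential branch, which requires $t/(K^2\twonorm{M}) \ge c\log m$, i.e.,
\[
\sqrt{n}\fnorm{A}+\sqrt{m}\fnorm{B} \;\ge\; c'\sqrt{\log m}\,(\twonorm{A}+\twonorm{B}).
\]
This is exactly where the hypothesis $n\vee(r(A)r(B))>\log m$ enters, via a case split: (i) if $n>\log m$, combine $\fnorm{A}\ge\twonorm{A}$, $\fnorm{B}\ge\twonorm{B}$ with the trivial $m\ge\log m$ (valid for all $m\ge 2$) to finish; (ii) otherwise $r(A)r(B)>\log m$, so one exploits the cross term $2\tr(A)\tr(B) = 2\,r(A)r(B)\,\twonorm{A}\twonorm{B} \ge 2\log m\,\twonorm{A}\twonorm{B}$ already present in $\fnorm{M}^2$, balancing it against $(\twonorm{A}+\twonorm{B})^2$ via AM-GM. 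Finally, the second half of the lemma (with $\sqrt{\log m}$ replaced by $\log m$) falls out without any sample-size or effective-rank assumption by simply choosing $t$ to be the maximum of the two Hanson-Wright branches, which controls both tails simultaneously.
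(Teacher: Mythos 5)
Your reduction to a single Hanson--Wright application is a genuinely different, and in part cleaner, route than the paper's. The paper expands $\fnorm{X}^2=\tr(XX^T)$ and controls three pieces separately --- $\tr(Z_1AZ_1^T)-n\tr(A)$, $\tr(Z_2^TBZ_2)-m\tr(B)$, and the decoupled cross term $\mvec{Z_1}^T(B^{1/2}\otimes A^{1/2})\mvec{Z_2}$ --- each with its own Hanson--Wright bound; the subexponential branch for the first piece uses $n>\log m$ and that for the cross term uses $r(A)r(B)\ge\log m$. Your bundled matrix $M=N^TN$ has $\twonorm{M}=\twonorm{A}+\twonorm{B}$, so the cross term never needs its own subexponential control; under $n>\log m$ your case (i) goes through exactly as written, your handling of the positive part via $|(a)_+-b|\le|a-b|$ for $b\ge 0$ is slicker than the paper's case split, and the second half of the lemma (the $\log m$ version) also works as you say, since $\twonorm{M}\le\fnorm{M}\le\sqrt n\,\fnorm A+\sqrt m\,\fnorm B$ renders the subexponential branch harmless once $t\asymp K^2\log m\,\fnorm{M}$.

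The gap is in case (ii). There you need $\sqrt n\,\fnorm{A}+\sqrt m\,\fnorm{B}\ge c\sqrt{\log m}\,(\twonorm{A}+\twonorm{B})$, and you propose to extract it from $2\tr(A)\tr(B)\ge 2\log m\,\twonorm{A}\twonorm{B}$ ``balanced against $(\twonorm A+\twonorm B)^2$ via AM--GM.'' But AM--GM runs the wrong way: $\twonorm A\twonorm B\le\tfrac14(\twonorm A+\twonorm B)^2$, so a lower bound on the geometric mean says nothing about the sum when the two operator norms have very different sizes. Concretely, take $n=2$, $A=\diag(\lambda,\epsilon,\dots,\epsilon)$ with $\epsilon=\lambda\log m/(m-1)$, so that $r(A)=1+\log m$ while $\fnorm A\asymp\twonorm A=\lambda$, and take $\twonorm B\le\lambda/m$: the hypothesis $r(A)r(B)>\log m$ holds, yet $\sqrt n\,\fnorm A+\sqrt m\,\fnorm B\asymp\lambda$ whereas $\sqrt{\log m}\,(\twonorm A+\twonorm B)\asymp\lambda\sqrt{\log m}$. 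Worse, no argument can close this case: in that example the top eigendirection of $A$ contributes a fluctuation $\lambda\sum_{i\le n}(Z^2_{1,i1}-1)$ to $\fnorm{X}^2$ which, at the $m^{-3}$ probability level, is of order $\lambda\log m$ and exceeds the claimed $\lambda\sqrt{\log m}$ deviation. So the $\sqrt{\log m}$ conclusion genuinely requires $n>\log m$; the disjunction in the stated hypothesis is too weak (the paper's own three-term proof in fact quietly uses both $n>\log m$ and $r(A)r(B)\ge\log m$). Your argument becomes correct if you keep case (i) as the only case for the first half and route everything else through the $\log m$ bound of the second half.
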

In an earlier version of the present work by the same authors~\cite{RZ15}, 
we presented the rate of convergence for using the corrected gram 
matrix $\hat{B} := \inv{m}X X^T - \frac{\tr(A)}{m} I_m$ 
to estimate $B$ and proved isometry properties in the operator norm
once the effective rank of $A$ is sufficiently large compared to $n$; 
one can then use such estimated  $\hat{B}$ and its operator norm in
$D_2$ and $D_0'$. See Theorem 21 and Corollary 22 therein. 
As mentioned, we use the estimated $\hat{\tau}_B$ (cf. Lemma~\ref{lemma::trBest}) in $D_0$.
The dependencies on $A$, $\twonorm{\beta^*}$ and $\e$ are known problems in
the Lasso and corrected Lasso literature; see~\cite{BRT09,LW12}. For example, the RE condition as stated in
Definition~\ref{def:memory} and its subgaussian concentration
properties as shown~\cite{RZ13}
clearly depend on unknown parameter $a_{\max}$ related to
covariance matrix $A$. See Theorem~\ref{thm:subgaussian-T-intro} in
the present paper.
We prove Lemma~\ref{lemma::trBest} in Section~\ref{sec::proofofTR}.
Lemma~\ref{lemma::trBest}  provides the powerful technical insight
and one of the key ingredients leading to the tight analysis in
Theorems~\ref{thm::lassora} and~\ref{thm::DSoracle} for the corrected
Lasso estimator~\eqref{eq::origin} as well as the
 Conic programming estimator~\eqref{eq::Conic} in
 Section~\ref{sec::smallW}, where  we
 also present theory for which the dependency on $\twonorm{A}$ becomes extremely mild.

\section{Improved bounds when the measurement errors are small}
\label{sec::smallW}
Although the conclusions of Theorems~\ref{thm::lasso}
and~\ref{thm::DS} apply to cases when $\twonorm{B} \to 0$, the error bounds
are not as tight as the bounds we are about to derive in this section.
So far, we have used more crude approximations on the error bounds in terms of
estimating $\norm{\hat\gamma -  \hat\Gamma \beta^*}_{\infty}$ for the
sake of reducing the amount of unknown parameters we need to consider. 
The bounds we derive in this section take the magnitudes of the measurement errors in $W$ into
consideration. As such, we allow the error bounds to depend on the
parameter $\tau_B$ explicitly, which become much tighter as $\tau_B$
becomes smaller. For the extreme case when $\tau_B$
approaches $0$, one hopes to recover a bound close to the
regular Lasso or the Dantzig selector as the effect of the noise on
the procedure should become negligible. 
We show in Theorems~\ref{thm::lassora}
and~\ref{thm::DSoracle} that this is indeed the case.
Denote by 
\ben
\label{eq::defineDtau}
&&  \tau_B^{+/2}  :=    \sqrt{\tau_B} + \frac{D_{\ora}}{\sqrt{m}},\;
\text{  where } \;
D_{\ora} \;  = \; 2(\twonorm{A}^{1/2} + \twonorm{B}^{1/2}).
\een
We first state a more refined result for the Lasso-type estimator, for
which we now only require that
$$\lambda  \asymp (a^{1/2}_{\max} + \twonorm{B}^{1/2}) K
\sqrt{\noise + \tau_B \signal} \sqrt{\frac{\log m}{n}}.$$
That is, we replace $\sqrt{\noise + \signal}$ in
$\lambda$~\eqref{eq::psijune} now with $\sqrt{\noise +
\tau_B \signal}$, 
which leads to significant improvement on the rates of
convergence for estimating $\beta^*$ when $\tau_B \to 0$.
\begin{theorem}
\label{thm::lassora}
Suppose all conditions in Theorem~\ref{thm::lasso} hold, except that
we drop \eqref{eq::snrcond} and replace~\eqref{eq::psijune} with
\ben
\label{eq::psijune15}
&& \lambda \ge 4 \psi \sqrt{\frac{\log m}{n}}, \text{ where } \;\;
\psi:= C_0 D_0' K \left( M_{\e}  + \tau_B^{+/2} K \twonorm{\beta^*}  \right)
\een
for $D_0'$ and $\tau_B^{+/2}$ as defined in \eqref{eq::defineD0}
and \eqref{eq::defineDtau} respectively.
Let $c', \phi, b_0, M_{\e}$, $K$ and  $M_{+}$
be as defined in Theorem~\ref{thm::lasso}. Let $\tau_B^+ = (\tau_B^{+/2})^2$.

Suppose that for $0< \phi \le 1$ and $C_A := \inv{160 M_{+}^2}$,
\ben
\label{eq::doracle}
d & := & \abs{\supp(\beta^*)} \le C_A \frac{n}{\log m} \left\{c' c'' D_{\phi}
  \wedge 8 \right\} =: \bar{d}_0, \; \; \text{ where } \\
&& c'' =  \frac{\twonorm{B} +  a_{\max}}{\vp(s_0+1)^2} \; \; \text{ and } \; \;  
D_{\phi}  = \frac{K^2 M^2_{\e}}{b_0^2} +  \tau_B^{+} K^4 \phi
\een
Then for any $d$-sparse vectors $\beta^* \in \R^m$, such that
$\phi b_0^2 \le \twonorm{\beta^*}^2 \le b_0^2$, 
we have 
\ben
\label{eq::lassobounds}
\twonorm{\hat{\beta} -\beta^*} \leq \frac{20}{\alpha}  \lambda \sqrt{d} \; \;
\text{ and } \; \norm{\hat{\beta} -\beta^*}_1 \leq \frac{80}{\alpha}
\lambda d
\een
with probability  at least 
$1- 4\exp\left(-\frac{c_3 n}{M_A^2 \log m}
  \log\left(\frac{\V m \log m}{n}\right)\right)
 -2\exp\left(- \frac{4c_2 n}{M_A^2 K^4 }\right) - 22/m^3$.
\end{theorem}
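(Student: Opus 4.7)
The plan is to re-run the proof of Theorem \ref{thm::lasso} with the crude gradient bound underlying \eqref{eq::psijune} replaced by the oracle-type bound of Corollary \ref{coro::D2improv}, so that the penalty $\lambda$ scales with $\tau_B^{+/2}$ and the resulting error bounds contract as the measurement error vanishes. The backbone is still the deterministic Theorem \ref{thm::main}: if $\hat\Gamma$ obeys Lower-$\RE$ with curvature $\alpha$ and tolerance $\tau$, and the penalty satisfies $\lambda \ge 2\norm{\grad\loss(\beta^*)}_{\infty}$, $\lambda \ge 4b_0\sqrt{d}\,\tau$, and $d\tau \le \alpha/32$, then the $\ell_1$ and $\ell_2$ bounds in \eqref{eq::lassobounds} follow. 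So I need only verify these three conditions on an event of the advertised probability.

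Since the hypotheses on $(A,B,K,n,m)$ and the effective-rank bound \eqref{eq::trBLasso} are unchanged from Theorem \ref{thm::lasso}, the event $\A_0$ from Section \ref{sec::conditions} occurs with the stated probability via Lemma \ref{lemma::lowerREI}, delivering Lower- and Upper-$\RE$ for $\hat\Gamma$ with $\alpha_\ell \asymp \tfrac{5}{8}\lambda_{\min}(A)$ and $\tau \asymp \tau_0\log m/n$. For the gradient condition I invoke Corollary \ref{coro::D2improv} in place of the cruder Corollary \ref{coro::low-noise}: on an event $\B_0$ of probability $1-O(1/m^3)$,
\[
\norm{\grad\loss(\beta^*)}_{\infty}
\;=\; \norm{\hat\Gamma\beta^* - \hat\gamma}_{\infty}
\;\le\; C_0\, D_0'\, K\bigl(M_{\e} + \tau_B^{+/2} K \twonorm{\beta^*}\bigr)\sqrt{\tfrac{\log m}{n}}
\;=\; \psi\sqrt{\tfrac{\log m}{n}},
\]
so the choice of $\lambda$ in \eqref{eq::psijune15} secures the first condition.

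It then remains to reduce the other two conditions to the sparsity bound \eqref{eq::doracle}. Substituting $\tau \asymp \tau_0\log m/n$ with $\tau_0 \asymp \vp(s_0+1)^2/\lambda_{\min}(A)$, and using $\twonorm{\beta^*}^2 \ge \phi b_0^2$ to lower-bound $\psi^2$ by $(D_0')^2 K^2\bigl(M_{\e}^2 + \tau_B^+ K^2 \phi b_0^2\bigr)$, the constraint $\lambda \ge 4b_0\sqrt{d}\,\tau$ reduces, after dividing by $b_0^2$, to
\[
d \;\lesssim\; \frac{(D_0')^2}{\tau_0^2}\bigl(K^2 M_{\e}^2/b_0^2 + \tau_B^+ K^4\phi\bigr)\frac{n}{\log m}
\;\asymp\; C_A\, c''\, D_\phi \cdot \frac{n}{\log m},
\]
using $(D_0')^2 \asymp \twonorm{B} + a_{\max}$ and $\tau_0^2 \asymp \vp(s_0+1)^4/\lambda_{\min}(A)^2$. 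This matches the first alternative in \eqref{eq::doracle}; the second alternative, $d \le 8 C_A n/\log m$, enforces $d\tau \le \alpha/32$ exactly as in Theorem \ref{thm::lasso}. On $\A_0 \cap \B_0$, all three hypotheses of Theorem \ref{thm::main} then hold and deliver \eqref{eq::lassobounds}, with the probability estimate coming from a union bound.

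The delicate piece is the reduction above: the stochastic-noise contribution $M_{\e}^2$ and the measurement-error contribution $\tau_B^+ K^2 \twonorm{\beta^*}^2$ must be tracked separately throughout, which is exactly what lets one discard the SNR condition \eqref{eq::snrcond}. This is why $d$ appears with the two-summand factor $D_\phi = K^2 M_{\e}^2/b_0^2 + \tau_B^+ K^4 \phi$ rather than the single combined term used implicitly in Theorem \ref{thm::lasso}. A secondary subtlety is that when $\tau_B$ is very small the floor $D_{\ora}/\sqrt{m}$ inside $\tau_B^{+/2}$---produced by the finite-sample fluctuations of $\hat\tr(B)$ in Lemma \ref{lemma::trBest}---eventually dominates, so the error bound never improves past an $O\bigl(D_{\ora} K^2 \twonorm{\beta^*}\sqrt{d\log m/(nm)}\bigr)$ floor even as $B\to 0$; verifying that this floor enters the bound in the right place via \eqref{eq::defineDtau} is where one has to be most careful.
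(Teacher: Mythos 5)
Your proposal is correct and follows essentially the same route as the paper: invoke the deterministic Theorem~\ref{thm::main} on the event $\A_0\cap\B_0$, replace Corollary~\ref{coro::low-noise} by the refined gradient bound of Corollary~\ref{coro::D2improv} to justify the new choice of $\psi$ in \eqref{eq::psijune15}, and verify condition \eqref{eq::taumain} by exactly the computation the paper packages as Lemma~\ref{lemma::dmainoracle}, tracking the $M_{\e}^2$ and $\tau_B^+K^2\twonorm{\beta^*}^2$ contributions separately so that \eqref{eq::snrcond} can be dropped. Your closing remarks about the $D_{\ora}/\sqrt{m}$ floor from Lemma~\ref{lemma::trBest} are consistent with the paper's discussion and do not alter the argument.
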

We give an outline for the proof of Theorem~\ref{thm::lassora} in
Section~\ref{sec::lassooracle}, and show the actual proof in
Section~\ref{sec::classoproof}.

We next state in Theorem~\ref{thm::DSoracle} 
an improved bounds for the Conic programming
estimator~\eqref{eq::Conic}, which dramatically improve upon 
those in Theorem~\ref{thm::DS} when $\tau_B$ is small, where 
an ``oracle'' rate for estimating $\beta^*$ with the 
Conic programming estimator $\hat\beta$~\eqref{eq::Conic} is defined
and the predictive error $\twonorm{X v}^2$ when $\tau_B = o(1)$ is derived.

Let $C_0$ satisfy \eqref{eq::defineC0} for $c$ 
as defined in Theorem~\ref{thm::HW}. 
Throughout the rest of the paper, we denote by:
\ben
\label{eq::rhon}
\rho_{n} & = &  C_0 K \sqrt{\frac{\log m}{n}} \; \; \text{ and } \; \
r_{m,m} =  2 C_0 K^2 \sqrt{\frac{\log m}{m n}}; \\
\tau_B^{\dagger/2} 
& = & (\tau_B^{1/2} + \frac{3}{2}C_{6} r_{m,m}^{1/2}) \; \; \text{
  and } \; \; \tau_B^{\ddagger} \asymp 2 \tau_B+ 3 C_{6}^2 r_{m,m}.
\een
\begin{theorem}
\label{thm::DSoracle}
Let $D_0 = \sqrt{\tau_B} + a_{\max}^{1/2}$, and  $D_0', D_{\ora}$ be as defined in~\eqref{eq::defineD0}.
Let  $C_{6} \ge D_{\ora}$. Let $\rho_n$ and $r_{m,m}$ be as defined in~\eqref{eq::rhon}.
Suppose all conditions in Theorem~\ref{thm::DS} hold, except that
we replace the condition on $d$ as in~\eqref{eq::sqrt-sparsity} with
the following.

Suppose that the sample size $n$ and the size of the support of
$\beta^*$ satisfy the following requirements:
\ben
\label{eq::ora-sparsity}
d_0 & = & O \left(\tau_B^-\sqrt{\frac{n}{\log m}} \right), 
\; \;  \text{ where } \; 
\tau_B^- \le \inv{\tau_B^{1/2} + 2C_{6} r_{m,m}^{1/2}}, \\
\label{eq::samplebound}
\; \; \text{ and } \; \; 
n & \geq & \frac{2000 d K^4}{\delta^2} \log \left(\frac{60 e m}{d \delta}\right), \; \text{where } \\
\label{eq::sparse-dim-Ahalforacle}
d & = & 2d_0 + 2d_0 a_{\max} \frac{16 K^2(2d_0, 3k_0, A^{1/2}) (3k_0)^2
  (3k_0 + 1)}{\delta^2}.
\een
Let $\hat\tau_B$ be as defined in defined in \eqref{eq::trBest}.  
Let $\hat\beta$ be an optimal solution to the Conic programming estimator
as in \eqref{eq::Conic} with input $(\hat\gamma, \hat\Gamma)$ as
defined in~\eqref{eq::hatGamma}.
Suppose
\ben
\label{eq::muchoice}
\omega & \asymp & D_0 M_{\e}  \rho_{n} \; \;
\; \text{ and } \quad \mu \; \asymp \;  D_0' \tilde\tau_B^{1/2} K \rn,
\\
\nonumber
&& \text{ where} \; \; \tilde\tau_B^{1/2}  := \PaulBhalf+ C_{6} r_{m,m}^{1/2}.
\een
Then with probability at least $1-\frac{c''}{m^2} - 2 \exp(-\delta^2
n/2000 K^4)$, 
\ben
\label{eq::ellqnormimp}
\text{for } \; \; 2 \ge q \ge 1, \; \; \; 
\norm{\hat{\beta} -\beta^*}_q  \le  C' D_0' K^2 d_0^{1/q} \sqrt{\frac{\log m}{n}} 
\left(\tau_B^{\dagger/2} \twonorm{\beta^*} + \frac{M_{\e}}{K}\right);
\een
Under the same assumptions, the predictive risk admits the following bound
\bens
\onen \twonorm{X (\hat{\beta} -\beta^*)}^2 \le 
C'' (\twonorm{B} + a_{\max}) K^2 d_0 \frac{\log m}{n} \left(\tau_B^{\ddagger} 
 K^2 \twonorm{\beta^*}^2 + M_{\e}^2\right),
\eens
with the same probability as above, where $c'', C', C'' > 0$ are some
absolute constants.
\end{theorem}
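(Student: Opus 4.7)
\textbf{Proof proposal for Theorem~\ref{thm::DSoracle}.}
The plan is to follow the same scheme as Theorem~\ref{thm::DS} but to replace the crude bound on $\|\hat\gamma-\hat\Gamma\beta^*\|_\infty$ by the refined bound from Corollary~\ref{coro::D2improv}, using Lemma~\ref{lemma::trBest} to convert the unknown $\tau_B$ into the plug-in $\hat\tau_B$. The whole argument runs on the intersection of three events: the bound of Corollary~\ref{coro::D2improv} (gradient control), the event $\B_6$ from Lemma~\ref{lemma::trBest} (so that $\tilde\tau_B^{1/2}=\hat\tau_B^{1/2}+C_6 r_{m,m}^{1/2}$ dominates $\tau_B^{+/2}=\tau_B^{1/2}+D_{\ora}/\sqrt{m}$), and the $\RE$ preservation event on $\hat\Gamma$ that is obtained in Section~\ref{sec::AD} via Theorem~\ref{thm::AD} and Lemma~\ref{lemma::translation}.

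\textbf{Step 1: feasibility of $\beta^*$.} First I would verify that the pair $(\beta^*,\|\beta^*\|_2)$ belongs to $\U$ with our choice $\omega\asymp D_0 K M_\epsilon\rho_n$ and $\mu\asymp D_0' K\tilde\tau_B^{1/2}\rho_n$. Combining Corollary~\ref{coro::D2improv} with the definitions,
\[
\|\hat\gamma-\hat\Gamma\beta^*\|_\infty \;\lesssim\; D_0' K\rho_n\bigl(M_\epsilon+K\tau_B^{+/2}\|\beta^*\|_2\bigr).
\]
On event $\B_6$ of Lemma~\ref{lemma::trBest}, $|\hat\tau_B-\tau_B|\le D_1 r_{m,m}$, and since $D_{\ora}/\sqrt{m}$ is dominated by $C_6 r_{m,m}^{1/2}$ for the chosen $C_6\ge D_{\ora}$, we get $\tau_B^{+/2}\le \tilde\tau_B^{1/2}$ up to constants. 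Hence feasibility holds for our $(\omega,\mu)$.

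\textbf{Step 2: cone condition and error $v=\hat\beta-\beta^*$.} Optimality of $\hat\beta$ together with feasibility of $\beta^*$ gives the usual Dantzig-type cone inclusion: writing $T=\supp(\beta^*)$ and $v=\hat\beta-\beta^*$, one has $\|v_{T^c}\|_1\le (1+\lambda/(1-\lambda))\|v_T\|_1+(\lambda/(1-\lambda))\cdot\text{slack}$, so $v$ lies in a cone $\|v_{T^c}\|_1\le k_0\|v_T\|_1$ with $k_0\asymp 1$ (this is the same derivation used to prove Theorem~\ref{thm::DS}; see Lemmas~\ref{lemma::DS}, \ref{lemma::DSimprov}). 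The triangle inequality then gives
\[
\|\hat\Gamma v\|_\infty\;\le\;\|\hat\gamma-\hat\Gamma\hat\beta\|_\infty+\|\hat\gamma-\hat\Gamma\beta^*\|_\infty\;\le\;2\bigl(\mu\|\beta^*\|_2+\omega\bigr).
\]

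\textbf{Step 3: $\ell_q$ bounds via $\RE$.} Under the sparsity condition \eqref{eq::ora-sparsity} with $d_0=O(\tau_B^-\sqrt{n/\log m})$ and the sample size \eqref{eq::samplebound}, Theorem~\ref{thm::AD} together with Lemma~\ref{lemma::translation} guarantees that $\hat\Gamma$ satisfies a restricted-eigenvalue type inequality on the cone from Step~2, namely $v^T\hat\Gamma v\ge c\lambda_{\min}(A)\|v\|_2^2$ for $v$ in the cone. Combining this with the inequality
\[
v^T\hat\Gamma v\;\le\;\|v\|_1\|\hat\Gamma v\|_\infty\;\le\;4\sqrt{d_0}\|v\|_2\cdot\bigl(\mu\|\beta^*\|_2+\omega\bigr)
\]
and the cone bound $\|v\|_1\le 4\sqrt{d_0}\|v\|_2$ yields $\|v\|_2\lesssim \sqrt{d_0}(\mu\|\beta^*\|_2+\omega)/\lambda_{\min}(A)$, which is exactly \eqref{eq::ellqnormimp} for $q=2$ after substituting $\mu$ and $\omega$. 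The $q=1$ bound follows from the cone inclusion, and $q\in(1,2)$ by interpolation.

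\textbf{Step 4: prediction error.} Since $\hat\Gamma=\tfrac{1}{n}X^TX-\hat\tau_B I_m$, we have $\tfrac{1}{n}\|Xv\|_2^2=v^T\hat\Gamma v+\hat\tau_B\|v\|_2^2$. Bounding $v^T\hat\Gamma v\le\|v\|_1\|\hat\Gamma v\|_\infty$ as above and $\hat\tau_B\le \tau_B+D_1 r_{m,m}\lesssim\tau_B^{\ddagger}$ on $\B_6$, then plugging in the $\ell_1$ and $\ell_2$ bounds and the expressions for $\mu,\omega$, produces the stated bound with an overall factor $\tau_B^{\ddagger}K^2\|\beta^*\|_2^2+M_\epsilon^2$.

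\textbf{Main obstacle.} The delicate step is Step~1: showing that the sharpened $\mu\asymp D_0' K\tilde\tau_B^{1/2}\rho_n$, which uses only the plug-in $\hat\tau_B^{1/2}+C_6 r_{m,m}^{1/2}$, still dominates the true gradient bound $C_0 D_0' K\tau_B^{+/2}\|\beta^*\|_2\rho_n$ uniformly, with high probability. This is where Lemma~\ref{lemma::trBest} is essential; one must carefully track that $D_{\ora}/\sqrt{m}$ on the ``true'' side is absorbed by $C_6 r_{m,m}^{1/2}$ on the estimated side, and that the fluctuation $|\hat\tau_B-\tau_B|^{1/2}$ never exceeds the safety margin built into $\tilde\tau_B^{1/2}$. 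Once this feasibility hurdle is cleared, Steps~2--4 follow the standard Dantzig-selector/RE recipe almost mechanically.
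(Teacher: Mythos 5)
There is a genuine gap in Step~3. You lower-bound the quadratic form $v^T\hat\Gamma v\ge c\lambda_{\min}(A)\twonorm{v}^2$ by invoking Theorem~\ref{thm::AD} and Lemma~\ref{lemma::translation}, but that Lower-$\RE$ property of the non-PSD matrix $\hat\Gamma$ is only established under (A2) (a lower bound on $\lambda_{\min}(A)$) together with the effective-rank condition $r(B)\gtrsim \frac{n}{\log m}\log(\cdot)$ of \eqref{eq::trBLasso}/\eqref{eq::trBlem} — these are hypotheses of the Lasso theorems, not of Theorem~\ref{thm::DSoracle}, which assumes only $\RE(2d_0,3k_0,A^{1/2})$ and the stable-rank condition $\fnorm{B}^2/\twonorm{B}^2\ge\log m$. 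The paper deliberately avoids any $\RE$-type control of $\hat\Gamma$ here: it works with $\Psi=\onen X_0^TX_0$, lower-bounds the $\ell_q$-sensitivity $\kappa_q(d_0,k_0)\ge c(q)d_0^{-1/q}\kappa_{\RE}(2d_0,k_0)$ via Theorem~\ref{thm:subgaussian-T-intro} applied to $\onen Z_1A^{1/2}$, and upper-bounds $\norm{\Psi v}_\infty$ by decomposing $X_0^TX_0v$ into observable pieces controlled on $\B_0\cap\B_{10}$ (Lemma~\ref{lemma::grammatrixopt}). Your route would require re-proving a Lower-$\RE$ for $\hat\Gamma$ on the cone, which is not available under the stated assumptions.

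A second, related problem: your display $\norm{\hat\Gamma v}_\infty\le 2(\mu\twonorm{\beta^*}+\omega)$ silently replaces $\mu\hat t$ by $\mu\twonorm{\beta^*}$, discarding the term $\mu\onenorm{v}/\lambda$ coming from $\hat t\le\inv{\lambda}\onenorm{v}+\twonorm{\beta^*}$ (Lemma~\ref{lemma::DS-cone}). Absorbing that term is precisely where the new sparsity condition \eqref{eq::ora-sparsity} acts: one needs $\mu_2(2+\lambda)d_0\le\half c(q)\kappa_{\RE}(2d_0,k_0)$, which holds because $\mu_2 d_0\propto \tilde\tau_B^{1/2}\sqrt{\tfrac{\log m}{n}}\cdot\tau_B^-\sqrt{\tfrac{n}{\log m}}$ and $\tilde\tau_B^{1/2}\tau_B^-\le1$ by Lemma~\ref{lemma::tauB}. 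As written, your argument never uses \eqref{eq::ora-sparsity}, which is the whole point of the relaxation over Theorem~\ref{thm::DS}. Steps~1 and the feasibility analysis (including the role of $\B_6$ and the absorption of $D_{\ora}/\sqrt{m}$ into $C_6 r_{m,m}^{1/2}$) do match the paper, and your Step~4 identity $\onen\twonorm{Xv}^2=v^T\hat\Gamma v+\hat\tau_B\twonorm{v}^2$ is a workable alternative to the paper's Hölder bound, but it again rests on the unavailable control of $\norm{\hat\Gamma v}_\infty$.
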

We give an outline for the proof of Theorem~\ref{thm::DSoracle} in
Section~\ref{sec::DSoracle}, and show the actual proof in
Section~\ref{sec::DSoraproof}.

\subsection{Oracle results on the Lasso-type estimator}
 We now discuss the improvement  being made in
 Theorem~\ref{thm::lassora} and Theorem~\ref{thm::DSoracle}.

\noindent{\bf The Signal-to-noise ratio.} 
Let us redefine the Signal-to-noise ratio by 
\bens
\SMR  &:= & 
\frac{K^2 \twonorm{\beta^*}^2}{\tau_B^+ K^2 \twonorm{\beta^*}^2 +
  M^{2}_{\e} }, \; \; \text{ where } \;\\
\signal & := & K^2 \twonorm{\beta^*}^2 \; \text{ and
}  \; \; \bignoise := M^{2}_{\e} + \tau_B^+ K^2 \twonorm{\beta^*}^2.
\eens
When either the noise level $M_{\e}$ or the measurement error strength 
in terms of $\tau_B^{+/2} K \twonorm{\beta^*}$ increases, 
we need to increase the penalty parameter $\lambda$ correspondingly;
moreover, when $d \asymp \inv{M_A^2} \frac{n}{\log m}$, we have
\bens
\frac{\twonorm{\hat{\beta} -\beta^*}}{\twonorm{\beta^*} }
= O_P\left(D_0' K^2 \sqrt{\frac{\bignoise}{\signal}}
\inv{\vp(s_0+1)}\right),
\eens
which eventually becomes a vacuous bound when $\bignoise \gg \signal$.

Finally, suppose $B =\sigma_w^2 I$, we have 
 $\twonorm{B}^{1/2} = \sigma_w$ and $\tau_B = \sigma_w^2$.
In this setting, we recover essentially the same $\ell_2$ error bound
as that in Corollary 1 of~\cite{LW12} in case $\twonorm{\beta^*} \asymp
1$, as we have on event $\A_0 \cap \B_0$,
 \ben
\label{eq::relative}
\twonorm{\hat{\beta} -\beta^*}
\leq 
\frac{C (\sigma_w + a_{\max}^{1/2})}{\lambda_{\min}(A)} 
 \sqrt{\sigma_{\e}^2+ \sigma_w^2 \twonorm{\beta^*}^2} \sqrt{\frac{d \log m}{n}}
\een
where $\sigma_{\e}^2 \asymp M_{\e}^2$ and $K^2 \asymp 1$. 
However, when $\twonorm{\beta^*} =\Omega(1)$, our statistical precision appears to be  sharper as we allow the term
$\twonorm{\beta^*}$ to be  removed entirely
 from the RHS when $\sigma_w
\to 0$ and hence recover the regular Lasso rate of convergence.

\noindent{\bf The penalization parameter.}
We focus now on the penalization parameter $\lambda$ in \eqref{eq::origin}. 
The effective rank condition  in \eqref{eq::trBLasso} implies that for
$n = O(m \log m)$
\ben
\twonorm{B} \le \frac{\tau_B}{16c' K^4} \frac{\log m}{\log (3eM_A^3/2)
  + \log ( m \log m) - \log n} \le  C_B \tau_B \log m
\een
where $C_B = \inv{16c' K^4 \log (3eM_A^3/2)}$ given that  
$\log (m \log m) - \log n > 0$. This bound is very crude given that in
practice,  we focus on cases where $n \ll m \log m$.
Note that under (A1) (A2) and (A3), we have for $n =O(m \log m)$,
\bens
\tau_B^+ & \asymp &  \tau_B + \frac{\twonorm{A} +
   \twonorm{B}}{m} \\
& \le &   \tau_B + \inv{m}(\kappa(A) \lambda_{\min}(A)+ C_B \tau_B \log m) \asymp
 \tau_B + O\left(\frac{\lambda_{\min}(A)}{\sqrt{m}}\right).
\eens
Without knowing $\tau_B$, we will use $\hat\tau_B$ as defined in \eqref{eq::trBest}. 
Notice that we know neither $D_0'$ nor $D_{\ora}$ in the definition of
$\lambda$, where $D_{\ora}^2 \asymp  D_2$; Indeed,
\bens
2 D_2 \le D^2_{\ora} \le 4 D_2.
\eens
However, assuming that we normalize 
the column norms of the design matrix $X$ to be roughly at the same scale,
we have  for $\tau_B =O(1)$ and $m$ sufficiently large,
\bens
D_0' \asymp 1 \; \; \text{ while } \; \; D_{\ora}/\sqrt{m} = o(1) \;
\; \text{in case} \; \; \twonorm{A}, \twonorm{B} \le M
\eens
for some large enough constant $M$. In summary, 
compared to Theorem~\ref{thm::lasso}, in $\psi$, we replace 
$D_2 =2 (\twonorm{A} + \twonorm{B})$ with
$D'_0 :={\twonorm{B}}^{1/2} + a_{\max}^{1/2}$ so that the dependency on
$\twonorm{A}$ becomes much weaker.
As mentioned in Section \ref{sec::discusslasso}, we may use the
plug-in estimate $\shnorm{\hat{B}}$ in $D_0'$, where $\hat{B}$ is  the corrected gram
matrix $\inv{m}X X^T - \frac{\tr(A)}{m} I_m$.
Finally, the concentration of measure bound for the estimator  $\hat\tau_B$ as in~\eqref{eq::trBest} is 
stated in Lemma~\ref{lemma::trBest}, which ensures that $\hat\tau_B$
is indeed a good proxy for $\tau_B$ (cf. Lemma~\ref{lemma::tauB}).

\noindent{\bf The sparsity parameter.}
The condition on $d$ (and  $D_{\phi}$) for the Lasso estimator as defined in~\eqref{eq::doracle} 
suggests that as $\tau_B \to 0$, and thus $\tau^+_B \to 0$,  
the constraint on the sparsity parameter $d$ becomes slightly more
stringent when $K^2 M_{\e}^2/b_0^2 \asymp 1$ and much more restrictive when
$K^2 M_{\e}^2/b_0^2 = o(1)$.
Moreover, suppose we require
\bens
M_{\e}^2 = \Omega(\tau_B^+ K^2 \twonorm{\beta^*}^2),
\eens 
that is, the stochastic error $\e$ in the response variable $y$ as in~\eqref{eq::oby} 
does not converge to $0$ as quickly as the measurement error $W$ in
\eqref{eq::obX} does, then the sparsity constraint becomes essentially 
unchanged as $\tau_B^+ \to 0$ as we show now.
\bnum
\item[Case 1.]
Suppose $\tau_B \to 0$ and  $M_{\e} = \Omega(\tau_B^{+/2} K
\twonorm{\beta^*})$.
In this case, essentially, we require that 
\ben
\label{eq::dupper}
&&
d  \le  \frac{c_0 \lambda^2_{\min}(A)}{\vp^2(s_0+1)} \frac{n}{\log m}
\left\{\frac{c' c''K^2 M^2_{\e}}{b_0^2} 
  \wedge 1 \right\} \\
&& 
\nonumber
\text{ where }\;  D_{\phi}  \asymp \frac{K^2  M^2_{\e}}{b_0^2} 
\; \; \text{ given that } \; \; 
\tau_B^+ K^4 \phi \le  \frac{\tau_B^+ K^4 \twonorm{\beta^*}^2}{b_0^2} \ll \frac{K^2 M^2_{\e}}{b_0^2}
\een
where $c_0, c'$ are absolute constants and $c'' := \frac{\twonorm{B} + a_{\max}}{\vp^2(s_0+1)} \asymp
1$ where $\vp(s_0+1) = \rho_{\max}(s_0+1, A) + \tau_B$.
In this case,  the sparsity constraint becomes essentially 
unchanged as $\tau_B^+ \to 0$. 
\item[Case 2.]
Analogous to~\eqref{eq::dlasso}, when $M_{\e}^2 \le  \tau_B^+ \phi K^2
b_0^2$,
we could represent the condition on $d$ as follows:
\bens
d & \le & C_A c' c'' \tau_B^+ K^4  \phi \frac{n}{\log m}
\le  C_A c' c'' D_{\phi}\frac{n}{\log m}
\eens
which is sufficient for~\eqref{eq::doracle} to hold for $\tau_B \to
0$;
Indeed, by assumption that $c' K^4 \le 1$  and $M_{\e}^2 \le  \tau_B^+
\phi K^2 b_0^2$, we have
\bens
8 > 2 c'  K^4  \tau_B^{+} \phi \ge c' D_{\phi} 
 \asymp c' \tau_B^{+}  K^4  \phi.
\eens
Hence, for $c' \tau_B^{+} K^4  \le 1$, we have
\bens
d  & \le  & C_A (c' c'' \tau_B^+ K^4 \phi \wedge 8)\frac{n}{\log m} 
\asymp C_A c'' (c' \tau_B^+ K^4 \phi \wedge 8)\frac{n}{\log m} \\
& \le & 
 C_A  c'' c' \tau_B^+ K^4 \phi \frac{n}{\log  m}
\asymp C_A  c'' c' D_{\phi} \frac{n}{\log  m}
\eens
This condition, however, seems to be unnecessarily strong, when
$\tau_B \to 0$ (and $M_{\e} \to 0$ simultaneously).
We focus on the following Case 2 in the present work.
\enum
For both cases, it is clear that sample size needs to satisfy
\ben
\label{eq::statsample}
n = \tilde\Omega\left(d \log m \frac{(\rho_{\max}(s_0+1, A) +
    \tau_B)^4}{\lambda_{\min}(A)^2 (\twonorm{B} + a_{\max})}\right),
\een
where $\tilde\Omega(\cdot)$ notation hides 
parameters $K, M_{\e}, \phi$ and $b_0$, which we treat as absolute
constants that do not change as $\tau_B \to 0$.
These tradeoffs are somehow different from the behavior of the Conic 
programming estimator (cf \eqref{eq::ora-sparsity-rem}).
We will provide a more detailed analysis in
Sections~\ref{sec::stoc} and \ref{sec::lassooracle}.

\subsection{Oracle results on the Conic programming estimator}
In order to exploit the oracle bound as stated in  
Theorem~\ref{thm::oracle}  regarding $\norm{\hat\gamma -  \hat\Gamma
  \beta^*}_{\infty}$, we need to know the noise level  $\tau_B :=
\tr(B)/n$ in  $W$ and then we can set
\bens
\mu  & \asymp & D_0' (\tau_B^{1/2}  +\frac{ D_{\ora}}{\sqrt{m}}) K  \rho_{n}
\; \; \text{ while retaining} \; \; \omega \asymp D_0 M_{\e} \rho_{n},
\\ 
&& 
\; \; \text{ where recall} \quad \rho_{n} = C_0 K \sqrt{\frac{\log
    m}{n}} 
\; \; \text{ and } \; \; D_0 =\sqrt{\tau_B} + \sqrt{a_{\max}}.
\eens
This will  in turn lead to improved bounds in
Theorems~\ref{thm::lassora} and~\ref{thm::DSoracle}.

\noindent{\bf The penalization parameter.}
Without knowing the parameter $\tau_B$, we
rely on the estimate from $\hat\tau_B$ as in~\eqref{eq::trBest}, as discussed in Section~\ref{sec::tworesults}.
For a chosen parameter $C_{6} \asymp D_{\ora}$, we use $\PaulBhalf +
C_{6} r_{m,m}^{1/2}$ to replace $\tau_B^{+/2} := \tau_B^{1/2}  +
D_{\ora}/\sqrt{m}$ and set
\bens
\mu & \asymp & 
C_0 D_0'  K^2  (\PaulBhalf + D_{\ora} r^{1/2}_{m,m})
\sqrt{\frac{\log m}{n}} 
\eens
in view of Corollary~\ref{coro::D2improv}, where an improved error bound over 
$\norm{\hat\gamma -   \hat\Gamma \beta^*}_{\infty}$ is stated.
Without knowing $D_{\ora}$, we could replace it with an upper bound;
for example, assuming that 
$D^2_{\ora} \asymp  \twonorm{A} + \twonorm{B} =
O\left(\sqrt{\frac{n}{\log m}}\right)$, we could set
\bens
\mu \asymp C_0 D_0' K^2 (\hat\tau_B^{1/2} + O(m^{-1/4})) \sqrt{\frac{\log m}{n}}.
\eens
\noindent{\bf The sparsity parameter.}
Roughly speaking, for the Conic programming estimator~\eqref{eq::Conic}, one can think of $d_0$ as being bounded: 
\ben
\label{eq::ora-sparsity-rem}
d_0 & = & O \left(\tau_B^-\sqrt{\frac{n}{\log m}}  \bigwedge
  \frac{n}{ \log(m/d_0) }\right)
\; \;  \text{ where } \; 
\tau_B^- \asymp \tau_B^{-1/2}
\een
That is,  when $\tau_B$ decreases, we allow larger values of $d_0$; however, when 
$\tau_B \to 0$, the sparsity level of $d = O\left(n/ \log (m/d)\right)$ starts to 
dominate, which enables the Conic programming estimator to achieve results similar
to the Dantzig Selector when the design matrix $X_0$ is a subgaussian 
random matrix satisfying the Restricted Eigenvalue conditions;
See for example~\cite{CT07,BRT09,RZ13}.

In particular, when $\tau_B \to 0$, Theorem~\ref{thm::DSoracle} 
allows us to recover a rate close to that of the Dantzig selector with an exact 
recovery if $\tau_B =0$ is known a priori; see Section~\ref{sec::conclude}.
Moreover the constraint~\eqref{eq::sqrt-sparsity} on the sparsity
parameter $d_0$ appearing in Theorem~\ref{thm::DS} can now be
relaxed as in \eqref{eq::ora-sparsity}. 
In summary, our results in Theorem~\ref{thm::DSoracle} are
stronger than those in~\cite{BRT14} (cf. Corollary~1) as their rates
as stated therein are at the same order as ours in Theorem~\ref{thm::DS}.
We illustrate this dependency on $\tau_B$ in Section \ref{sec::exp}
with numerical examples, where we clearly show an advantage by taking
the noise level into consideration when choosing the penalty
parameters for both the Lasso and the Conic programming estimators.

\section{Optimization error on the gradient descent algorithm}
\label{sec::opt}
We now present our computational convergence bounds. First we present  
Lemma~\ref{lemma::reclaim} regarding the RSC and RSM conditions on the loss 
function~\eqref{eq::origin}.
Lemma~\ref{lemma::reclaim} follows from Lemma~\ref{lemma::lowerREI} immediately.
\begin{lemma}
\label{lemma::reclaim}
Suppose all conditions as stated in Theorem~\ref{thm::lasso} hold. 
Suppose event $\A_0$ holds. Then~\eqref{eq::lowerRSC} and \eqref{eq::upperRSC} hold with
$\alpha_{\ell}  = \frac{5}{8} \lambda_{\min}(A)$, $\alpha_u =  \frac{11}{8} \lambda_{\max}(A)$  and
\ben
\label{eq::taup}
\tau_{\ell}(\loss)  = \tau_{u}(\loss) = \tau_0 \frac{\log m}{n},
\quad \text{ where } \quad
\tau_0 \asymp \frac{400 C^2 \vp(s_0  +  1)^2}{\lambda_{\min}(A)}.
\een
\end{lemma}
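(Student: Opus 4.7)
The plan is to observe that because $\loss(\beta) = \tfrac{1}{2}\beta^T \hat\Gamma\beta - \hat\gamma^T\beta$ is a quadratic form, the Taylor remainder $\T(\beta_1,\beta_0)$ reduces to a single quadratic in the increment $\theta := \beta_1-\beta_0$, and then the two RSC/RSM inequalities follow directly from the Lower- and Upper-$\RE$ conditions that hold on event $\A_0$.

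First, I would compute $\grad\loss(\beta_0) = \hat\Gamma\beta_0 - \hat\gamma$ and expand
\begin{align*}
\T(\beta_1,\beta_0)
&= \tfrac{1}{2}\beta_1^T\hat\Gamma\beta_1 - \tfrac{1}{2}\beta_0^T\hat\Gamma\beta_0 - \hat\gamma^T(\beta_1-\beta_0) - (\hat\Gamma\beta_0-\hat\gamma)^T(\beta_1-\beta_0) \\
&= \tfrac{1}{2}(\beta_1-\beta_0)^T \hat\Gamma (\beta_1-\beta_0),
\end{align*}
so establishing \eqref{eq::lowerRSC} and \eqref{eq::upperRSC} is equivalent to controlling $\theta^T \hat\Gamma\theta$ from both sides for every $\theta\in\R^m$.

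Next, I would invoke the event $\A_0$, under which (by its very definition in the excerpt and by Lemma~\ref{lemma::lowerREI}) the matrix $\hat\Gamma$ simultaneously satisfies
\begin{equation*}
\theta^T\hat\Gamma\theta \ge \tfrac{5}{8}\lambda_{\min}(A)\,\twonorm{\theta}^2 - \tau\,\onenorm{\theta}^2, \qquad \theta^T\hat\Gamma\theta \le \tfrac{11}{8}\lambda_{\max}(A)\,\twonorm{\theta}^2 + \tau\,\onenorm{\theta}^2,
\end{equation*}
with $\tau$ sandwiched as in the definition of $\A_0$, so that $\tau \asymp \tau_0\,\tfrac{\log m}{n}$ with $\tau_0 \asymp 400 C^2 \vp(s_0+1)^2/\lambda_{\min}(A)$. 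Dividing by $2$ and matching the factor $\alpha_{\ell}/2$ (resp.\ $\alpha_u/2$) appearing in the RSC/RSM definitions \eqref{eq::lowerRSC} and \eqref{eq::upperRSC} then immediately yields the stated values $\alpha_{\ell}=\tfrac{5}{8}\lambda_{\min}(A)$, $\alpha_u=\tfrac{11}{8}\lambda_{\max}(A)$, and $\tau_{\ell}(\loss)=\tau_u(\loss)=\tau/2 \asymp \tau_0\log m/n$.

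There is essentially no hard step here once Lemma~\ref{lemma::lowerREI} is in place: the only thing to verify is the quadratic identity for $\T(\beta_1,\beta_0)$ and the book-keeping between the factor $1/2$ built into the RSC/RSM definitions and the constants in the Lower/Upper-$\RE$ conditions. The substantive work is concentrated in Lemma~\ref{lemma::lowerREI}, which in turn relies on the concentration results of Section~\ref{sec::AD}; given those, Lemma~\ref{lemma::reclaim} is, as the excerpt notes, an immediate corollary.
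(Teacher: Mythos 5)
Your proposal is correct and follows exactly the route the paper intends: the quadratic identity $\T(\beta_1,\beta_0)=\tfrac{1}{2}(\beta_1-\beta_0)^T\hat\Gamma(\beta_1-\beta_0)$ reduces the RSC/RSM conditions to the Lower/Upper-$\RE$ inequalities that define the event $\A_0$ (established in Lemma~\ref{lemma::lowerREI}), with the factor of $2$ absorbed into the $\asymp$ for $\tau_0$. The paper itself simply states that the lemma "follows from Lemma~\ref{lemma::lowerREI} immediately," and your write-up supplies the same one-line computation.
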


\begin{theorem}
\label{thm::corrlinear}
Suppose all conditions in Theorem~\ref{thm::lassora} hold and let
$\psi$ be defined therein.
Let $g(\beta) = \inv{\lambda} \rho_{\lambda}(\beta)$ where 
$\rho_{\lambda}(\beta) = \lambda \onenorm{\beta}$.
Consider the optimization program~\eqref{eq::origin2}
for a radius $R$ such that $\beta^*$ is feasible and a regularization parameter 
chosen such that
\ben
\label{eq::labound}
\lambda & \ge & \left(\frac{16 R \xi}{1-\kappa} \right)
 \bigvee\left(12 \psi \sqrt{\frac{\log m}{n}}\right).
\een
Suppose that the step size parameter $\zeta \ge \au \asymp
\frac{3}{2}\lambda_{\max}(A)$. 
Suppose that the sparsity parameter and sample size further satisfy
the following relationship: 
\ben
\label{eq::dupperthm}
d & < &  \frac{n }{512\tau_0\log m}\left(\frac{\lambda_{\min}(A)^2}{12 \lambda_{\max}(A)} 
\bigwedge \frac {(\alpha_{\ell})^2}{5\zeta}\right) =: \bar{d}.
\een
Then on event $\A_0 \cap \B_0$, 
the conclusions in Theorem~\ref{thm::opt-linear} hold, where 
$$\prob{\A_0 \cap \B_0} \ge 1- 4\exp\left(-\frac{c_3 n}{M_A^2 \log m}  \log\left(\frac{\V m \log
      m}{n}\right)\right) -2\exp\left(- \frac{4c_2  n}{M_A^2 K^4 }\right) - 22/m^3.$$
\end{theorem}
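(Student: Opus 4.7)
The plan is to verify the hypotheses of Theorem~\ref{thm::opt-linear} on the intersection $\A_0 \cap \B_0$ and then invoke it directly; Theorem~\ref{thm::corrlinear} is essentially a bridge between the abstract computational-convergence statement of Theorem~\ref{thm::opt-linear} and the statistical setting set up in Theorem~\ref{thm::lassora}, so the bulk of the work is matching constants.

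First, on event $\A_0$, Lemma~\ref{lemma::reclaim} delivers the RSC/RSM conditions~\eqref{eq::lowerRSC} and~\eqref{eq::upperRSC} for the quadratic loss $\loss(\beta) = \half\beta^{T}\hat\Gamma\beta - \hat\gamma^{T}\beta$, with curvatures $\al = \tfrac{5}{8}\lambda_{\min}(A)$ and $\au = \tfrac{11}{8}\lambda_{\max}(A)$ and common tolerance $\tau_{\ell}(\loss) = \tau_{u}(\loss) = \tau_0 \log m/n$. Since $\zeta \ge \au$ is imposed by hypothesis, the RSM curvature is dominated by the step-size, as required by Theorem~\ref{thm::opt-linear}.

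Next, on event $\B_0$, Corollary~\ref{coro::D2improv} gives $\maxnorm{\grad\loss(\beta^*)} = \maxnorm{\hat\Gamma\beta^{*}-\hat\gamma} \le \psi\sqrt{\log m/n}$ with $\psi$ as in Theorem~\ref{thm::lassora}. The assumed lower bound~\eqref{eq::labound} on $\lambda$ therefore implies both parts of the penalty condition~\eqref{eq::optlambda}, namely $\lambda \ge 12\maxnorm{\grad\loss(\beta^*)}$ (from the $12\psi\sqrt{\log m /n}$ branch) and $\lambda \ge 16R\xi/(1-\kappa)$ (from the other branch). Feasibility of $\beta^*$ for the radius $R$ is assumed, so the hypothesis on $\lambda$ and on the feasible set in Theorem~\ref{thm::opt-linear} is met.

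The substantive work is checking $\kappa < 1$ from the sparsity bound~\eqref{eq::dupperthm}. Recall $\nl = 64d\tau_{\ell}(\loss)$, $\bal = \al - \nl$, and $\z = 128 d\tau_{u}(\loss)/\bal$. The first branch of~\eqref{eq::dupperthm}, $d < n\lambda_{\min}(A)^{2}/(512\cdot 12\tau_{0}\lambda_{\max}(A)\log m)$, yields $\nl \le \lambda_{\min}(A)^{2}/(96\lambda_{\max}(A))$, which is a small fraction of $\al = \tfrac{5}{8}\lambda_{\min}(A)$, so $\bal$ is positive and comparable to $\al$. The second branch $d \cdot \tau_{u}(\loss) \le \al^{2}/(2560\zeta)$ gives $\z \le \al^{2}/(20\zeta\bal) < \bal/(8\zeta)$, which, plugged into the explicit formula~\eqref{eq::definekappa}, forces the numerator of $\kappa$ to be strictly smaller than its denominator, hence $\kappa \in (0,1)$.

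Having verified all the hypotheses of Theorem~\ref{thm::opt-linear} on $\A_0\cap\B_0$, its conclusions~\eqref{eq::loss} and~\eqref{eq::twoerror} transfer verbatim. The stated probability lower bound follows by combining the bounds on $\prob{\A_{0}^{c}}$ (from the Lower/Upper RE analysis in Lemma~\ref{lemma::lowerREI}) and $\prob{\B_{0}^{c}}$ (from the concentration in Corollary~\ref{coro::D2improv}), both of which are already recorded inside Theorem~\ref{thm::lassora}. I anticipate the only genuinely fiddly part will be the constant bookkeeping in Step~3, in particular tracking how the two branches of the minimum in~\eqref{eq::dupperthm} correspond, respectively, to $\bal\asymp\al$ and to the contraction $\z<\bal/(8\zeta)$; the rest is an application of results already proved.
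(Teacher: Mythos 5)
Your proposal is correct and follows essentially the same route as the paper's own proof: verify RSC/RSM on $\A_0$ via Lemma~\ref{lemma::reclaim}, bound $\maxnorm{\grad\loss(\beta^*)}$ on $\B_0$ via Corollary~\ref{coro::D2improv}, and use the two branches of~\eqref{eq::dupperthm} to show $\nl \le \al/60$ (so $\bal \asymp \al$) and $\z < \bal/(8\zeta)$ (so $\kappa<1$), after which Theorem~\ref{thm::opt-linear} applies. Your constant tracking in Step~3 matches the paper's (the paper derives the slightly stronger $2\z\le\bal/(8\zeta)$, which your bound $\z\le\al^2/(20\zeta\bal)$ in fact also delivers), so there is no gap.
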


\begin{corollary}
\label{coro::deer}
Suppose all conditions as stated in Theorem~\ref{thm::corrlinear} hold
and event $\A_0 \cap \B_0$ defined therein holds.
Consider for some constant $M \le 400 \tau_0$ and $\bar{\delta}^2$ as
defined in Theorem~\ref{thm::opt-linear},
\bens
\delta^2 \asymp  \frac{ c \ve_{\static}^2}{1-\kappa}
\frac{d \log m}{n} =:\bar{\delta}^2 \; \; \text{ and } \; \; \delta^2 \le  M \bar{\delta}^2  \le 400 \tau_0 \bar{\delta}^2.
\eens
Then for all $t \ge T^*(\delta)$ as in~\eqref{eq::Tstar} and $R =\Omega(b_0 \sqrt{d})$,
\ben
\label{eq::errort2}
\twonorm{\beta^t - \hat\beta}^2 
& \le &  \frac{3}{\alpha_{\ell}} \delta^2 + \frac{\alpha_{\ell}}{4}  \ve_{\static}^2 + 
O\left(\frac{\delta^2 \ve_{\static}^2}{b_0^2}
\right).
\een
Finally,  suppose we fix for $M_+ = \frac{32 C \vp(s_0+1)}{\lambda_{\min}(A)}$,
\bens
R \asymp \sqrt{\bar{d}} b_0 \asymp \frac{b_0}{20 M_+ \sqrt{6 \kappa(A)} }\sqrt{\frac{n}{\log  m}},
\eens
in view of the upper bound $\bar{d}$~\eqref{eq::dupperthm}.
Then for all $t \ge T^*(\delta)$ as in~\eqref{eq::Tstar}, 
\ben
\label{eq::error3}
\twonorm{\beta^t - \hat\beta}^2 
& \le &  
\frac{3}{\alpha_{\ell}}\delta^2 + \frac{\alpha_{\ell}}{4}  \ve_{\static}^2
+ \frac{2}{\alpha_{\ell}}\frac{\delta^4}{b_0^2 \twonorm{A}}.
\een
\end{corollary}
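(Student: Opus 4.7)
The plan is to invoke Theorem~\ref{thm::opt-linear} and specialize its master bound to the present setting. Under the hypotheses of Theorem~\ref{thm::corrlinear} together with the event $\A_0 \cap \B_0$, Lemma~\ref{lemma::reclaim} supplies the RSC/RSM parameters $\alpha_\ell=\tfrac{5}{8}\lambda_{\min}(A)$, $\alpha_u\asymp\tfrac{11}{8}\lambda_{\max}(A)$, and $\tau_\ell(\loss)=\tau_u(\loss)\asymp\tau_0\log m/n$, while \eqref{eq::labound} provides the required lower bound on $\lambda$. Theorem~\ref{thm::opt-linear} then yields, for every $t\ge T^*(\delta)$,
\[
\twonorm{\beta^t-\hat\beta}^2 \le \frac{2}{\bar\alpha_\ell}\bigl(\delta^2+4\nu\,\ve_{\static}^2+4\tau(\loss)\e^2\bigr),
\]
with $\nu=64d\tau(\loss)$ and $\e^2=(16\delta^4/\lambda^2)\wedge 4R^2$. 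The corollary is then obtained by bounding each of the three summands under the sparsity ceiling \eqref{eq::dupperthm}.

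The condition $d\le \bar d$ is tailored so that $\nu_\ell=64d\tau_\ell(\loss)$ is a small fraction of $\alpha_\ell$ (using $\zeta\ge\alpha_u\ge\alpha_\ell$), yielding $\bar\alpha_\ell\ge\tfrac{59}{60}\alpha_\ell$ and hence $\frac{2}{\bar\alpha_\ell}\delta^2\le\frac{3}{\alpha_\ell}\delta^2$, the first term of \eqref{eq::errort2}. The same condition delivers $\nu\le\alpha_\ell\bar\alpha_\ell/32$, so the middle summand contributes at most $\frac{\alpha_\ell}{4}\ve_{\static}^2$. Both reductions are essentially the same verifications already needed to ensure $\kappa<1$ in the definition \eqref{eq::definekappa}, so no new probabilistic input is required here.

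For the third summand I would use $\e^2\le 16\delta^4/\lambda^2$. The lower bound $\lambda\ge 12\psi\sqrt{\log m/n}$ from \eqref{eq::labound}, the form of $\psi$ in Theorem~\ref{thm::lassora}, and $\tau(\loss)\asymp\tau_0\log m/n$ together give $\tau(\loss)/\lambda^2\lesssim\tau_0/\psi^2$. Combining with the statistical identity $\ve_{\static}^2\asymp\lambda^2 d/\alpha_\ell^2$ from \eqref{eq::statloss}, and the lower bound $\psi^2=\Omega(b_0^2\cdot \text{spectral factor})$ inherited from $\twonorm{\beta^*}^2\ge \phi b_0^2$, produces
\[
\frac{8\tau(\loss)\e^2}{\bar\alpha_\ell}=O\!\left(\frac{\tau_0\,\delta^4}{\alpha_\ell\,\psi^2}\right)=O\!\left(\frac{\delta^2\ve_{\static}^2}{b_0^2}\right),
\]
which is the third term in \eqref{eq::errort2}.

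Finally, for the sharpened bound \eqref{eq::error3} I would plug in the specific choice $R\asymp\sqrt{\bar d}\,b_0$ together with $\bar d\asymp n/(M_+^2\kappa(A)\log m)$ from \eqref{eq::dupperthm} and $\tau_0\asymp M_+^2\lambda_{\min}(A)/10$ from \eqref{eq::definetau0}. The bound for $\frac{8\tau(\loss)\e^2}{\bar\alpha_\ell}$ above then sharpens, because the spectral factors in $\psi^2$ combine with $\tau_0$ and $\alpha_\ell$ to leave exactly $b_0^2\twonorm{A}$ in the denominator, giving $\frac{2}{\alpha_\ell}\frac{\delta^4}{b_0^2\twonorm{A}}$. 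The main obstacle I expect is precisely this final bookkeeping step: tracking how the multiple spectral quantities ($M_+$, $\lambda_{\min}(A)$, $\lambda_{\max}(A)$, $\tau_B$, $\vp(s_0{+}1)$) entering $\psi$, $\tau_0$, $\bar d$, and the choice of $R$ cancel cleanly to reproduce $1/(b_0^2\twonorm{A})$ without extraneous $\kappa(A)$ factors, as opposed to the cruder $\delta^2\ve_{\static}^2/b_0^2$ form obtained for arbitrary feasible $R=\Omega(b_0\sqrt{d})$.
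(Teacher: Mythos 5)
Your overall architecture matches the paper's: specialize the three-term bound \eqref{eq::twoerror} of Theorem~\ref{thm::opt-linear}, absorb the first term via $\bar\alpha_{\ell}\ge\tfrac{59}{60}\alpha_{\ell}$, and control the middle term via the sample-size/sparsity condition \eqref{eq::dupperthm} exactly as in the paper. The gap is in your treatment of the third summand $\tfrac{2}{\bar\alpha_{\ell}}\cdot 4\tau(\loss)\e^2$. You propose to bound $1/\lambda^2$ using the branch $\lambda\ge 12\maxnorm{\grad\loss(\beta^*)}\asymp\psi\sqrt{\log m/n}$ and then cancel $\psi^2$ against $b_0^2$. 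The paper instead uses the \emph{other} branch, $\lambda\ge \frac{16R\xi}{1-\kappa}\ge\frac{160\,b_0\sqrt{d}\,\tau_{\ell}(\loss)}{1-\kappa}$ (since $\xi\ge 10\tau_{\ell}(\loss)$ and $R\asymp b_0\sqrt d$). That choice is what makes the bookkeeping close: $64\tau\,\delta^4/\lambda^2\le \frac{64(1-\kappa)^2\delta^4}{160^2 b_0^2 d\,\tau}$, and then one factor of $\delta^2\le M\bar\delta^2\le \frac{400\,\tau_0\, c\,\ve_{\static}^2 d\log m}{(1-\kappa)n}$ cancels the $d$, the $\tau=\tau_0\log m/n$, and one power of $(1-\kappa)$ exactly, leaving $O(\delta^2\ve_{\static}^2/b_0^2)$ with the remaining $(1-\kappa)\le 1$ only helping. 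This is precisely where the hypothesis $M\le 400\tau_0$ earns its keep.

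Your route does not close in the same way. If you make it rigorous via $\delta^2\le M\bar\delta^2$ together with the sparsity ceiling $b_0^2\bar d_0\,\frac{\log m}{n}\le\psi^2/\tau_0^2$ (equation \eqref{eq::d0}), you arrive at $\frac{C\,\delta^2\ve_{\static}^2}{(1-\kappa)\,b_0^2}$, i.e.\ you are short by a factor of $(1-\kappa)^2\asymp\kappa(A)^{-2}$ relative to the paper's bound — and tracking the $\kappa(A)$ dependence is the point of this refinement. Worse, the alternative you actually invoke, the ``statistical identity'' $\ve_{\static}^2\asymp\lambda^2 d/\alpha_{\ell}^2$ from \eqref{eq::statloss}, is only proved as an \emph{upper} bound on $\ve_{\static}^2$ (Theorem~\ref{thm::main}); using it as a lower bound on $\ve_{\static}^2$ is unjustified. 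The same structural issue recurs for \eqref{eq::error3}: the clean constant $\tfrac{2}{\alpha_{\ell}b_0^2\twonorm{A}}$ comes from anchoring $\lambda\ge 640\,R\,\tau(\loss)\,\kappa(A)$ with the specific $R\asymp \frac{b_0}{20M_+\sqrt{6\kappa(A)}}\sqrt{n/\log m}$, so that $1/(R^2\kappa(A)^2\tau(\loss))$ reduces to $M_+^2/(b_0^2\kappa(A)\tau_0)$ and then $\tau_0\asymp M_+^2\lambda_{\min}(A)/10$ and $\kappa(A)\lambda_{\min}(A)\ge\twonorm{A}$ finish the cancellation; anchoring on $\psi$ cannot reproduce this.
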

We prove Theorem~\ref{thm::corrlinear} and Corollary~\ref{coro::deer}
in Section~\ref{sec::optproof}.

\subsection{Discussions}
\label{sec::optdiscuss}
Throughout this section, we assume $\psi$ \eqref{eq::psijune15} is as
defined in Theorem~\ref{thm::lassora}.
Assume that $\zeta \ge \au \ge \bal$.
In addition, suppose that the radius $R \asymp b_0 \sqrt{d}$ as we set
in~\eqref{eq::origin}.
Let $\bar{d}_0 \le \frac{n}{160 M_+^2 \log m}$ be as defined in
\eqref{eq::doracle}, where recall that we require the following condition on $d$:
\bens d & \le &   C_A \left\{c' C_{\phi} \wedge 8 \right\}
\frac{n}{\log m} =: \bar{d}_0, \; \; \text{ where }\; C_A = \inv{160
  M_+^2}, \\
C_{\phi} &  = & \frac{\twonorm{B} + a_{\max} }{\vp(s_0+1)^2}  D_{\phi}
\; \; \text{ and } \; \; b_0^2 \ge \twonorm{\beta^*}^2 \ge \phi b_0^2. 
\eens
Then by  the proof of Lemma~\ref{lemma::dmainoracle}, 
\ben
\label{eq::d0}
b_0 \sqrt{\bar{d}_0}  & \le &  \frac{5 s_0}{3 \alpha} \sqrt{\frac{\log m}{n}}
\psi =: \frac{\psi}{\tau}\sqrt{\frac{\log m}{n}},
 \quad \text{ where }\quad \tau = \frac{3\alpha}{5s_0}.
\een
In contrast, under~\eqref{eq::dupperthm}, the following upper bound
holds on $d$, which is slightly more restrictive in the sense that the maximum level
of sparsity allowed on $\beta^*$ has decreased by a factor
proportional to $\kappa(A)$ compared to the upper bound
$\bar{d}_0$~\eqref{eq::doracle}  in Theorem~\ref{thm::lassora};
Now we require that $\size{\supp(\beta^*)} \le \bar{d}$,
where for $C_A = \inv{160 M_+^2}$,
\ben
\label{eq::bard}
\bar{d} & \asymp & 
 \frac{n \lambda_{\min}(A)^2}{1024 C^2 \vp(s_0  +  1)^2\log m}
\frac{1}{2400 \kappa(A)}  \\
\nonumber
& \approx & 
C_A \frac{n}{\log m} \left(\frac{\lambda_{\min}(A)}{15\lambda_{\max}(A)} \right) \asymp
\bar{d}_0 \inv{ \kappa(A)}.
\een
To consider the general cases as stated in Theorem~\ref{thm::lassora},
we consider the ideal case when we set 
$$\zeta = \au = \frac{11}{8}\lambda_{\max}(A)$$ such that 
$$\frac{\zeta}{\bal} \approx \au /(\frac{59}{60} \al) \asymp
\kappa(A), \; \; \text{ where} \; \; \al = \frac{5}{8}\lambda_{\min}(A).$$
Following the derivation in Remark~\ref{rem::upperxi}, we have
\ben
\label{eq::xibound}
\frac{\xi}{1-\kappa} \le 
6 \tau(\loss)  +\frac {80 \zeta}{\bal} \tau(\loss) \approx  
200 \kappa(A) \tau(\loss).
\een
Combining~\eqref{eq::d0} and \eqref{eq::xibound}, it is clear that
one can set
\ben
\label{eq::neighbor}
\lambda & = & \Omega\left( \kappa(A) \psi \sqrt{\frac{\log m}{n}} \right) 
\een
in order to satisfy the condition \eqref{eq::labound} on $\lambda$ in
Theorem~\ref{thm::opt-linear} when we set
\ben
\label{eq::Rbound}
R \asymp b_0 \sqrt{\bar{d}_0} &  = & O\left(\frac{\psi}{\tau}
  \sqrt{\frac{\log m}{n}}\right) \\
\nonumber
\text{ and hence } \quad  R \tau \kappa(A)  & = &  O\left(\kappa(A) \psi \sqrt{\frac{\log m}{n}}\right).
\een
This choice is potentially too conservative because we are setting $R$
in~\eqref{eq::Rbound} with respect to the upper sparsity level $\bar{d}_0$ chosen to guarantee statistical
convergence, leading to a larger than necessary penalty
parameter as in~\eqref{eq::neighbor}. 
Similarly, when we choose step size parameter $\zeta$ to be too large, 
we need to increase the penalty parameter $\lambda$ correspondingly
given the following lower bound: $\lambda = \Omega\left(\frac{R
    \xi}{1-\kappa} \right)$ where 
\bens
\frac{R \xi}{1-\kappa}
& = & R \left( 2\tau(\loss)\left(\frac{\frac{\bal}{4\zeta} + 2 \z}{\frac{\bal}{4 \zeta} - 2\z}
 +\frac{5}{\frac{\bal}{4 \zeta} - 2\z} \right) \right) \\
& \ge &  40 R \tau(\loss) \frac{\zeta}{\bal} + 2 R \tau(\loss)  \asymp
R \tau(\loss) \kappa(A).
\eens
Suppose we set $\zeta = \frac{3}{2}\lambda_{\max}(A)$ and $\frac{\zeta}{\bal}
\approx 3 \kappa(A)$ as in Theorem~\ref{thm::corrlinear}.
It turns out that the less conservative choice of $\lambda$ as
in~\eqref{eq::lambdaprelude} 
\ben
\label{eq::lambdaprelude}
\lambda &\asymp  & \left(b_0 \sqrt{\kappa(A)} \vp(s_0) \bigvee \psi
 \right) \sqrt{\frac{\log m}{n} }
\een
is sufficient, for example when $\tau_B = \Omega(1)$,
for which we now set 
\bens
R \asymp b_0 \sqrt{d} \asymp \frac{b_0}{20 M_{+}}\inv{\sqrt{6 \kappa(A)}} \sqrt{\frac{n}{\log m}}
\eens
as in Corollary~\ref{coro::deer}.  We will discuss the two scenarios as considered in
Section~\ref{sec::smallW}. See  the detailed discussions in
Section~\ref{sec::optproof}.

\section{Proof of theorems}
\label{sec::proofall}
In Section \ref{sec::stoc}, we develop in Theorem~\ref{thm::oracle}
the crucial large deviation bound on $\norm{\hat\gamma - \hat\Gamma
  \beta^*}$. 
This entity appears in the constraint set in the Conic programming estimator
\eqref{eq::Conic}, and is directly related to the choice of $\lambda$ 
for the corrected Lasso estimator in view of Theorem~\ref{thm::main}.
 Its corollaries are stated in Corollary~\ref{coro::low-noise}
 and Corollary~\ref{coro::D2improv}.
In section~\ref{sec::lassooutline}, we provide an outline and
additional Lemmas~\ref{lemma::lowerREI} and~\ref{lemma::dmain} to prove Theorem~\ref{thm::lasso}. 
The full proof of Theorem~\ref{thm::lasso} appears in
Section~\ref{sec::proofofthmlasso}.
In Section~\ref{sec::lassooracle}, 
we give an outline illustrating the improvement for the Lasso error bounds as stated in Theorem~\ref{thm::lassora}.
We emphasize the impact of this improvement over sparsity parameter
$d$, which we restate in Lemma~\ref{lemma::dmainoracle}.
In Section~\ref{sec::DSoutline}, we provide an outline
as well as technical results for Theorem~\ref{thm::DS}.
In Section~\ref{sec::DSoracle}, we give an outline illuminating the
improvement in error bounds for the Conic programming estimator
as stated in Theorem~\ref{thm::DSoracle}.

\subsection{Stochastic error terms}
\label{sec::stoc}
In this section, we first develop stochastic error bounds in
Lemma~\ref{lemma::Tclaim1}, where we also define some events $\B_4, \B_5,
\B_{10}$. Recall that  $\B_6$ was defined in
Lemma~\ref{lemma::trBest}.
Putting the bounds in Lemma~\ref{lemma::Tclaim1} 
together with that in Lemma~\ref{lemma::trBest} yields Theorem~\ref{thm::oracle}.
\begin{lemma}
\label{lemma::Tclaim1}
Assume that the stable rank of $B$, $\fnorm{B}^2/\twonorm{B}^2 \ge \log m$.
Let $Z, X_0$ and $W$ as defined in Theorem~\ref{thm::lasso}.
Let $Z_0, Z_1$ and  $Z_2$ be independent copies of $Z$.
Let $\e^T \sim Y  M_{\e}/K$ where $Y := e_1^T Z_0^T$.
Denote by $\B_4$ the event such that  for $\rho_{n} := C_0 K  \sqrt{\frac{\log m}{n}}$,
\bens
\onen \norm{A^{\half} Z_1^T \e}_{\infty}
& \le & \rho_{n} M_{\e} a_{\max}^{1/2} \\
 \; \text{ and } \; 
\onen \norm{ Z_2^T B^{\half} \e}_{\infty}
& \le &   \rho_{n} M_{\e}\sqrt{\tau_B} \;  \; \text{ where } \; \; \tau_B = \frac{\tr(B)}{n}.
 \eens
Then $\prob{\B_4} \ge  1 - 4/m^3$.
Moreover, denote by $\B_5$ the  event such that 
\bens
\onen \norm{(Z^T B Z- \tr(B) I_{m}) \beta^*}_{\infty} & \le &  
\rho_{n} K  \twonorm{\beta^*}  \frac{\fnorm{B}}{\sqrt{n}} \\
\text{and} \;\;\onen \norm{X_0^T W \beta^*}_{\infty}
& \le &   \rho_{n} K  \twonorm{\beta^*}  \sqrt{\tau_B} a^{1/2}_{\max}.
\eens
Then $\prob{\B_5} \ge 1 -  4 /m^3$.

Finally, denote by $\B_{10}$ the event such that
\bens
\onen \norm{(Z^T B Z- \tr(B) I_{m})}_{\max} & \le &  \rho_{n} K \frac{\fnorm{B}}{\sqrt{n}} \\
\text{and} \;\; \onen \norm{X_0^T W}_{\max}
& \le &  \rho_{n} K \sqrt{\tau_B} a^{1/2}_{\max}.
\eens
Then $\prob{\B_{10}} \ge 1 -  4/m^2$.
\end{lemma}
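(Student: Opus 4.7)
Each of the three events asserts a supremum, over $m$ or $m^2$ coordinates, of a random inner product of independent or weakly correlated subgaussian vectors. My plan is to bound every single coordinate by combining a \emph{conditional} subgaussian tail (after freezing one of the two random factors) with a Hanson--Wright estimate that concentrates the squared norm of the frozen factor, and then to take a union bound whose $\sqrt{\log m}$ cost is absorbed into $\rho_n = C_0 K\sqrt{\log m/n}$.

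For $\B_4$, the $j$-th coordinate of $\tfrac1n A^{1/2}Z_1^T\e$ equals $\tfrac1n (Z_1 A^{1/2}e_j)^T\e$. Since $\e\perp Z_1$ has independent subgaussian entries of $\psi_2$-norm bounded by $M_\e$, conditioning on $Z_1$ makes this subgaussian with proxy $\twonorm{Z_1 A^{1/2}e_j}M_\e$; Hanson--Wright applied to $\twonorm{Z_1 A^{1/2}e_j}^2 = e_j^T A^{1/2}Z_1^T Z_1 A^{1/2}e_j$ (whose mean is $n a_{jj}\le n a_{\max}$) gives $\twonorm{Z_1 A^{1/2}e_j}^2\lesssim n a_{\max}$ with probability $\ge 1-1/m^4$. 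Combining and union-bounding over $j$ yields the first inequality; the second follows symmetrically by swapping the roles of the two factors and using $\E\twonorm{B^{1/2}\e}^2 = \tr(B)M_\e^2/K^2 = n\tau_B M_\e^2/K^2$ in place of $n a_{\max}$.

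For $\B_5$ and $\B_{10}$, write $u_j := Ze_j$, so that $u_1,\ldots,u_m$ are independent isotropic subgaussian vectors in $\R^n$ with $\psi_2$-norm $\le K$. For $\B_5$ I will decompose along $j$:
\bens
e_j^T(Z^T B Z - \tr(B)I_m)\beta^*
= \beta^*_j\bigl(u_j^T B u_j - \tr(B)\bigr) + u_j^T B\Bigl(\sum_{k\neq j}\beta^*_k u_k\Bigr).
\eens
The first summand is a centred quadratic form in $u_j$, handled by Hanson--Wright with fluctuation $K^2(\fnorm{B}\sqrt{\log m} + \twonorm{B}\log m)$. In the second, $v := \sum_{k\neq j}\beta^*_k u_k$ is independent of $u_j$, so conditioning on $u_j$ makes $u_j^T Bv$ subgaussian in $v$ with proxy $\twonorm{Bu_j}\twonorm{\beta^*}K$; a second Hanson--Wright applied to $\twonorm{Bu_j}^2 = u_j^T B^2 u_j$ (centred at $\tr(B^2)\le\fnorm{B}^2$) yields $\twonorm{Bu_j}\lesssim\fnorm{B}$ and hence the claimed $\rho_n K\twonorm{\beta^*}\fnorm{B}/\sqrt{n}$. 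For the bilinear part of $\B_5$, note $\tfrac1n X_0^T W\beta^* = \tfrac1n A^{1/2}Z_1^T B^{1/2}Z_2\beta^*$; since $Z_1\perp Z_2$, conditioning on $w := B^{1/2}Z_2\beta^*$ makes the $j$-th coordinate subgaussian in $Z_1$ with proxy $\twonorm{w}K a_{\max}^{1/2}$, while Hanson--Wright gives $\twonorm{w}^2\lesssim n\tau_B\twonorm{\beta^*}^2$, producing the stated $\sqrt{\tau_B}\,a_{\max}^{1/2}$ scaling. Event $\B_{10}$ reuses these templates over all $m^2$ pairs $(j,k)$: the diagonal entries repeat the Hanson--Wright bound on $u_j^T Bu_j - \tr(B)$, whereas off-diagonal entries $u_j^T Bu_k$ with $k\neq j$ are already bilinear in independent factors and need no splitting; the additional factor of $m$ in the union bound is absorbed by the tail and yields probability $1-4/m^2$.

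The main obstacle is the decomposition step in $\B_5$: because $u_j$ and $Z\beta^*$ are correlated whenever $\beta^*_j\neq 0$, one cannot apply the bilinear-form bound directly, and one must peel off the diagonal contribution $\beta^*_j u_j$ to restore independence between $u_j$ and the residual $v$. A related subtlety is that to recover the $\fnorm{B}/\sqrt{n}$ scaling in $\B_5$ and $\B_{10}$ (rather than the crude $\twonorm{B}$) one must apply Hanson--Wright to $u_j^T B^2 u_j$ with mean $\tr(B^2)\le\fnorm{B}^2$; the stable-rank hypothesis $\fnorm{B}^2/\twonorm{B}^2\ge\log m$ is what guarantees that the sub-exponential deviation $\twonorm{B}\log m$ is dominated by the sub-Gaussian $\fnorm{B}\sqrt{\log m}$ term, leaving the clean bounds stated in the lemma.
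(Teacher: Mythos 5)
Your proof is correct, but it takes a genuinely different route from the paper's. The paper proves all three events with a single tool: the decoupled and non-symmetric variants of the Hanson--Wright inequality (Theorem~\ref{thm::HW}, \eqref{eq::HWdecoupled}, Lemma~\ref{lemma::oneeventA}), applied directly to each bilinear or quadratic form via the Kronecker-product vectorization $u^T Z A Z^T w = \mvec{Z^T}^T\big((u\otimes w)\otimes A\big)\mvec{Z^T}$, with the norms of $(u\otimes w)\otimes A$ controlled by Lemma~\ref{lemma::Auv}. In particular, the representation $\e^T \sim Y M_{\e}/K$ with $Y = e_1^T Z_0^T$ is introduced in the lemma statement precisely so that $\e^T B^{1/2}Z_2 e_j$ and $\e^T Z_1 w_j$ become decoupled bilinear forms in two independent subgaussian matrices; and the correlated quadratic form $e_i^T(Z^TBZ-\tr(B)I_m)\bar\beta^*$ is handled in one shot by the non-symmetric Hanson--Wright bound with $u=e_i$, $w=\bar\beta^*$, with no need to separate the diagonal contribution. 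Your two-stage ``freeze one factor, control its norm by Hanson--Wright, then use the conditional subgaussian tail'' argument, together with the peeling of $\beta_j^* u_j$ to restore independence in $\B_5$, reaches the same bounds: the conditional proxies $\twonorm{Z_1A^{1/2}e_j}$, $\twonorm{B^{1/2}\e}$, $\twonorm{Bu_j}$, $\twonorm{B^{1/2}Z_2\beta^*}$ concentrate at $\sqrt{na_{jj}}$, $M_\e\sqrt{\tr(B)}$, $\fnorm{B}$ (up to $K$), and $\sqrt{\tr(B)}\twonorm{\beta^*}$ respectively, and you invoke the stable-rank hypothesis in exactly the same place the paper does, namely to make the subexponential branch of the Hanson--Wright tail subordinate to the subgaussian one at the chosen threshold. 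What your route buys is elementarity --- only the symmetric Hanson--Wright inequality for a single vector plus conditional subgaussianity are needed, and the device $\e^T\sim YM_\e/K$ becomes inessential (independent subgaussian entries of $\e$ suffice, though your identity $\E\twonorm{B^{1/2}\e}^2=\tr(B)M_\e^2/K^2$ does use the stated distributional form). What it costs is an extra union over the ``good event'' for each frozen factor and slightly looser tracking of powers of $K$, both of which are harmless since the target probabilities are $O(m^{-3})$ per coordinate and the bounds are stated up to the absolute constant $C_0$.
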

We prove  Lemma~\ref{lemma::Tclaim1}
in Section~\ref{sec::proofTclaim1}.
Denote by $\B_0 := B_4 \cap \B_5 \cap \B_6$, which we use throughout
this paper. 
\begin{theorem}
\label{thm::oracle}
Suppose (A1) holds.   Let $\rho_n = C_0 K \sqrt{\frac{\log m}{n}}$.
Suppose that
$$\fnorm{B}^2/\twonorm{B}^2 \ge \log m \; \; \text{ where } \; m \ge
16.$$
Let $\hat\Gamma$ and $\hat\gamma$ be as in~\eqref{eq::hatGamma}.
Let $D_0 =\sqrt{\tau_B} + \sqrt{a_{\max}}$ and $D_0'$ be as defined in \eqref{eq::defineD0}. 
Let  $D_1=\frac{\fnorm{A}  }{\sqrt{m} }+ \frac{\fnorm{B}}{\sqrt{n}}$.
On event $\B_0$, for which $\prob{\B_0} \ge 1-16/m^3$, 
\ben
\norm{\hat\gamma - \hat\Gamma \beta^*}_{\infty}
& \le &
\label{eq::oracle}
\left(D_0' K  \tau_B^{1/2} \twonorm{\beta^*} + \frac{2D_1 K}{\sqrt{m}}
  \norm{\beta^*}_{\infty} + D_0 M_{\e}\right) \rho_{n}.
\een
\end{theorem}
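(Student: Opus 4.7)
\textbf{Proof plan for Theorem \ref{thm::oracle}.}

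The plan is to decompose $\hat\gamma - \hat\Gamma\beta^*$ into five stochastic terms for which tail bounds are already developed in Lemmas~\ref{lemma::Tclaim1} and~\ref{lemma::trBest}, and then combine them on the event $\B_0 = \B_4 \cap \B_5 \cap \B_6$. Using $y = X_0\beta^* + \e$, $X = X_0 + W$, $W = B^{1/2}Z_2$, and the definitions in~\eqref{eq::hatGamma}, algebraic manipulation gives
\begin{align*}
\hat\gamma - \hat\Gamma\beta^*
&= \tfrac{1}{n}X^T(\e - W\beta^*) + \tfrac{1}{n}\hat\tr(B)\beta^* \\
&= \underbrace{\tfrac{1}{n}X_0^T\e}_{(\mathrm{I})}
- \underbrace{\tfrac{1}{n}X_0^T W\beta^*}_{(\mathrm{II})}
+ \underbrace{\tfrac{1}{n}W^T\e}_{(\mathrm{III})}
- \underbrace{\tfrac{1}{n}(Z_2^T B Z_2 - \tr(B) I_m)\beta^*}_{(\mathrm{IV})}
+ \underbrace{\tfrac{1}{n}(\hat\tr(B) - \tr(B))\beta^*}_{(\mathrm{V})}.
\end{align*}

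First, I would bound each piece separately. On $\B_4$, term~(\mathrm{I})$\,=\,\tfrac{1}{n}A^{1/2}Z_1^T\e$ has $\ell_\infty$-norm at most $\rho_n M_\e a_{\max}^{1/2}$, and term~(\mathrm{III})$\,=\,\tfrac{1}{n}Z_2^T B^{1/2}\e$ at most $\rho_n M_\e \sqrt{\tau_B}$; adding these gives $D_0 M_\e \rho_n$. On $\B_5$, term~(\mathrm{II}) is bounded by $\rho_n K\twonorm{\beta^*}\sqrt{\tau_B}\, a_{\max}^{1/2}$, and term~(\mathrm{IV}) by $\rho_n K\twonorm{\beta^*}\fnorm{B}/\sqrt{n}$. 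The key deterministic inequality needed to combine (\mathrm{II}) and (\mathrm{IV}) into the compact form $D_0' K\tau_B^{1/2}\twonorm{\beta^*}\rho_n$ is the standard bound $\fnorm{B}^2 = \tr(B^2) \le \tr(B)\twonorm{B}$, which yields $\fnorm{B}/\sqrt{n} \le \sqrt{\tau_B}\,\twonorm{B}^{1/2}$. Finally, on $\B_6$ from Lemma~\ref{lemma::trBest}, $\abs{\hat\tau_B - \tau_B} \le D_1 r_{m,m}$ so that term~(\mathrm{V}) contributes at most $D_1 r_{m,m}\norm{\beta^*}_\infty = (2 D_1 K/\sqrt{m}) \rho_n \norm{\beta^*}_\infty$ after rewriting $r_{m,m} = 2\rho_n K/\sqrt{m}$.

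The three bounds sum to the right-hand side of~\eqref{eq::oracle}, and the probability bound $\prob{\B_0} \ge 1 - 16/m^3$ comes from a union bound: Lemma~\ref{lemma::Tclaim1} contributes $4/m^3 + 4/m^3$ (for $\B_4$ and $\B_5$), and Lemma~\ref{lemma::trBest} contributes $3/m^3$ for $\B_6$, leaving slack which absorbs easily into the constant $16/m^3$ stated. The stable-rank hypothesis $\fnorm{B}^2/\twonorm{B}^2 \ge \log m$ is precisely the condition invoked inside Lemma~\ref{lemma::Tclaim1}, and no further probabilistic work is needed here.

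There is essentially no serious obstacle: this theorem is a packaging result that simply collects the outputs of the two stochastic lemmas along a clean deterministic decomposition. The only minor care required is the conversion $\fnorm{B}/\sqrt{n} \le \sqrt{\tau_B}\,\twonorm{B}^{1/2}$, which is what enables the $\twonorm{B}^{1/2}$ appearing in $D_0'$ to pair up correctly with $\tau_B^{1/2}$ rather than remaining as an $\fnorm{B}$-dependent quantity; and the slight bookkeeping to collapse $r_{m,m}$ against $\rho_n$ so the final expression has a single $\rho_n$ factored out front.
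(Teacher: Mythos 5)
Your proposal is correct and follows essentially the same route as the paper's proof: the identical decomposition of $\hat\gamma - \hat\Gamma\beta^*$ (the paper merely groups $\onen X_0^T\e$ and $\onen W^T\e$ into a single term $U_1$), the same invocations of Lemmas~\ref{lemma::Tclaim1} and~\ref{lemma::trBest} on $\B_4$, $\B_5$, $\B_6$, and the same use of $\fnorm{B}\le\sqrt{\tr(B)}\,\twonorm{B}^{1/2}$ to collapse the two $\twonorm{\beta^*}$-terms into $D_0'\tau_B^{1/2}$. The bookkeeping identity $r_{m,m}=2K\rho_n/\sqrt{m}$ and the union bound are handled exactly as in the paper, so nothing is missing.
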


We next state the first  Corollary~\ref{coro::low-noise} of
Theorem~\ref{thm::oracle}, 
which we use in  proving Theorems~\ref{thm::lasso} and~\ref{thm::DS}.
Here we state a somewhat simplified bound on 
$\norm{\hat\gamma -  \hat\Gamma \beta^*}_{\infty}$ 
for the sake of reducing the number of unknown parameters involved with a slight worsening of the
statistical error bounds when $\tau_B \asymp 1$.
On the other hand, the bound in \eqref{eq::oracle} provides a significant
improvement over the error bound in Corollary~\ref{coro::low-noise} in
case $\tau_B =o(1)$. 

\begin{corollary}
\label{coro::low-noise}
Suppose all conditions in Theorem~\ref{thm::oracle} hold.
Let $\hat\Gamma$ and $\hat\gamma$ be as
in~\eqref{eq::hatGamma}. 
On event $\B_0$, we have for 
$D_2 = 2(\twonorm{A} +\twonorm{B})$ and some absolute constant $C_0$ 
\bens
&&
\norm{\hat\gamma - \hat\Gamma \beta^*}_{\infty}
\le  \psi \sqrt{\frac{\log m}{n}},  \quad \text{ where } \;
\psi   =    C_0 D_2 K \left(K \twonorm{\beta^*} +  M_{\e} \right)
\eens
is as defined in Theorem~\ref{thm::lasso}.
\end{corollary}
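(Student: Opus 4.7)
The corollary is a direct simplification of Theorem~\ref{thm::oracle}, so the plan is to start from the three-term bound there and absorb all three summands into the single expression $D_2 K \twonorm{\beta^*} + D_2 M_\e$ up to a universal constant, which can then be folded into a new absolute constant $C_0$. No new probabilistic argument is required; only elementary norm comparisons tying $\tau_B$, $a_{\max}$, $\fnorm{A}$, $\fnorm{B}$, $D_0$, $D_0'$, and $D_1$ to $D_2 = 2(\twonorm{A}+\twonorm{B})$.

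First I would invoke Theorem~\ref{thm::oracle} on the event $\B_0$ to write
\[
\norm{\hat\gamma - \hat\Gamma \beta^*}_{\infty} \le \Bigl(\underbrace{D_0' K \tau_B^{1/2}\twonorm{\beta^*}}_{\text{I}} + \underbrace{\tfrac{2 D_1 K}{\sqrt{m}}\norm{\beta^*}_{\infty}}_{\text{II}} + \underbrace{D_0 M_\e}_{\text{III}}\Bigr)\rho_n.
\]
For term~I, I would use $\tau_B = \tr(B)/n \le \twonorm{B}$ (since $B \succeq 0$ is $n\times n$) and $a_{\max} \le \twonorm{A}$, giving $D_0'\tau_B^{1/2} \le (\twonorm{B}^{1/2}+\twonorm{A}^{1/2})\twonorm{B}^{1/2} \le \twonorm{B} + \tfrac{1}{2}(\twonorm{A}+\twonorm{B})$ via AM--GM, which is at most a constant multiple of $D_2$.

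For term~II, I would bound $\fnorm{A}\le\sqrt{m}\,\twonorm{A}$ and $\fnorm{B}\le\sqrt{n}\,\twonorm{B}$ so that $D_1/\sqrt{m}\le (\twonorm{A}+\twonorm{B})/\sqrt{m} = D_2/(2\sqrt{m})$, and combine with $\norm{\beta^*}_{\infty} \le \twonorm{\beta^*}$ to get a bound of order $D_2 K \twonorm{\beta^*}/\sqrt{m} \le D_2 K \twonorm{\beta^*}$. For term~III, I would use $D_0 \le \twonorm{A}^{1/2}+\twonorm{B}^{1/2} \le \sqrt{2(\twonorm{A}+\twonorm{B})} = \sqrt{D_2}$ by Cauchy--Schwarz; under the natural scaling $D_2 \ge 1$ this is at most $D_2$, so term~III is dominated by $D_2 M_\e$.

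Combining the three estimates, multiplying by $\rho_n = C_0 K\sqrt{\log m/n}$, and consolidating all numerical constants into a single absolute constant (still denoted $C_0$) yields
\[
\norm{\hat\gamma-\hat\Gamma\beta^*}_{\infty} \;\le\; C_0\, D_2\, K\bigl(K\twonorm{\beta^*} + M_\e\bigr)\sqrt{\tfrac{\log m}{n}},
\]
which is the advertised bound. The probability $\prob{\B_0}\ge 1-16/m^3$ is inherited verbatim from Theorem~\ref{thm::oracle}. The only mild care needed is the step bounding $D_0 M_\e$ by $D_2 M_\e$, where one either invokes a harmless assumption like $D_2 \ge 1$ or keeps an additional $\sqrt{D_2}$ factor that the authors evidently wish to suppress; this cosmetic simplification is precisely the purpose of the corollary and is the reason the bound is slightly looser than~\eqref{eq::oracle} when $\tau_B = o(1)$.
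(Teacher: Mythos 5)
Your proposal is correct and follows essentially the same route as the paper's proof: start from the three-term bound of Theorem~\ref{thm::oracle} and absorb $D_0$, $D_0'\tau_B^{1/2}$ and $D_1/\sqrt{m}$ into $D_2$ via $\tau_B\le\twonorm{B}$, $a_{\max}\le\twonorm{A}$, $\fnorm{A}\le\sqrt{m}\twonorm{A}$, $\fnorm{B}\le\sqrt{n}\twonorm{B}$ and AM--GM. The only hedge you make, needing $D_2\ge 1$ to pass from $\sqrt{D_2}$ to $D_2$ in term~III, is in fact automatic: (A1) gives $\tr(A)=m$, hence $a_{\max}\ge \tau_A=1$ and $\twonorm{A}\ge 1$, so $D_2\ge 2$, exactly as the paper notes.
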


In particular, Corollary~\ref{coro::low-noise} ensures that for the
corrected Lasso estimator,
\eqref{eq::psimain} holds with high probability for $\lambda$  chosen as in~\eqref{eq::psijune}. 
We prove Corollary~\ref{coro::low-noise} in Section~\ref{sec::lownoiseproof}. 

\noindent{\bf What happens when $\tau_B \to 0$?}
Recall $D_0 = \sqrt{\tau_B} + a_{\max}^{1/2}$ and $D'_0 :=\sqrt{\twonorm{B}} + a_{\max}^{1/2}$.
When $\tau_B \to 0$, we have by Theorem~\ref{thm::oracle}
\bens
\norm{\hat\gamma - \hat\Gamma \beta^*}_{\infty}
& = &  
O\left(D_1 K\inv{\sqrt{m}}\norm{\beta^*}_{\infty} +  D_0 K M_{\e}\right)
K \sqrt{\frac{\log m}{n}}
\eens
where $D_0 \to a_{\max}^{1/2}$ and $D_1 = \frac{\fnorm{A}}{\sqrt{m}} +
\frac{\fnorm{B}}{\sqrt{n}} 
\to \twonorm{A}^{1/2}$  under (A1), given that $\fnorm{B}/
\sqrt{n} \le \tau_B^{1/2} \twonorm{B}^{1/2} \to 0$.
In this case,  the error term involving $\twonorm{\beta^*}$
in~\eqref{eq::psijune15}  vanishes, and we only need to set (cf. Theorem~\ref{thm::main})
\ben
\label{eq::psijune15lasso}
&& \lambda \ge 2 \psi \sqrt{\frac{\log m}{n}} \; \; \text{ for } \;\;
\psi  \asymp a_{\max}^{1/2} K M_{\e} + \twonorm{A}^{1/2} K^2 
\norm{\beta^*}_{\infty}/m^{1/2},
\een
where the second term in $\psi$ defined immediately above 
comes from the estimation error in Lemma~\ref{lemma::trBest}; this
term vanishes if we were to assume that (1) $\tr(B)$ is also known
or (2) $\norm{\beta^*}_{\infty} = o(M_{\e} m^{1/2}/K)$. 
For both cases, by setting $\lambda \asymp 4 a_{\max}^{1/2} K M_{\e} \sqrt{\frac{\log  m}{n}}$,
we can recover the regular Lasso rate of 
\bens
\norm{\hat{\beta} -\beta^*}_q =
O_p(\lambda d^{1/q}), \quad \text{ for } \; \;  q =1, 2,
\eens
when the design matrix $X$  is almost free of measurement errors.

Finally, we state a second Corollary~\ref{coro::D2improv} of
Theorem~\ref{thm::oracle}.
Corollary~\ref{coro::D2improv} is essentially a restatement of the bound
in~\eqref{eq::oracle}. 
\begin{corollary}
\label{coro::D2improv}
Suppose all conditions in Theorem~\ref{thm::oracle} hold.
Let $D_0, D_0', D_{\ora}$, and $\tau_B^{+/2} :=  \tau_B^{1/2}  +  \frac{D_{\ora}}{\sqrt{m}}$  
be as defined in~\eqref{eq::defineD0} and~\eqref{eq::defineDtau}.
On event $\B_0$,
\ben
\nonumber
\norm{\hat\gamma - \hat\Gamma \beta^*}_{\infty}
& \le &  \psi \sqrt{\frac{\log m}{n}},\quad \text{ where } \;  \\
\label{eq::psioracle} 
&& \psi:= C_0 K \left(D_0' \tau_B^{+/2} K \twonorm{\beta^*} + D_0  M_{\e}
\right).
\een
Then $\prob{\B_0} \ge 1- 16/m^3$.
\end{corollary}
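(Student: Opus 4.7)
The plan is to derive Corollary~\ref{coro::D2improv} as a short deterministic rewriting of Theorem~\ref{thm::oracle}. The bound~\eqref{eq::oracle} already decomposes $\norm{\hat\gamma - \hat\Gamma \beta^*}_\infty$ into three stochastic contributions scaled by $\rho_n = C_0 K \sqrt{\log m/n}$, so the only task is to absorb the middle piece $\frac{2 D_1 K}{\sqrt m}\norm{\beta^*}_\infty$ into the same structural form as the first term, thereby producing the aggregated factor $\tau_B^{+/2} = \tau_B^{1/2} + D_{\ora}/\sqrt m$ that appears in $\psi$ in~\eqref{eq::psioracle}.

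The first step would be to bound $D_1 = \fnorm{A}/\sqrt m + \fnorm{B}/\sqrt n$ via the elementary PSD trace inequality $\fnorm{M}^2 = \tr(M^2) \le \twonorm{M}\,\tr(M)$. Under (A1) this yields $\fnorm{A}/\sqrt m \le \twonorm{A}^{1/2}$; applied to $B$, and using $\tau_B = \tr(B)/n \le \twonorm{B}$, it yields $\fnorm{B}/\sqrt n \le (\tau_B \twonorm{B})^{1/2} \le \twonorm{B}^{1/2}$. Adding, $2 D_1 \le 2(\twonorm{A}^{1/2} + \twonorm{B}^{1/2}) = D_{\ora}$. The definition of $D_0'$, together with the consequence $a_{\max} \ge \tr(A)/m = 1$ of (A1), gives $D_0' \ge a_{\max}^{1/2} \ge 1$, whence $2 D_1 \le D_0' D_{\ora}$.

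The second step is to combine this with $\norm{\beta^*}_\infty \le \twonorm{\beta^*}$, so that the middle term in~\eqref{eq::oracle} is dominated by $D_0'\,(D_{\ora}/\sqrt m)\,K\,\twonorm{\beta^*}$. This merges with the first term $D_0' K \tau_B^{1/2}\twonorm{\beta^*}$ to produce $D_0' K \tau_B^{+/2} \twonorm{\beta^*}$; multiplying through by $\rho_n$ and gathering constants gives exactly the claimed bound $\psi \sqrt{\log m/n}$ with $\psi$ as in~\eqref{eq::psioracle}. The probability estimate $\prob{\B_0} \ge 1 - 16/m^3$ is inherited verbatim from Theorem~\ref{thm::oracle}, so no fresh concentration argument is needed. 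There is no substantive obstacle here; the only care required is the verification that the PSD trace inequality produces precisely the factor $D_{\ora}/\sqrt m$ that gets absorbed into $\tau_B^{+/2}$.
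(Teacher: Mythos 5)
Your overall strategy coincides with the paper's: start from \eqref{eq::oracle}, bound $\norm{\beta^*}_{\infty}$ by $\twonorm{\beta^*}$, and absorb the $D_1$ term into the factor $\tau_B^{+/2}$. However, one inequality in your first step is false in general. From $\fnorm{B}^2 \le \twonorm{B}\tr(B)$ you correctly obtain $\fnorm{B}/\sqrt{n} \le (\tau_B \twonorm{B})^{1/2}$, but the further bound $(\tau_B\twonorm{B})^{1/2} \le \twonorm{B}^{1/2}$ is equivalent to $\tau_B \le 1$, and the fact you invoke, $\tau_B \le \twonorm{B}$, does not imply it. Nothing in the standing assumptions forces $\tau_B \le 1$: (A3) only requires $\tau_B = O(\lambda_{\max}(A))$, and $\lambda_{\max}(A) \ge \tau_A = 1$, so $\tau_B$ may exceed $1$. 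Concretely, for $B = cI_n$ with $c>1$ one has $\fnorm{B}/\sqrt{n} = c > \sqrt{c} = \twonorm{B}^{1/2}$, so your intermediate claim $2D_1 \le D_{\ora}$ fails there.

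The conclusion you actually need, $2D_1 \le D_0' D_{\ora}$, is nonetheless true, and the repair is exactly the paper's argument: keep the trace factors rather than discarding them, writing $2D_1 \le 2\left(\twonorm{A}^{1/2}\tau_A^{1/2} + \twonorm{B}^{1/2}\tau_B^{1/2}\right) \le 2\left(\twonorm{A}^{1/2}+\twonorm{B}^{1/2}\right)\left(\tau_A^{1/2}+\tau_B^{1/2}\right) = D_{\ora}\left(\tau_A^{1/2}+\tau_B^{1/2}\right)$, and then observe that under (A1) one has $\tau_A^{1/2} = 1 \le a_{\max}^{1/2}$ and $\tau_B^{1/2} \le \twonorm{B}^{1/2}$, hence $\tau_A^{1/2}+\tau_B^{1/2} \le D_0'$. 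With this substitution the rest of your argument (the bound $\norm{\beta^*}_{\infty}\le\twonorm{\beta^*}$, the merging of the two $\twonorm{\beta^*}$ terms into $D_0' K \tau_B^{+/2}\twonorm{\beta^*}$, and the probability bound inherited from Theorem~\ref{thm::oracle}) goes through verbatim.
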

We mention in passing that Corollaries~\ref{coro::low-noise}
and~\ref{coro::D2improv} are crucial in proving
Theorems~\ref{thm::lasso},~\ref{thm::DS},~\ref{thm::lassora}
and~\ref{thm::DSoracle}.

\subsection{Outline for proof of Theorem~\ref{thm::lasso}}
\label{sec::lassooutline}
In this section, we state
Theorem~\ref{thm::main}, and two Lemmas~\ref{lemma::lowerREI} and~\ref{lemma::dmain}.
Theorem~\ref{thm::lasso} follows from Theorem~\ref{thm::main} in view
of Corollary~\ref{coro::low-noise},
 Lemmas~\ref{lemma::lowerREI} and~\ref{lemma::dmain}. 
In more details,  Lemma~\ref{lemma::lowerREI} checks the Lower and the 
Upper $\RE$ conditions on the modified gram matrix, 
\ben
\label{eq::modifiedgramA}
\hat \Gamma_A := \onen(  X^T X - \hat\tr(B) I_{m}),
\een
while Lemma~\ref{lemma::dmain} checks condition \eqref{eq::taumain} as
stated in Theorem~\ref{thm::main} for curvature $\alpha$ and tolerance
$\tau$ regarding  the lower $\RE$ condition as derived in
Lemma~\ref{lemma::lowerREI}. 

First, we replace (A3) with (A3') which reveals some additional information
regarding the constant hidden inside the $O(\cdot)$ notation.
\bnum
\item[(A3')]
Suppose (A3) holds; moreover, 
$m n \ge 4096 C_0^2 D_2^2 K^4 \log
m/{\lambda_{\min}^2(A)}$ for $D_2 = 2(\twonorm{A} +
\twonorm{B})$,  or equivalently,
\bens
\frac{\lambda_{\min}(A)}{\twonorm{A} + \twonorm{B}}
> C_K \sqrt{\frac{\log m}{m n}}  \text{ for some large enough contant $C_K$}.
\eens
\enum
\begin{lemma}{{\bf(Lower and Upper-RE conditions)}}
\label{lemma::lowerREI} 
Suppose (A1), (A2) and (A3') hold. 
Denote by $\V := 3eM_A^3 /2$, where $M_A$ is as defined
in~\eqref{eq::defineM}. 
Let $s_0 \ge 32$  be as defined in \eqref{eq::s0cond}.
Recall that we denote by $\A_0$ the event that the modified gram
matrix $\hat\Gamma$ as defined in~\eqref{eq::hatGamma}
satisfies the Lower as well as Upper $\RE$ conditions with
\bens
\text{curvature} && 
\alpha = \frac{5}{8}\lambda_{\min}(A), \;\text{smoothness} \; \; \tilde\alpha
= \frac{11}{8}\lambda_{\max}(A) \\
\text{ and tolerance } && 
\frac{384 C^2 \vp(s_0)^2}{\lambda_{\min}(A)}\frac{\log m}{n}
 \le  \tau :=  \frac{\lambda_{\min}(A) - \alpha}{s_0}  
\le \frac{396 C^2 \vp^2(s_0+1)}{\lambda_{\min}(A)}\frac{\log m}{n}
\eens
for $\alpha, \tilde\alpha$ and $\tau$ as defined in Definitions~\ref{def::lowRE} and \ref{def::upRE}, and $C, s_0,
\vp(s_0)$ in~\eqref{eq::s0cond}. 
Suppose  that for some $c' > 0$ and $c' K^4 < 1$,
\ben
\label{eq::trBlem}
\frac{\tr(B)}{\twonorm{B}} 
& \ge & c' K^4 \frac{s_0}{\ve^2} \log\left(\frac{3 e m}{s_0
    \ve}\right)\; \; \;\text{ where } \; \ve =\inv{2 M_A}.
\een
Then 
$\prob{\A_0} \ge 1- 4\exp\left(-\frac{c_3 n}{M_A^2 \log m}
  \log\left(\frac{\V m \log m}{n}\right)\right) -2\exp\left(- \frac{4c_2 n}{M_A^2 K^4 }\right) - 6/m^3$.
\end{lemma}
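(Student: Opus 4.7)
\textbf{Proof plan for Lemma \ref{lemma::lowerREI}.}
The plan is to establish that, with high probability, the quadratic form $\theta^T \hat\Gamma \theta$ approximates $\theta^T A \theta$ uniformly over $s_0$-sparse unit vectors $\theta$, and then promote the sparse approximation to the global Lower/Upper-RE inequalities by a standard reduction (as in Theorem \ref{thm::AD} of Section \ref{sec::AD}, which itself mirrors Loh--Wainwright's Lemma 12 or Raskutti--Wainwright--Yu).  Concretely, I would first decompose
\begin{equation*}
\hat\Gamma - A \;=\; \underbrace{\Big(\tfrac{1}{n}X_0^T X_0 - A\Big)}_{\Delta_1} + \underbrace{\tfrac{1}{n}(X_0^T W + W^T X_0)}_{\Delta_2} + \underbrace{\Big(\tfrac{1}{n}W^T W - \tau_B I_m\Big)}_{\Delta_3} + \underbrace{(\tau_B - \hat\tau_B)\, I_m}_{\Delta_4},
\end{equation*}
and show that for the target accuracy $\varepsilon = 1/(2M_A)$ one has $\sup_{\theta}|\theta^T \Delta_i \theta|\le \varepsilon/4$ for every $i$, where the sup is over the set of unit vectors supported on at most $s_0$ coordinates.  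Combined, this gives $|\theta^T(\hat\Gamma-A)\theta|\le \varepsilon \le \lambda_{\min}(A)/4$ uniformly on sparse vectors, so that $\theta^T \hat\Gamma \theta \in [\tfrac{3}{4}\lambda_{\min}(A), \tfrac{5}{4}\lambda_{\max}(A)]\,\|\theta\|_2^2$ on the sparse set.

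For the individual terms: $\Delta_1$ is handled by the subgaussian sample-covariance bound for $X_0 = Z_1 A^{1/2}$, which yields the uniform control on $s_0$-sparse vectors at rate $\sqrt{s_0 \log(em/s_0)/n}$ via an $\varepsilon$-net argument over the unit sphere of $\R^{s_0}$.  The cross term $\Delta_2 = \tfrac{1}{n}(A^{1/2}Z_1^T B^{1/2}Z_2 + Z_2^T B^{1/2} Z_1 A^{1/2})$ is treated by a Hanson--Wright-type bound for products of two independent subgaussian matrices, again combined with a net on sparse directions.  The most delicate term is $\Delta_3 = \tfrac{1}{n}(Z_2^T B Z_2 - \tr(B) I_m)$: here one needs a uniform Hanson--Wright-type concentration over sparse directions whose effective sample size is essentially $r(B) = \tr(B)/\|B\|_{\text{op}}$.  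This is precisely where assumption \eqref{eq::trBlem} enters — it guarantees that the Gaussian/subgaussian chaos concentrates tightly enough to beat the $s_0\log(3em/(s_0\varepsilon))$ complexity of the sparse index set at accuracy $\varepsilon$.  Finally, $\Delta_4$ is handled by Lemma \ref{lemma::trBest} applied under (A1)/(A3'), which places $|\tau_B - \hat\tau_B|$ at the order $r_{m,m}\asymp K^2\sqrt{\log m/(mn)}$, absorbed into $\varepsilon/4$ as soon as $mn \gg C_0^2 K^4\log m / \lambda_{\min}^2(A)$ (this is exactly the content of (A3')).

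After the sparse bound is in place, I would invoke the Rudelson--Zhou extension (Theorem \ref{thm::AD}): a deterministic statement of the form ``if $|\theta^T(\hat\Gamma - A)\theta|\le \varepsilon\|\theta\|_2^2$ on all $s_0$-sparse $\theta$ with $\sqrt{s_0}\,\vp(s_0) \lesssim \lambda_{\min}(A)\sqrt{n/\log m}$, then the Lower- and Upper-RE inequalities hold for all $\theta \in \R^m$ with curvature $\tfrac{5}{8}\lambda_{\min}(A)$, smoothness $\tfrac{11}{8}\lambda_{\max}(A)$ and tolerance $\tau$ of order $\vp(s_0+1)^2 \log m/(n\,\lambda_{\min}(A))$''.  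The choice of $s_0$ in \eqref{eq::s0cond} and the constant $M_A$ in \eqref{eq::defineM} are calibrated precisely so that the sparse deviation $\varepsilon$ fits inside the allowed gap between $\lambda_{\min}(A)$ and $\alpha = \tfrac{5}{8}\lambda_{\min}(A)$, producing the claimed values of $\alpha$, $\tilde\alpha$, $\tau$.  A union bound over the four failure events, together with the probability bound in Lemma \ref{lemma::trBest} and the chaos concentration for $\Delta_1,\Delta_2,\Delta_3$, yields the stated probability for $\A_0$.

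The main obstacle will be the uniform Hanson--Wright step for $\Delta_3$: one must control $\sup_{\theta} |\theta^T Z_2^T B Z_2 \theta - \tr(B)|$ over the sparse unit sphere, which involves dependence induced by $B$.  Balancing the tail exponent from subgaussian chaos ($\exp(-c\min(t^2/\|B\|_F^2,\,t/\|B\|_{\text{op}}))$) against the metric entropy of sparse $s_0$-vectors (of order $s_0\log(em/s_0)$) is exactly what forces the lower bound \eqref{eq::trBlem} on the effective rank $r(B)$, and getting the constants right there drives the choice of $\varepsilon = 1/(2M_A)$.
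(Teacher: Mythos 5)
Your plan follows essentially the same route as the paper: the same four-term decomposition of $\hat\Gamma - A$, the same Hanson--Wright/subgaussian-chaos bounds with the effective-rank condition on $B$ driving the control of the $W^TW$ term, Lemma~\ref{lemma::trBest} for the trace-estimation error, and the same sparse-to-global transfer via the Maurey convex-hull argument (the paper packages the probabilistic sparse control as Theorem~\ref{thm::AD} and the deterministic transfer as Corollary~\ref{coro::BC}). The only cosmetic difference is that the paper controls bilinear forms $|u^T\Delta v|$ over pairs of sparse unit vectors rather than just quadratic forms, which is what the convex-hull decoupling step actually uses; your version would need polarization or the same bilinear control to close that step, but this does not change the argument.
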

The main focus of the current section is then to apply Theorem~\ref{thm::main} to show Theorem~\ref{thm::lasso}.
Theorem~\ref{thm::main} follows from Theorem 1 by~\cite{LW12}.

\begin{theorem}
\label{thm::main}
Consider the regression model in 
\eqref{eq::oby} and \eqref{eq::obX}.  Let $d \le n/2$.
Let $\hat\gamma, \hat\Gamma$ be as constructed in \eqref{eq::hatGamma}.
Suppose that the matrix $\hat\Gamma$ satisfies the Lower-$\RE$ condition with curvature $\alpha >0$ and
tolerance $\tau > 0$,
\ben
\label{eq::taumain}
\sqrt{d} \tau \le \min
 \left\{\frac{\alpha}{32 \sqrt{d}}, \frac{\lambda}{4b_0} \right\},
\een
where $d, b_0$ and $\lambda$ are as defined in \eqref{eq::origin}.
Then for any $d$-sparse vectors $\beta^* \in \R^m$, such that
$\twonorm{\beta^*} \le b_0$ and 
\ben
\label{eq::psimain}
&& \norm{\hat\gamma - \hat\Gamma \beta^*}_{\infty}
\le  \half \lambda, 
\een
the following bounds hold:
\ben
\label{eq::2-loss}
&& \twonorm{\hat{\beta} -\beta^*} \leq \frac{20}{\alpha}  \lambda
\sqrt{d} \; \;
\text{ and } \; \norm{\hat{\beta} -\beta^*}_1 \leq \frac{80}{\alpha} \lambda d,
\een
where $\hat\beta$ is an optimal solution to the corrected Lasso
estimator as in~\eqref{eq::origin}.
\end{theorem}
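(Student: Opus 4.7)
\textbf{Proof proposal for Theorem~\ref{thm::main}.} Since this statement is pitched as an adaptation of Theorem~1 of Loh--Wainwright~\cite{LW12}, my plan is to run the standard basic-inequality-plus-cone argument, but carried out directly with the non--positive-semidefinite surrogate $\hat\Gamma$ and the Lower-$\RE$ condition in place of usual strong convexity. First I would verify that $\beta^*$ is feasible for the program~\eqref{eq::origin}: for a $d$-sparse $\beta^*$ with $\twonorm{\beta^*}\le b_0$ one has $\|\beta^*\|_1\le \sqrt{d}\,\twonorm{\beta^*}\le b_0\sqrt{d}$, so the constraint $\|\beta\|_1\le b_0\sqrt{d}$ is met. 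Setting $\hat\nu:=\hat\beta-\beta^*$, the optimality of $\hat\beta$ combined with the feasibility of $\beta^*$ gives the basic inequality
\begin{equation*}
\tfrac{1}{2}\,\hat\nu^T\hat\Gamma\hat\nu \;\le\; \langle \hat\gamma-\hat\Gamma\beta^*,\hat\nu\rangle + \lambda\bigl(\|\beta^*\|_1-\|\hat\beta\|_1\bigr).
\end{equation*}

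Next I would plug in the three standard bounds: (i) H\"older's inequality together with~\eqref{eq::psimain} gives $\langle\hat\gamma-\hat\Gamma\beta^*,\hat\nu\rangle\le \tfrac{\lambda}{2}\onenorm{\hat\nu}$; (ii) with $S:=\supp(\beta^*)$, the triangle inequality on the $\ell_1$ norm yields $\|\beta^*\|_1-\|\hat\beta\|_1\le \|\hat\nu_S\|_1-\|\hat\nu_{S^c}\|_1$; and (iii) the Lower-$\RE$ condition gives $\hat\nu^T\hat\Gamma\hat\nu\ge \alpha\twonorm{\hat\nu}^2-\tau\onenorm{\hat\nu}^2$. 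Combining these and splitting $\onenorm{\hat\nu}=\|\hat\nu_S\|_1+\|\hat\nu_{S^c}\|_1$ produces
\begin{equation*}
\tfrac{\alpha}{2}\twonorm{\hat\nu}^2-\tfrac{\tau}{2}\onenorm{\hat\nu}^2 \;\le\; \tfrac{3\lambda}{2}\|\hat\nu_S\|_1-\tfrac{\lambda}{2}\|\hat\nu_{S^c}\|_1.
\end{equation*}
To establish a cone condition I would now feed the ``diameter'' bound $\onenorm{\hat\nu}\le 2b_0\sqrt{d}$ (both $\hat\beta$ and $\beta^*$ lie in the $\ell_1$-ball of radius $b_0\sqrt{d}$) into the tolerance term via $\tfrac{\tau}{2}\onenorm{\hat\nu}^2\le b_0\sqrt{d}\,\tau\,\onenorm{\hat\nu}\le \tfrac{\lambda}{4}\onenorm{\hat\nu}$, where the second inequality uses precisely the second half of hypothesis~\eqref{eq::taumain}. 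Substituting and discarding the non-negative LHS yields $\|\hat\nu_{S^c}\|_1\le 7\|\hat\nu_S\|_1$, hence $\onenorm{\hat\nu}\le 8\|\hat\nu_S\|_1\le 8\sqrt{d}\,\twonorm{\hat\nu}$.

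With the cone in hand I would return to the pre-cone inequality and control the tolerance term a \emph{second} time, this time by mixing the two available upper bounds on $\onenorm{\hat\nu}$:
\begin{equation*}
\tau\onenorm{\hat\nu}^2 \;\le\; \tau\bigl(2b_0\sqrt{d}\bigr)\bigl(8\sqrt{d}\,\twonorm{\hat\nu}\bigr) \;=\; 16\,b_0\,d\,\tau\,\twonorm{\hat\nu} \;\le\; 4\lambda\sqrt{d}\,\twonorm{\hat\nu},
\end{equation*}
again by the second half of~\eqref{eq::taumain}. Together with $\|\hat\nu_S\|_1\le\sqrt{d}\,\twonorm{\hat\nu}$ this gives $\tfrac{\alpha}{2}\twonorm{\hat\nu}^2\le C\lambda\sqrt{d}\,\twonorm{\hat\nu}$ for an explicit absolute constant $C$, hence the $\ell_2$ bound $\twonorm{\hat\nu}\le 20\lambda\sqrt{d}/\alpha$ after absorbing constants. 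The $\ell_1$ bound in~\eqref{eq::2-loss} then follows from $\onenorm{\hat\nu}\le 8\sqrt{d}\,\twonorm{\hat\nu}$.

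The main obstacle, which is the whole reason the two separate constraints appear in~\eqref{eq::taumain}, is that $\hat\Gamma$ is \emph{not} positive semidefinite in the errors-in-variables setting, so the Lower-$\RE$ tolerance term $\tau\onenorm{\hat\nu}^2$ genuinely has the ``wrong sign'' and cannot simply be dominated by the curvature term as in a convex/PSD Lasso analysis. The key trick, as above, is that the two halves of~\eqref{eq::taumain} play complementary roles: the $\lambda/(4b_0)$ half buys the cone condition, and then the $\alpha/(32\sqrt{d})$ half (equivalently $d\tau\le\alpha/32$) combined with a mixed ``diameter times cone'' bound on $\onenorm{\hat\nu}^2$ keeps the tolerance term at the scale $O(\lambda\sqrt{d}\,\twonorm{\hat\nu})$ rather than $O(\alpha\twonorm{\hat\nu}^2)$; it is precisely the feasibility constraint $g(\beta)\le R\asymp b_0\sqrt{d}$ built into~\eqref{eq::origin} that makes this workable.
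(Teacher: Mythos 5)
Your proof is correct and follows essentially the same route as the paper's (basic inequality, cone condition extracted from the $\ell_1$-ball diameter bound together with the $\lambda/(4b_0)$ half of \eqref{eq::taumain}, then the Lower-$\RE$ bound), with only cosmetic differences in constants (you get cone constant $7$ where the paper's Lemma~\ref{lemma:magic-number} gets $3$, but your final constants $7/\alpha$ and $56/\alpha$ still sit inside the stated $20/\alpha$ and $80/\alpha$). One small remark: your displayed final step controls $\tau\onenorm{\hat\nu}^2$ by invoking the $\lambda/(4b_0)$ half of \eqref{eq::taumain} a second time, so the $\alpha/(32\sqrt{d})$ half is never actually used in your argument even though your closing paragraph claims it is; the paper instead uses $16d\tau\le\alpha/2$ to absorb $\tau\onenorm{\upsilon}^2\le 16d\tau\twonorm{\upsilon}^2$ directly into the curvature term.
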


We include the proof of Theorem~\ref{thm::main} for the sake of
self-containment and  defer it to Section~\ref{sec::proofofmain} for clarity of presentation.
\begin{lemma}
  \label{lemma::dmain}
Let $c', \phi, b_0, M_{\e}$,  $M_{+}$ and $K$ be as defined in
Theorem~\ref{thm::lasso}, where we assume that
$b_0^2 \ge \twonorm{\beta^*}^2 \ge \phi b_0^2  \; \text{ for some } \; 0 < \phi
\le 1$.
Suppose all conditions in Lemma~\ref{lemma::lowerREI} hold.
Suppose that $s_0 \ge 32$ and
\ben
\label{eq::dlassoproof}
&& d:= \abs{\supp(\beta^*)} \le C_A \frac{n}{\log m} \left\{c' D_\phi
  \wedge 2\right\} \;\;\text{ where }  C_A := \inv{40 M_{+}^2} \\
\nonumber
&\text{and} & D_{\phi} = K^4 \left(\frac{M^2_{\e}}{K^2 b_0^2}  + \phi \right) \ge
 K^4 \phi \ge \phi.
\een
Then the following condition holds
\ben
\label{eq::dcond} 
d \le  \frac{\alpha}{32 \tau} \bigwedge
\inv{\tau^2} \frac{\log m}{n} \left(\frac{\psi}{b_0}\right)^2,
\een
where $\psi = C_0 D_2 K (K \twonorm{\beta^*}+ M_{\e})$ is as defined in~\eqref{eq::psijune},
$\alpha = 5\lambda_{\min}(A)/8$, and $\tau$ is as defined in Lemma~\ref{lemma::lowerREI}.
\end{lemma}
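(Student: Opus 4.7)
The lemma asks us to verify two separate upper bounds on $d$: (i) $d \le \alpha/(32\tau)$, and (ii) $d \le \tau^{-2}(\log m/n)(\psi/b_0)^2$, starting from the hypothesis
\[
d \le C_A\,\frac{n}{\log m}\{c' D_\phi \wedge 2\}, \qquad C_A = \frac{1}{40 M_+^2}, \quad D_\phi = \frac{K^2 M_\e^2}{b_0^2} + K^4\phi.
\]
The plan is to verify each bound using the explicit upper bound on the tolerance from Lemma~\ref{lemma::lowerREI},
$\tau \le \frac{396\, C^2\vp(s_0+1)^2}{\lambda_{\min}(A)}\,\frac{\log m}{n}$,
together with $M_+^2 = 1024\, C^2 \vp(s_0+1)^2/\lambda_{\min}^2(A)$.

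For (i), I will use the weaker bound $d \le 2 C_A (n/\log m)$ (the second term in the minimum). Substituting $\alpha = 5\lambda_{\min}(A)/8$ and the above upper bound on $\tau$ gives
\[
\frac{\alpha}{32\tau} \;\ge\; \frac{5\lambda_{\min}(A)/8}{32 \cdot 396\, C^2 \vp(s_0+1)^2/\lambda_{\min}(A)}\cdot\frac{n}{\log m}
\;=\;\frac{5\,\lambda_{\min}^2(A)}{8\cdot 32\cdot 396\, C^2\vp(s_0+1)^2}\cdot\frac{n}{\log m},
\]
and since $2 C_A = \lambda_{\min}^2(A)/(20\cdot 1024\, C^2\vp(s_0+1)^2)\cdot(n/\log m)^{-1}\cdot(n/\log m)$, a direct numerical comparison of the prefactors (both of order $\lambda_{\min}^2(A)/(C^2 \vp(s_0+1)^2)$) shows $2 C_A (n/\log m) \le \alpha/(32\tau)$.

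For (ii), I will first lower-bound $(\psi/b_0)^2$ in terms of $D_\phi$. Using $\|\beta^*\|^2 \ge \phi b_0^2$ and the elementary inequality $(K\|\beta^*\|+M_\e)^2 \ge K^2\|\beta^*\|^2 + M_\e^2$, the expression $\psi = C_0 D_2 K(K\|\beta^*\|+M_\e)$ yields
\[
(\psi/b_0)^2 \;\ge\; C_0^2 D_2^2 K^2 \Big(K^2\phi + \tfrac{M_\e^2}{b_0^2}\Big) \;=\; C_0^2 D_2^2\, D_\phi,
\]
where the last equality is by definition of $D_\phi$. Combining this with the target $d \le c' C_A (n/\log m) D_\phi$ reduces the desired inequality to
\[
c' C_A\,\frac{n}{\log m}\, D_\phi \;\le\; \frac{1}{\tau^2}\,\frac{\log m}{n}\, C_0^2 D_2^2\, D_\phi,
\]
i.e., after cancelling $D_\phi$, to $\tau^2 \le C_0^2 D_2^2 (\log m)^2/(c' C_A n^2)$. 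Plugging in the upper bound on $\tau$ and $1/C_A = 40 M_+^2 = 40\cdot 1024\, C^2 \vp(s_0+1)^2/\lambda_{\min}^2(A)$, this reduces to an inequality of the form $\vp(s_0+1)^2 \lesssim D_2^2/c'$, which follows from $\rho_{\max}(s_0+1,A) \le \lambda_{\max}(A) \le \|A\|$ and $\tau_B = \tr(B)/n \le \|B\|$, so that $\vp(s_0+1) \le \|A\|+\|B\| = D_2/2$; together with the hypothesis $c' K^4 \le 1$ (hence $c' \le 1$) this closes the bound.

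The main obstacle is simply careful bookkeeping of the absolute constants: one must verify that the factors $1/40$ in $C_A$, $396$ in the upper bound on $\tau$, and $C_0$ in $\psi$ combine so that both (i) and (ii) hold simultaneously. Once the key identity $D_\phi = K^2 M_\e^2/b_0^2 + K^4\phi$ is recognized and the bound $\vp(s_0+1) \le D_2/2$ is established, the remaining work is routine algebra.
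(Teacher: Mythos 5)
Your proposal is correct and follows essentially the same route as the paper's proof: part (i) is the same comparison of $2C_A$ against $\alpha/(32\tau)$ (the paper phrases it via the identity $\alpha/(32\tau)=5s_0/96$ and $n/(M_+^2\log m)\le s_0+1$ rather than via the explicit upper bound on $\tau$, but the arithmetic is identical), and your lower bound $(\psi/b_0)^2\ge C_0^2D_2^2D_\phi$ is exactly the paper's key inequality \eqref{eq::dphicondition}, obtained the same way from $\twonorm{\beta^*}^2\ge\phi b_0^2$. One bookkeeping point: in part (ii) the factor $c'$ is not disposed of by ``$c'\le 1$'' as you suggest; rather, the reduced inequality reads $396^2C^2\vp(s_0+1)^2\le 40960\,C_0^2D_2^2/c'$, and it closes only because the paper fixes $C=C_0/\sqrt{c'}$ (so $C^2c'=C_0^2$ cancels), after which $\vp(s_0+1)\le D_2/2$ gives $39204\le 40960$ -- without that relation between $C$ and $C_0$ the prefactor comparison would not go through.
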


We prove Lemmas~\ref{lemma::lowerREI} and \ref{lemma::dmain} in 
Sections~\ref{sec::records} and~\ref{sec::dmain} respectively.
Lemma~\ref{lemma::lowerREI} follows immediately from Corollary~\ref{coro::BC}.
We prove Lemmas~\ref{lemma::lowerREI} and Corollary~\ref{coro::BC} 
 in Sections~\ref{sec::records} and~\ref{sec::appendLURE}  respectively.  

\begin{remark}
Clearly for $d, b_0, \phi$ as bounded in Theorem~\ref{thm::lasso}, 
we have by assumption~\eqref{eq::snrcond}
the following upper and lower bound on $D_{\phi}$:
\bens
2 K^4 \phi \ge D_{\phi} := \left(\frac{M^2_{\e}K^2}{b_0^2}   +  K^4 \phi \right) \ge
K^4 \phi.
\eens
In this regime, the conditions on $d$ as in~\eqref{eq::dlassoproof}
can be conveniently expressed as that in \eqref{eq::dlasso} instead. 
\end{remark}

\subsection{Improved bounds for the corrected Lasso estimator}
\label{sec::lassooracle}
The proof of Theorem~\ref{thm::lassora} follows exactly the same
line of arguments as in Theorem~\ref{thm::lasso}, except that we now
use the improved bound on the error term 
$\norm{\hat\gamma -  \hat\Gamma \beta^*}_{\infty}$ given in Corollary~\ref{coro::D2improv},
instead of that in Corollary~\ref{coro::low-noise}.
Moreover, we replace Lemma~\ref{lemma::dmain} with Lemma~\ref{lemma::dmainoracle}, 
the proof of which follows from Lemma~\ref{lemma::dmain} with 
$d$ now being bounded as in \eqref{eq::doracle}  and $\psi$ being
redefined as in \eqref{eq::psioracle}.
The proof of Lemma~\ref{lemma::dmainoracle} appears in Section~\ref{sec::dmainoracle}.
See Section~\ref{sec::classoproof} for the proof of Theorem~\ref{thm::lassora}.

\begin{lemma}
\label{lemma::dmainoracle}
Let $c', \phi, b_0, M_{\e}$, $M_+$ and $K$ be as defined in
Theorem~\ref{thm::lasso}.
Suppose all conditions in Lemma~\ref{lemma::lowerREI} hold.
Suppose that \eqref{eq::doracle} holds:
\ben
\label{eq::doraclelocal}
&& d:= \abs{\supp(\beta^*)} \le C_A \frac{n}{\log m} \left\{c' c'' D_\phi
  \wedge 8 \right\}, \;\;\text{ where }  C_A := \inv{160 M_{+}^2}, \\
\nonumber
&& c'' := \frac{\twonorm{B} + a_{\max}}{\vp(s_0+1)^2} \le
\left(\frac{D_0'}{\vp(s_0+1)}\right)^2 \; \; \text{ and } \;  
D_{\phi} = \frac{K^2M^2_{\e}}{b_0^2}   +  \tau_B^+ K^4 \phi.
\een
Then~\eqref{eq::dcond}  holds with $\psi$ as defined in Theorem~\ref{thm::lassora} and 
$\alpha = \frac{5}{8}\lambda_{\min}(A)$.
\end{lemma}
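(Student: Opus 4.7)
\textbf{Proof plan for Lemma~\ref{lemma::dmainoracle}.} The plan is to verify the two inequalities comprising condition \eqref{eq::dcond}, namely $d \le \alpha/(32\tau)$ and $d \le \tau^{-2}(\log m/n)(\psi/b_0)^{2}$, by unpacking the definitions of $\tau$, $M_+$, $C_A$, $c''$, and $\psi$ and invoking the two terms of the minimum appearing in \eqref{eq::doraclelocal} separately. Since Lemma~\ref{lemma::lowerREI} is assumed, we may use $\tau \asymp 400 C^{2}\vp(s_0+1)^{2}\log m/(\lambda_{\min}(A)\,n)$ and $\alpha = \tfrac{5}{8}\lambda_{\min}(A)$, together with $M_+^{2} = 1024\,C^{2}\vp(s_0+1)^{2}/\lambda_{\min}^{2}(A)$ and $C_A = 1/(160 M_+^{2})$.

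First, I handle the simpler inequality $d \le \alpha/(32\tau)$. A direct substitution gives $\alpha/(32\tau) \asymp n\,\lambda_{\min}^{2}(A)/(C^{2}\vp(s_0+1)^{2}\log m)$, which matches, up to the explicit numerical constants in Lemma~\ref{lemma::lowerREI}, the quantity $8\,C_A n/\log m = n/(20 M_+^{2}\log m)$. So whenever the ``$\wedge 8$'' branch of \eqref{eq::doraclelocal} is active, the first inequality in \eqref{eq::dcond} holds; otherwise it follows a fortiori from $c'c''D_\phi \le 8$.

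Second, for the inequality $d \le \tau^{-2}(\log m/n)(\psi/b_0)^{2}$, the key step is to lower bound $(\psi/b_0)^{2}$ by $C_0^{2}(D_0')^{2} D_\phi$. Expanding $(M_\e + \tau_B^{+/2} K\twonorm{\beta^*})^{2}\ge M_\e^{2}+\tau_B^{+}K^{2}\twonorm{\beta^*}^{2}$ and using $\twonorm{\beta^*}^{2}\ge \phi b_0^{2}$, I obtain
\begin{equation*}
(\psi/b_0)^{2} \;\ge\; C_0^{2}(D_0')^{2}\!\left(\tfrac{K^{2}M_\e^{2}}{b_0^{2}} + \tau_B^{+}K^{4}\phi\right) \;=\; C_0^{2}(D_0')^{2} D_\phi .
\end{equation*}
Combining with $(\log m/n)/\tau^{2} \asymp \lambda_{\min}^{2}(A)\,n/((400 C^{2})^{2}\vp(s_0+1)^{4}\log m)$ yields a lower bound on the right-hand side of the desired inequality of order $C_0^{2}(D_0')^{2} D_\phi\,\lambda_{\min}^{2}(A)\,n/(C^{4}\vp(s_0+1)^{4}\log m)$.

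On the other side, using the definition of $c''$ along with the elementary inequality $(D_0')^{2} = (\sqrt{\twonorm{B}}+\sqrt{a_{\max}})^{2} \ge \twonorm{B}+a_{\max}$, so that $c'' \le (D_0')^{2}/\vp(s_0+1)^{2}$, the hypothesized bound $d \le C_A c'c'' D_\phi\,n/\log m$ translates into an upper bound of order $c'(D_0')^{2}D_\phi\,\lambda_{\min}^{2}(A)\,n/(C^{2}\vp(s_0+1)^{4}\log m)$. Comparing the two expressions reduces the second inequality in \eqref{eq::dcond} to an absolute-constant inequality of the form $c' \le \gamma\, C_0^{2}/C^{2}$ for an explicit $\gamma>0$, which is ensured by the standing assumption $c'K^{4}\le 1$ (hence $c'\le 1$) provided $c'$ is chosen as a sufficiently small absolute constant, exactly as in the proof of Lemma~\ref{lemma::dmain}. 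The main bookkeeping obstacle is tracking these explicit numerical constants and confirming the factor $c''$ has precisely been designed so that $(D_0')^{2}c''$ cancels against the $\vp(s_0+1)^{2}$ implicit in $M_+^{2}$; once this alignment is recognized, the argument mirrors Lemma~\ref{lemma::dmain} verbatim.
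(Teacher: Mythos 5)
Your proposal is correct and follows essentially the same route as the paper: the first half of \eqref{eq::dcond} comes from the cap $d\le 8C_A n/\log m=n/(20M_+^2\log m)\le 5s_0/96=\alpha/(32\tau)$, and the second half from the key lower bound $(\psi/b_0)^2\ge C_0^2(D_0')^2D_\phi$ (the paper's \eqref{eq::dphicondition2}) combined with $c''\le (D_0'/\vp(s_0+1))^2$ and the bound on $\tau$ from Lemma~\ref{lemma::lowerREI}. The only small imprecision is the final constant comparison: since the paper sets $C=C_0/\sqrt{c'}$, the reduced inequality $c'\le \gamma C_0^2/C^2=\gamma c'$ holds because $\gamma\ge 1$ by the chosen numerical constants, not because $c'$ can be taken small (shrinking $c'$ rescales both sides), but this does not affect the validity of the argument.
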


\subsection{Outline for proof of Theorem~\ref{thm::DS} }
\label{sec::DSoutline}
We provide an outline and state the technical lemmas needed for proving Theorem~\ref{thm::DS}.
Our first goal is to show that the following holds with high probability,
\bens
\norm{\hat\gamma - \hat\Gamma  \beta^*}_{\infty} 
& = & 
\norm{\onen X^T(y - X \beta^*) + \onen \hat\tr(B) \beta^*}_{\infty} \;
\le \; \mu \twonorm{\beta^*}  + \omega,
\eens
where  $\mu, \omega$ are chosen as in \eqref{eq::paraDS}.
This forms the basis for proving the $\ell_q$ convergence, where $q
\in [1, 2]$,  for the Conic programming estimator \eqref{eq::Conic}.
This follows immediately from Theorem~\ref{thm::oracle} and  
Corollary~\ref{coro::low-noise}. More
explicitly, we will state it in Lemma~\ref{lemma::DS}.

\begin{lemma}
\label{lemma::DS}
Let $D_0 = \sqrt{\tau_B} + \sqrt{a_{\max}}$  and $D_2 = 2 (\twonorm{A} + \twonorm{B})$ be as in Theorem~\ref{thm::DS}.
Suppose all conditions in Theorem~\ref{thm::oracle} hold.
Then on event $\B_0$ as defined therein,
the pair $(\beta, t) = (\beta^*, \twonorm{\beta^*})$ 
belongs to the feasible set of the minimization problem
\eqref{eq::Conic} with
\ben
\label{eq::paraDS}
\mu \asymp 2 D_2 K \rho_{n} \; \; \text{ and } \; \;  \omega \asymp
D_0 M_{\e} \rho_{n}, 
\; \quad \text{ where} \quad \rho_{n} := C_0 K  \sqrt{\frac{\log m}{n}}.
\een
\end{lemma}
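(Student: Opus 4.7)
The feasibility of the pair $(\beta, t) = (\beta^*, \twonorm{\beta^*})$ for the constraint set $\U$ in \eqref{eq::Conic} decomposes into two inequalities. The size constraint $\twonorm{\beta^*} \le t$ is immediate from the choice of $t$, so the work reduces entirely to verifying
\[
\norm{\hat\gamma - \hat\Gamma \beta^*}_{\infty} \le \mu \twonorm{\beta^*} + \omega
\]
on the event $\B_0$, for the stated $\mu \asymp 2 D_2 K \rho_n$ and $\omega \asymp D_0 M_{\e} \rho_n$.

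The plan is to apply Theorem~\ref{thm::oracle} directly (rather than its simplified Corollary~\ref{coro::low-noise}). On $\B_0$ it supplies
\[
\norm{\hat\gamma - \hat\Gamma \beta^*}_{\infty} \le \left( D_0' K \tau_B^{1/2} \twonorm{\beta^*} + \tfrac{2 D_1 K}{\sqrt m} \norm{\beta^*}_{\infty} + D_0 M_{\e} \right) \rho_n.
\]
I would then split the right-hand side into a ``$\twonorm{\beta^*}$-part'' and an ``$M_{\e}$-part''. Using the trivial bound $\norm{\beta^*}_{\infty} \le \twonorm{\beta^*}$ to fold the middle term into the first, the whole task reduces to bounding the two scalar factors $D_0' K \tau_B^{1/2}$ and $D_1 K / \sqrt m$ by a constant multiple of $D_2 K$.

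The first reduction uses $\tau_B = \tr(B)/n \le \twonorm{B}$ (since $B$ is $n \times n$ PSD) together with AM-GM: $D_0' \tau_B^{1/2} \le \twonorm{B} + a_{\max}^{1/2} \twonorm{B}^{1/2} \le \twonorm{B} + (a_{\max} + \twonorm{B})/2 \le \tfrac{3}{4} D_2$, where $a_{\max} \le \twonorm{A}$. The second uses $\fnorm{A} \le \sqrt m \twonorm{A}$ and $\fnorm{B} \le \sqrt n \twonorm{B}$, giving $D_1 \le \twonorm{A} + \twonorm{B} = D_2/2$, so $D_1/\sqrt m$ contributes only an order-$m^{-1/2}$ correction to $D_2$. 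Together these bound the coefficient of $\twonorm{\beta^*} \rho_n$ by $2 D_2 K$, identifying $\mu$, while the $M_{\e} \rho_n$ term is literally $D_0 M_{\e} \rho_n$ and identifies $\omega$. I do not expect a genuine obstacle here: the argument is essentially bookkeeping on top of Theorem~\ref{thm::oracle}. The only mildly delicate point is preserving the sharper $D_0$ (rather than the looser $D_2$ that would come from Corollary~\ref{coro::low-noise}) in $\omega$, which is precisely why the more refined bound of Theorem~\ref{thm::oracle} is needed.
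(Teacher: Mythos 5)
Your proposal is correct and follows essentially the same route as the paper: the paper's proof simply cites ``the proof of Corollary~\ref{coro::low-noise},'' which is precisely the chain you wrote out — apply Theorem~\ref{thm::oracle} on $\B_0$, absorb the middle term via $\norm{\beta^*}_{\infty}\le\twonorm{\beta^*}$ and $2D_1\le D_2$, bound $D_0'\tau_B^{1/2}\le \tfrac{3}{4}D_2$ by AM--GM, and keep the sharper $D_0 M_{\e}\rho_n$ term for $\omega$. The only cosmetic difference is that you carry out the Corollary's computation inline rather than citing it.
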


Before we proceed, we first need to introduce some notation and definitions.
Let $X_0 = Z_1 A^{1/2}$ be defined as in~\eqref{eq::subgdata}. 
Let $k_0 = 1+\lambda$. 
First we need to define the $\ell_q$-sensitivity parameter for $\Psi
:= \onen X_0^T X_0$ following~\cite{BRT14}: 
\ben
\label{eq::sense}
\kappa_{q}(d_0, k_0) 
& = & \min_{J : \abs{J} \le d_0} \min_{\Delta \in \Cone_J(k_0)}
\frac{\norm{\Psi \Delta}_{\infty}}{\norm{\Delta}_q}, \; \; \text{ where
} \; \\
\W_J(k_0) &  = & \left\{x \in \R^m \;| \; \mbox{ s.t. } \; \norm{x_{J^c}}_1 \leq k_0 \norm{x_{J}}_1 \right\}.
\een
See also~\cite{GT11}.
Let $(\hat\beta, \hat{t})$ be the optimal solution
to~\eqref{eq::Conic} and denote by $v = \hat\beta-\beta^*$.
We will state the following auxiliary lemmas, the first of which is 
deterministic in nature.
The two lemmas reflect the two geometrical constraints 
on the optimal solution to \eqref{eq::Conic}. 
The optimal solution $\hat\beta$ satisfies:
\bnum
\item
The vector $v$ obeys the following cone constraint:
$\onenorm{v_{\Sc}} \le   k_0 \onenorm{v_S}$, and $\hat{t} \le
\inv{\lambda} \onenorm{v} + \twonorm{\beta^*}$.
\item
$\norm{\Psi v}_{\infty}$ is upper bounded by a 
quantity at the order of $O\left(\mu (\twonorm{\beta^*} + \onenorm{v}) + \omega\right)$.
\enum
\begin{lemma}
\label{lemma::DS-cone}
Let $\mu, \omega >0$ be set.
Suppose that the pair $(\beta, t) = (\beta^*, \twonorm{\beta^*})$ belongs to 
the feasible set of the minimization problem \eqref{eq::Conic}, for which
$(\hat\beta, \hat{t})$ is an optimal solution.
Denote by $v = \hat\beta-\beta^*$. Then
\bens
\onenorm{v_{\Sc}} & \le &  (1+\lambda) \onenorm{v_S} \; \text{ and }
\; \hat{t} \; \le \; \inv{\lambda}\onenorm{v} + \twonorm{\beta^*}.
\eens
\end{lemma}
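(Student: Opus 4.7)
The plan is to extract the cone constraint and the bound on $\hat t$ from a single inequality obtained by comparing the objective values at $(\hat\beta,\hat t)$ and at the feasible point $(\beta^*,\twonorm{\beta^*})$. Since $(\beta^*,\twonorm{\beta^*})$ lies in $\U$ by hypothesis and $(\hat\beta,\hat t)$ is a minimizer, optimality immediately gives
\[
\onenorm{\hat\beta} + \lambda \hat t \;\le\; \onenorm{\beta^*} + \lambda \twonorm{\beta^*}.
\]
Feasibility of $(\hat\beta,\hat t)$ supplies $\twonorm{\hat\beta}\le \hat t$, which lets us replace $\hat t$ by $\twonorm{\hat\beta}$ on the left for the cone step, while for the second assertion we keep $\hat t$ intact.

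For the cone inequality, I would write $\hat\beta = \beta^*+v$, use $\beta^*_{S^c}=0$ to split $\onenorm{\hat\beta} = \onenorm{\hat\beta_S}+\onenorm{v_{S^c}}$, and invoke the reverse triangle inequality on the $S$-block to obtain $\onenorm{\hat\beta_S}\ge \onenorm{\beta^*}-\onenorm{v_S}$. Combining with the display above (after replacing $\hat t$ by $\twonorm{\hat\beta}$) yields
\[
\onenorm{v_{S^c}} - \onenorm{v_S} \;\le\; \lambda\,(\twonorm{\beta^*} - \twonorm{\hat\beta}).
\]
The key step — and the one place where a naive bound would cost a factor $1/(1-\lambda)$ instead of delivering the clean $(1+\lambda)$ constant — is to bound $\twonorm{\beta^*} - \twonorm{\hat\beta}$ by $\onenorm{v_S}$ rather than by $\twonorm{v}$. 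This is achieved by restricting to the support first: $\twonorm{\hat\beta}\ge \twonorm{\hat\beta_S} \ge \twonorm{\beta^*_S} - \twonorm{v_S} = \twonorm{\beta^*} - \twonorm{v_S}$, followed by $\twonorm{v_S}\le \onenorm{v_S}$. Substituting gives $\onenorm{v_{S^c}}\le (1+\lambda)\onenorm{v_S}$, as claimed.

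For the bound on $\hat t$, I would isolate $\lambda \hat t$ in the original optimality inequality and apply the reverse triangle inequality directly in $\ell_1$: $\onenorm{\beta^*}-\onenorm{\hat\beta}\le \onenorm{\hat\beta-\beta^*} = \onenorm{v}$, leading to $\lambda \hat t \le \onenorm{v} + \lambda\twonorm{\beta^*}$ and hence $\hat t \le \onenorm{v}/\lambda + \twonorm{\beta^*}$. No probabilistic input or concentration bound is needed — the entire argument is deterministic and exploits only the structure of the feasible set $\U$ and the objective.

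The main obstacle to watch for is the asymmetry in how the two pieces of the objective are handled: one must resist the temptation to bound $\twonorm{\beta^*}-\twonorm{\hat\beta}$ via $\twonorm{v}\le \twonorm{v_S}+\twonorm{v_{S^c}}$, which reintroduces $\onenorm{v_{S^c}}$ on the right-hand side and degrades the constant. Recognizing that restricting $\hat\beta$ to $S$ before applying the reverse triangle inequality is what produces the sharp constant $k_0=1+\lambda$ is the only nontrivial ingredient.
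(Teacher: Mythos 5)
Your proof is correct and follows essentially the same route as the paper's: both start from the optimality inequality $\onenorm{\hat\beta}+\lambda\hat t\le\onenorm{\beta^*}+\lambda\twonorm{\beta^*}$, use feasibility to replace $\hat t$ by $\twonorm{\hat\beta}$, apply the reverse triangle inequality on the $S$-block in $\ell_1$, and then bound $\twonorm{\beta^*}-\twonorm{\hat\beta}$ by $\twonorm{v_S}\le\onenorm{v_S}$ via the restriction $\twonorm{\hat\beta}\ge\twonorm{\hat\beta_S}$, which is exactly the step the paper uses to obtain the constant $1+\lambda$. The bound on $\hat t$ is also identical to the paper's argument.
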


\begin{lemma}
\label{lemma::grammatrix}
On event $\B_0\cap \B_{10}$, 
\bens
\norm{\Psi v}_{\infty} \le \mu_1 \twonorm{\beta^*} + \mu_2
\onenorm{v} + \omega',
\eens
where $\mu_1 = 2 \mu$, $\mu_2  = \mu(\inv{\lambda} + 1)$ and $\omega' = 2 \omega$
for $\mu, \omega$ as defined in \eqref{eq::paraDS}.
\end{lemma}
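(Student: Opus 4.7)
\textbf{Proposal sketch for Lemma~\ref{lemma::grammatrix}.}
The plan is to decompose $\Psi v = \hat\Gamma v + (\Psi - \hat\Gamma)v$, control the first piece by combining the Conic programming feasibility constraints for $(\beta^*,\twonorm{\beta^*})$ and $(\hat\beta,\hat t)$, and control the second piece by an $\maxnorm{\cdot}\cdot\onenorm{\cdot}$ bound on the residual $\Psi - \hat\Gamma$ using the concentration events $\B_{10}$ and $\B_6$. The claimed constants $\mu_1=2\mu$, $\mu_2=\mu(1/\lambda+1)$, $\omega'=2\omega$ will arise naturally: the ``$2\mu$'' and ``$2\omega$'' come from applying the feasibility bound twice (once at $\beta^*$, once at $\hat\beta$), the ``$\mu/\lambda$'' comes from inserting $\hat t \le \onenorm{v}/\lambda + \twonorm{\beta^*}$ (Lemma~\ref{lemma::DS-cone}), and the final ``$+\mu$'' in $\mu_2$ is exactly what we pay for replacing $\hat\Gamma$ by $\Psi$.

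For the first step, feasibility of $(\beta^*,\twonorm{\beta^*})$ (which is Lemma~\ref{lemma::DS}, valid on $\B_0$) gives $\norm{\hat\gamma-\hat\Gamma\beta^*}_{\infty}\le \mu\twonorm{\beta^*}+\omega$, while feasibility of the optimum $(\hat\beta,\hat t)$ gives $\norm{\hat\gamma-\hat\Gamma\hat\beta}_{\infty}\le \mu\hat t+\omega$. The triangle inequality combined with the bound $\hat t\le \onenorm{v}/\lambda+\twonorm{\beta^*}$ from Lemma~\ref{lemma::DS-cone} then yields
\begin{equation*}
\norm{\hat\Gamma v}_{\infty} \;\le\; \mu\bigl(\twonorm{\beta^*}+\hat t\bigr)+2\omega \;\le\; 2\mu\twonorm{\beta^*}+\tfrac{\mu}{\lambda}\onenorm{v}+2\omega.
\end{equation*}

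For the second step, expand $X=X_0+W$ with $W=B^{1/2}Z_2$ so that
\begin{equation*}
\Psi-\hat\Gamma \;=\; -\tfrac{1}{n}\bigl(X_0^TW+W^TX_0\bigr)-\tfrac{1}{n}\bigl(Z_2^TBZ_2-\tr(B)I_m\bigr)-(\hat\tau_B-\tau_B)I_m.
\end{equation*}
On $\B_{10}$ the first two pieces are each controlled in the entrywise max norm by $\rho_n K\sqrt{\tau_B}a_{\max}^{1/2}$ and $\rho_n K\fnorm{B}/\sqrt{n}$, and on $\B_6$ the last piece is bounded by $D_1 r_{m,m}$. Since $\sqrt{\tau_B\, a_{\max}}\le \tfrac12(\twonorm{A}+\twonorm{B})$ and $\fnorm{B}/\sqrt{n}\le\sqrt{\twonorm{B}\tau_B}\le\twonorm{B}$ (by $\tau_B\le\twonorm{B}$), and since $D_1 r_{m,m}$ is lower order in $m$, a simple tally gives $\maxnorm{\Psi-\hat\Gamma}\lesssim D_2 K\rho_n\asymp\mu$. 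Then $\norm{(\Psi-\hat\Gamma)v}_{\infty}\le\maxnorm{\Psi-\hat\Gamma}\onenorm{v}\le \mu\onenorm{v}$, and adding this to the bound from the previous paragraph produces exactly the claimed inequality.

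The main bookkeeping obstacle is matching the absolute constant in $\maxnorm{\Psi-\hat\Gamma}\le\mu$ with the choice $\mu\asymp 2D_2 K\rho_n$ specified in Lemma~\ref{lemma::DS}; this forces a careful comparison of $\sqrt{\tau_B a_{\max}}$ and $\fnorm{B}/\sqrt{n}$ against $\twonorm{A}+\twonorm{B}$, and requires $m$ large enough so that the $D_1 r_{m,m}$ contribution from the $\hat\tau_B$ plug-in error is negligible relative to the $\rho_n$-scale bounds. Everything else is routine triangle-inequality bookkeeping.
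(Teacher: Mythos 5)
Your proof is correct and arrives at exactly the constants $\mu_1=2\mu$, $\mu_2=\mu(1/\lambda+1)$, $\omega'=2\omega$, but by a genuinely different decomposition than the paper's. The paper expands $\onen X_0^TX_0v=\onen(X-W)^TX_0v$ and pushes $X^TX_0\hat\beta$ back onto the Conic residual $\onen X^T(X\hat\beta-y)-\onen\hat\tr(B)\hat\beta$, which forces it to handle five separate terms: the residual at $\hat\beta$ (feasibility of the optimum), $\onen\norm{X^T\e}_\infty$ (event $\B_4$), the cross term $(X^TW-\tr(B)I)\hat\beta$ split over $\beta^*$ (event $\B_5$) and over $v$ (event $\B_{10}$), the trace-estimation error (event $\B_6$), and $\onen\norm{W^TX_0v}_\infty$. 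Your telescoping $\Psi v=\hat\Gamma v+(\Psi-\hat\Gamma)v$ with $\hat\Gamma v=(\hat\Gamma\hat\beta-\hat\gamma)-(\hat\Gamma\beta^*-\hat\gamma)$ absorbs the $\e$-term and all measurement-error corrections at $\beta^*$ into the two feasibility inequalities (Lemma~\ref{lemma::DS} already pays for $\B_4$ and $\B_5$), so the only fresh concentration you need is a uniform entrywise bound on $\Psi-\hat\Gamma$, which hits only $v$ and hence only costs $\maxnorm{\Psi-\hat\Gamma}\onenorm{v}$. This makes the structure of the answer transparent — two applications of the feasibility bound plus one $\mu\onenorm{v}$ for swapping $\hat\Gamma$ for $\Psi$ — at the price of the constant-matching step $\maxnorm{\Psi-\hat\Gamma}\le\mu$, which does close: your tally gives $\rho_nK(\twonorm{A}+2\twonorm{B})+D_1r_{m,m}$ against $\mu=2D_2K\rho_n=4(\twonorm{A}+\twonorm{B})K\rho_n$, with the $D_1r_{m,m}$ term smaller by a factor $1/\sqrt{m}$, exactly as in the paper's own estimate of its terms $III+IV+V$. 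One trivial slip: the sign of the last term in your expansion of $\Psi-\hat\Gamma$ should be $+(\hat\tau_B-\tau_B)I_m$, which is immaterial since only $\abs{\hat\tau_B-\tau_B}$ enters the bound.
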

Now combining Lemma 6 of~\cite{BRT14} and an earlier result of the two
authors (cf. Theorem~\ref{thm:subgaussian-T-intro}~\cite{RZ13}), we can show that 
 the $\RE(2d_0,  3(1+ \lambda), A^{1/2})$ condition and the sample requirement as in~\eqref{eq::samplebound}
 are enough to ensure that the $\ell_q$-sensitivity parameter satisfies the following
lower bound for all $1\le q \le 2$: for some contant $c$,
\bens
\kappa_{q}(d_0, k_0) & \ge & c d_0^{-1/q},
\eens
which ensures that for $v =  \hat\beta-\beta^*$ and $\Psi = \onen X_0^T X_0$,
\ben
\label{eq::lqcond}
{\norm{\Psi v}_{\infty}}  &\ge & \kappa_{q}(d_0,
k_0) {\norm{v}_q} \ge  c d_0^{-1/q} \norm{v}_q.
\een
Combining \eqref{eq::lqcond} with Lemmas \ref{lemma::DS},~\ref{lemma::DS-cone} 
and~\ref{lemma::grammatrix} gives us both the lower and upper bounds on 
$\norm{\Psi v}_{\infty}$, with the lower bound being 
$\kappa_{q}(d_0, k_0) \norm{v}_q$ and the upper bound as specified in Lemma~\ref{lemma::grammatrix}.
Following some algebraic manipulation, this yields the bound on the
$\norm{v}_q$ for all $1\le q \le 2$. 
We prove Theorem~\ref{thm::DS} in Section~\ref{sec::proofofDSthm} and Lemmas~\ref{lemma::DS},~\ref{lemma::DS-cone}
and~\ref{lemma::grammatrix} in Section~\ref{sec::proofofDSlemma}.
The proof of Lemma~\ref{lemma::DS-cone} follows  the same line of
arguments in~\cite{BRT14} in view of Lemma~\ref{lemma::DS}.

\subsection{Improved bounds for the DS-type estimator}
\label{sec::DSoracle}
Lemma~\ref{lemma::DSimprov} follows directly from Corollary~\ref{coro::D2improv}.
\begin{lemma}
\label{lemma::DSimprov}
Suppose all conditions in Corollary~\ref{coro::D2improv} hold.
Let $D_0 =  \sqrt{\tau_B} + \sqrt{a_{\max}} \asymp 1$ under (A1).
Then on event $\B_0$, the pair $(\beta, t) = (\beta^*, \twonorm{\beta^*})$ 
belongs to the feasible set $\U$ of the minimization problem
\eqref{eq::Conic} with
\ben
\label{eq::paraDSimprov}
\mu \ge 
D_0' \tau_B^{+/2} K \rho_{n}  \; \; \text{ and } \; \; \omega \ge D_0 M_{\e} \rho_{n},
\een
where  $\tau_B^{+/2} :=  \tau_B^{1/2}  +  \frac{D_{\ora}}{\sqrt{m}}$  is as defined in~\eqref{eq::defineDtau}.
\end{lemma}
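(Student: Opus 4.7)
The plan is to read off the claim directly from Corollary~\ref{coro::D2improv}, which provides the sharper deviation bound on $\|\hat\gamma - \hat\Gamma\beta^*\|_\infty$ that separates the stochastic noise contribution (proportional to $M_\e$) from the measurement-error contribution (proportional to $\tau_B^{+/2}K\twonorm{\beta^*}$). Since the feasible set $\U$ in~\eqref{eq::Conic} consists of pairs $(\beta,t)$ such that $\twonorm{\beta}\le t$ and $\|\hat\gamma - \hat\Gamma\beta\|_\infty \le \mu t + \omega$, verifying membership of $(\beta^*, \twonorm{\beta^*})$ reduces to (i) the trivial inequality $\twonorm{\beta^*}\le \twonorm{\beta^*}$ and (ii) the bound
\[
\|\hat\gamma - \hat\Gamma\beta^*\|_\infty \;\le\; \mu\,\twonorm{\beta^*} + \omega.
\]

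To establish (ii), I would invoke Corollary~\ref{coro::D2improv} on event $\B_0$, which gives
\[
\|\hat\gamma - \hat\Gamma\beta^*\|_\infty \;\le\; C_0 K\bigl(D_0'\,\tau_B^{+/2} K\twonorm{\beta^*} + D_0 M_\e\bigr)\sqrt{\tfrac{\log m}{n}}.
\]
Recalling the shorthand $\rho_n = C_0 K\sqrt{\log m/n}$ introduced in~\eqref{eq::rhon}, the right-hand side factorizes cleanly as $D_0'\tau_B^{+/2} K\rho_n\,\twonorm{\beta^*} + D_0 M_\e \rho_n$. Hence the choices $\mu \ge D_0'\tau_B^{+/2}K\rho_n$ and $\omega \ge D_0 M_\e\rho_n$ as in~\eqref{eq::paraDSimprov} immediately yield (ii).

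There is essentially no obstacle beyond matching notation: the only thing to be careful about is confirming that the constant $C_0$ appearing implicitly inside the definitions of $\mu$ and $\omega$ (via $\rho_n$) is consistent with the constant produced by Corollary~\ref{coro::D2improv}, which is the same $C_0$ used to define $\rho_n$ in~\eqref{eq::rhon}. The hypothesis $\fnorm{B}^2/\twonorm{B}^2 \ge \log m$ (needed to invoke Theorem~\ref{thm::oracle} and hence Corollary~\ref{coro::D2improv}) is already among the conditions inherited from Corollary~\ref{coro::D2improv}, so no additional work is required, and the probability bound $\prob{\B_0}\ge 1-16/m^3$ transfers verbatim.
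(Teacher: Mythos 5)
Your proposal is correct and follows essentially the same route as the paper: on event $\B_0$ invoke Corollary~\ref{coro::D2improv}, rewrite its bound as $D_0'\tau_B^{+/2}K\rho_n\twonorm{\beta^*}+D_0 M_{\e}\rho_n$ using $\rho_n=C_0K\sqrt{\log m/n}$, and conclude that $(\beta^*,\twonorm{\beta^*})\in\U$ for the stated choices of $\mu$ and $\omega$. The factorization and the bookkeeping of the constant $C_0$ are both handled correctly, so nothing is missing.
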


\begin{lemma}
\label{lemma::tauB}
On event $\B_6$ and (A1), the choice of $\tilde\tau_B^{1/2}  := \PaulBhalf+ C_{6} r_{m,m}^{1/2}$ as in
\eqref{eq::muchoice}, where recall $r_{m,m} =  2 C_0 K^2
\sqrt{\frac{\log m}{m n}}$,  satisfies for $m \ge 16$ and $C_0 \ge 1$,
\ben
\label{eq::tildetauB}
\tau_B^{+/2}    &  \le &   \tilde\tau_B^{1/2} 
\le \tau_B^{1/2} + \frac{3}{2} C_{6} r_{m,m}^{1/2} =:
\tau^{\dagger/2}_B, \\
\label{eq::tildetauBbound}
\tilde\tau_B  & \le &  2 \tau_B +3 C_{6}^2 r_{m,m} \asymp \tau^{\ddagger}_B, \; \text{ and
  moreover  }\; \;  \tilde\tau_B^{1/2}  \tau_B^- \le  1.
\een
\end{lemma}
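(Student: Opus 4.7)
The workhorse is the elementary inequality $|\sqrt{a}-\sqrt{b}| \le \sqrt{|a-b|}$ for $a,b \ge 0$, applied to $\hat\tau_B$ and $\tau_B$. On the event $\B_6$ this yields $|\hat\tau_B^{1/2} - \tau_B^{1/2}| \le |\hat\tau_B - \tau_B|^{1/2} \le D_1^{1/2} r_{m,m}^{1/2}$. My plan is to show that under (A1) the plug-in error $D_1^{1/2}r_{m,m}^{1/2}$ is absorbed by the inflation $C_6 r_{m,m}^{1/2}$ built into $\tilde\tau_B^{1/2}$, with room to spare. Using the bound $\|M\|_F^2 \le \|M\|\,\tr(M)$, I would derive $\|A\|_F/\sqrt{m} \le \|A\|^{1/2}$ (since $\tr(A)=m$) and $\|B\|_F/\sqrt{n}\le \|B\|^{1/2}\tau_B^{1/2}$, so that $D_1 \le \|A\|^{1/2}+\|B\|^{1/2}\tau_B^{1/2}$. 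Since (A1) forces $\lambda_{\max}(A)\ge 1$, we get $D_{\ora}\ge 2$ and hence $C_6 \ge 2$; a short computation in the operative regime then gives $D_1 \le C_6^2/4$, i.e.\ $D_1^{1/2}\le C_6/2$.

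With this key bound in hand, the three assertions are algebraic. For the upper half of \eqref{eq::tildetauB}, on $\B_6$ we have $\tilde\tau_B^{1/2} = \hat\tau_B^{1/2}+C_6 r_{m,m}^{1/2} \le \tau_B^{1/2} + (D_1^{1/2}+C_6)r_{m,m}^{1/2} \le \tau_B^{1/2} + \tfrac32 C_6 r_{m,m}^{1/2} = \tau_B^{\dagger/2}$. For the lower half, $\tilde\tau_B^{1/2}-\tau_B^{1/2} \ge (C_6 - D_1^{1/2})r_{m,m}^{1/2} \ge \tfrac12 C_6 r_{m,m}^{1/2}$, so it suffices to check $\tfrac12 C_6 r_{m,m}^{1/2} \ge D_{\ora}/\sqrt{m}$; using $C_6 \ge D_{\ora}$, this reduces to $r_{m,m} \ge 4/m$, which, given $r_{m,m} = 2C_0 K^2 \sqrt{\log m/(mn)}$, holds for $m \ge 16$, $C_0\ge 1$ in the standing regime $n \lesssim m\log m$ that is invoked in the main theorems. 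For \eqref{eq::tildetauBbound}, I would apply $(x+y)^2 \le 2x^2 + 2y^2$ to write $\tilde\tau_B \le 2\hat\tau_B + 2 C_6^2 r_{m,m}$, plug in $\hat\tau_B \le \tau_B + D_1 r_{m,m}$ from $\B_6$, and absorb $2D_1 r_{m,m}$ into $C_6^2 r_{m,m}$ via $2D_1 \le C_6^2$ (which follows from $D_1 \le C_6/2$ and $C_6 \ge 2$), giving the target $2\tau_B + 3C_6^2 r_{m,m}$. The final inequality $\tilde\tau_B^{1/2}\tau_B^- \le 1$ is then immediate: by the upper bound already established, $\tilde\tau_B^{1/2}\le \tau_B^{1/2}+\tfrac32 C_6 r_{m,m}^{1/2} \le \tau_B^{1/2}+ 2 C_6 r_{m,m}^{1/2}$, which is exactly the reciprocal of the definitional upper bound $\tau_B^- \le (\tau_B^{1/2}+2C_6 r_{m,m}^{1/2})^{-1}$.

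The only step with any real content is the comparison $r_{m,m}^{1/2} \gtrsim 1/\sqrt{m}$ used for the lower bound $\tau_B^{+/2}\le\tilde\tau_B^{1/2}$, which is where the hypotheses $m \ge 16$, $C_0 \ge 1$ and the ambient $n \lesssim m \log m$ enter; all other steps are bookkeeping built on the subadditivity of $\sqrt{\cdot}$ and the concentration bound provided by $\B_6$.
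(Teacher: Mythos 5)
Your proposal is correct and follows essentially the same route as the paper's proof: the identity $|\sqrt{a}-\sqrt{b}|\le\sqrt{|a-b|}$ combined with the event $\B_6$ to get $|\hat\tau_B^{1/2}-\tau_B^{1/2}|\le D_1^{1/2}r_{m,m}^{1/2}$, the bound $D_1^{1/2}\le D_{\ora}/2\le C_6/2$ under (A1), the numeric check $r_{m,m}^{1/2}\ge 2/\sqrt{m}$ to absorb the $D_{\ora}/\sqrt{m}$ term in $\tau_B^{+/2}$, and then $(x+y)^2\le 2x^2+2y^2$ for the second display. The only differences are cosmetic (a marginally sharper bound on $D_1$ and a slightly reorganized lower-bound step), so nothing further is needed.
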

We next state an updated result in Lemma~\ref{lemma::grammatrixopt}.
\begin{lemma}
\label{lemma::grammatrixopt}
On event $\B_0\cap \B_{10}$,  the solution $\hat\beta$ to \eqref{eq::Conic} with
$\mu, \omega$ as in \eqref{eq::muchoice}
satisfies for $v := \hat\beta - \beta^*$
\bens
\norm{\onen X_0^T X_0 v}_{\infty} \le \mu_1 
\twonorm{\beta^*}+ \mu_2 \onenorm{v} + \omega',
\eens
where $\mu_1 = 2 \mu $, $\mu_2  =2 \mu (1+\inv{2 \lambda})$ and 
$\omega' = 2 \omega$.
\end{lemma}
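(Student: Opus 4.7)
The plan is to follow the same architecture as the proof of Lemma~\ref{lemma::grammatrix}, but feed in the sharper feasibility bound given by Lemma~\ref{lemma::DSimprov} and tighten the step that converts $\hat\Gamma$ into $n^{-1} X_0^T X_0$. First, on $\B_0$, Lemma~\ref{lemma::DSimprov} guarantees that $(\beta^*, \twonorm{\beta^*})$ lies in the feasible set $\U$ for the choice of $(\mu,\omega)$ in~\eqref{eq::muchoice}, so $\norm{\hat\gamma - \hat\Gamma \beta^*}_\infty \le \mu \twonorm{\beta^*} + \omega$. The optimality of $(\hat\beta,\hat t)\in\U$ simultaneously yields $\norm{\hat\gamma - \hat\Gamma \hat\beta}_\infty \le \mu \hat t + \omega$, and Lemma~\ref{lemma::DS-cone} (which applies verbatim because its statement is deterministic given feasibility of $\beta^*$) gives $\hat t \le \lambda^{-1}\onenorm{v} + \twonorm{\beta^*}$. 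A triangle inequality then produces
\[
\norm{\hat\Gamma v}_\infty \le \mu \hat t + \mu\twonorm{\beta^*} + 2\omega \le 2\mu \twonorm{\beta^*} + \frac{\mu}{\lambda}\onenorm{v} + 2\omega.
\]

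Next, I would convert $\hat\Gamma$ to $n^{-1} X_0^T X_0$ by expanding $X = X_0 + W$. Since $\hat\Gamma = n^{-1}X^T X - n^{-1}\hat\tr(B) I_m$, we have
\[
\frac{1}{n} X_0^T X_0 = \hat\Gamma - \frac{1}{n}\bigl(X_0^T W + W^T X_0 + W^T W - \hat\tr(B) I_m\bigr),
\]
so $\norm{n^{-1} X_0^T X_0\, v}_\infty \le \norm{\hat\Gamma v}_\infty + \Delta \onenorm{v}$, where $\Delta$ is the entrywise max norm of the correction matrix. Splitting $n^{-1}(W^TW - \hat\tr(B) I_m) = n^{-1}(W^TW - \tr(B) I_m) + (\tau_B - \hat\tau_B) I_m$ and invoking Lemma~\ref{lemma::Tclaim1} (on $\B_{10}$) together with Lemma~\ref{lemma::trBest} (on $\B_6 \subset \B_0$) controls $\Delta$ by the sum $2\rho_n K \sqrt{\tau_B}\,a_{\max}^{1/2} + \rho_n K \fnorm{B}/\sqrt{n} + D_1 r_{m,m}$.

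The remaining task is to show that this $\Delta$ is at most $2\mu$ with $\mu$ as in~\eqref{eq::muchoice}. Using $\fnorm{B}/\sqrt n \le \sqrt{\tau_B}\,\twonorm{B}^{1/2}$, the first two terms merge into $\rho_n K \sqrt{\tau_B}\,D_0'$ up to a constant, and Lemma~\ref{lemma::tauB} (specifically~\eqref{eq::tildetauB}) ensures $\tilde\tau_B^{1/2} \ge \tau_B^{+/2} \ge \tau_B^{1/2}$, so $\rho_n K \sqrt{\tau_B}\,D_0' \le \rho_n K \tilde\tau_B^{1/2} D_0' \asymp \mu$. The residual term $D_1 r_{m,m}$ is engineered to be absorbed through the $C_6 r_{m,m}^{1/2}$ correction baked into $\tilde\tau_B^{1/2}$ and the factor $D_0'$: since $D_1 \le D_0' (\twonorm{A}^{1/2} + \twonorm{B}^{1/2}) \le D_0' D_{\ora}$, one has $D_1 r_{m,m} \le D_0' D_{\ora} r_{m,m}^{1/2} \cdot r_{m,m}^{1/2} \lesssim D_0' K \rho_n \cdot C_6 r_{m,m}^{1/2} \lesssim \mu$. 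Summing the three contributions gives $\Delta \le 2\mu$, and combining with the bound on $\norm{\hat\Gamma v}_\infty$ yields $\mu_1 = 2\mu$, $\mu_2 = 2\mu + \mu/\lambda = 2\mu(1 + (2\lambda)^{-1})$ and $\omega' = 2\omega$, as claimed.

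The main obstacle is the bookkeeping in the last paragraph: producing $\Delta \le 2\mu$ with the exact stated constants requires one to use the definition $\tilde\tau_B^{1/2} = \hat\tau_B^{1/2} + C_6 r_{m,m}^{1/2}$ with $C_6 \ge D_{\ora}$ precisely to absorb both the cross-term $X_0^TW$ and the plug-in error $\hat\tau_B - \tau_B$ simultaneously into a single constant multiple of $\mu$. The conversion from the isotropic expression $\tau_B^{+/2} = \tau_B^{1/2} + D_{\ora}/\sqrt{m}$ (which is what Corollary~\ref{coro::D2improv} naturally delivers) to the computable expression $\tilde\tau_B^{1/2}$ hinges on~\eqref{eq::tildetauB}, and is the only place where the extra $+1$ (relative to the old $\mu_2 = \mu(\lambda^{-1}+1)$ in Lemma~\ref{lemma::grammatrix}) in the coefficient of $\onenorm{v}$ materializes.
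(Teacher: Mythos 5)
Your proof is correct and reaches the stated constants, but it reorganizes the argument relative to the paper's own proof. The paper expands $\onen X_0^TX_0 v = \onen(X-W)^TX_0(\hat\beta-\beta^*)$ and substitutes $X_0\beta^* = y - \e$, producing five terms $I,\dots,V$; the $\beta^*$-dependent pieces of the cross terms are then controlled on $\B_5$ (so as to get $\twonorm{\beta^*}$ rather than $\onenorm{\beta^*}$) and the $v$-dependent pieces on $\B_{10}$. You instead write $\onen X_0^TX_0 v = \hat\Gamma v - Mv$ with $M = \onen(X_0^TW + W^TX_0 + W^TW - \hat\tr(B)I_m)$, bound $\norm{\hat\Gamma v}_{\infty}$ by a triangle inequality using feasibility of both $\beta^*$ (via Lemma~\ref{lemma::DSimprov}) and $\hat\beta$, and control $\norm{Mv}_{\infty}\le \maxnorm{M}\onenorm{v}$ on $\B_{10}\cap\B_6$. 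This buys two things: all $\beta^*$-dependent stochastic terms (including $\onen X^T\e$ and $\onen X^TW\beta^*$) are packaged once and for all inside $\norm{\hat\gamma-\hat\Gamma\beta^*}_{\infty}\le\mu\twonorm{\beta^*}+\omega$, so you never risk picking up an $\onenorm{\beta^*}$; and the correction matrix is only ever applied to $v$, where the crude entrywise max-norm bound suffices. The price is the same one the paper pays: closing the step $\maxnorm{M}\le 2\mu$ requires exactly the slack built into $\tilde\tau_B^{1/2}=\hat\tau_B^{1/2}+C_{6}r_{m,m}^{1/2}$ with $C_{6}\ge D_{\ora}$ (Lemma~\ref{lemma::tauB}) to absorb simultaneously the plug-in error $\abs{\hat\tau_B-\tau_B}$ and the cross terms, and it holds only up to the constants hidden in $\mu\asymp D_0'\tilde\tau_B^{1/2}K\rho_n$ --- which is also how the paper's own proof closes. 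Both routes rest on the same events and the same concentration lemmas, and the final coefficients $\mu_1=2\mu$, $\mu_2=2\mu(1+\inv{2\lambda})$, $\omega'=2\omega$ come out identically.
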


\section{Lower and Upper RE conditions}
\label{sec::AD}
The goal of this section is to show that for $\Delta$ defined
in~\eqref{eq::defineAD}, the presumption in 
Lemmas~\ref{lemma::bigcone} and~\ref{lemma::bigconeII} as 
restated in \eqref{eq::Deltacond} holds with high probability (cf Theorem~\ref{thm::AD}). 
We first state a deterministic result showing that the Lower and Upper $\RE$
conditions hold for $\hat\Gamma_A$ under condition
\eqref{eq::Deltacond} in Corollary~\ref{coro::BC}.
This allows us to prove Lemma~\ref{lemma::lowerREI} in  Section~\ref{sec::records}.
See Sections~\ref{sec::geometry} and~\ref{sec::appendLURE},
where we show that Corollary~\ref{coro::BC} follows immediately from
the geometric analysis result as stated in Lemma~\ref{lemma::bigconeII}.
\begin{corollary}
\label{coro::BC}
Let $1/8 > \delta > 0$. Let $1 \le k < m/2$.
Let $A_{m \times m}$ be a symmetric positive semidefinite 
covariance matrice. Let $\hat\Gamma_A$ be an $m \times m$ symmetric
matrix and $\Delta = \hat\Gamma_A - A$.
Let $E=\cup_{|J| \leq k} E_J$, where $E_J = \spin\{e_j: j \in J\}$. 
Suppose that $\forall u, v \in E \cap S^{m-1}$
\ben
\label{eq::Deltacond}
\abs{u^T \Delta v} \le \delta \le \frac{3}{32}\lambda_{\min}(A).
\een
Then the Lower and Upper $\RE$ conditions hold: for all $\up \in \R^m$,
\ben
\label{eq::ADlow}
\up^T \hat\Gamma_A \up & \ge & \frac{5}{8} \lambda_{\min}(A)
 \twonorm{\up}^2 -\frac{3 \lambda_{\min}(A)}{8 k} \onenorm{\up}^2 \\
\label{eq::ADup}
\text{and} \quad \up^T \hat\Gamma_A \up & \le &  \frac{11}{8} \lambda_{\max}(A) \twonorm{\up}^2 
+\frac{3\lambda_{\min}(A)}{8 k} \onenorm{\up}^2.
\een
\end{corollary}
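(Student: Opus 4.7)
The plan is to lift the sparse bilinear bound on $\Delta$ to a bound on the quadratic form $\up^T \Delta \up$ for arbitrary $\up \in \R^m$ via a classical coordinate-block decomposition, and then read off both RE conditions from the sandwich $\lambda_{\min}(A)\twonorm{\up}^2 \le \up^T A \up \le \lambda_{\max}(A)\twonorm{\up}^2$ applied to $\hat\Gamma_A = A + \Delta$. With this splitting, both \eqref{eq::ADlow} and \eqref{eq::ADup} reduce to a single uniform estimate of the form $|\up^T \Delta \up| \le c_1\lambda_{\min}(A)\twonorm{\up}^2 + (c_2/k)\lambda_{\min}(A)\onenorm{\up}^2$; it will suffice to obtain $c_1 = c_2 = 3/16$.

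To produce such a bound, I would reorder the coordinates so that $|\up_1|\ge |\up_2|\ge \cdots$ and partition the support into consecutive blocks $J_1,J_2,\ldots$ of size at most $k$. The elementary observation that every coordinate in $J_i$ is dominated by the average magnitude on $J_{i-1}$ (for $i\ge 2$) yields $\twonorm{\up_{J_i}} \le \onenorm{\up_{J_{i-1}}}/\sqrt{k}$, so by telescoping
\begin{equation*}
\sum_{i\ge 1} \twonorm{\up_{J_i}} \;\le\; \twonorm{\up_{J_1}} + \onenorm{\up}/\sqrt{k} \;\le\; \twonorm{\up} + \onenorm{\up}/\sqrt{k}.
\end{equation*}
Since each normalised block $\up_{J_i}/\twonorm{\up_{J_i}}$ lies in $E \cap S^{m-1}$, the hypothesis \eqref{eq::Deltacond} gives $|\up_{J_i}^T \Delta \up_{J_j}| \le \delta \twonorm{\up_{J_i}}\twonorm{\up_{J_j}}$, and summing over all pairs $(i,j)$ produces the key inequality
\begin{equation*}
|\up^T \Delta \up| \;\le\; \delta \left(\sum_i \twonorm{\up_{J_i}}\right)^2 \;\le\; \delta\bigl(\twonorm{\up} + \onenorm{\up}/\sqrt{k}\bigr)^2.
\end{equation*}

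The last step is routine arithmetic: the inequality $(a+b)^2 \le 2a^2 + 2b^2$ turns this into $|\up^T \Delta \up| \le 2\delta\,\twonorm{\up}^2 + (2\delta/k)\onenorm{\up}^2$, and the hypothesis $\delta \le (3/32)\lambda_{\min}(A)$ then gives $3/16$ in front of each term. Combining with the sandwich for $\up^T A \up$ yields the Lower RE inequality \eqref{eq::ADlow} with effective constants $13/16 \ge 5/8$ on $\twonorm{\up}^2$ and $3/(16k) \le 3/(8k)$ on $\onenorm{\up}^2$; the Upper RE inequality \eqref{eq::ADup} follows analogously after using $\lambda_{\min}(A) \le \lambda_{\max}(A)$ to absorb the $\twonorm{\up}^2$ perturbation into the $\lambda_{\max}(A)$ term.

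The only step that is not immediate is the block-decomposition lemma, but this has been a standard device in the restricted-isometry/restricted-eigenvalue literature since Cand\`es--Tao, and it appears in a very similar form in the paper's own Lemma~\ref{lemma::bigconeII} which the authors invoke; consequently the proof is really a careful bookkeeping exercise and not a conceptual obstacle.
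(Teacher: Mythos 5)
Your proof is correct, but it follows a genuinely different route from the paper's. The paper derives the quadratic-form bound $|\up^T\Delta\up|\le 4\delta\bigl(\twonorm{\up}^2+\onenorm{\up}^2/k\bigr)$ from its Lemmas~\ref{lemma::bigcone} and~\ref{lemma::bigconeII}, which rest on the Maurey-type convex-hull embedding $\sqrt{k}B_1^m\cap B_2^m\subset 2\conv(U_k\cap S^{m-1})$ of Mendelson--Pajor--Tomczak-Jaegermann together with a dichotomy on whether $\up$ lies in the cone $\onenorm{\up}\le\sqrt{k}\twonorm{\up}$; the bilinear hypothesis \eqref{eq::Deltacond} is then transferred to the whole set by taking maxima of convex functions at extreme points. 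You instead use the Cand\`es--Tao shelling decomposition into decreasing blocks of size $k$, which gives $\sum_i\twonorm{\up_{J_i}}\le\twonorm{\up}+\onenorm{\up}/\sqrt{k}$ and hence $|\up^T\Delta\up|\le\delta\bigl(\twonorm{\up}+\onenorm{\up}/\sqrt{k}\bigr)^2\le 2\delta\twonorm{\up}^2+(2\delta/k)\onenorm{\up}^2$. (Your remark that this device appears "in a very similar form" in Lemma~\ref{lemma::bigconeII} is a slight misattribution --- that lemma is the convex-hull result, not a shelling argument --- but this does not affect your proof.) Your route is more elementary, avoids the extreme-point machinery entirely, needs no a priori smallness restriction on $\delta$ such as the $\delta<1/5$ in Lemma~\ref{lemma::bigcone}, and in fact yields a factor-of-two sharper perturbation bound ($2\delta$ versus $4\delta$), so your effective constants $13/16$ and $19/16$ comfortably dominate the stated $5/8$ and $11/8$. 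What the paper's approach buys in exchange is a decoupled bilinear estimate $\sup_{u,w\in\sqrt{k}B_1^m\cap B_2^m}|w^T\Delta u|\le 4\delta$ over the full set, which is reused elsewhere in their probabilistic arguments; for the purposes of this corollary alone, your argument is entirely adequate and arguably cleaner.
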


\begin{theorem}
\label{thm::AD}
Let $A_{m \times m}, B_{n \times n}$ be symmetric positive definite 
covariance matrices. Let $E=\cup_{|J| \leq k} E_J$ for $1 \le k < m/2$. Let $Z, X$ be $n \times m$
random matrices defined as in Theorem~\ref{thm::lasso}.
Let $\hat\tau_B$ be defined as in~\eqref{eq::trBest}.
Let
\ben
\label{eq::defineAD}
\Delta := \hat\Gamma_{A} -A := \onen X^TX - \hat\tau_B I_{m} -A.
\een
Suppose that for some absolute constant $c' > 0$ and $0 < \ve \le \inv{C}$,
\ben
\label{eq::trB}
\frac{\tr(B)}{\twonorm{B}} & \ge & 
\left(c' K^4 \frac{k}{\ve^2} \log\left(\frac{3e m}{k \ve}\right)\right) \bigvee \log m,
\een
where $C = C_0/\sqrt{c'}$ for $C_0$ as chosen to satisfy~\eqref{eq::defineC0}.

Then with probability at least $1- 4 \exp\left(-c_2\ve^2 \frac{\tr(B)}{K^4\twonorm{B}}\right)-2
  \exp\left(-c_2\ve^2 \frac{n}{K^4}\right) - 6/m^3$ for $c_2 \ge 2$, 
we have for all $u, v \in E \cap S^{m-1}$,
\bens
\abs{u^T \Delta v} \le  8 C \vp(k) \ve + 4 C_0 D_1 K^2 \sqrt{\frac{\log m}{m n}},
\eens
where
  $\vp(k) = \tau_B+  \rho_{\max}(k, A)$, and $D_1 \le \frac{\fnorm{A}}{\sqrt{m}}  + \frac{\fnorm{B}}{\sqrt{n}}$,
\end{theorem}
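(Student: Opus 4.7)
\textbf{Proof plan for Theorem~\ref{thm::AD}.} The plan is to decompose $\Delta$ into four natural pieces coming from the expansion $X = X_0 + W = Z_1 A^{1/2} + B^{1/2} Z_2$, and then bound the quadratic form $|u^T \Delta v|$ uniformly over the sparse set $E \cap S^{m-1}$ by a combination of a net argument (of size $\binom{m}{2k}$) with subgaussian concentration. Writing
\begin{eqnarray*}
\Delta & = & \underbrace{\left(\tfrac{1}{n} X_0^T X_0 - A\right)}_{\Delta_1}
 + \underbrace{\tfrac{1}{n}\left(X_0^T W + W^T X_0\right)}_{\Delta_2}\\
&&
 + \underbrace{\left(\tfrac{1}{n} W^T W - \tau_B I_m\right)}_{\Delta_3}
 + \underbrace{\left(\tau_B - \hat\tau_B\right) I_m}_{\Delta_4},
\end{eqnarray*}
I will control $|u^T \Delta_i v|$ for each $i$ separately and sum. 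Note that since $u,v \in E$, the vector $u$ and $v$ have joint support of cardinality at most $2k$; thus throughout one may replace $A, W, X_0$ by their restrictions to a fixed $2k$-subset of columns and then union bound over $\binom{m}{2k}$ such subsets.

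\textbf{Term $\Delta_1$.} Here $X_0 = Z_1 A^{1/2}$ has i.i.d.\ subgaussian rows with covariance $A$, so on any coordinate subspace of dimension $2k$ the deviation $|u^T \Delta_1 v|$ is a standard subgaussian covariance-estimation error. Combining a $(1/4)$-net on $E_J \cap S^{m-1}$ (of cardinality $9^{2k}$) with the operator-norm/Hanson-Wright bound for $\tfrac{1}{n}Z_1^T Z_1 - I$ acted upon by $A^{1/2}$, one obtains, with probability at least $1 - 2\exp(-c_2 \varepsilon^2 n / K^4)$,
\[
\sup_{u,v \in E \cap S^{m-1}} |u^T \Delta_1 v| \;\le\; C_1 \rho_{\max}(k,A) \,\varepsilon,
\]
provided $n \ge c' K^4 k \varepsilon^{-2}\log(3em/(k\varepsilon))$ (this is the reason for the lower bound involving $n$ implicit in the tail).

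\textbf{Term $\Delta_3$.} Writing $W^T W = Z_2^T B Z_2$, the quantity $u^T \Delta_3 v = \tfrac{1}{n}(Z_2 u)^T B (Z_2 v) - \tau_B\, u^T v$ is a quadratic form in the independent subgaussian entries of $Z_2$ whose coefficient matrix has Frobenius norm comparable to $\|B\|_F \asymp \sqrt{\tr(B)\,\|B\|}$ and operator norm $\|B\|$. Hanson-Wright gives the pointwise tail $\exp(-c\min(t^2/(K^4\|B\|_F^2/n^2), t/(K^2\|B\|/n)))$. Choosing $t \asymp \tau_B \varepsilon$ and union-bounding over the $(1/4)$-net on $E \cap S^{m-1}$ of total log-cardinality $\lesssim k \log(3em/(k\varepsilon))$ yields
\[
\sup_{u,v \in E \cap S^{m-1}} |u^T \Delta_3 v| \;\le\; C_3\, \tau_B\, \varepsilon,
\]
with probability at least $1 - 2\exp(-c_2 \varepsilon^2 \tr(B)/(K^4 \|B\|))$. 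The effective-rank hypothesis~\eqref{eq::trB} is precisely what makes this tail beat the $\binom{m}{2k}$ union cost.

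\textbf{Term $\Delta_2$ (the cross term).} Since $X_0$ and $W$ are independent, conditioning on $X_0$ one writes $u^T(X_0^T W) v = \langle W, X_0 u v^T\rangle$; the entries of $B^{1/2} Z_2$ are jointly subgaussian, so this is a subgaussian scalar of variance proxy $\le \|B^{1/2}\|^2 \|X_0 u\|_2^2/n$. Combining $\|X_0 u\|_2^2/n \lesssim \rho_{\max}(k,A)$ (which holds on event $\Delta_1$ above) with a net argument gives
\[
\sup_{u,v \in E \cap S^{m-1}} |u^T \Delta_2 v| \;\le\; C_2\, \sqrt{\tau_B\,\rho_{\max}(k,A)}\,\varepsilon \;\le\; C_2\,\vp(k)\,\varepsilon
\]
by the AM-GM inequality, with a comparable probability.

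\textbf{Term $\Delta_4$.} This is diagonal: $|u^T \Delta_4 v| = |\hat\tau_B - \tau_B| \cdot |u^T v| \le |\hat\tau_B - \tau_B|$. By Lemma~\ref{lemma::trBest}, on the event $\B_6$ (probability at least $1 - 3/m^3$) we have
\[
|\hat\tau_B - \tau_B| \le D_1 r_{m,m} = 2 C_0 K^2 D_1 \sqrt{\tfrac{\log m}{mn}},
\]
which is precisely the second summand in the stated bound.

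Combining the four bounds on the intersection of their respective events and absorbing the constants $C_1,C_2,C_3$ into a single $8C$ in front of $\vp(k)\varepsilon$ yields the theorem. The main technical obstacle is the uniform control of $\Delta_3$: because the quadratic form carries $B$ through its Frobenius norm, one gets a Bernstein-type (not purely subgaussian) tail, and the net argument over $E \cap S^{m-1}$ only closes when the effective rank $r(B)=\tr(B)/\|B\|$ exceeds the covering entropy $k \varepsilon^{-2}\log(3em/(k\varepsilon))$, which is exactly the content of hypothesis~\eqref{eq::trB}. All remaining pieces are straightforward applications of Hanson-Wright and a standard $\varepsilon$-net; keeping track of the constants (in particular matching the factor $8C$) is routine given the normalization choices fixed in~\eqref{eq::defineC0}.
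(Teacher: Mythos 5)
Your proposal is correct and follows essentially the same route as the paper: the identical four-term decomposition of $\Delta$, uniform control over $E\cap S^{m-1}$ via $\ve$-nets combined with Hanson--Wright, the effective-rank hypothesis~\eqref{eq::trB} absorbing the covering entropy for the $\onen W^TW$ term, and Lemma~\ref{lemma::trBest} for the $(\hat\tau_B-\tau_B)I_m$ piece. The only cosmetic difference is the cross term, which you handle by conditioning on $X_0$ and a linear subgaussian form, whereas the paper applies the decoupled Hanson--Wright bound (Lemma~\ref{lemma::oneeventA} via Lemma~\ref{lemma::orthogSp}) to $w^TZ_1^TB^{1/2}Z_2u$ over nets of $A^{1/2}E$ and $E$; both yield the $\sqrt{\tau_B\,\rho_{\max}(k,A)}\,\ve\le\vp(k)\,\ve$ contribution.
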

We prove Theorem~\ref{thm::AD} in Section~\ref{sec::proofofthmAD}.

\section{Numerical results}
\label{sec::exp}

In this section, we present results from numerical simulations designed to
validate the theoretical predictions as presented in previous sections. 
We implemented the composite gradient descent algorithm as
described in \cite{ANW12,LW12,LW15} for solving the corrected Lasso
objective function~\eqref{eq::origin} with $(\hat\Gamma, \hat\gamma)$
as defined in~\eqref{eq::hatGamma}.
For the Conic programming estimator, we use the implementation
provided by the authors~\cite{BRT14} with the same 
input $(\hat\Gamma, \hat\gamma)$~\eqref{eq::hatGamma}.
Throughout our experiments, $A$ is a correlation matrix with $a_{\max} =1$.
We set the following as our default parameters: $D_0'=
\twonorm{B}^{1/2} +1$, $D_0 = \sqrt{\tau_B} + 1$ and 
$R=\twonorm{\beta^*} \sqrt{d}$, where $d$ is the
sparsity parameter, the number of non-zero entries in $\beta^*$.
In one set of simulations, we also vary $R$. 

In our simulations, we look at three different models from which
$A$ and $B$ will be chosen.  Let $\Omega = A^{-1} = (\omega_{ij})$ and 
$\Pi = B^{-1} = (\pi_{ij})$.  Let  $E$ denote edges in $\Omega$, and
$F$ denote edges in $\Pi$. We choose $A$ from one of these two models:
\begin{itemize}
\item 
$\AR$ model.
In this model, the covariance matrix is of the form $A =\{\rho^{|i-j|}\}_{i,j}$.
The graph corresponding to the precision matrix $A^{-1}$ is a chain.
\item 
Star-Block model. In this model the covariance matrix is block-diagonal with equal-sized blocks
whose inverses correspond to star structured graphs, where $A_{ii} = 1$, for all $i$.
We have 32 subgraphs, where in each subgraph, 16 nodes are connected to a central hub node
with no other connections. The rest of the nodes in the graph are singletons.
The covariance matrix for each block $S$ in $A$ is generated by setting
$S_{ij} = \rho_A$ if $(i,j) \in E$, and $S_{ij} = \rho_A^2$ otherwise.
\end{itemize}
We choose $B$ from one of the following models. Recall that $\tau_B = \tr(B)/n$.
\begin{itemize}
\item 
For $B$ and $B^* = B /\tau_B = \rho(B)$, 
we consider the $\AR$ model with two parameters.
First we choose the $\AR$ parameter $\rho_{B^*} \in \{0.3, 0.7\}$ for
the correlation matrix $B^*$. We then set $B = \tau_B B^{*}$, where
$\tau_B \in \{0.3, 0.7, 0.9\}$ depending on the experimental setup.
\item
We also consider a second model based on $\Pi = B^{-1}$, where we use
the random concentration matrix model in \cite{ZLW08}. 
The graph is generated according to a type of Erd\H{o}s--R\'{e}nyi
random graph model. Initially, we set $\Pi= c I_{n \times n}$, and $c$
is a constant. Then we randomly select $n \log n$ edges and update $\Pi$ as follows:
for each new edge $(i, j)$, a weight $w >0$ is chosen uniformly at
random from $[w_{\min}, w_{\max}]$ where $w_{\max} > w_{\min} > 0$;
we subtract $w$ from $\pi_{ij}$ and $\pi_{ji}$, and increase $\pi
_{ii}$ and $\pi_{jj}$ by $w$.\vadjust{\goodbreak} This keeps $\Pi$
positive definite. We then rescale $B$ to have a certain desired trace
parameter $\tau_B$.
\end{itemize}
For a given $\beta^*$, we first generate matrices $A$ and $B$, where $A$
is $m \times m$ and $B$ is $n \times n$.  
For the given covariance matrices $A$ and $B$, 
we repeat the following steps to estimate $\beta^*$ in the errors-in-variables model as in \eqref{eq::oby} and \eqref{eq::obX},
\begin{enumerate}
\item
We first generate random matrices
$X_0 \sim \N_{f,m} (0, A \otimes I)$ and $W \sim \N_{f,m} (0, I
\otimes B)$ independently from the matrix variate normal distribution as follows.
Let $Z \in \R^{n \times m}$ be a Gaussian random ensemble 
with independent entries $Z_{ij}$ satisfying $\E Z_{ij} = 0$, $\E
Z_{ij}^2 =1$. Let $Z_1, Z_2$ be independent copies of $Z$. Let
$X_0 = Z_1 A^{1/2}$ and $W =B^{1/2} Z_2$, where $A^{1/2}$ and
$B^{1/2}$ are the unique square root of the positive definite matrix $A$ and $B = \tau_B B^*$ respectively.
\item
We then generate $X = X_0 + W$ and $y = X_0 \beta^* + \e$, where
$\e_i \;  \text{i.i.d.} \; \sim \; N(0, 1)$. 
We compute $\hat\tau_B$, $\hat\gamma$ and $\hat\Gamma$ according to \eqref{eq::trBest} and 
\eqref{eq::hatGamma} using $X, y$, where 
by \eqref{eq::trBest}, $\hat\tau_B  := \onen \hat\tr(B)   = \inv{mn} \big(\fnorm{X}^2 - n \tr(A)\big)_{+}$.
\item
Finally, we feed $X$ and $y$ to the Composite Gradient Descent algorithm as
described in \cite{ANW12,LW12} to solve the Lasso
program~\eqref{eq::origin} to recover $\beta^*$, where we set 
the step size parameter to be $\zeta$.
The output of this step is denoted by $\hat\beta$, the estimated $\beta^*$.
We then compute the relative error of $\hat\beta$: 
${\stnorm{\hat\beta-\beta^*}}/{\norm{\beta^*}}$, where $\norm{\cdot}$ denotes either the $\ell_1$ or the $\ell_2$ norm.
\end{enumerate}
The final relative error is the average of 100 runs for each set of
tuning and step-size parameters; for the Conic programming estimator, we solve~\eqref{eq::Conic} instead
of~\eqref{eq::origin} to recover $\beta^*$.

\subsection{Relative error}
In the first experiment, $A$ and $B$ are generated using the $\AR$
model with parameters $\rho_A, \rho_{B^*} \in \{0.3, 0.7\}$ and trace parameter
$\tau_B \in \{0.3, 0.7, 0.9\}$.   
We see in Figures~\ref{fig:gd-rescaled-chain-chain1}
and~\ref{fig:gd-rescaled-chain-chain2} that a larger sample size 
is required when $\rho_A$,  $\rho_{B^*}$ or $\tau_B$ increases.  
To explain these results, we first recall the following definition of the Signal-to-noise ratio, where we
take $K = M_{\ve} \asymp 1$ 
\bens
\SMR  & \asymp & 
\frac{\twonorm{\beta^*}^2}{\tau_B \twonorm{\beta^*}^2 +1} =
\inv{\tau_B + (1/\twonorm{\beta^*}^2)}, \; \; \text{ where } \;\\
\signal & := &\twonorm{\beta^*}^2 \; \text{ and
}  \; \; \bignoise := 1 + \tau_B \twonorm{\beta^*}^2,
\eens
which clearly increases as $\twonorm{\beta^*}^2$ increases or as the
measurement error metric $\tau_B$ decreases. 
We keep $\twonorm{\beta^*} = 5$ throughout our simulations. 
The corrected Lasso recovery problem thus becomes more difficult
as $\tau_B$ increases. 
Indeed, we observe that a larger sample size $n$ is needed when $\tau_B$ increases from $0.3$ to $0.9$ in order
to control the relative $\ell_2$ error to stay at the same level.  
Moreover, in view of  Theorem~\ref{thm::lassora}, 
we can express the relative error as follows: for $\alpha \asymp
\lambda_{\min}(A)$ and $K \asymp 1$, 
 \ben
\label{eq::relative}
\frac{\twonorm{\hat{\beta} -\beta^*}}{\twonorm{\beta^*} }
= O_P\left(
\frac{(\twonorm{B}^{1/2} + 1)}{\lambda_{\min}(A)} 
 \sqrt{\frac{\bignoise}{\signal}}
\sqrt{\frac{d \log m}{n}}\right).
\een
Note that when $\twonorm{\beta^*}$ is large enough and $\tau_B =
\Omega(1)$, the factor preceding $\sqrt{\frac{d \log m}{n}}$ on the RHS of
\eqref{eq::relative} is proportional to $\frac{(\twonorm{B}^{1/2} + 1) \sqrt{\tau_B}}{\lambda_{\min}(A)}$.
\begin{figure}
\begin{center}
\begin{tabular}{cc}
\begin{tabular}{c}\includegraphics[width=0.48\textwidth]{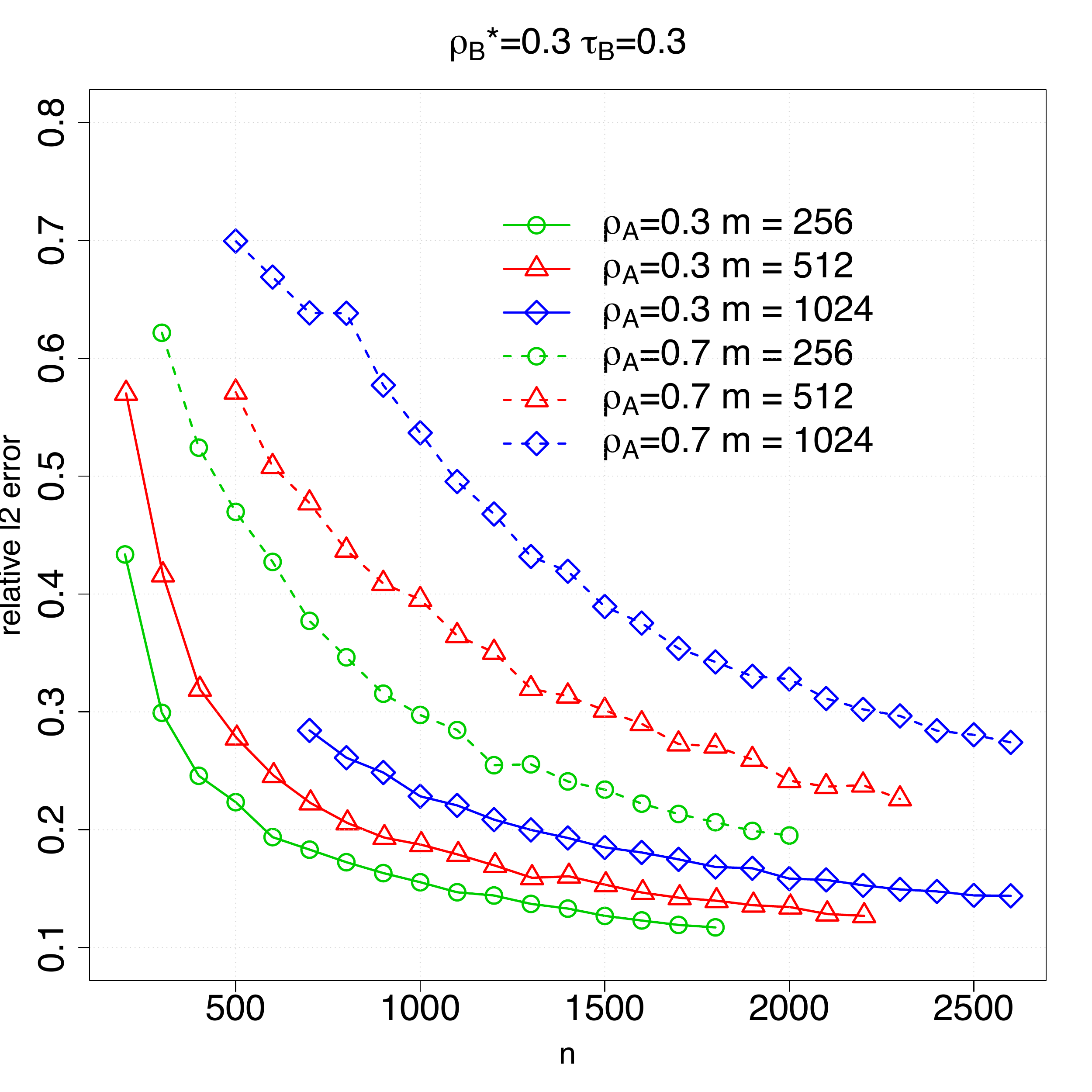}
\end{tabular}&
\begin{tabular}{c}\includegraphics[width=0.48\textwidth]{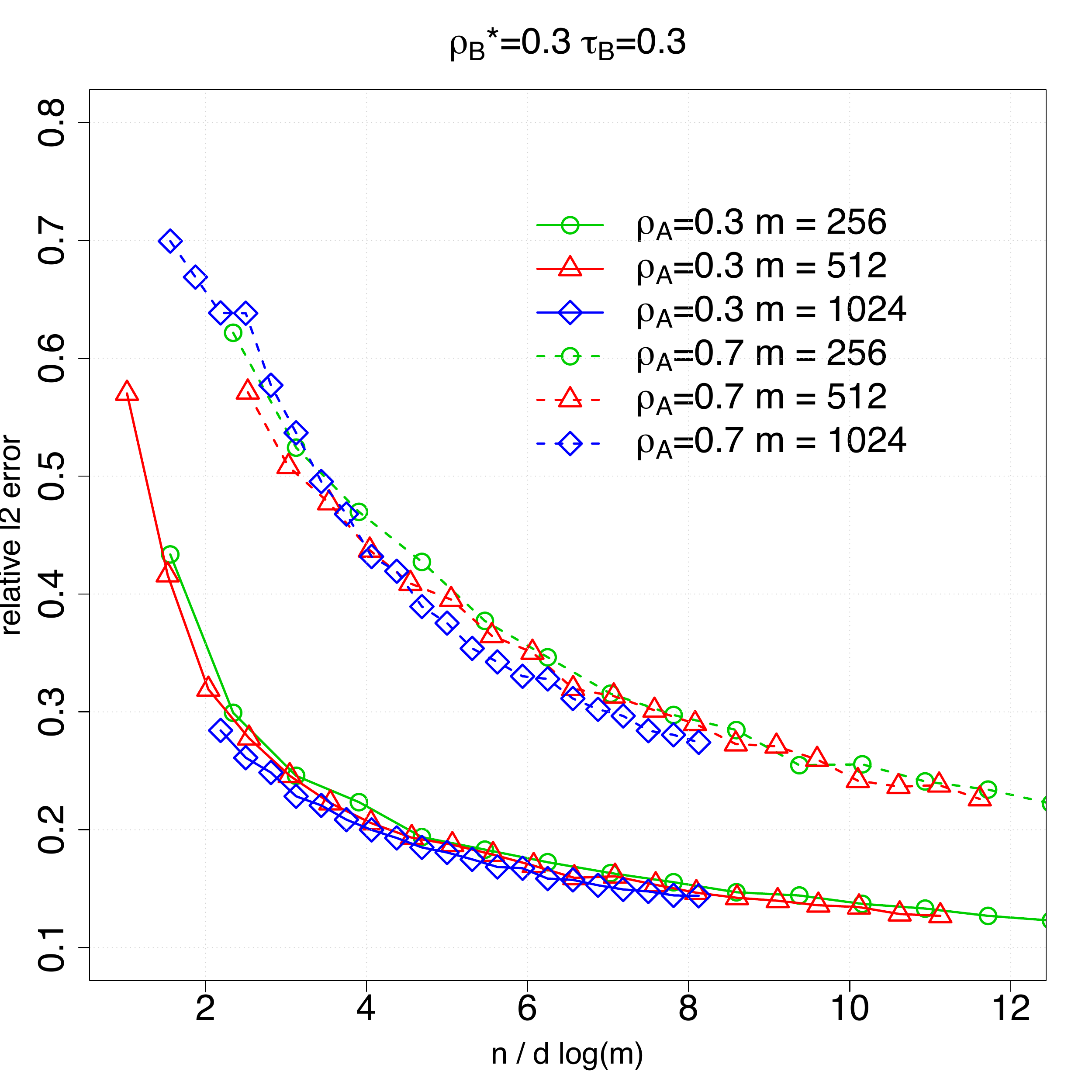}
\end{tabular} \\
(a) &(b) \\
\begin{tabular}{c}\includegraphics[width=0.48\textwidth]{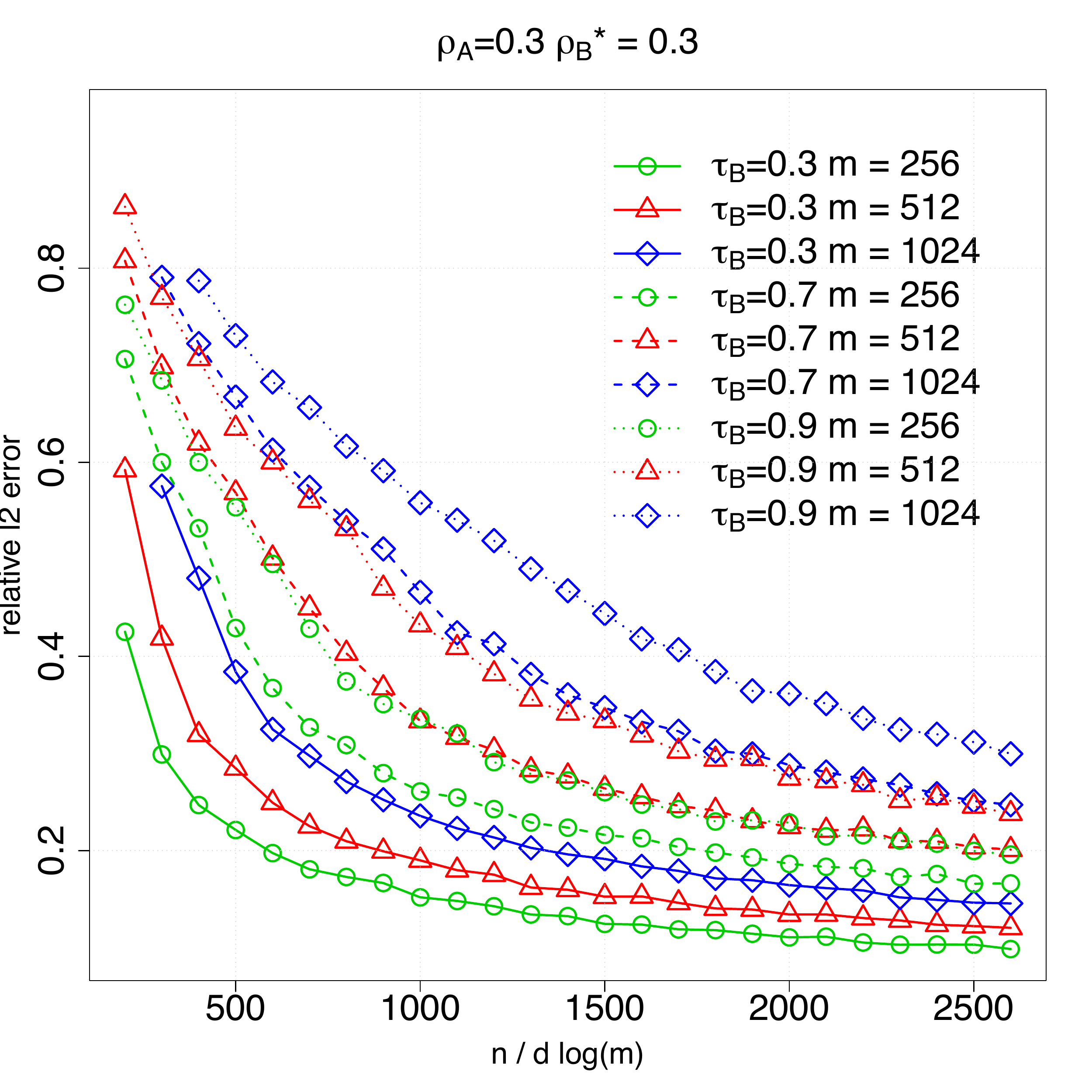}
\end{tabular}&
\begin{tabular}{c}\includegraphics[width=0.48\textwidth]{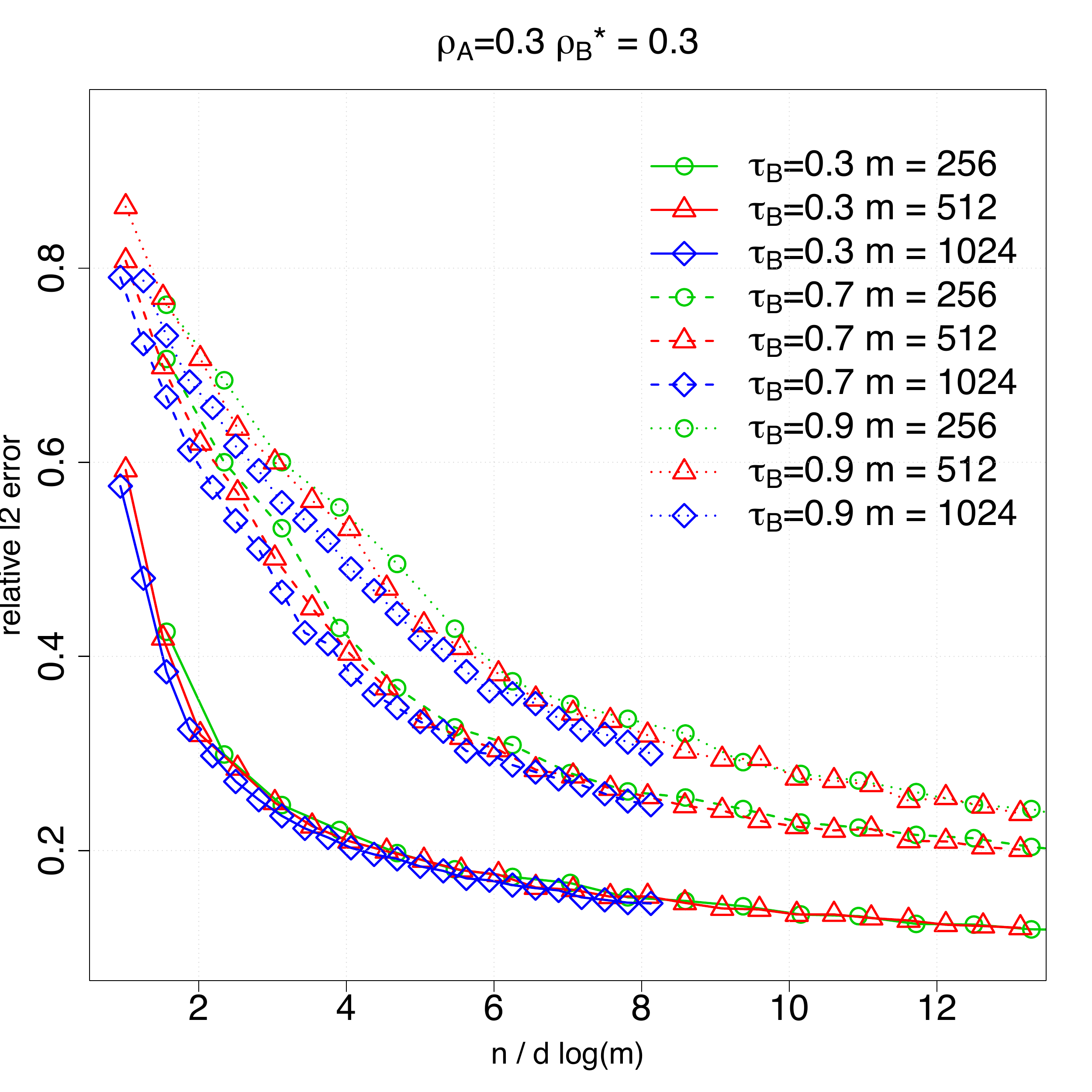}
\end{tabular}\\
(c) &(d) \\
\end{tabular}
\caption{
Plots of the relative $\ell_2$ error
after running composite gradient descent algorithm on recovering
$\beta^*$ using the corrected Lasso objective function with sparsity
parameter $d = \lfloor \sqrt{m}\rfloor$, where we vary $m \in \{256,
512, 1024\}$. We generate $A$ and $B$ using the $\AR$ model with 
$\rho_A, \rho_{B^{*}} \in \{0.3, 0.7\}$ and $\tau_B  = \{0.3, 0.7, 0.9\}$.
In the left and right column, we plot the relative $\ell_2$ error with
respect to sample size $n$ as well as  the rescaled
sample size $n/(d \log m)$.   As $n$ increases, we see that the statistical error decreases.
In the top row, we vary the $\AR$ parameter $\rho_A \in \{0.3, 0.7\}$,
while holding $(\tau_B, \rho_B^*)$ and $\twonorm{\beta^*}$ invariant.
Plot (a) shows the relative $\ell_2$ error versus $n$ for $m=256, 512, 1024$. 
In Plots (c) and (d), we vary  the trace parameter $\tau_B \in \{0.3, 0.7, 0.9\}$,
while fixing the $\AR$ parameters $\rho_A, \rho_B^* = 0.3$.
Plot (b) and (d) show the relative $\ell_2$ error
versus the rescaled sample size $n/(d \log m)$. 
The curves now align for different values of $m$ in the rescaled plots, consistent with the
theoretical prediction in Theorem~\ref{thm::lassora}.}
\label{fig:gd-rescaled-chain-chain1}
\end{center}
\end{figure}

\begin{figure}
\begin{center}
\begin{tabular}{cc}
\begin{tabular}{c}\includegraphics[width=0.48\textwidth]{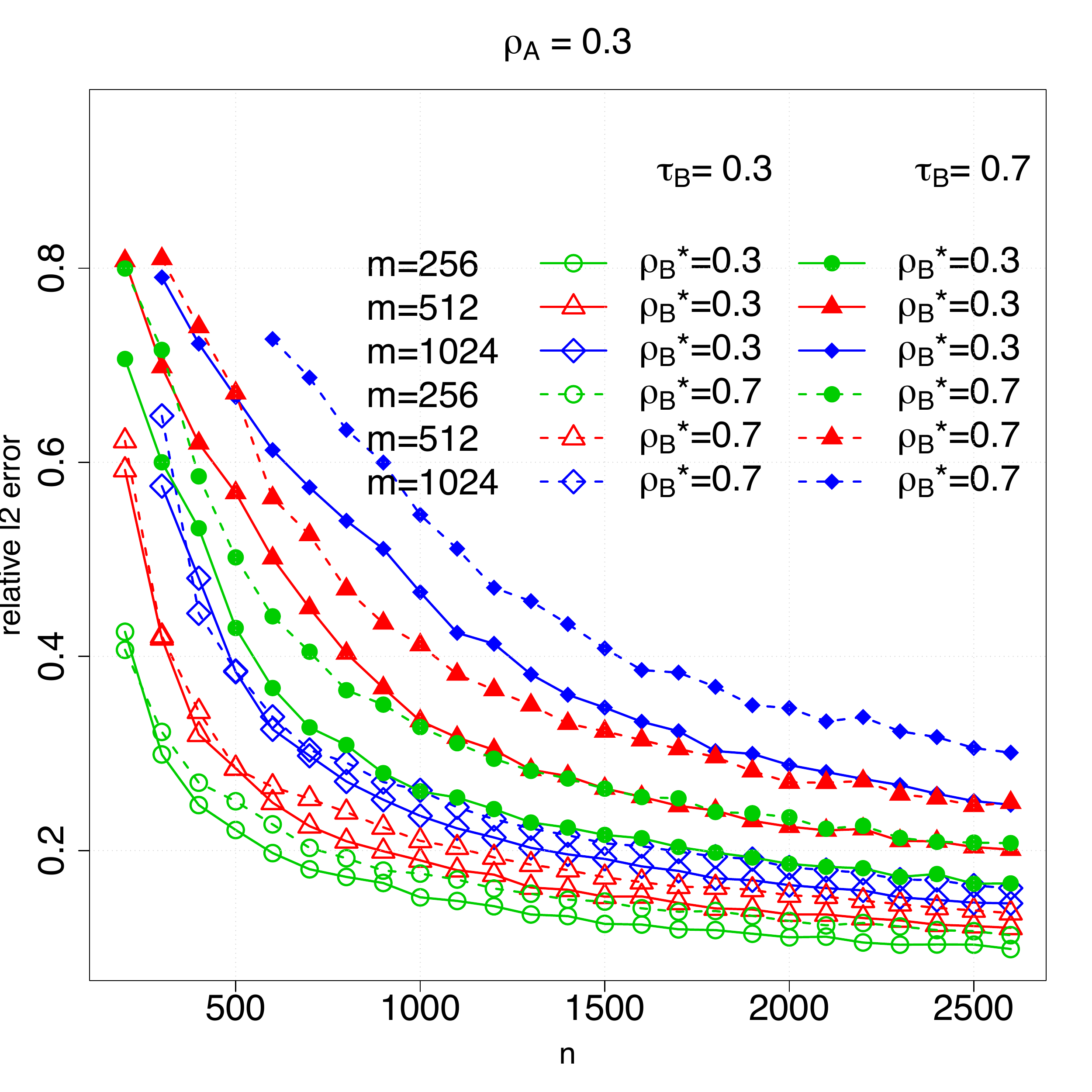}
\end{tabular}&
\begin{tabular}{c}\includegraphics[width=0.48\textwidth]{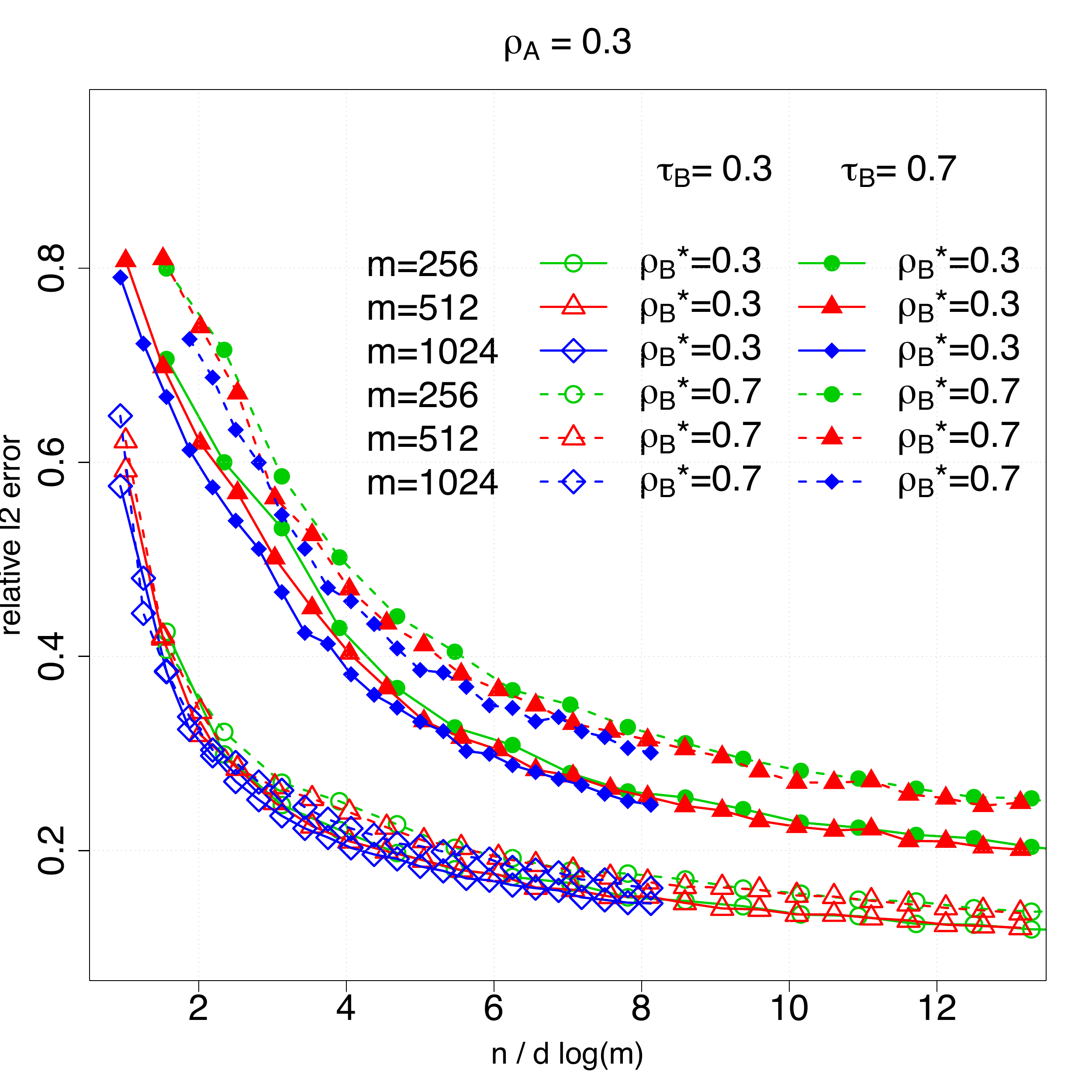}
\end{tabular}\\
(a) &(b) \\
\end{tabular}
\caption{
Plots of the relative $\ell_2$ error
after running composite gradient descent algorithm on recovering
$\beta^*$ using the corrected Lasso objective function with sparsity
parameter $d = \lfloor \sqrt{m}\rfloor$, where we vary $m \in \{256,
512, 1024\}$. We generate $A$ and $B$ using the $\AR$ model with 
$\rho_A=0.3$ and $\rho_B^* \in \{0.3, 0.7\}$.
We set $B = \tau_B B^*$ and vary the trace parameter
$\tau_B \in \{0.3, 0.7\}$ for each value of $\rho_{B^*}$.
The parameters $\tau_B$ and $\rho_B^{*}$ affect the rate of
convergence through $D_0' = \twonorm{B}^{1/2} + a_{\max}^{1/2}$ and $\tau_B^{1/2}$.
Plot (b) shows the relative $\ell_2$ error
versus the rescaled sample size $n/(d \log m)$.
We observe that as $\tau_B$ increases from $0.3$ to $0.7$, the two sets of curves
corresponding to $\rho_{B^*} = 0.3, 0.7$ become visibly more 
separated. As $n$ increases, all curves converge to $0$.}
\label{fig:gd-rescaled-chain-chain2}
\end{center}
\end{figure}
When we plot the relative $\ell_2$ error  $\shtwonorm{\hat\beta -\beta^*}/\shtwonorm{\beta^*}$
versus the rescaled sample size $\frac{n}{d \log m}$ under the same
$\SMR$ ratio, the two sets of curves corresponding to
$\rho_{A} = 0.3$ and $\rho_{A} = 0.7$ indeed line up in
Figure~\ref{fig:gd-rescaled-chain-chain1}(b), as predicted by
\eqref{eq::relative}.
We observe in Figure~\ref{fig:gd-rescaled-chain-chain1}(b),
the rescaled curves overlap well for different values of $(m, d)$ for
each $\rho_A$ when we keep $(\rho_{B^*}, \tau_B)$ and the length
$\twonorm{\beta^*} = 5$ invariant.
Moreover, the upper bound on the relative $\ell_2$ error~\eqref{eq::relative}
characterizes the relative positions
of these two sets of curves in that the ratio between the
$\ell_2$ error corresponding to $\rho_{A} = 0.7$ and that for
$\rho_{A} = 0.3$ along the $y$-axis roughly falls within the interval
$(2, 3)$ for each $n$,  while $\lambda_{\min}(\AR, 0.3)/\lambda_{\min}(\AR,0.7) = 3$.
These results are consistent with the theoretical predictions in Theorems~\ref{thm::lasso}
and~\ref{thm::lassora}.

In Figure~\ref{fig:gd-rescaled-chain-chain1}(c) and (d), we also show
the effect of $\tau_B$ when $\tau_B$ is chosen from
$\{0.3, 0.7, 0.9\}$, while fixing the $\AR$ parameters $\rho_A = 0.3$ and  $\rho_{B^*} =0.3$.
As predicted by our theory, as the measurement error magnitude
$\tau_B$ increases, $\bignoise$ increases, resulting in a larger 
relative $\ell_2$ error for a fixed sample size $n$.

While the effect of $\rho_A$ as shown in \eqref{eq::relative}
through the minimal eigenvalue of $A$ is directly visible
in Figure~\ref{fig:gd-rescaled-chain-chain1}(b), the effect of $\rho_B{^*}$ is more subtle, as it is modulated by $\tau_B$ as
shown in Figure~\ref{fig:gd-rescaled-chain-chain2}(a) and (b).
When $\tau_B$ is fixed, our theory predicts that $\twonorm{B}$ plays a
role in determining the $\ell_p$ error,  $p=1, 2$, through the penalty
parameter $\lambda$ in view of~\eqref{eq::relative}. 
The effect of $\rho_{B^*}$, which goes into the parameter $D_0'=
\twonorm{B}^{1/2} + a_{\max}^{1/2} \asymp 1$, is not changing the
sample requirement or the rate of convergence as significantly as that
of $\rho_A$ when $\tau_B = 0.3$. 
This is shown in the bottom set of curves in
Figure~\ref{fig:gd-rescaled-chain-chain2}(a) and (b).
On the other hand, the trace parameter $\tau_B$ plays a dominating role in
determining the sample size as well as the $\ell_p$ error for $p=1,
2$, especially when the length of the signal $\beta^*$ is large:
$\twonorm{\beta^*} =\Omega(1)$.  In particular, the separation between the two sets of curves in
Figure~\ref{fig:gd-rescaled-chain-chain2}(b),
which correspond to the two choices of
$\rho_B{^*}$, is clearly modulated by $\tau_B$ and becomes 
more visible when $\tau_B = 0.7$.

These findings are also consistent with our theoretical prediction
that in order to guarantee statistical and computational convergence, the sample size
needs to grow according to the following relationship to be specified
in \eqref{eq::nlower}. 
We will show in the proof of Theorem~\ref{thm::corrlinear} that the
condition on sparsity $d$
as stated in \eqref{eq::dupperthm} implies that as $\rho_A$, or
$\tau_B$, or the step size parameter $\zeta$ increases, we need to
increase the sample size in order to guarantee computational convergence for the composite gradient descent
algorithm given the following lower bound: 
\ben
\label{eq::nlower}
n &  = & \Omega\left({d \tau_0 \log m} 
\left\{\frac{\lambda_{\max}(A)}{\lambda_{\min}(A)^2} 
\right\}  \bigvee \left\{\frac{\zeta}{(\bar\alpha_{\ell})^2}\right\}
\right),  \; \; \text{ where } \; \\
\nonumber
\quad 
\tau_0 & \asymp &  \frac{(\rho_{\max}(s_0, A) + \tau_B)^2}{\lambda_{\min}(A)}.
\een
We illustrate the effect of the penalty and step size parameters in Section~\ref{sec::GCplots}.

\subsection{Corrected Lasso via GD versus Conic programming estimator}
\label{sec::GCplots}
In the second experiment, both $A$ and $B$ are generated using the $\AR$ model with
parameters $\rho_A = 0.3$, $\rho_{B^*}=0.3$,  and $\tau_B \in \{ 0.3, 0.7\}$.
We set $m=1024$, $d=10$ and  $\twonorm{\beta^*} =5$.
We then compare the performance of the corrected Lasso estimator~\eqref{eq::origin}
using the composite gradient descent algorithmic with the Conic
programming estimator, which is a convex program designed and implemented by authors of~\cite{BRT14}. 

We consider three choices for the step size parameter for the composite gradient descent algorithm:
 $\zeta_1 = \lambda_{\max}(A) +\half\lambda_{\min}(A)$, $\zeta_2 = \frac{3}{2}\lambda_{\max}(A)$  and
$\zeta_3 = 2 \lambda_{\max}(A)$.
We observe that the gradient descent algorithm consistently produces
an output such that its statistical error in $\ell_2$ norm is lower
than the best solution produced by the Conic programming estimator,
when both methods are subject to optimal tuning after we fix upon the
radius $R = \sqrt{d} \twonorm{\beta^*}$ for~\eqref{eq::origin2} and
($\omega, \lambda$) in~\eqref{eq::Conic} as follows.
As illustrated in our theory, one can think of the parameter $\lambda$
in~\eqref{eq::origin} and parameters $\mu, \omega$ in~\eqref{eq::Conic} satisfying
\bens
\lambda \asymp \mu \twonorm{\beta^*} + \omega,
\eens
where we set $\omega = 0.1 D_0 \sqrt{\frac{\log m}{n}}$, where the
factor $0.1$ is chosen without loss of generality, as we will sweep
over  $f \in (0, 0.8]$ to run through a sufficiently large range of values of the tuning
parameters:
\begin{itemize}
\item 
For the corrected Lasso estimator, we set $\lambda= f D_0' {\hat\tau_B}^{1/2}
  \twonorm{\beta^*}\sqrt{\frac{\log m}{n}} + \omega$;
\item 
For the Conic programming estimator, we use $\mu = f {D'_0}
{\hat\tau_B}^{1/2} \sqrt{\frac{\log m}{n}}$. 
We set $\lambda = 1$ in~\eqref{eq::Conic}, which is independent of
the Lasso penalty.
\end{itemize}
The factor $f$ is chosen to
reflect the fact that in practice, we do not know the exact value of
$\twonorm{\beta^*}$ or $\onenorm{\beta^*}$, $D_0$
or $D_0'$, or other parameters related to the spectrum properties of
$A, B$; moreover, in practice, we wish to understand the whole-path
behavior for both estimators.

In Figures~\ref{fig:conic-gd-chain-chain-tau03} and ~\ref{fig:conic-gd-chain-chain-tau07},
we plot the relative error in $\ell_1$ and $\ell_2$ norm as $n$ increases from $100$ to
$2500$, while sweeping over penalty factor $f \in [0.05, 0.8]$
for $\tau_B = 0.3$ and $\tau_B = 0.7$ respectively.
For both estimators,  the relative $\ell_2$ and $\ell_1$ error versus the
scaled sample size $n/(d\log m)$ are also plotted.
In these figures, green dashed lines are for the corrected Lasso estimator
via gradient descent algorithm, and blue dotted lines are for the Conic programming estimator.
These plots allow us to observe the behaviors of the two
estimators across a set of tuning parameters. 
Overall, we see that both methods are able to achieve low relative
error $\ell_p, p=1, 2$ norm when $\lambda$ and $\mu$
are chosen from a suitable range.

For the corrected Lasso estimator, we display results where the step
size parameter $\zeta$ is set to $\zeta_2 = \frac{3}{2}\lambda_{\max}(A)$ 
and $\zeta_3 = 2 \lambda_{\max}(A)$ in the left and right column respectively.
We mention in passing that the algorithm starts to converge even
when we set $\zeta = \zeta_1 =\lambda_{\max}(A) +
\half\lambda_{\min}(A)$ 
as we observe quantitively similar behavior as
the displayed cases. For both estimators, we observe that we need a larger sample size $n$ in case
$\tau_B = 0.7$ in order to control the error at the same level as in
case $\tau_B = 0.3$. 
\begin{figure}
\begin{center}
\hskip-20pt
\begin{tabular}{cc}
\begin{tabular}{c}\includegraphics[width=0.50\textwidth]{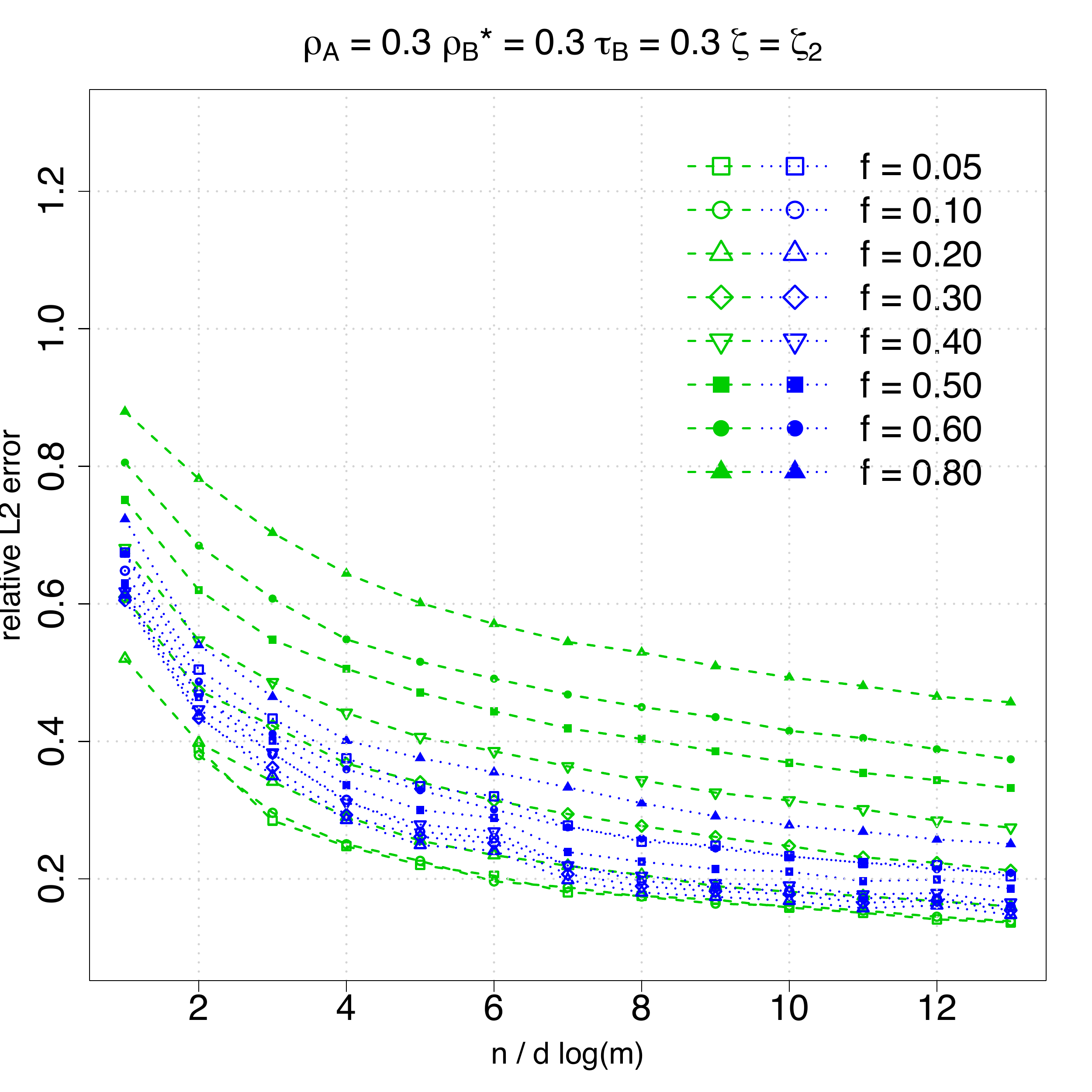}
\end{tabular}&
\hskip-15pt
\begin{tabular}{c}\includegraphics[width=0.50\textwidth]{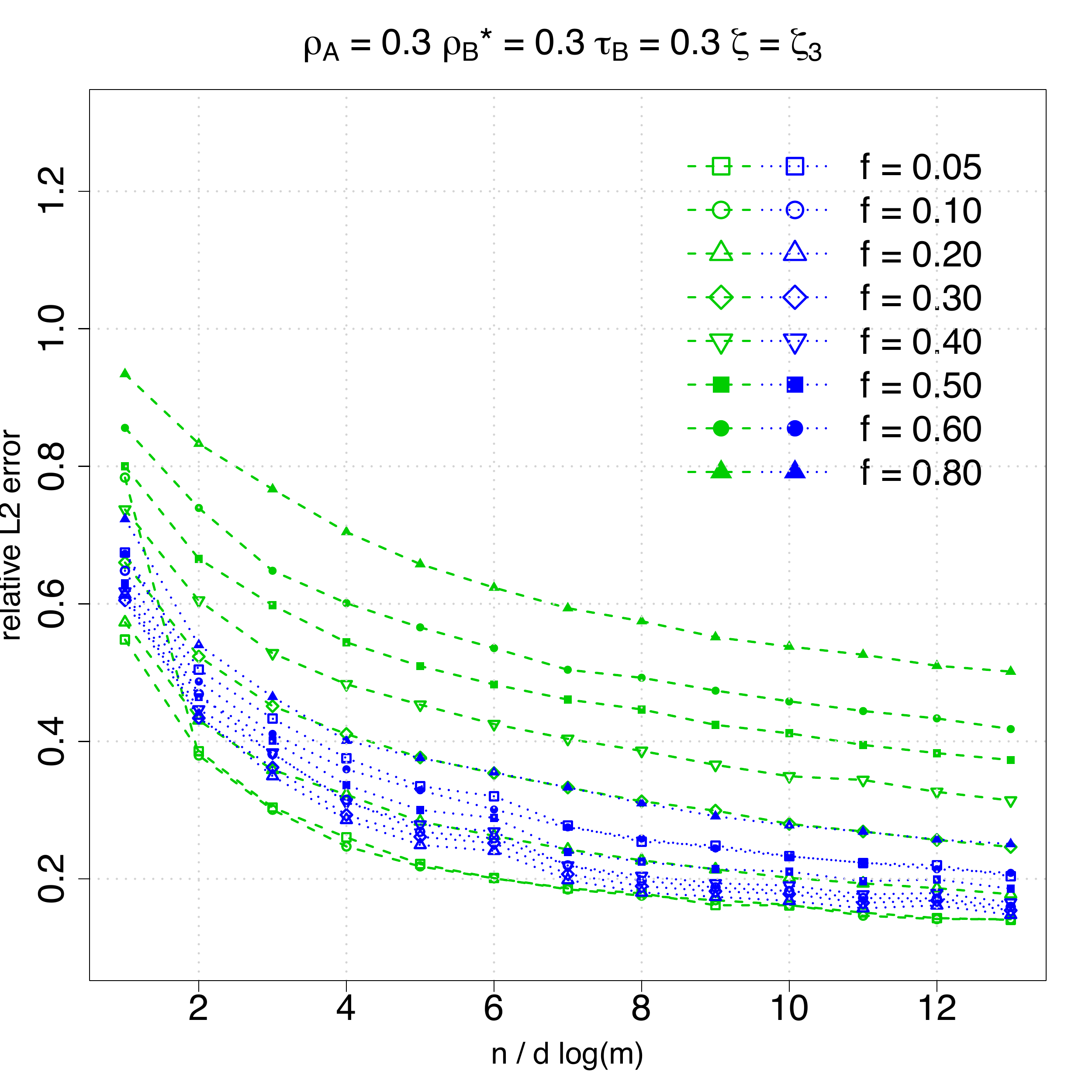}
\end{tabular} \\
\begin{tabular}{c}\includegraphics[width=0.50\textwidth]{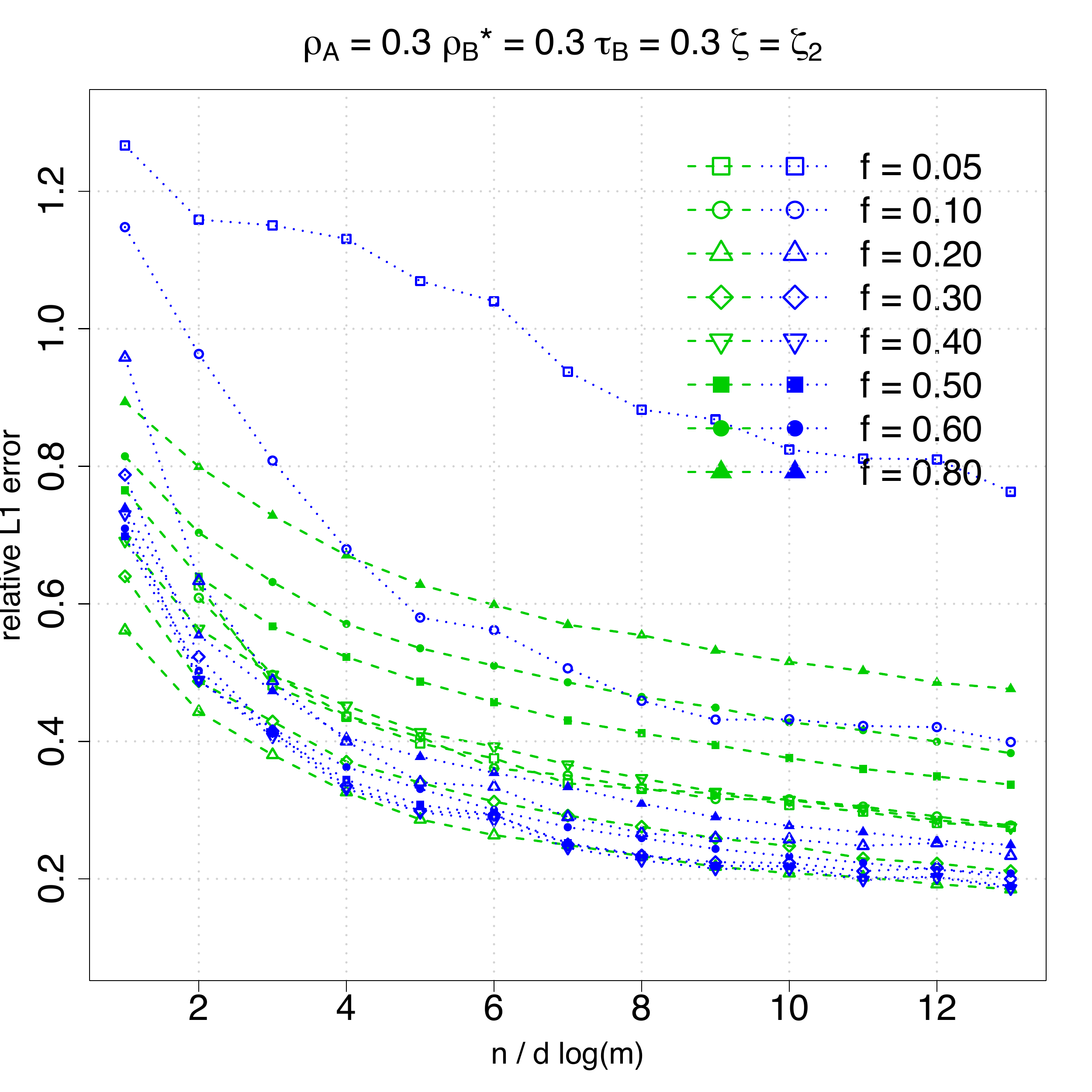}
\end{tabular}&
\hskip-15pt
\begin{tabular}{c}\includegraphics[width=0.50\textwidth]{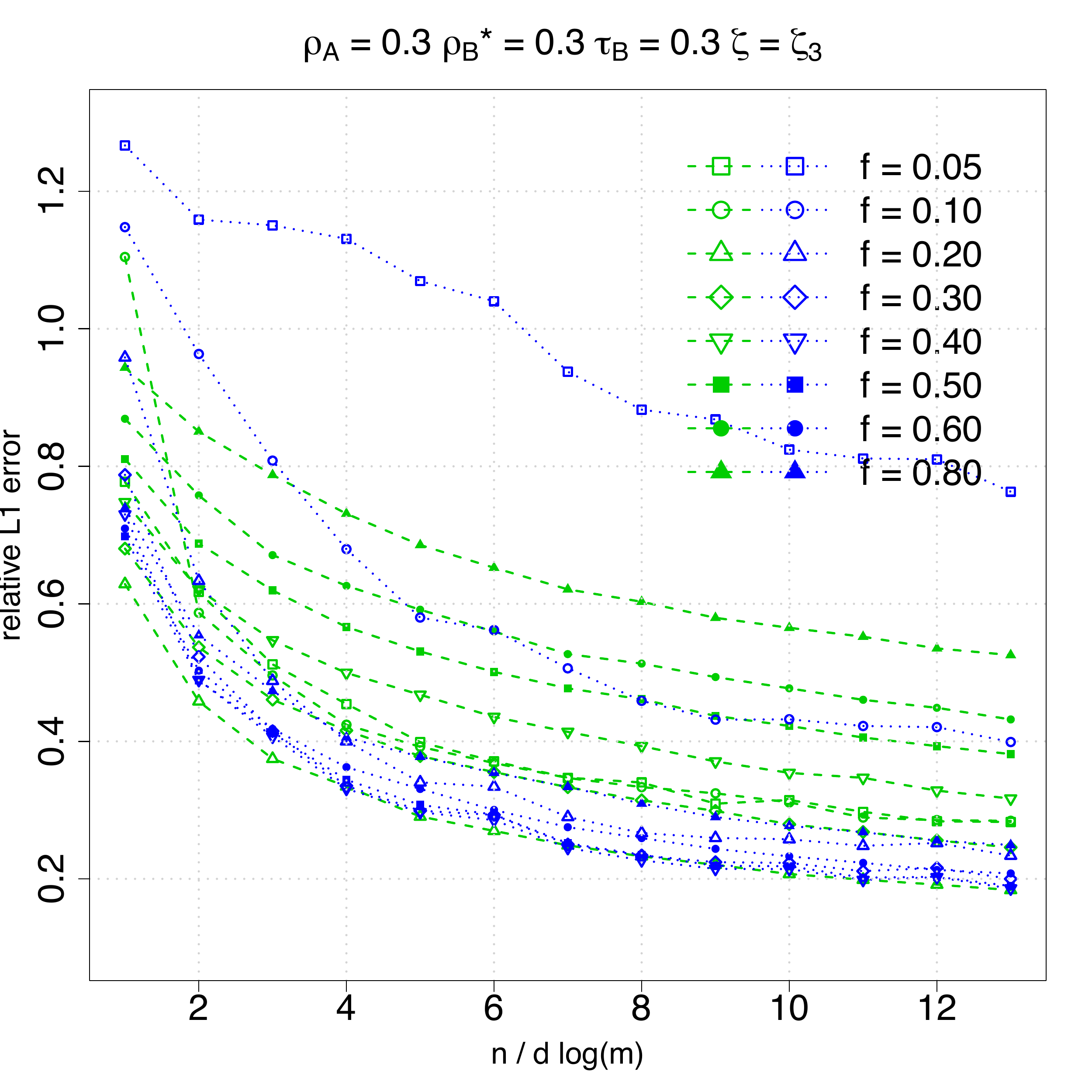} 
\end{tabular} \\
\end{tabular}
\caption{Plots of the relative $\ell_1$ and $\ell_2$ error
$\norm{\hat\beta-\beta^*}/\norm{\beta^*}$ for the 
Conic programming estimator and the corrected Lasso estimator obtained via running the
composite gradient descent algorithm on (approximately) recovering
$\beta^*$. Set parameters $d = 10$ and $m = 1024$ while varying $n$.
Generate $A$ and $B$ using the $\AR$ model with parameters $\rho_A =0.3$, $\rho_{B^*}=0.3$ and $\tau_B = 0.3$.
Set $\zeta \in \{\zeta_1, \zeta_2, \zeta_3\}$. 
We compare the performance of the corrected Lasso and the Conic
programming estimators over choices of $\lambda$ and  $\mu$ while
sweeping through $f \in (0, 0.8]$.
In the top row, we plot the relative $\ell_2$ error for the Conic
programming estimator (blue dotted lines) and 
the corrected Lasso (green dashed lines) via the composite gradient descent algorithm 
with step size parameter set to be $\zeta_2 = \frac{3}{2}\lambda_{\max}(A)$ and $\zeta_3 = 2
\lambda_{\max}(A)$; in the bottom row, we plot the relative $\ell_1$ error
under the same settings.
 We note that the composite gradient descent
algorithm starts to converge even when we
set  the step size parameter to be $\zeta_1 = \lambda_{\max}(A) + \half\lambda_{\min}(A)$.}
\label{fig:conic-gd-chain-chain-tau03}
\end{center}
\end{figure}

\begin{figure}
\begin{center}
\hskip-20pt
\begin{tabular}{cc}
\begin{tabular}{c}\includegraphics[width=0.50\textwidth]{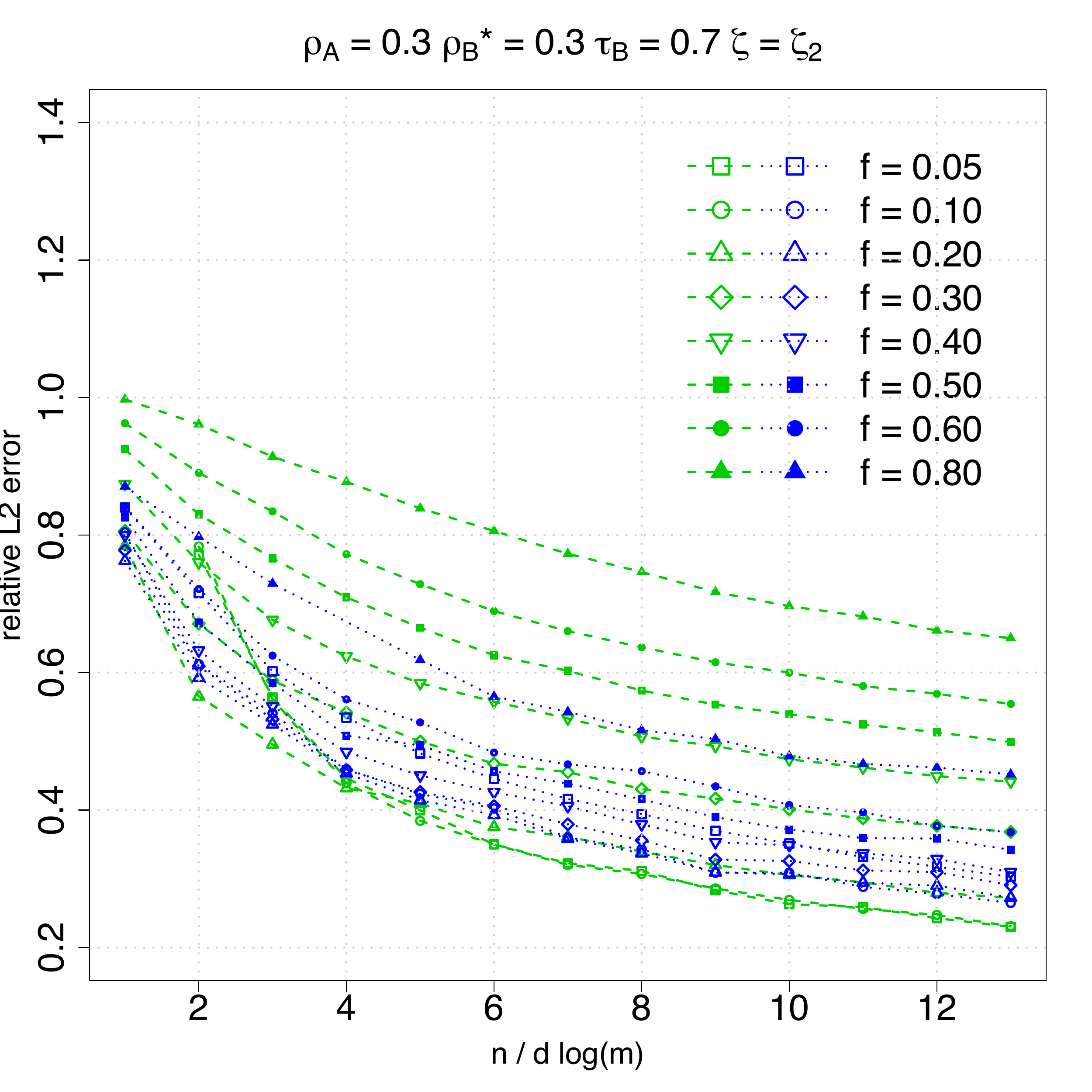}
\end{tabular}&
\hskip-15pt
\begin{tabular}{c}\includegraphics[width=0.50\textwidth]{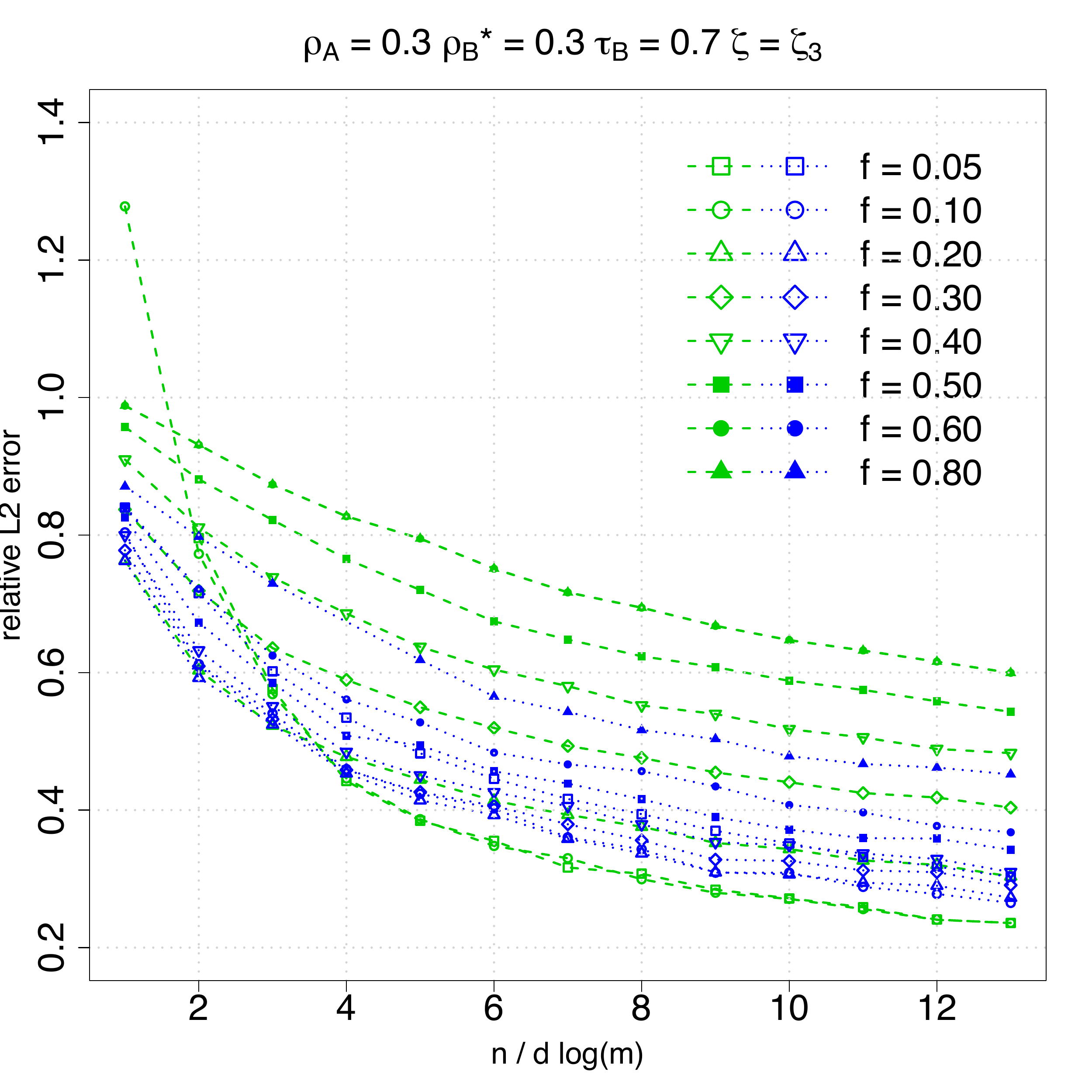}
\end{tabular} \\
\begin{tabular}{c}\includegraphics[width=0.50\textwidth]{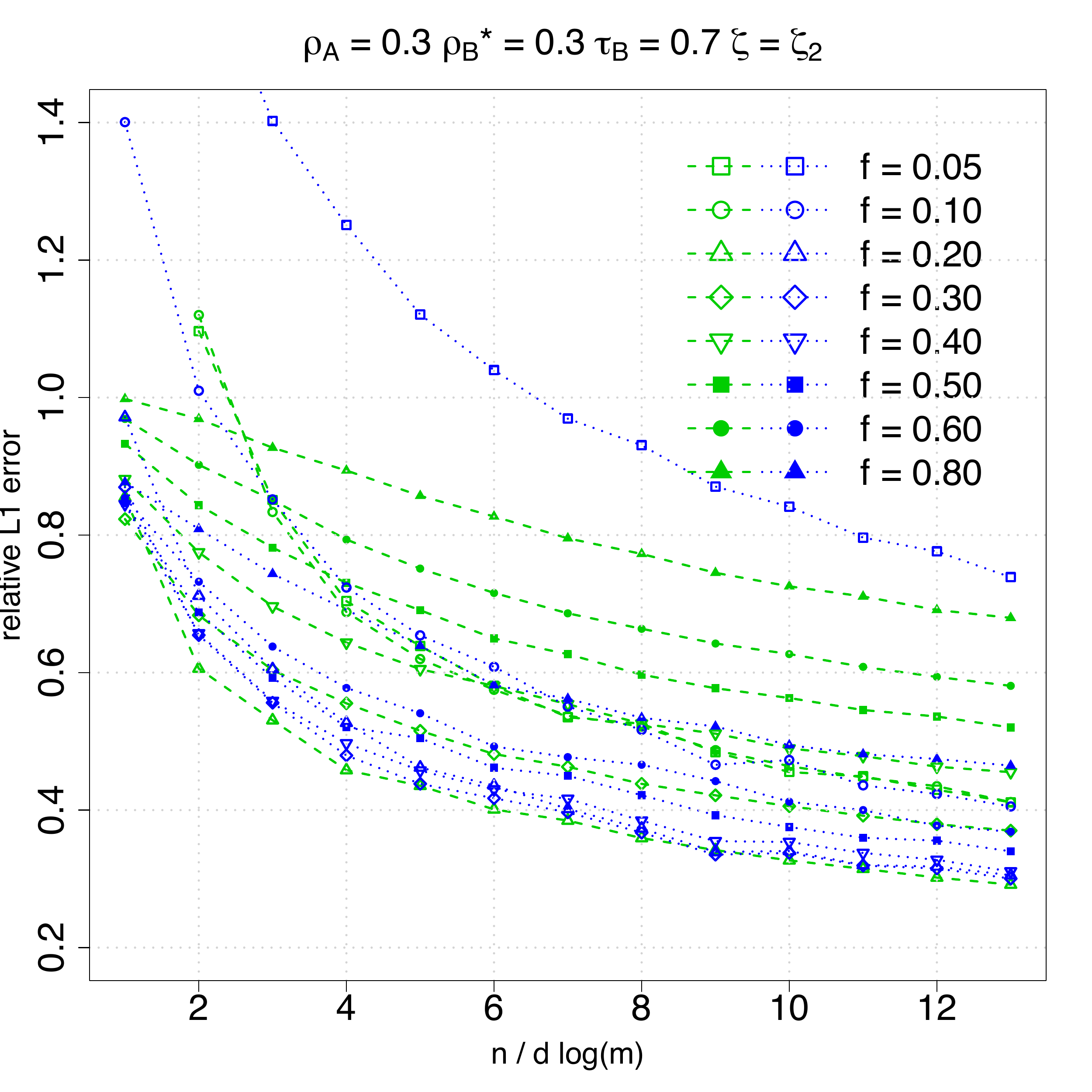}
\end{tabular}&
\hskip-15pt
\begin{tabular}{c}\includegraphics[width=0.50\textwidth]{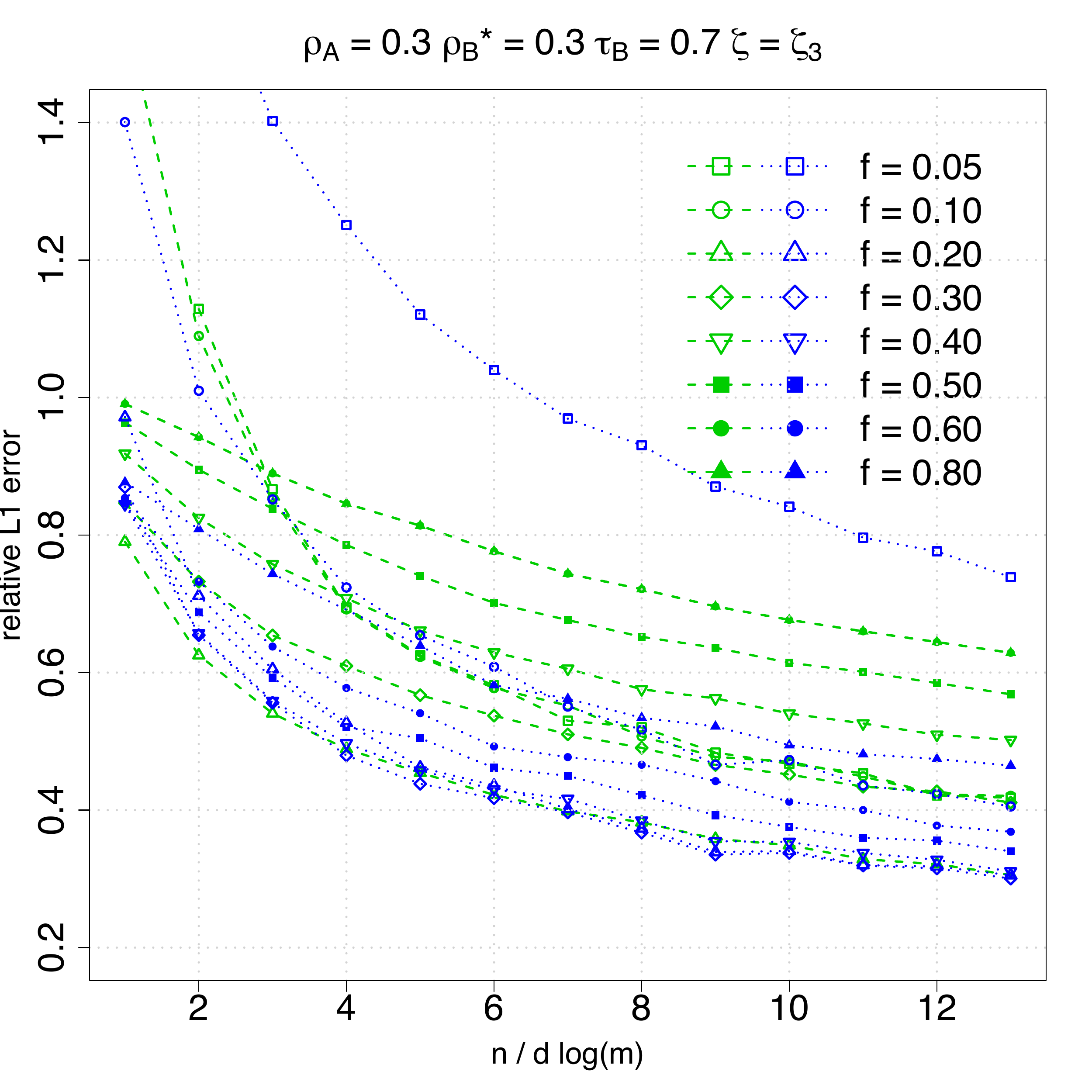} 
\end{tabular} \\
\end{tabular}
\caption{Plots of the relative $\ell_1$ and $\ell_2$ error
$\norm{\hat\beta-\beta^*}/\norm{\beta^*}$
after running the Conic programming estimator and composite gradient descent algorithm on recovering
$\beta^*$ using the corrected Lasso objective function with sparsity
parameter $d = 10$ and $m = 1024$ while varying $n$.
Both $A$ and $B$ are generated using the $\AR$ model with
parameters $\rho_A =0.3$, $\rho_{B^*}=0.3$ and $\tau_B = 0.7$.
We compare the performance of the corrected Lasso (green dashed lines) 
and the Conic programming estimators (blue dotted lines) over choices of $\lambda$ and  $\mu$ while
sweeping through $f \in (0, 0.8]$. For the composite gradient descent
algorithm, we choose $\zeta$ from $\{\zeta_1, \zeta_2, \zeta_3\}$. 
In the top row, we plot the $\ell_2$ error for the Conic and the
corrected Lasso with $\zeta_2 = \frac{3}{2}\lambda_{\max}(A)$ and
$\zeta_3 = 2 \lambda_{\max}(A)$, 
while in the bottom row, we plot the $\ell_1$ error corresponding to the two step size parameters.
} 
\label{fig:conic-gd-chain-chain-tau07}
\end{center}
\end{figure}

\begin{figure}
\begin{center}
\begin{tabular}{cc}
\begin{tabular}{c}\includegraphics[width=0.48\textwidth]{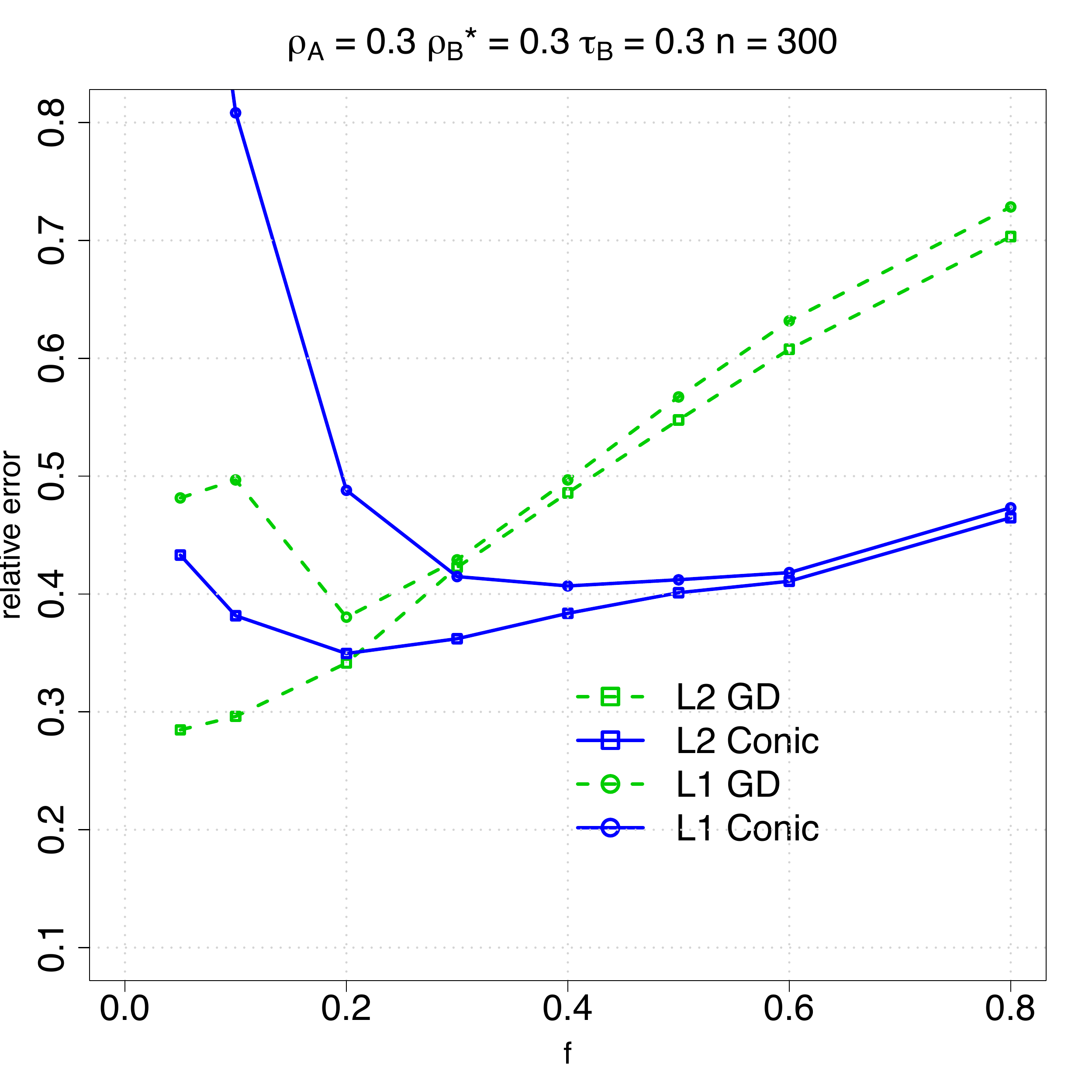}
\end{tabular}&
\begin{tabular}{c}\includegraphics[width=0.48\textwidth]{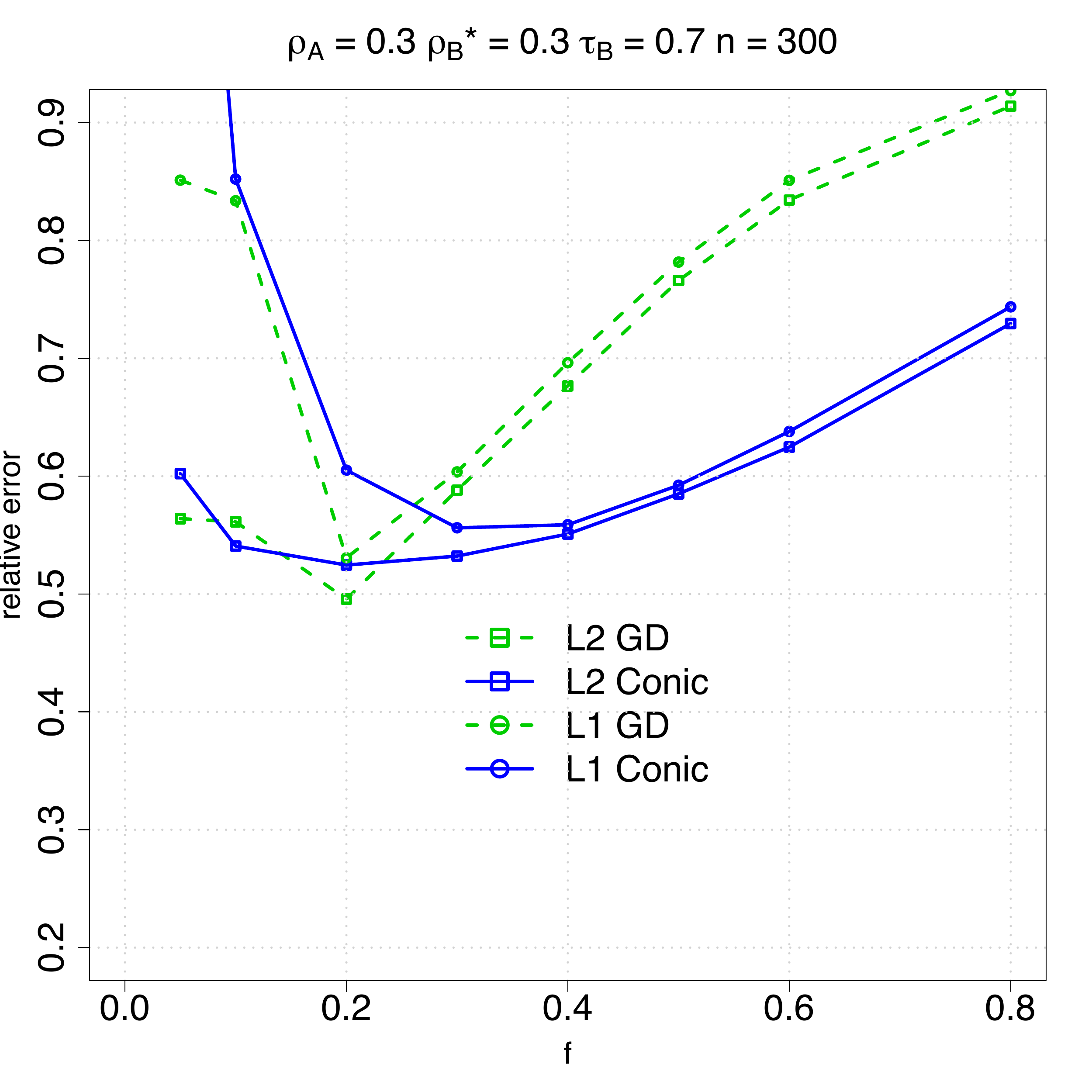}
\end{tabular} \\
\begin{tabular}{c}\includegraphics[width=0.48\textwidth]{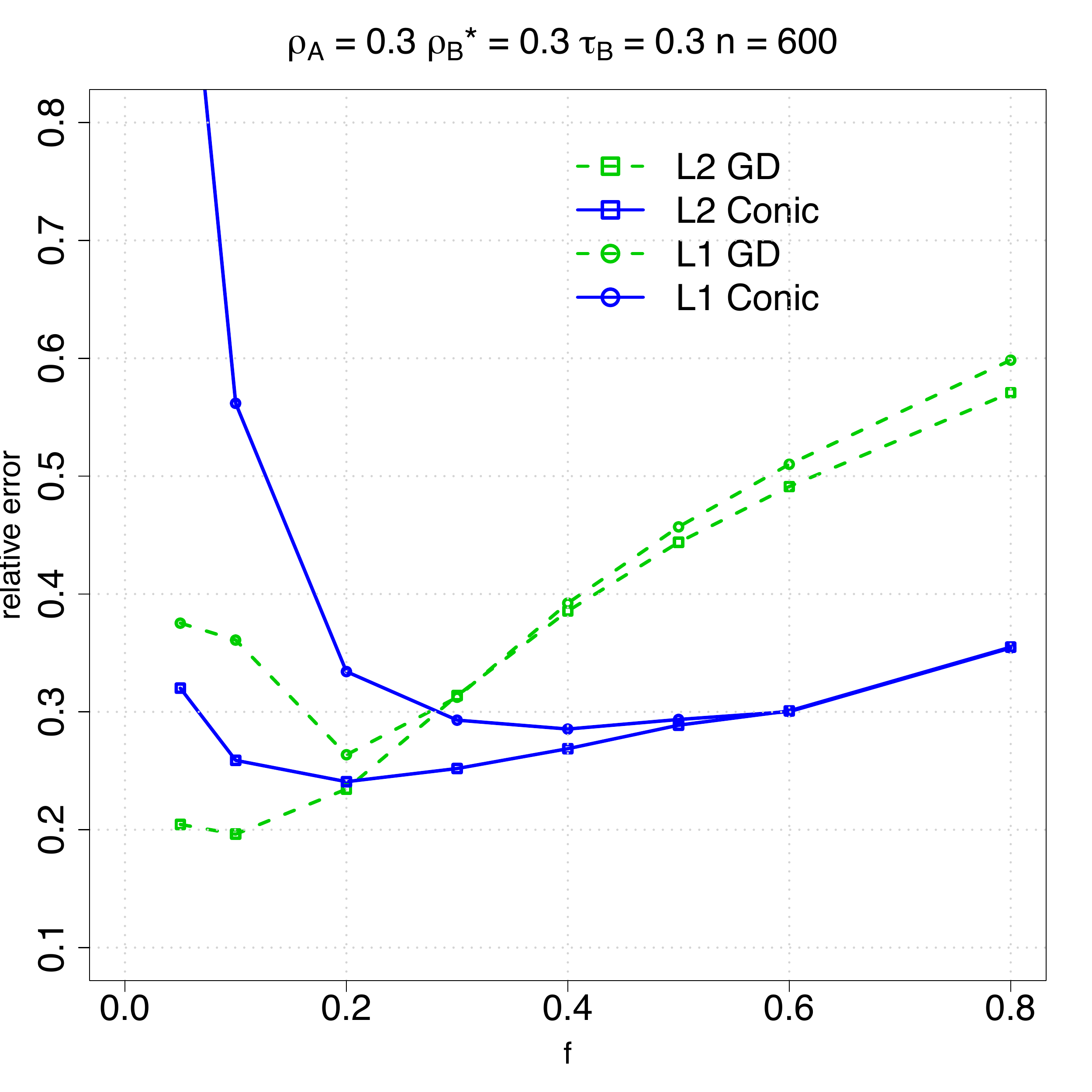} 
\end{tabular}&
\begin{tabular}{c}\includegraphics[width=0.48\textwidth]{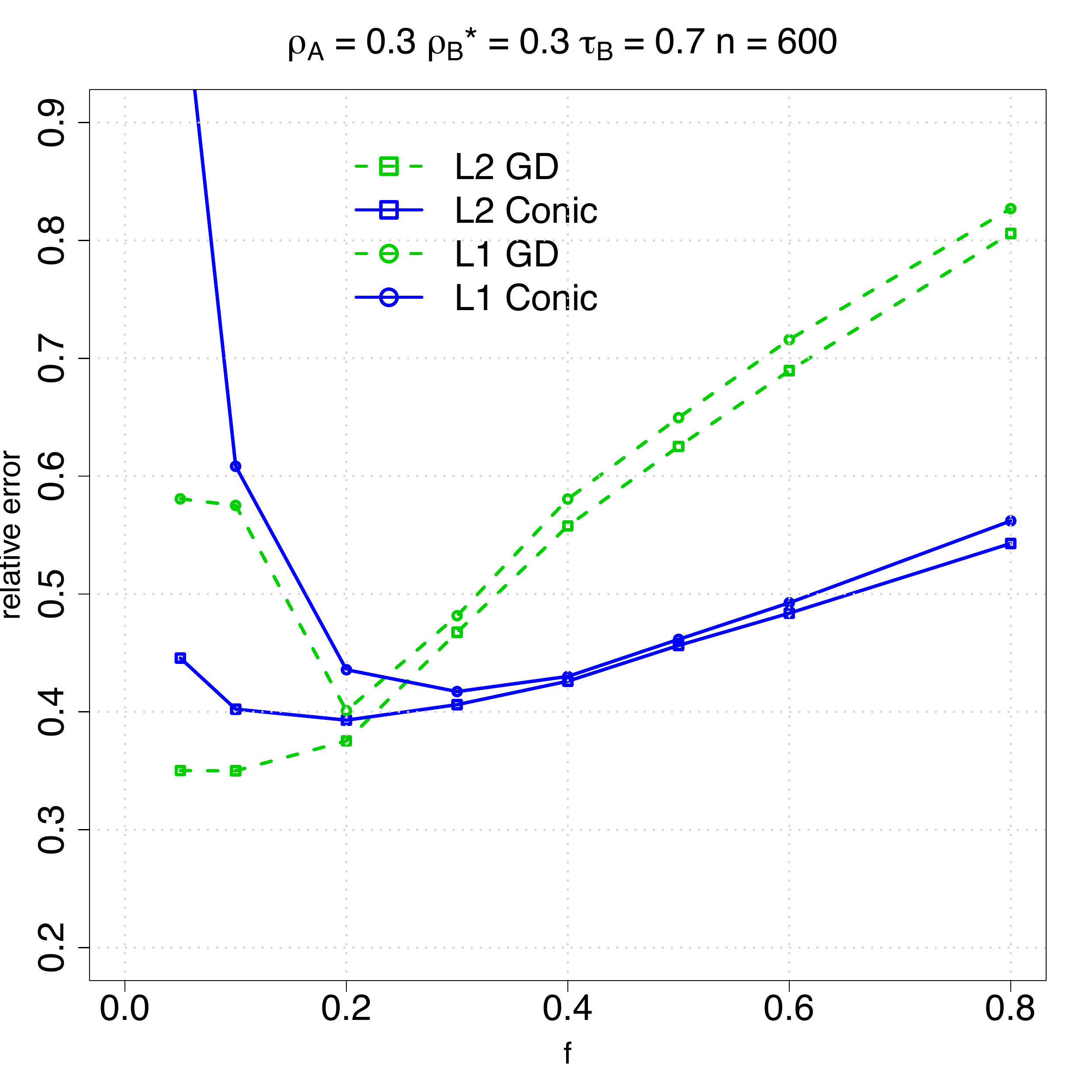} 
\end{tabular} \\
\begin{tabular}{c}\includegraphics[width=0.48\textwidth]{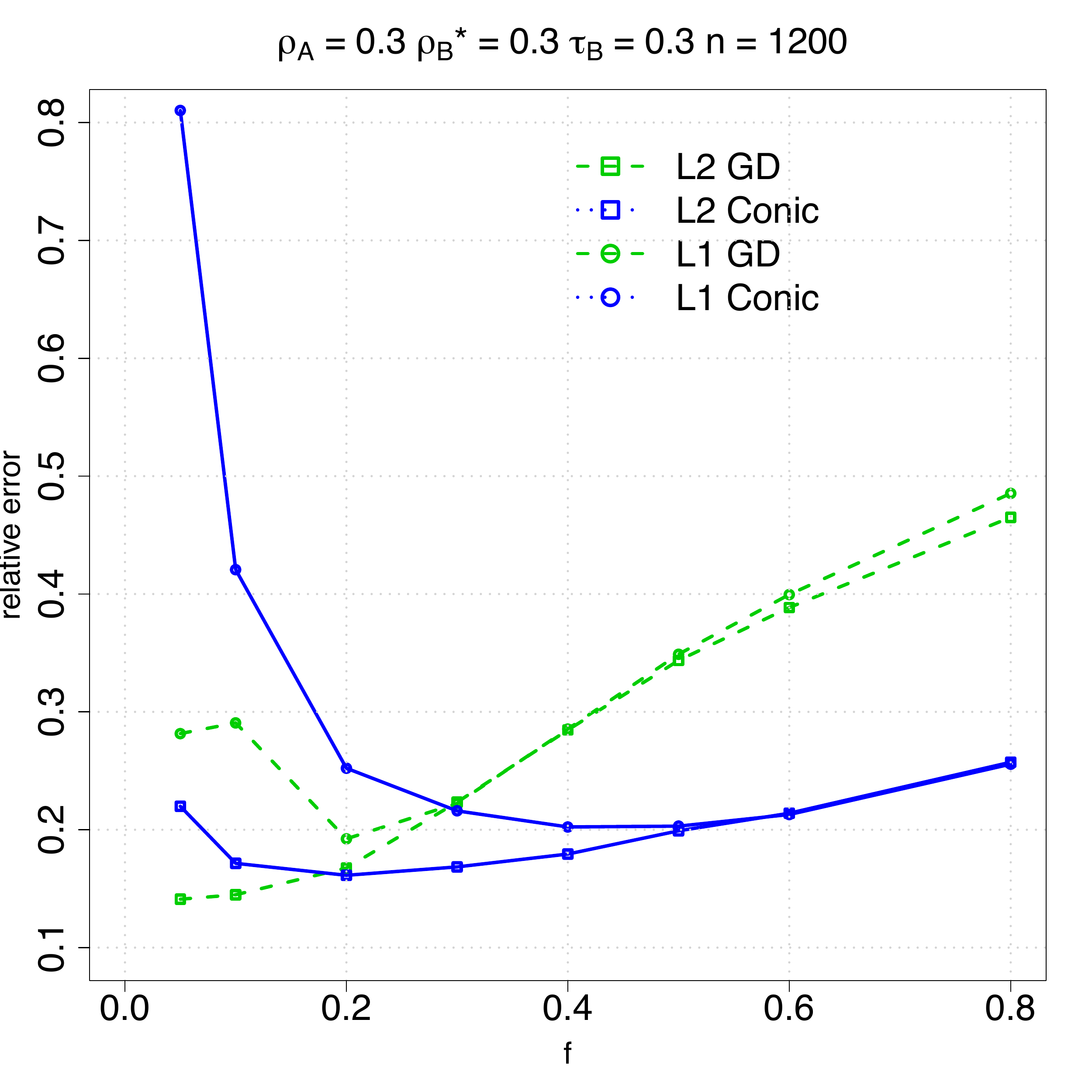} 
\end{tabular} & 
\begin{tabular}{c}\includegraphics[width=0.48\textwidth]{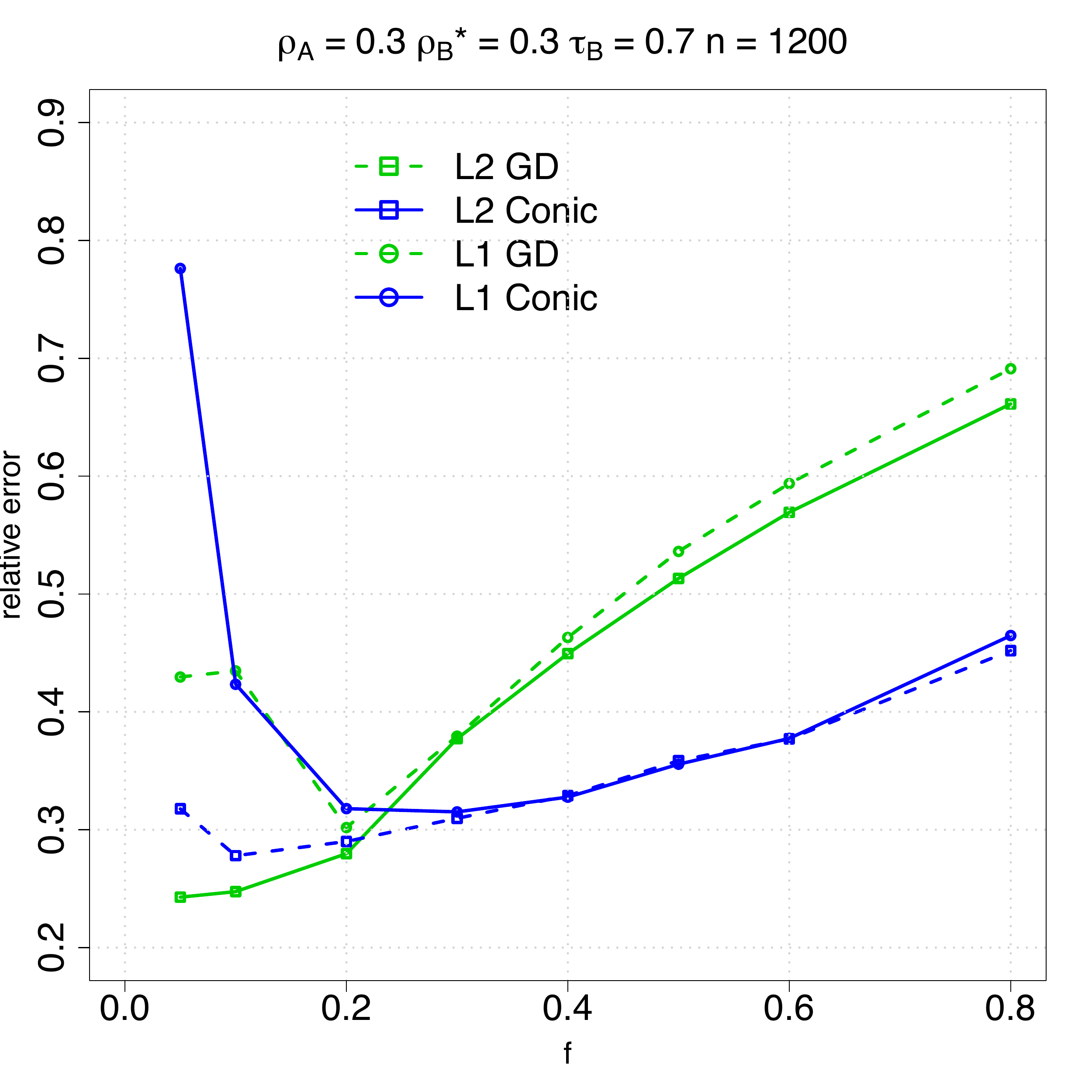} 
\end{tabular}\\
\end{tabular}
\caption{Plot of the relative error in $\ell_2$ and $\ell_1$ norm versus the penalty factor $f
  \in (0, 0.8]$ as we change the sample size $n$. Set $m = 1024$ and $d = 10$.
Both $A$ and $B$ are generated using the $\AR$ model with
parameters $\rho_A=0.3$ and $\rho_{B^*}=0.3$.  
We plot the relative error in $\ell_1$ and $\ell_2$ norm
versus the penalty parameter factor $f \in (0, 0.8]$ for $n = 300,
600, 1200$ when $\zeta = \frac{3}{2}\lambda_{\max}(A)$.
In the left  column, $\tau_B = 0.3$. In the right column, we set $\tau_B = 0.7$.
}
\label{fig:conic-gd-chain-chain-3}
\end{center}
\end{figure}
In Figure~\ref{fig:conic-gd-chain-chain-3}, we plot the $\ell_2$ and $\ell_1$ 
error versus the penalty factor $f \in [0.05, 0.8]$ for sample size $n
\in \{300, 600, 1200\}$. We plot results for $\tau_B = 0.3$ and
$\tau_B = 0.7$ in the left and right column respectively.
For these plots, we focus on cases when $n > d \kappa(A) \log m$, 
by choosing $n \in \{300, 600, 1200\}$; Otherwise, the gradient descent algorithm does not yet
reach the sample requirement \eqref{eq::nlower} that guarantees computational
convergence. In Figure~\ref{fig:conic-gd-chain-chain-3}, we observe that the Conic programming estimator is relatively
stable over the choices of $\mu$ once $f \ge 0.2$. 
The composite gradient algorithm favors smaller 
penalties such as $f \in [0.05, 0.2]$, leading to smaller relative error in 
the $\ell_1$ and $\ell_2$ norm, consistent with our theoretical 
predictions. These results also confirm our theoretical prediction that the Lasso
and Conic programming penalty parameters $\lambda$ and $\mu$ need to be
adaptively chosen based on the noise level $\tau_B$, because a larger than
necessary amount of penalty will cause larger relative error in both 
$\ell_1$ and $\ell_2$ norm.

\subsection{Sensitivity to tuning parameters}
In the third experiment, we change the $\ell_1$-ball radius $R \in
\{R^*, 5 R^*, 9 R^*\}$ in~\eqref{eq::origin2}, where $R^* =
\shnorm{\beta^*} \sqrt{d}$, while running through different penalties
for the composite gradient descent algorithm.
In the left column in Figure~\ref{fig:changeR},
$A$ and $B$ are generated using the $\AR$ model with
$\rho_A=0.3$, $\rho_{B^*}=0.3$ and $\tau_B = 0.7$. 
In the right column, we set $\tau_B = 0.3$, while keeping other
parameters invariant. 

As predicted by our theory, a larger radius
demands correspondingly larger penalty to ensure consistent estimation
using the composite gradient descent algorithm; this in turn will
increase the relative error when  $R$ is too large, for
example, when $R = \tilde{\Omega}(\sqrt{\frac{n}{\log  m}})$, 
where the $\tilde{\Omega}(\cdot)$ notation hides parameters involving
$\tau_B$ and $\kappa(A)$.
This is observed in Figure~\ref{fig:changeR}.
When $n$ is sufficiently large relative to $\tau_B$ and $\kappa(A)$, 
the optimal $\ell_1$ and $\ell_2$ error become less sensitive with 
regard to the choice of $R$, so long as $R = \tilde{O}(\sqrt{\frac{n}{\log m}})$,
where $\tilde{O}(\cdot)$ hides parameters involving $\tau_B$ and
$\kappa(A)$,  
as shown in Figure~\ref{fig:changeR}.

\begin{figure}
\begin{center}
\begin{tabular}{cc}
\begin{tabular}{c}\includegraphics[width=0.48\textwidth]{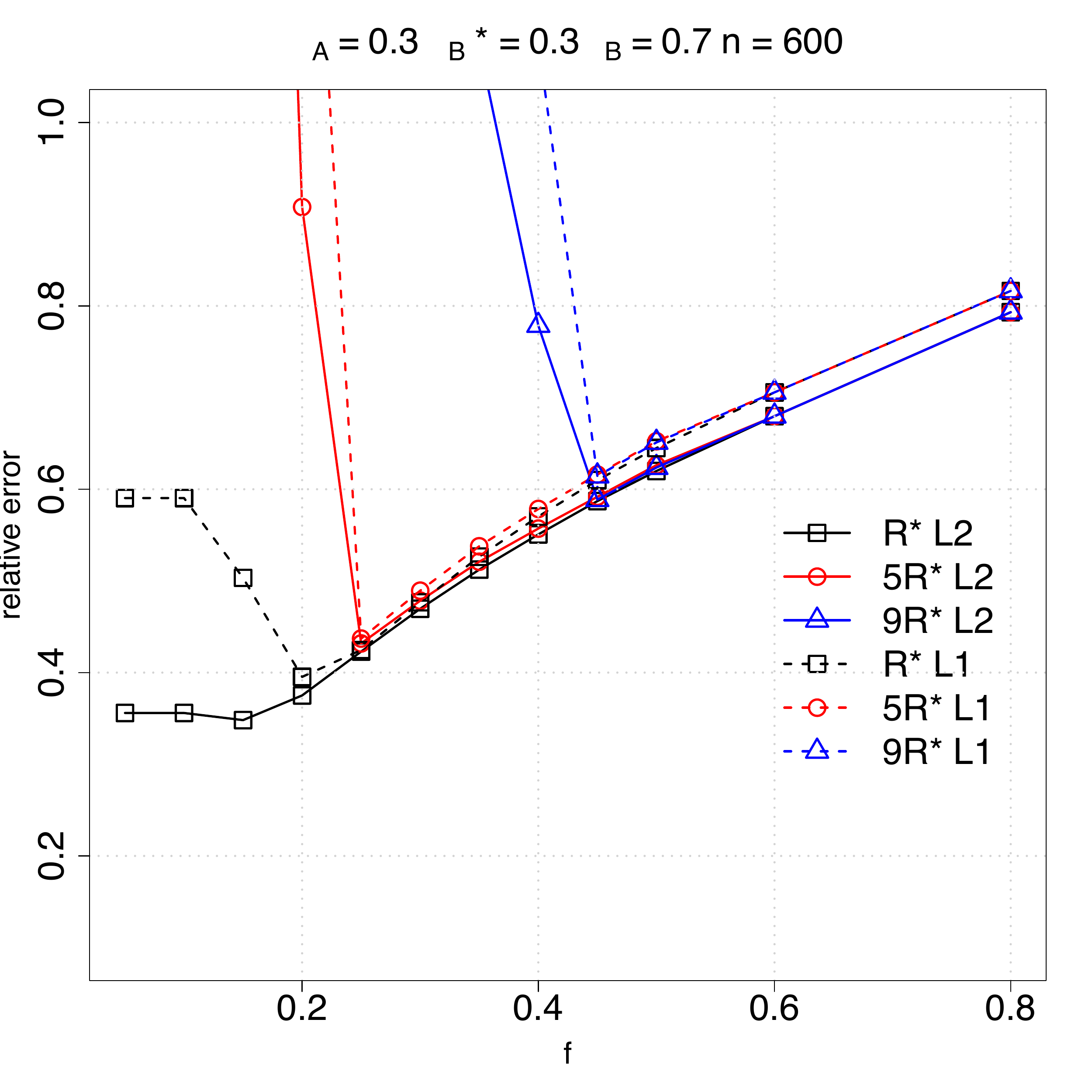}
\end{tabular}&
\begin{tabular}{c}\includegraphics[width=0.48\textwidth]{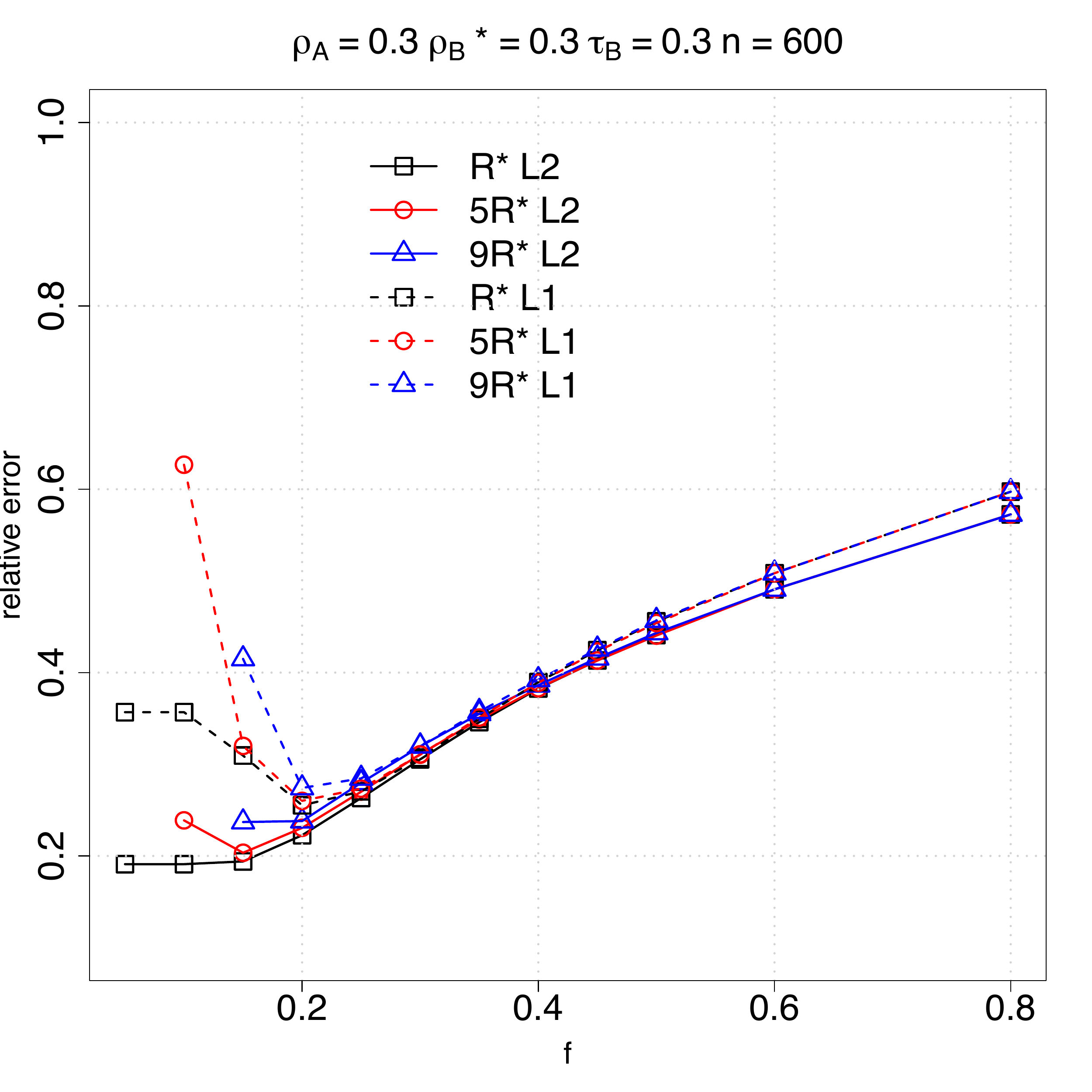}
\end{tabular} \\
\begin{tabular}{c}\includegraphics[width=0.48\textwidth]{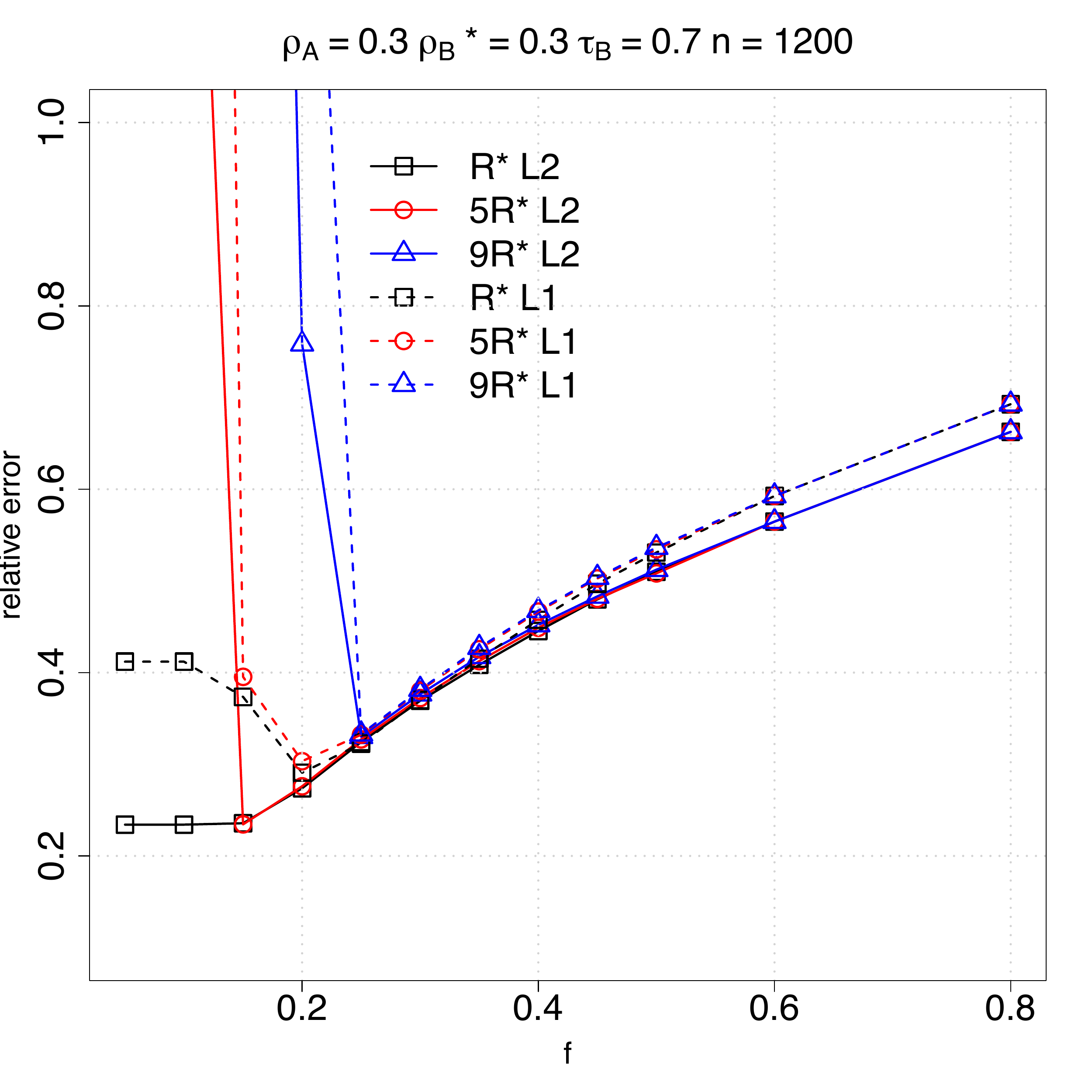}
\end{tabular}&
\begin{tabular}{c}\includegraphics[width=0.48\textwidth]{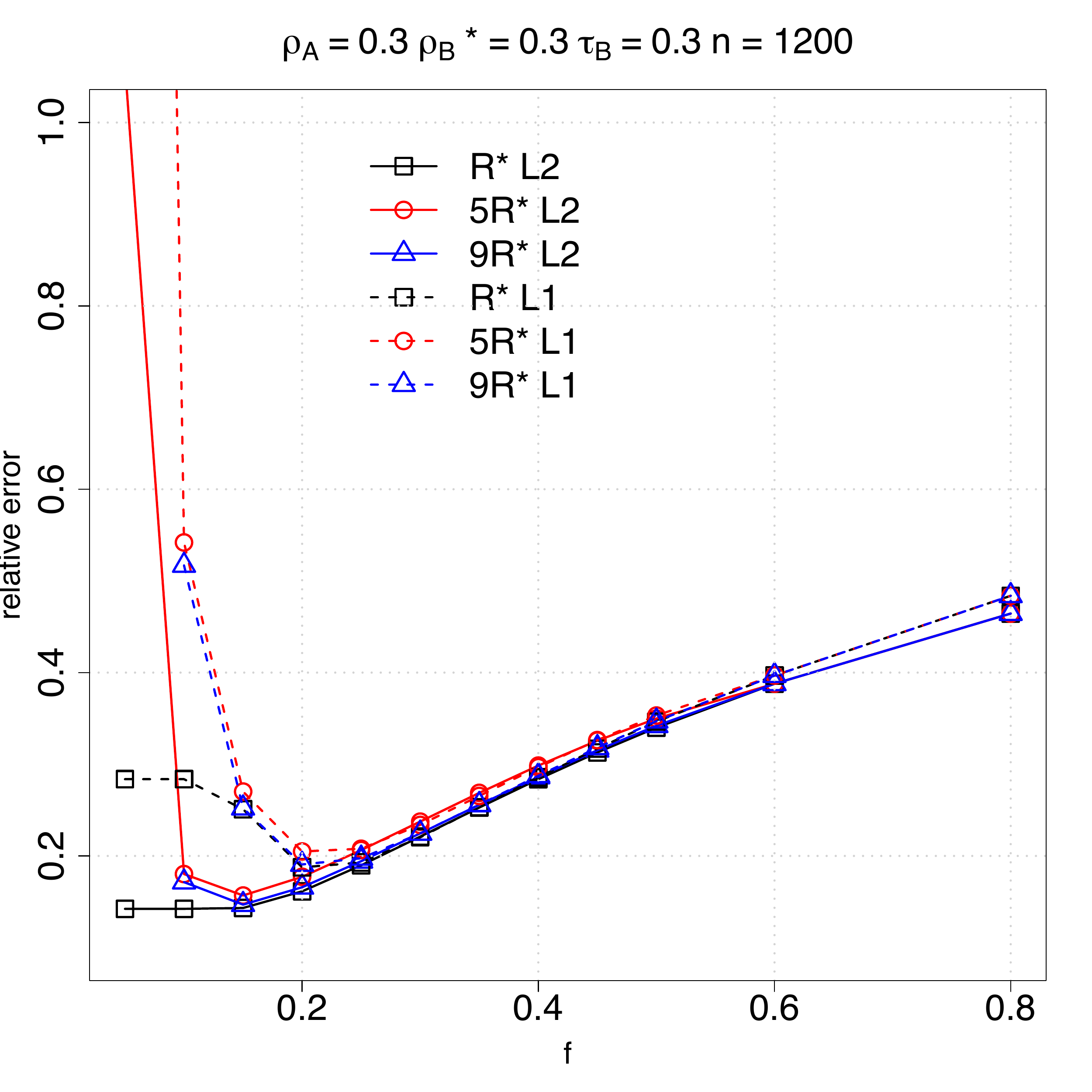}
\end{tabular} \\
\begin{tabular}{c}\includegraphics[width=0.48\textwidth]{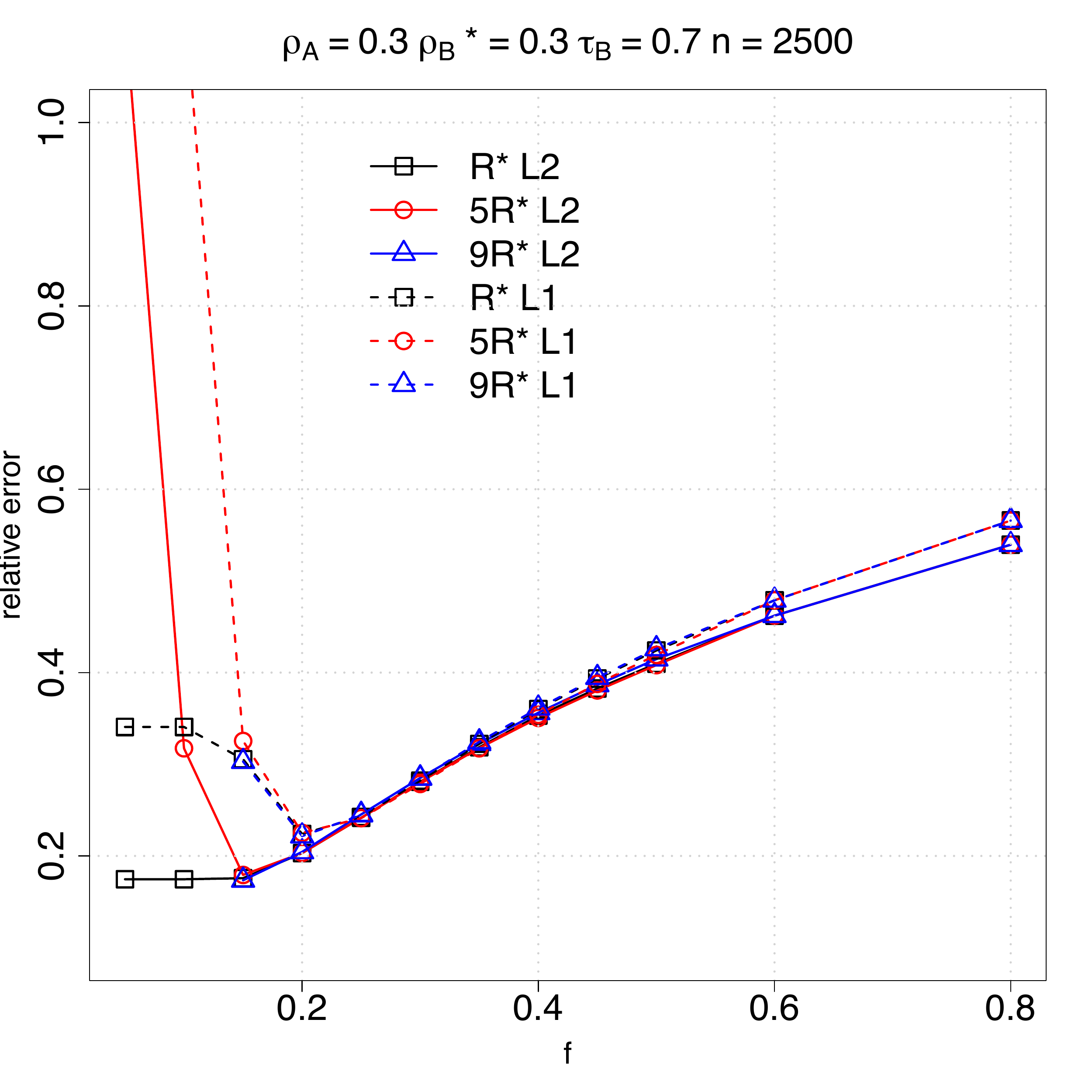}
\end{tabular} &
\begin{tabular}{c}\includegraphics[width=0.48\textwidth]{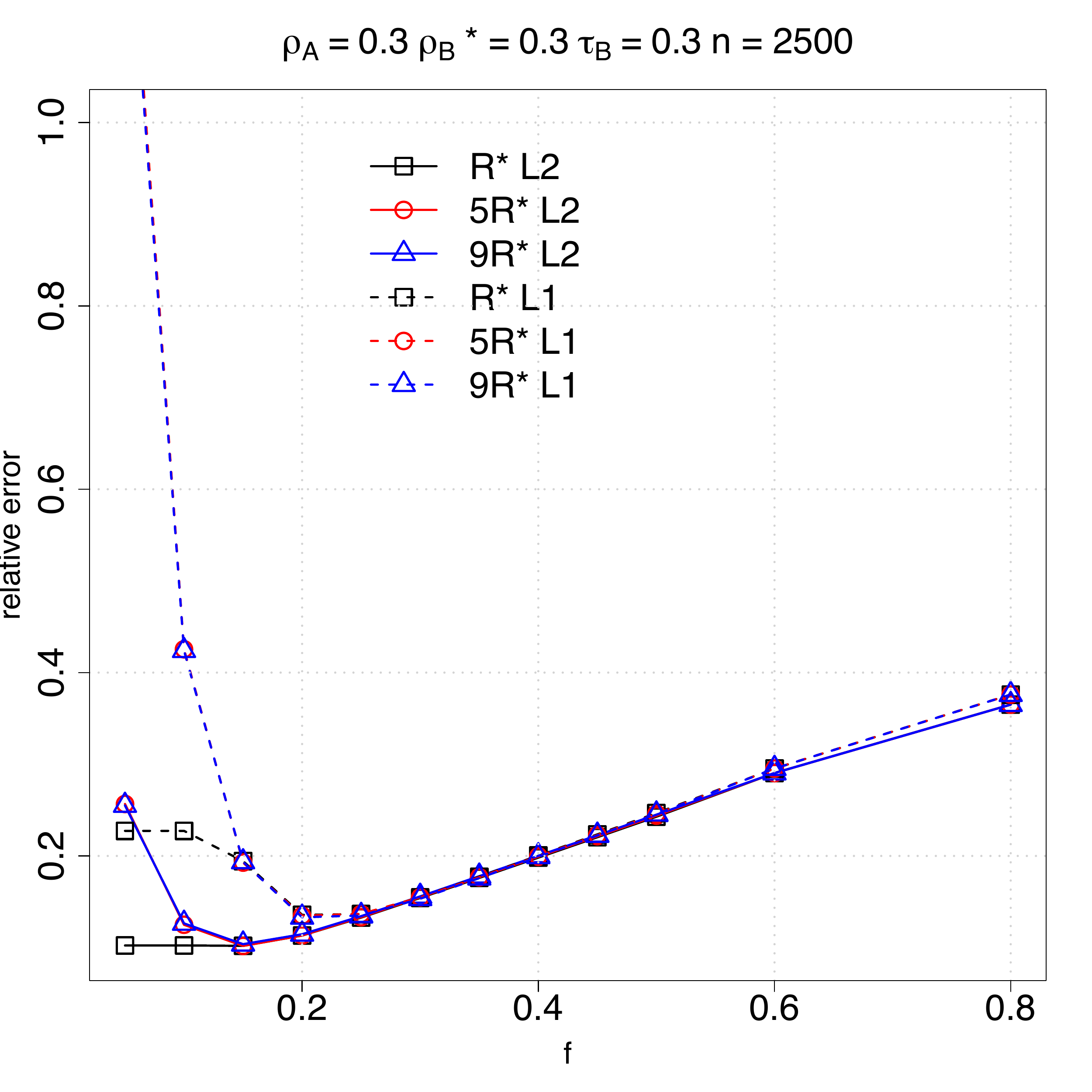} 
\end{tabular}\\
\end{tabular}
\caption{
Plot of the relative error in $\ell_2$ and $\ell_1$ norm versus the penalty factor $f
  \in (0, 0.8]$ as we change the radius $R$. Set $m = 1024$, $d = 10$
  and  $n \in \{ 600, 1200, 2500\}$.
We change the $\ell_1$-ball radius $R \in \{R^*, 5 R^*, 9 R^*\}$,
where $R^* = \shnorm{\beta^*} \sqrt{d}$,  
while running through different penalties for the composite gradient descent algorithm.
In the left column, $A$ and $B$ are generated using the $\AR$ model with
$\rho_A=0.3$, $\rho_{B^*}=0.3$ and $\tau_B = 0.7$. 
In the right column, we set $\tau_B = 0.3$, while keeping other parameters invariant. 
}
\label{fig:changeR}
\end{center}
\end{figure}

\begin{figure}
\begin{center}
\begin{tabular}{cc}
\begin{tabular}{c}\includegraphics[width=0.48\textwidth]{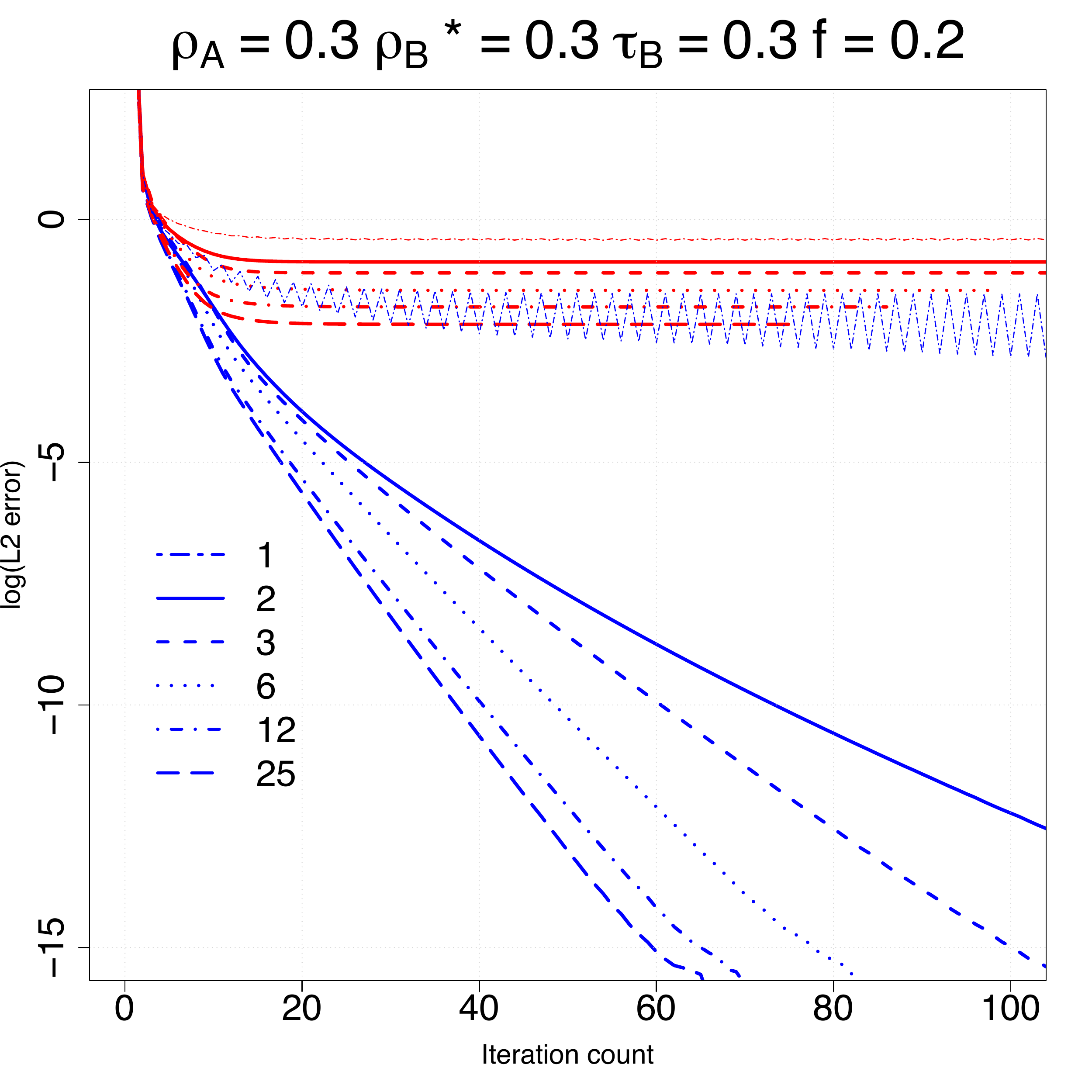}
\end{tabular}&
\begin{tabular}{c}\includegraphics[width=0.48\textwidth]{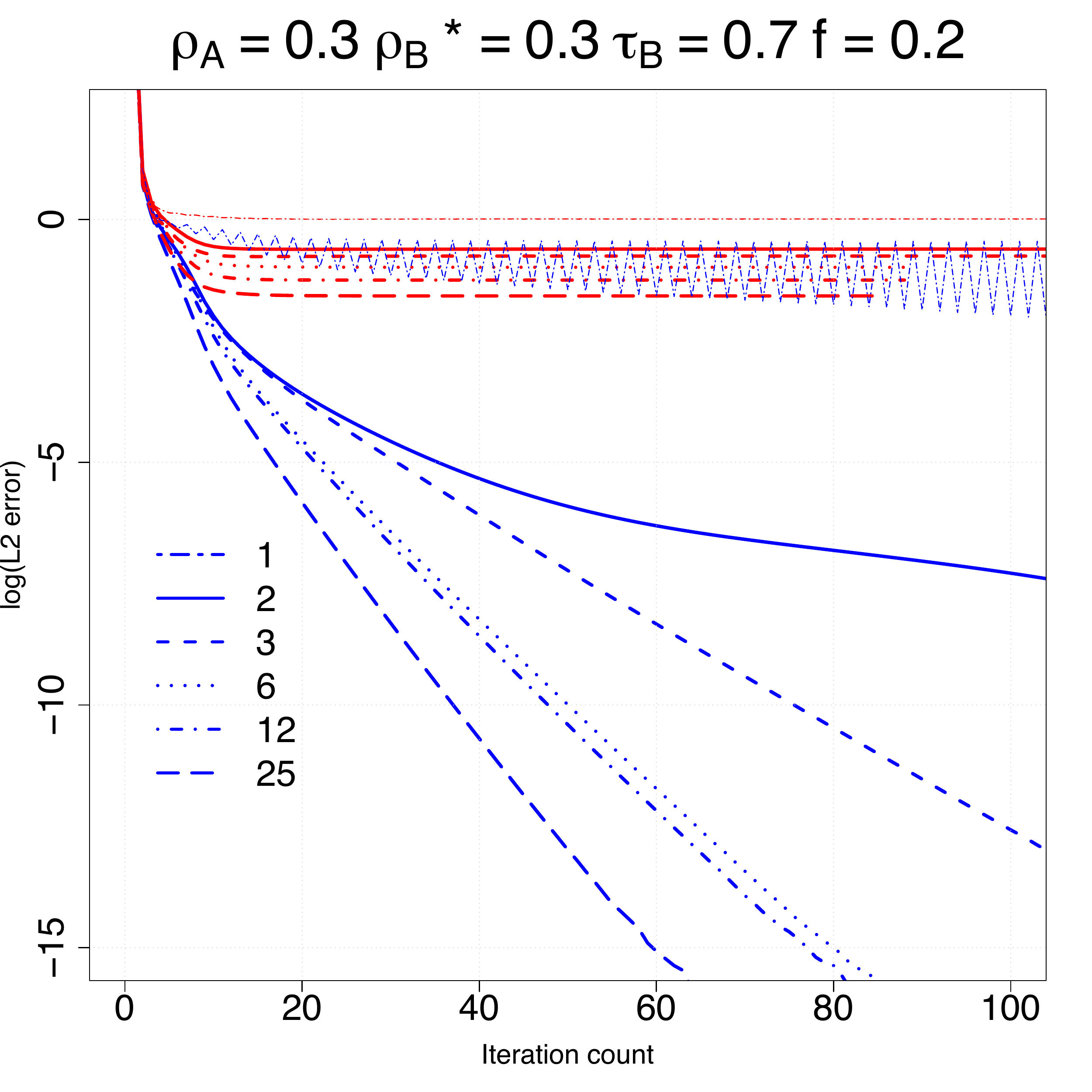}
\end{tabular}\\
(a) &(b)\\
\end{tabular}
\caption{Plots of the statistical error $\log (\shnorm{\beta^t -
    \beta^*})$, and the optimization error $\log (\shnorm{\beta^t -
    \hat\beta})$ versus iteration number $t$, generated by running the
  composite gradient descent algorithm on the corrected Lasso objective function.
Each curve represents an average over 10 random trials, each with a different
initialization point of $\beta^0$.
In Plots (a) and (b), $B$ is generated using the $\AR$ model with
$\rho_{B^*}=0.3$ and $A$ is generated using the $\AR$ model with
$\rho_A=0.3$. We set $\tau_B = 0.3, 0.7$ in Plot (a) and (b) respectively.
We set $n = \lceil \rho d \log m \rceil$, where we vary  $\rho \in
\{1, 2, 3, 6, 12, 25\}$.}
\label{fig:stat-opt-R}
\end{center}
\end{figure}

\subsection{Statistical and optimization  error in Gradient Descent}
In the last set of experiments, we study the statistical error and optimization error for each
iteration within the composite gradient descent algorithm.
We observe a geometric convergence of the optimization error
$\shnorm{\beta^t - \hat\beta}$. 

For each experiment, we repeat the following procedure 10 times: we start with a
random initialization point $\beta^0$ and apply the composite
gradient descent algorithm to compute an estimate $\hat\beta$; 
we compute the optimization error $\log (\shnorm{\beta^t - \hat\beta})$,
which records the difference between $\beta^t$ and $\hat\beta$, where
$\hat\beta$ is the final solution.
In all simulations, we plot the log error $\log (\shnorm{\beta^t - \hat\beta})$
between the iterate $\beta^t$ at time $t$ versus the final solution
$\hat\beta$, as well as the statistical error $\log(\shnorm{\beta^t -
  \beta^*})$, which is the difference between $\beta^t$ and $\beta^*$  at time $t$.
 Each curve plots the results averaged over ten random instances.

In the first experiment, both $A$ and $B$ are generated using the $\AR$ model with
parameters $\rho_A=0.3$ and $\rho_{B^*}=0.3$.  We set $m=1024$,  $d=10$
and $\tau_B \in \{0.3, 0.7\}$. 
These results are shown in Figure~\ref{fig:stat-opt-R}.
Within each plot, the red curves show the statistical error and the blue curves show the optimization error. 
We can see the optimization error $\shnorm{\beta^t - \hat\beta}$ decreases exponentially for each
iteration, obeying a geometric convergence.
To illuminate the dependence of convergence rate on the sample size $n$, we
study the optimization error $\log (\shnorm{\beta^t - \hat\beta})$ when
 $n = \lceil \rho d \log m \rceil$, where we vary 
$\rho \in \{1, 2, 3, 6, 12, 25\}$.
When $n = d\log m$,  the composite gradient algorithm fails to
converge since the sample size is too small for the RSC/RSM conditions to
hold, resulting in the oscillatory behavior of the algorithm for a constant step size.
As the factor $\rho$ increases, the lower and upper RE curvature
$\alpha$ and smoothness parameter $\tilde{\alpha}$
become more concentrated around
$\lambda_{\min}(A)$ and $\lambda_{\max}(A)$ respectively, and the
tolerance parameter $\tau$ decreases at the rate of $\frac{\log
  m}{n}$. 
Hence we observe faster rates of convergence for 
$\rho = 25, 12, 6$ compared to  $\rho = 2, 3$.
This is well aligned with our theoretical prediction that 
once $n =\Omega(\kappa(A) \frac{\tau_0}{\lambda_{\min}(A)} d \log m)$ (cf.~\eqref{eq::nlower}), 
we expect to observe a geometric convergence of the computational
error $\shnorm{\beta^t -  \hat\beta}$.

For the statistical error, we first observe the geometric contraction,
and then the curves flatten out after a certain number of iterations,
confirming the claim that $\beta^t$ converges to $\beta^*$ only up to a
neighborhood of radius defined through the statistical error bound
$\ve_{\static}^2$;  that is, the geometric convergence is
not guaranteed to an arbitrary precision, but only to an accuracy
related to statistical precision of the problem measured by
$\ell_2$ error: $\shtwonorm{\hat\beta - \beta^*}^2 =:\ve_{\static}^2$ between the global
optimizer $\hat\beta$ and the true parameter $\beta^*$.

In the second experiment, $A$ is generated from the Star-Block model,
where we have 32 subgraphs and each subgraph has 16 edges; $B$ is
generated using the random graph model with $n \log n$ edges and adjusted to have $\tau_B = 0.3$. 
We set $m = 1024$, $n = 2500$ and $d=10$.  
We then choose $\rho_A \in \{0.3, 0.5, 0.7, 0.9\}$. The results are shown in
Figure~\ref{fig::topology}(b).
As we increase $\rho_A$, we need larger sample size to control the
statistical error. Hence for a fixed $n$, the statistical error is
bigger for $\rho_A = 0.7$, compared to cases where $\rho_A = 0.5$ or
$\rho_A = 0.3$, for which we have  $\kappa(A) = 42.06$ and $\kappa(A) = 10.2$ (for $\rho_A = 0.3$) respectively;
Moreover, the rates of convergence are faster for the latter two
compared to $\rho_A = 0.7$, where $\kappa(A) = 169.4$. 
When $\rho_A =0.9$, the composite gradient descent algorithm fails to
converge as $\rho(A)$ is too large (hence not plotted here) with
respect to the sample size we fix upon. In
Figure~\ref{fig::topology}(a), we show results of $A$ being generated using the $\AR$ model with
four choices of $\rho_A \in \{0.3, 0.5, 0.7, 0.9\}$ and $B$ being 
generated using the $\AR$ model with $\rho_{B^*}=0.7$ and $\tau_B = 0.3$.
We observe quantitively similar behavior as in Figure~\ref{fig::topology}(b).
\begin{figure}
\begin{center}
\begin{tabular}{cc}
\begin{tabular}{c}\includegraphics[width=0.48\textwidth]{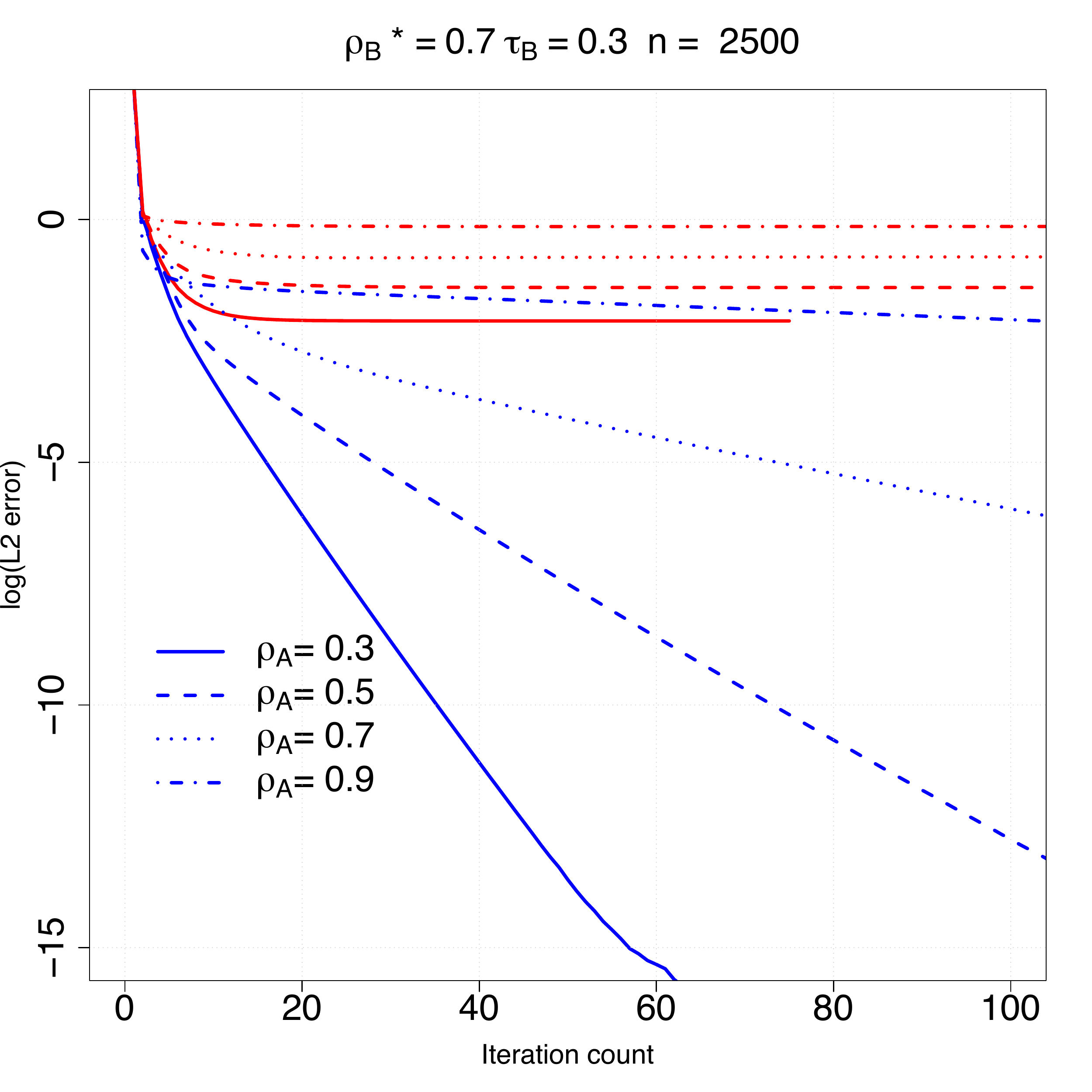}
\end{tabular}&
  \begin{tabular}{c}\includegraphics[width=0.48\textwidth]{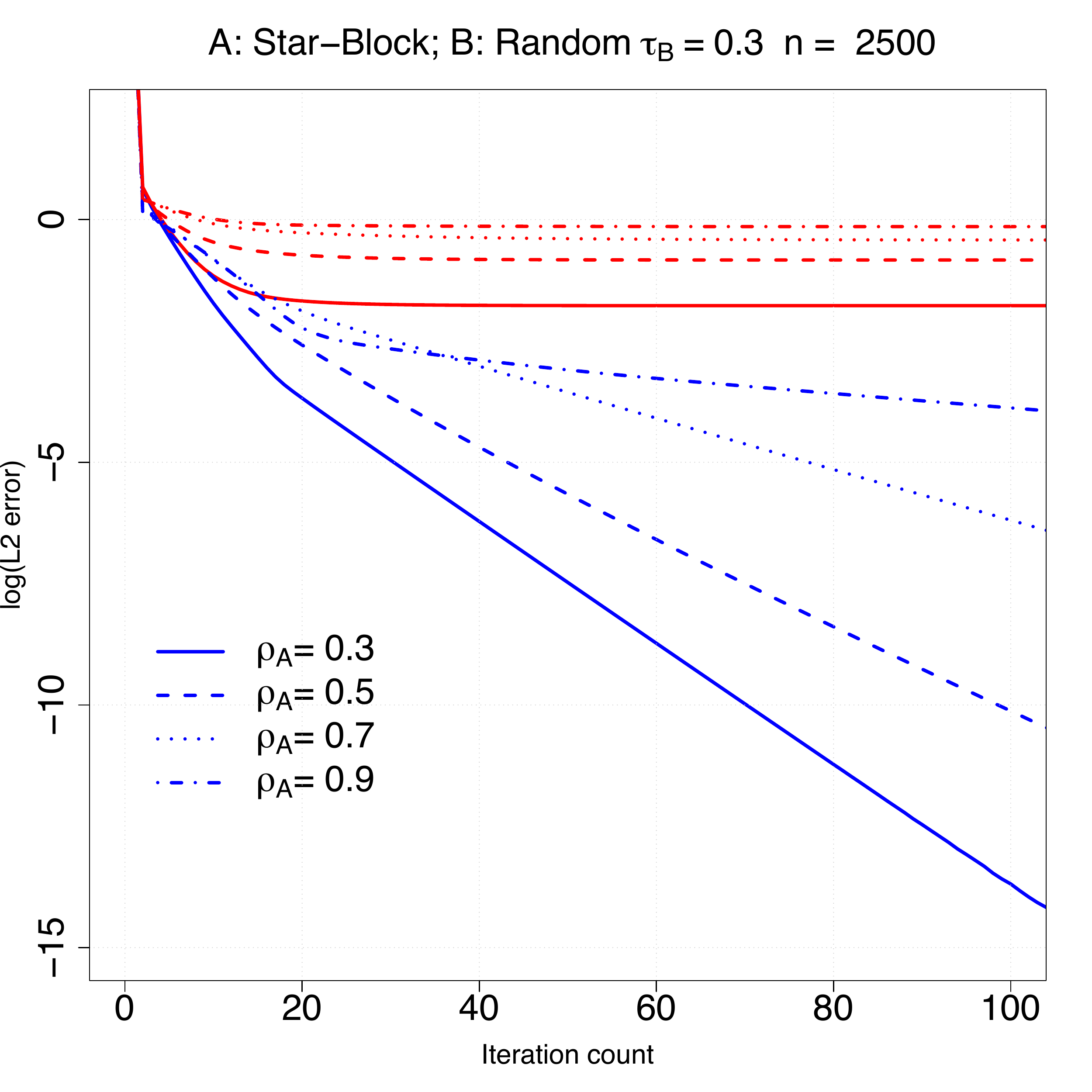}
\end{tabular}\\
(a) &(b)\\
\end{tabular}
\caption{Plots of the statistical error $\log (\shnorm{\beta^t - \beta^*})$, and the optimization error when we change the
  topology. In the last experiment, we have $m = 1024$, $d=10$ and $n = 2500$.
In (a), $A$ is generated using the $\AR$ model with
four choices of $\rho_A \in \{0.3, 0.5, 0.7, 0.9\}$ and 
$B$ is generated using $\AR$ model with $\rho_{B^*}=0.7$ and $\tau_B = 0.3$.
In (b), $A$ follows the Star-Block model and $B$ follows
the random graph model. We show four choices of $\rho_A \in \{0.3,
0.5, 0.7, 0.9\}$}
\label{fig::topology}
\end{center}
\end{figure}

\section{Proof of Lemma~\ref{lemma::REcomp}}
\label{sec::proofoflemmaREcomp}
\begin{proofof}{Lemma~\ref{lemma::REcomp}}
Part I: Suppose that the Lower-$\RE$ condition holds for $\Gamma := A^T A$.
Let $x \in \W(s_0, k_0)$. Then 
\bens
\onenorm{x} \le (1+k_0) \onenorm{x_{T_0}} \le  (1 + k_0)\sqrt{s_0} \twonorm{x_{T_{0}}}.
\eens
Thus for $x \in \W(s_0, k_0) \cap S^{p-1}$ and  $\tau (1 + k_0)^2 s_0 \le \alpha/2$, 
we have 
\bens
\twonorm{A x} = (x^T A^T A x)^{1/2} & \ge &  
\left(\alpha \twonorm{x}^2 - \tau \onenorm{x}^2\right)^{1/2} \\
& \ge &  
\left(\alpha \twonorm{x}^2 - \tau (1 + k_0)^2 s_0 \twonorm{x_{T_{0}}}^2\right)^{1/2} \\& \ge &  
\left(\alpha - \tau (1 + k_0)^2 s_0 \right)^{1/2}\ge
\sqrt{\frac{\alpha}{2}}.
\eens
Thus the $\RE(s_0, k_0, A)$ condition holds with
\bens
\inv{K(s_0, k_0, A)} & := & 
\min_{x \in \W(s_0, k_0)}\frac{\twonorm{Ax}}{\twonorm{x_{T_0}}}
\ge \sqrt{\frac{\alpha}{2}} 
\eens
where we use the fact that for any $J \in \{1, \ldots, p\}$ such that $\abs{J} \le
s_0$, $\twonorm{x_J} \le \twonorm{x_{T_0}}$.
We now show the other direction.

Part II. 
Assume that $\RE(4R^2, 2R-1, A)$ holds for some integer $R > 1$.
Assume that for some $R  >1$
\bens
\onenorm{x} \le R \twonorm{x}.
\eens
Let $(x_i^*)_{i=1}^p$ be non-increasing arrangement of
$(\abs{x_i})_{i=1}^p$. Then
\bens
\onenorm{x} 
& \le & R \left(\sum_{j=1}^s (x^*_j)^2 +
  \sum_{j=s+1}^{\infty}\left(\frac{\onenorm{x}}{j}\right)^2
\right)^{1/2} \\
& \le & 
R \left(\twonorm{x^*_{J}}^2 + \onenorm{x}^2 \inv{s}\right)^{1/2}  
\le R \left(\twonorm{x^*_{J}} + \onenorm{x}  \inv{\sqrt{s}}\right)
\eens
where $J := \{1, \ldots, s\}$.
Choose $s = 4R^2$.   Then
\bens
\onenorm{x} \le R\twonorm{x^*_{J}} + \half \onenorm{x}.
\eens
Thus we have 
\ben
\label{eq::T02bound}
\onenorm{x} & \le & 2 R\twonorm{x^*_{J}} \le  2 R\onenorm{x^*_{J}} \;
\; \text{ and hence } \; \; \\
\onenorm{x^*_{{J}^c}} & \le &  (2 R - 1) \onenorm{x^*_{J}}.
\een
Then $x \in \W(4R^2, 2R-1)$.
Then for all $x \in S^{p-1}$ such that $\onenorm{x} \le R \twonorm{x}$,
we have for $k_0 =2R-1$ and $s_0 := 4R^2$,
\bens
x^T \Gamma x \ge \frac{\twonorm{x_{T_0}}^2 }{K^2(s_0, k_0, A)} 
\ge 
\frac{\twonorm{x}^2}{\sqrt{s_0} K^2(s_0, k_0, A)} =:
\alpha \twonorm{x}^2
\eens
where we use the fact that $(1 + k_0) \twonorm{x_{T_{0}}}^2 \ge
\twonorm{x}^2$ by Lemma~\ref{lemma::lower-bound-Az} with $x_{T_0}$ as
defined therein. 
Otherwise, suppose that $\onenorm{x} \ge R \twonorm{x}$.
Then for a given $\tau > 0$,
\ben
\label{eq::alphatau}
\alpha\twonorm{x}^2 - \tau \onenorm{x}^2 \le
(\inv{\sqrt{s_0}K^2(s_0, k_0, A)} - \tau R^2) \twonorm{x}^2.
\een
Thus we have by the choice of $\tau$ as in~\eqref{eq::lemmatauchoice}  and \eqref{eq::alphatau}
\bens
x^T \Gamma x  \ge  
\lambda_{\min}(\Gamma) \twonorm{x}^2 &\ge & 
(\inv{\sqrt{s_0}K^2(s_0, k_0, A)} - \tau R^2) \twonorm{x}^2 \\
& \ge & 
\alpha \twonorm{x}^2 - \tau \onenorm{x}^2.
\eens
The Lemma thus holds.
\end{proofof}

\section{Proof of Theorem~\ref{thm::lasso}}
\label{sec::proofofthmlasso}
\begin{proofof2}
Throughout this proof, we assume that $\A_0 \cap \B_{0}$ holds. 
First we note that it is sufficient to have~\eqref{eq::trBLasso} in order for \eqref{eq::trBlem} to hold. 
Condition~\eqref{eq::trBLasso} guarantees that
for $\V=3 e M_A^3/2$,
\ben
\nonumber
r(B) := \frac{\tr(B)}{\twonorm{B}} & \ge & 16 c' K^4 \frac{n}{\log m}
\log \frac{\V m \log m }{n}  \\
\nonumber
&\ge &
16 c' K^4 \frac{n}{\log m} 
\log\left(\frac{3 e m M_A^3\log m }{2n}\right) \\
\nonumber
&= & c' K^4 \inv{\ve^2} \frac{4}{M_A^2}\frac{n}{\log m} 
\log\left(\frac{6 e m M_A }{\frac{4}{M_A^2}(n/\log m)}\right) \\
\label{eq::trBlemup}
& \ge & 
c' K^4 \inv{\ve^2} s_0
\log\left(\frac{6 e m M_A }{s_0} \right) = c' K^4\frac{s_0}{\ve^2} \log\left(\frac{3 e m}{s_0 \ve }\right)
\een
where $\ve = \inv{2M_A} \le \inv{128 C}$, and the last inequality
holds given that $k \log (c m/k)$ on the 
RHS of~\eqref{eq::trBlemup} is a 
monotonically increasing function of $k$,
\bens
s_0 & \le & \frac{4 n}{M_A^2 \log m} 
\; \text{  and } \;  M_A =
  \frac{64 C (\rho_{\max}(s_0,A) +
  \tau_B)}{\lambda_{\min}(A)} \ge 64 C.
\eens
Next we check that the choice of $d$ as in~\eqref{eq::dlasso} ensures
that \eqref{eq::dlassoproof} holds for $D_{\phi}$ defined there.
Indeed, for $c' K^4 \le 1$, we have
\bens
d  & \le  & C_A (c' K^4 \wedge 1)\frac{\phi n}{\log m}
\le C_A  \left(c' D_{\phi} \wedge  1\right) \frac{n}{\log  m}.
\eens
By Lemma~\ref{lemma::lowerREI}, we have on event $\A_0$, the
modified gram matrix $\hat \Gamma_A :=   \onen (X^T X - \hat\tr(B) I_{m})$
satisfies the Lower $\RE$ conditions with $\alpha$ and $\tau$ as in \eqref{eq::localtau}.
Theorem~\ref{thm::lasso} follows from Theorem~\ref{thm::main}, so long
as we can show that condition~\eqref{eq::taumain} holds for $\lambda \ge 4 \psi
\sqrt{\frac{\log m}{n}}$, where  the parameter $\psi$ is as
defined~\eqref{eq::psijune},
\ben
\label{eq::localtau}
\text{ curvature}\; \;
\alpha = \frac{5}{8} \lambda_{\min}(A)
\text{ and tolerance } \; \; \tau = \frac{\lambda_{\min}(A) - \alpha}{s_0} = \frac{3\alpha}{5 s_0}.
\een
Combining~\eqref{eq::localtau} and~\eqref{eq::taumain},
we need to show~\eqref{eq::dcond} holds.
This is precisely the content of Lemma~\ref{lemma::dmain}. 
This is the end of the proof for Theorem~\ref{thm::lasso}
\end{proofof2}

\section{Proof of Theorem~\ref{thm::DS}}
\label{sec::proofofDSthm}
For the set $\Cone_J(k_0)$ as in~\eqref{eq::cone-init}, 
\bens
\kappa_{\RE}(d_0, k_0)  & := &  \min_{J : \abs{J} \le d_0}
\min_{\Delta \in \Cone_J(k_0)} 
\frac{\abs{\Delta^T \Psi \Delta}}{\twonorm{\Delta_J}^2} 
= \left(\inv{K(d_0, k_0,  (1/\sqrt{n}) Z_1 A^{1/2})}\right)^2.   
\eens
Recall the following Theorem \ref{thm:subgaussian-T-intro} from~\cite{RZ13}.
\begin{theorem}{(\textnormal{\cite{RZ13}})}
\label{thm:subgaussian-T-intro}
Set $0< \delta < 1$,  
$k_0 > 0$, and $0< d_0 < p$.
Let $A^{1/2}$ be an $m \times m$ matrix satisfying $\RE(d_0, 3k_0, A^{1/2})$ condition
as in Definition~\ref{def:memory}. 
Set 
\bens
d & = & d_0 + d_0 \max_j  \twonorm{A^{1/2} e_{j}}^2 \frac{16 K^2(d_0, 3k_0, A^{1/2}) (3k_0)^2 (3k_0 + 1)}{\delta^2}.
\eens
Let $\Psi$ be an $n \times m$ matrix whose rows are
independent isotropic $\psi_2$ random vectors in $\R^m$ with constant $\alpha$.
Suppose the sample size satisfies
\ben
\label{eq::UpsilonSampleBound-intro}
n \geq \frac{2000 d \alpha^4}{\delta^2} \log \left(\frac{60 e m}{d \delta}\right).
\een
Then with probability at least $1- 2 \exp(-\delta^2 n/2000 \alpha^4)$,
$\RE(d_0, k_0, (1/\sqrt{n})\Psi A^{1/2})$ condition holds for matrix $(1/\sqrt{n}) \Psi A$
with
\ben
\label{eq::RE-subg}
0< K(d_0, k_0,  (1/\sqrt{n}) \Psi A^{1/2}) \leq \frac{K(d_0, k_0, A^{1/2})}{1-\delta}.
\een
\end{theorem}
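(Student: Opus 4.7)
\textbf{Proof proposal for Theorem~\ref{thm:subgaussian-T-intro}.} The plan is to show that for every unit vector $\upsilon$ in the cone $\W_J(3k_0) := \{x : \norm{x_{J^c}}_1 \le 3k_0 \norm{x_J}_1\}$ with $|J|\le d_0$, the image $x = A^{1/2}\upsilon$ has norm preserved (up to a factor $1-\delta$) under the action of $(1/\sqrt{n})\Psi$. Since we assume $\RE(d_0, 3k_0, A^{1/2})$, for any such $\upsilon$ with $\twonorm{\upsilon_J} = 1$ we have $\twonorm{A^{1/2}\upsilon} \ge 1/K(d_0, 3k_0, A^{1/2})$, so it suffices to prove
\[
\sup_{\upsilon}\left|\tfrac{1}{n}\twonorm{\Psi A^{1/2}\upsilon}^2 - \twonorm{A^{1/2}\upsilon}^2\right| \le \delta\, \twonorm{A^{1/2}\upsilon}^2
\]
uniformly over the cone intersected with the appropriate sphere. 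Then \eqref{eq::RE-subg} will follow from the triangle inequality.

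The core reduction is a sparse approximation of elements in $\conv(\W_J(3k_0)\cap B_2^m)$ via Maurey's empirical method. Following the strategy used by the authors in earlier work (Rudelson--Zhou, Zhou), an arbitrary $\upsilon$ in the cone with $\twonorm{\upsilon}\le 1$ has $\onenorm{\upsilon}\le (1+3k_0)\sqrt{d_0}$, so one can write $\upsilon = \upsilon_1 + \upsilon_2$ where $\upsilon_1$ is a convex combination of signed standard basis vectors scaled by a factor $\asymp (1+3k_0)\sqrt{d_0}$, and $\upsilon_2$ is a small-norm correction. Applying Maurey's sampling argument at the right granularity produces an approximation $\tilde\upsilon$ that is $d_1$-sparse (for an appropriate $d_1$) with $\twonorm{\upsilon - \tilde\upsilon} \le \delta$. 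Multiplying by $A^{1/2}$ then yields an approximation of $A^{1/2}\upsilon$ whose sparsity, in the sense needed for the subgaussian concentration, scales with $\max_j \twonorm{A^{1/2}e_j}^2 \cdot d_1$ — this is precisely the source of the quantity $d$ in the statement.

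Having reduced to approximately sparse images, the second ingredient is a uniform subgaussian concentration over the set of $d$-sparse unit vectors. For a fixed $d$-sparse unit vector $x$, $\norm{\Psi x}_{\psi_2} \le \alpha$ implies $\tfrac{1}{n}\twonorm{\Psi x}^2$ concentrates around $1$ with deviation $\delta$ with probability at least $1-2\exp(-c\delta^2 n/\alpha^4)$ by the Hanson--Wright / Bernstein bound. A union bound over a $\delta$-net of the union of $d$-sparse unit spheres, whose cardinality is at most $\binom{m}{d}(C/\delta)^d \le (Cem/(d\delta))^d$, together with a standard approximation argument (handling off-net points via the operator norm on sparse vectors) gives uniform control, provided
\[
n \gtrsim \frac{d\,\alpha^4}{\delta^2}\log\!\left(\frac{em}{d\delta}\right),
\]
which matches \eqref{eq::UpsilonSampleBound-intro}.

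The combination of these steps proves that $(1/\sqrt{n})\twonorm{\Psi A^{1/2}\upsilon}\ge (1-\delta)\twonorm{A^{1/2}\upsilon}$ uniformly on the cone, yielding \eqref{eq::RE-subg}. The main technical obstacle is the Maurey approximation in the presence of the linear map $A^{1/2}$: one must verify that the effective sparsity after applying $A^{1/2}$ only inflates by $\max_j\twonorm{A^{1/2}e_j}^2$ times a factor depending on $K(d_0,3k_0,A^{1/2})$ and $k_0$, rather than something worse. Getting the constants to line up so that exactly $\RE(d_0, k_0, \cdot)$ holds (as opposed to $\RE$ with a degraded cone parameter) requires the Maurey approximation to be carried out on a slightly larger cone $\W_J(3k_0)$ and the subsequent error to be absorbed into the factor $1-\delta$, which is where the restriction $\delta<1$ and the choice of $d_0$-vs-$3k_0$ distinction play a role.
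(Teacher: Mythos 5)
This theorem is quoted from \cite{RZ13} and the present paper gives no proof of it, so the only meaningful comparison is with the cited source; your outline reconstructs exactly that argument. The two ingredients you identify --- a Maurey-type sparse approximation of the cone image under $A^{1/2}$ (which is where the factor $\max_j\twonorm{A^{1/2}e_j}^2\,K^2(d_0,3k_0,A^{1/2})(3k_0)^2(3k_0+1)/\delta^2$ in the definition of $d$ comes from), followed by uniform subgaussian concentration over a net of the union of $d$-sparse unit spheres under the sample-size condition \eqref{eq::UpsilonSampleBound-intro} --- are precisely the reduction principle and the concentration step used there, so your plan is correct and follows essentially the same route.
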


\begin{proofof}{Theorem~\ref{thm::DS}}
Suppose $\RE(2d_0, 3k_0, A^{1/2})$ holds.
Then for $d$ as defined in~\eqref{eq::sparse-dim-Ahalf} and $n
= \Omega(d K^4 \log (m/d))$, we have with probability at least $1 - 2
\exp(\delta^2 n/2000 K^4)$, 
$\RE(2d_0, k_0, \inv{\sqrt{n}} Z_1 A^{1/2})$ condition holds with 
\bens
\kappa_{\RE}(2d_0, k_0)  & = &  \left(\inv{K(2d_0, k_0,  (1/\sqrt{n}) Z_1 A^{1/2})}\right)^2  \ge
\left(\inv{2 K(2d_0, k_0,  A^{1/2})}\right)^2  
\eens
by Theorem~\ref{thm:subgaussian-T-intro}.

The rest of the proof follows from~\cite{BRT14} Theorem 1 and thus we only provide
a sketch. In more details, in view of the lemmas shown in
Section~\ref{sec::proofall}, we need
\bens
\kappa_{q}(d_0, k_0) \ge c d_0^{-1/q}
\eens
to hold for some constant $c$ for $\Psi := \onen X_0^T X_0$.
It is shown in Appendix C in~\cite{BRT14} that under the $\RE(2d_0,
k_0, \inv{\sqrt{n}} Z_1 A^{1/2})$ 
condition, for any $d_0 \le m/2$ and $1\le q\le 2$, 
\ben
\nonumber
\kappa_{1}(d_0, k_0) & \ge & c d_0^{-1} \kappa_{\RE}(d_0, k_0) \quad
\text{ and } \\
\label{eq::kqbound}
\kappa_{q}(d_0, k_0) & \ge & c(q) d_0^{-1/q} \kappa_{\RE}(2d_0, k_0),
\een
where $c(q) > 0$ depends on $k_0$ and $q$.
The theorem is thus proved following exactly the same line of
arguments as in the proof of Theorem 1 in~\cite{BRT14} in view of the
$\ell_q$ sensitivity condition derived immediately above, 
in view of Lemmas~\ref{lemma::DS},~\ref{lemma::DS-cone} and~\ref{lemma::grammatrix}.
Indeed, for $v := \hat\beta - \beta^*$, we have by definition
of $\ell_q$ sensitivity as in~\eqref{eq::sense},
\ben
\nonumber
 c(q) d_0^{-1/q} \kappa_{\RE}(2d_0, k_0) \norm{v}_q
&  \le &  
\nonumber
\kappa_{q}(d_0, k_0) \norm{v}_q \le  \norm{\onen X_0^T X_0 v}_{\infty}
\\
\nonumber
& \le & \mu_1 \twonorm{\beta^*} + \mu_2 \onenorm{v} + \omega  \\
\nonumber
& \le & \mu_1 \twonorm{\beta^*} + \mu_2 (2+\lambda) \onenorm{v_S} +
\omega 
\\
\nonumber
& \le & \mu_1 \twonorm{\beta^*} + \mu_2 (2+\lambda) d_0^{1-1/q}
\norm{v_S}_q + \omega  \\
\label{eq::prelude}
& \le & \mu_1 \twonorm{\beta^*} + \mu_2 (2+\lambda) d_0^{1-1/q} \norm{v}_q + \omega.
\een
Thus we have for $d_0 = c_0 \sqrt{n/\log m}$, where $c_0$ is sufficiently small,
\bens
&& d_0^{-1/q} 
(c(q) \kappa_{\RE}(2d_0, k_0)-  \mu_2 (2+\lambda)
  d_0) \norm{v}_q
\le \mu_1 \twonorm{\beta^*} + \omega  \\
\text{and hence }  && 
\norm{v}_q  \le  C( 4 D_2  \rho_{n} K \twonorm{\beta^*} + 2 D_0 M_{\e} \rho_{n} ) d_0^{1/q} \\
\nonumber
&  & \quad \quad \quad \le  4 C D_2  \rho_{n} (K \twonorm{\beta^*} + M_{\e}) d_0^{1/q}
\eens
for some constant $C = 1/\left(c(q) \kappa_{\RE}(2d_0, k_0)-  \mu_2 (2+\lambda)
  d_0\right) \ge 1/\left(2 c(q) \kappa_{\RE}(2d_0, k_0)\right)$, where
\bens
\mu_2(2+\lambda) d_0& = & 2 D_2 K \rho_{n}(\inv{\lambda} + 1)
(2+\lambda) c_0 \sqrt{n/\log m} \\
& = &  2 c_0 C_0 D_2 K^2 (2+\lambda) (\inv{\lambda} + 1)
\eens 
is sufficiently small and thus \eqref{eq::ellqnorm} holds.
The prediction error bound follows exactly the same line of arguments
as in~\cite{BRT14} which we omit here. 
See proof of Theorem~\ref{thm::DSoracle} in Section~\ref{sec::DSoracle} for details.
\end{proofof}

\section{Proof of Theorem~\ref{thm::lassora}}
\label{sec::classoproof}
\begin{proofof2}
Throughout this proof, we assume that $\A_0 \cap \B_{0}$ holds. 
The proof is also identical to the proof of Theorem~\ref{thm::lasso} up till
~\eqref{eq::localtau}, except that we replace the condition on $d$ 
as in the theorem statement by~\eqref{eq::doracle}.
Theorem~\ref{thm::lassora}  follows from Theorem~\ref{thm::main}, so long
as we can show that condition~\eqref{eq::taumain} holds for 
$\alpha$ and $\tau = \frac{\lambda_{\min}(A) - \alpha}{s_0}$ as defined 
in~\eqref{eq::localtau}, and $\lambda \ge 2 \psi
\sqrt{\frac{\log m}{n}}$, where the parameter $\psi$ is as
defined~\eqref{eq::psioracle}.
Combining~\eqref{eq::localtau} and~\eqref{eq::taumain}, 
we need to show~\eqref{eq::dcond} holds.
This is precisely the content of Lemma~\ref{lemma::dmainoracle}.
This is the end of the proof for Theorem~\ref{thm::lassora}.
\end{proofof2}

\section{Proof of Theorem~\ref{thm::DSoracle}}
\begin{proofof2}
\label{sec::DSoraproof}
Throughout this proof, we assume that $\B_0 \cap \B_{10}$ holds. 
The rest of the proof follows that of Theorem~\ref{thm::DS}, except for the last
part.
Let $\mu_1, \mu_2, \omega$ be as defined in
Lemma~\ref{lemma::grammatrix}. 
We have for $\mu_2  :=  2 \mu (1+\inv{2\lambda}) $, where  $\mu = D_0' K \rho_{n}  \tilde\tau_B^{1/2}$
and $d_0 =  c_0  \tau_B^-\sqrt{n/\log m}$, 
\ben
\label{eq::half}
\mu_2(2+\lambda) d_0 & = &
 2 C_0 D_0' K^2  \tilde\tau_B^{1/2} 
(\inv{2\lambda} + 1)
(2+\lambda)  c_0  \tau_B^- \\
\nonumber
&  \le &  2 c_0 C_0 D_0' K^2 (2+\lambda) (\inv{2\lambda} + 1)  
\le \half c(q) \kappa_{\RE}(2d_0, k_0),
\een
which holds when $c_0$ is sufficiently small,  where 
$\tau_B^{-}   \tilde\tau_B^{1/2} \le 1$
by \eqref{eq::tildetauBbound}.
Hence
\bens
\mu_2  d_0 \le \frac{c(q) \kappa_{\RE}(2d_0, k_0)}{2(2+\lambda)}.
\eens
Thus for $c_0$ sufficiently small, $\mu_1 = 2 \mu$, we have
by \eqref{eq::kqbound}, \eqref{eq::half}, \eqref{eq::prelude} and~\eqref{eq::tildetauB},
\ben
\nonumber
\lefteqn{ d_0^{-1/q} \half (c(q) \kappa_{\RE}(2d_0, k_0)) \norm{v}_q }\\
& =  & 
\nonumber
d_0^{-1/q} (c(q) \kappa_{\RE}(2d_0, k_0)-  \mu_2 (2+\lambda)  d_0) \norm{v}_q \\
\nonumber
&\le & (\kappa_{q}(d_0, k_0) - \mu_2 (2+\lambda) d_0^{1-1/q})
\norm{v}_q \le \mu_1 \twonorm{\beta^*} + \omega   \\
\label{eq::middle}
& \le & 
 2 D_0' \rho_{n} K^2 ((\tau_B^{1/2} + (3/2)C_{6} r_{m,m}^{1/2})\twonorm{\beta^*} + M_{\e}/K)
\een
and thus \eqref{eq::ellqnormimp} holds, following the proof in Theorem~\ref{thm::DS}.
The prediction error bound follows exactly the same line of arguments
as in~\cite{BRT14}, which we now include for the sake completeness.
Following~\eqref{eq::ellqnormimp}, we have by \eqref{eq::middle}, 
\bens
\onenorm{v} & \le &  
C_{11} d_0 (\mu_1 \twonorm{\beta^*} + \omega), \; \text{ where } \;
C_{11} = 2/\left(c(q) \kappa_{\RE}(2d_0, k_0)\right), \\
\text{ and hence } \; 
\mu_2 \onenorm{v} & \le & C_{11} \mu_2 d_0 (\mu_1 \twonorm{\beta^*} + \omega) \\
& \le & 
C_{11} \inv{2(2+\lambda)}  \left(c(q) \kappa_{\RE}(2d_0, k_0)\right) (\mu_1\twonorm{\beta^*} + \omega) \\
& = & 
\inv{2  + \lambda} (\mu_1\twonorm{\beta^*} + \omega). 
\eens
Thus we have by \eqref{eq::middle},~\eqref{eq::tildetauBbound} and the bounds immediately above,
\bens
\onen \twonorm{X (\hat{\beta} -\beta^*)}^2 & \le & 
\onenorm{v} \norm{\onen X_0^T X_0 v}_{\infty} \\
& \le &  
C_{11}  d_0(\mu_1 \twonorm{\beta^*}+ \omega)  \left( \mu_1 \twonorm{\beta^*}+
  \mu_2 \onenorm{v} + 2 \omega \right) \\ 
& \le &  
C_{11} d_0(\mu_1 \twonorm{\beta^*}+ \omega )(1+\inv{2+\lambda})  \left( \mu_1 \twonorm{\beta^*}+ 2 \omega  \right) \\ 
& = &  
C' (D_0')^2  K^4 d_0 \frac{\log m}{n} \left( \tilde\tau_B^{1/2} \twonorm{\beta^*} +
  \frac{M_{\e}}{K}\right)^2\\
& \le &  
C'' (\twonorm{B} + a_{\max}) 
K^2 d_0 \frac{\log m}{n} \left( (2\tau_B+ 3 C_6^2 r_{m,m}) K^2 \twonorm{\beta^*}^2 + M_{\e}^2\right),
\eens
 where $(D_0')^2 \le 2\twonorm{B} + 2a_{\max}$.
The theorem is thus proved.
\end{proofof2}

\section{Proof of Theorem~\ref{thm::corrlinear}}
\label{sec::optproof}
\begin{proofof2}
Suppose that event $\A_0 \cap \B_0$ holds.
The condition on $d$ in \eqref{eq::dupperthm} implies that 
\ben
\label{eq::n3}
n & > & {512 d \tau_0 \log m} 
\left\{\frac{12\lambda_{\max}(A)}{\lambda_{\min}(A)^2} 
\right\}  
\bigvee \left\{\frac{4\zeta}{(\bar\alpha_{\ell})^2}\right\}, \; \text{ where } \\
\tau_0 & \asymp &  \frac{400 C^2 \vp(s_0  +  1)^2}{\lambda_{\min}(A)}.
\een
To see this, note that the following holds by the first bound in~\eqref{eq::dupperthm}:
\ben
\label{eq::n2}
\nl = \frac{64 d \tau_0 \log m}{n} \le  
64 d \tau_0 \log m \frac{\lambda_{\min}(A)}{256 d \tau_0 \log m *24 \kappa(A)}
=\frac{\lambda_{\min}(A)}{96 \kappa(A)} \le \frac{\alpha_{\ell}}{60},
\een
where $\alpha_{\ell} = \frac{5}{8} \lambda_{\min}(A)$ by Lemma~\ref{lemma::reclaim},
and hence $\bal \ge \frac{59 \alpha_{\ell}}{60}$.
Thus we have 
\bens
\frac{\al^2}{5 \zeta} \le \frac{\bal^2}{4\zeta},  \quad 
\text{ where } \; \; 
\zeta \ge \au > \lambda_{\max}(A) \ge  \kappa(A) \bar\alpha_{\ell}.
\eens
Now, by definition of $\nu(d, m, n)$ and the second bound on $n$ in~\eqref{eq::n3},
\bens
2 \nu(d, m, n)
 = 128 d \tau_u(\loss) & := &  \frac{128 d \tau_0 \log m}{n} 
\le \frac{(\bar\alpha_{\ell})^2}{16 \zeta} 
\eens
Then
\bens
2 \z &  := &   \frac{4 \nu(d, m, n)}{\bar\alpha_{\ell}}
 = \frac{256 d \tau_u(\loss)}{\bar\alpha_{\ell}} 
\le   \frac{\bar\alpha_{\ell}}{8\zeta}.
\eens
That is, we actually need to have  for $2 \z \le \frac{\bar\alpha_{\ell}}{8\zeta}$
\bens
\frac{\xi}{1-\kappa}  & =  & 
 \inv{1- \z} 2 \tau(\loss)\left(\frac{\bal}{4\zeta} +  2 \z+ 5 \right) 
\frac {1- \z}{\frac{\bal}{4 \zeta} - 2 \z} \\
& =  &  
2\tau(\loss)\left(\frac{\bal}{4\zeta} +  2 \z+ 5 \right)  \inv{\frac{\bal}{4 \zeta} - 2\z} \\
& = &  
2\tau(\loss)\left(\frac{\frac{\bal}{4\zeta} + 2 \z}{\frac{\bal}{4 \zeta} - 2\z}
 +\frac{40 \zeta}{\bal} \right) \le 2\tau(\loss)\left(3 +\frac{40
   \zeta}{\bal} \right) \\
& \le &  
 6 \tau(\loss) + \frac{80 \zeta}{\bal} \tau(\loss),
\eens 
where we use the second bound in \eqref{eq::n3}, and hence
\bens
\frac{\bal}{4\zeta} +  2 \z 
& \le & \frac{3}{2}\frac{\bal}{4\zeta}  \quad \text{ and } \\
(1-\kappa)(1-\z) =  \frac{\bal}{4\zeta} -  2 \z   & \ge &
\half \frac{\bal}{4\zeta}.
\eens 
Finally, putting all bounds in~\eqref{eq::definekappa}, we have $0<
\kappa < 1$. Thus the conclusion of Theorem~\ref{thm::opt-linear} hold.
\end{proofof2}

\subsection{Proof of Corollary~\ref{coro::deer}}
 \begin{proofof2}
Suppose that event $\A_0 \cap \B_0$ holds.
We first show that 
\bens
&& 4 \nu \ve_{\static}^2 + 4 \tau(\loss) \e^2 
 \asymp 64 \tau(\loss) \left(4 d \ve_{\static}^2 +
   \frac{\delta^4}{\lambda^2} \right) \; \; \text{ in case } \; \;
 \delta^2 \le M \bar\delta^2.
\eens
Recall that $\xi \ge 10 \tau_{\ell}(\loss)$ by definition of $\xi$
in~\eqref{eq::definexi}. The condition~\eqref{eq::labound}  on $\lambda$ as stated in
Theorem~\ref{thm::opt-linear} indicates that 
\ben
\label{eq::imme}
\lambda \ge \frac{160  b_0 \sqrt{d} \tau_{\ell}(\loss)}{1-\kappa}
 \; \text{ where
} \; \; R \asymp b_0 \sqrt{d}.
\een
We first show that  for the choice of $\lambda$ and $R$ as in~\eqref{eq::imme},
\bens
\twonorm{\hat\Delta^t}^2
&  \le & 
\frac{2}{\bar\alpha_{\ell}} \left(\delta^2 + 64 \tau \left(4 d \ve_{\static}^2  +  \frac{\delta^4}{\lambda^2}  \right) \right) \\
&  \le &  
\frac{3}{\alpha_{\ell}}\delta^2 
+ \frac{\al\ve_{\static}^2}{4} + 
\frac{2}{\alpha_{\ell}} O\left(\frac{\delta^2}{\tau_0} \frac{M\ve_{\static}^2}{400b_0^2}
\right).
\eens
Then \eqref{eq::errort2} holds.

For the second term on the RHS of \eqref{eq::twoerror}, we have by
\eqref{eq::n3},
\ben
\label{eq::nlocal} 
n & \ge &
\frac{256 d \tau_0 \log m}{\bar\alpha_{\ell}}
\frac{8 \zeta}{\bar\alpha_{\ell}}, \quad \text{ where } \quad 
\tau_0 \asymp  \frac{400 C^2 \vp(s_0  + 1)^2}{\lambda_{\min}(A)}.
\een
Thus
\bens
\nonumber
4 \nl &  = & 256 d \tau_0 \frac{\log m}{n} 
\le \bar\alpha_{\ell} \frac {\bar\alpha_{\ell}} {8
  \zeta} \; \quad\text{ and } \quad
\frac{2}{\bar\alpha_{\ell}} 4 \nu \ve_{\static}^2  \le   \frac {\bar\alpha_{\ell}} {4 \zeta}\ve_{\static}^2  
\le  \frac{\al\ve_{\static}^2}{4 \zeta}.
\eens
Consider the choice of $\bar\eta = \delta^2$, where
$M \bar{\delta}^2 \ge \bar{\eta} = \delta^2 \ge 
  \frac{ c \ve_{\static}^2}{1-\kappa}\frac{d \log  m}{n}   =:
  \bar{\delta}^2$.

Thus we have for~\eqref{eq::imme},
\bens
\frac{2 \delta^2}{\lambda} 
& \le &  \frac{M c \ve_{\static}^2}{1-\kappa}\frac{d \log m}{n}
\frac{1-\kappa}{160 b_0 \sqrt{d} \tau_{\ell}(\loss)}  = 
\frac{M c \ve_{\static}^2}{160 b_0 } \frac{\sqrt{d}}{\tau_0} < R 
\eens
and hence $\e=\frac{4 \delta^2}{\lambda}$.

Then for the last term on the RHS of \eqref{eq::twoerror}, we have 
for $\tau_{\ell}(\loss) \asymp \tau$,
\bens
4 \tau_{\ell}(\loss) 
\e^2
 & = & 16 \tau_{\ell}(\loss)  \min\left(\frac{2 \delta^2}{\lambda},
   R\right)^2  \\
& = & 64 \tau \frac{\delta^4}{\lambda^2}  \le 
\frac{\delta^4(1-\kappa)^2}{400 b_0^2 \tau_0}\frac{n}{d \log m} \\
& \le & 
\delta^2 \frac{c M \ve_{\static}^2}{1-\kappa}
\frac{(1-\kappa)^2}{400b_0^2\tau_0}  \\
& = & 
\frac{c\delta^2}{\tau_0} \frac{M\ve_{\static}^2 (1-\kappa)}{400b_0^2} \\
& =&  O\left(\frac{c\delta^2}{\tau_0} \frac{M\ve_{\static}^2}{400b_0^2}
\right)
\eens
where  
$\delta^2 \le  \frac{M c \ve_{\static}^2}{1-\kappa}\frac{d \log
  m}{n}$.

Finally,  suppose we fix
   \bens
R \asymp \frac{b_0}{20 M_+ \sqrt{6\kappa(A)} }\sqrt{\frac{n}{\log  m}}
\eens
in view of the upper bound $\bar{d}$~\eqref{eq::bard}.
Then in order for
\bens
\lambda \ge 16 R \frac{\xi}{1-\kappa}
\eens
to hold, we need to set
\bens
\lambda & \ge & 640 R \tau(\loss) \kappa(A),
\eens 
because of the following lower bound $\frac{\xi}{1-\kappa} \ge 40
\tau(\loss) \kappa(A)$ as shown in~\eqref{eq::lowerkappa}.

Then \eqref{eq::error3} holds given that the last term on
the RHS of \eqref{eq::twoerror} is now bounded by
\bens
\frac{2}{\bal} 
4 \tau_{\ell}(\loss)
 \e^2 & = & \frac{2}{\bal}  16 \tau_{\ell}(\loss) \min\left(\frac{2\delta^2}{\lambda}, R\right)^2
\le \frac{60}{59}\frac{2}{\al}\frac{64 \delta^4}{640^2 R^2 \kappa(A)^2 \tau_{\ell}(\loss)} \\
& \le & 
\frac{60}{59}\frac{2}{\al} \frac{\delta^4}{6400 \kappa(A) \tau_0} \left(\frac{20 M_+
    \sqrt{6} }{b_0}\right)^2 \\
& = & 
\frac{60}{59}\frac{12}{\al} \frac{\delta^4 M^2_+}{16{b_0^2} \kappa(A) \tau_0} 
\le \frac{60}{59}\frac{3 \delta^4}{4 \al} \frac{1024}{400 b_0^2 \twonorm{A}} 
\le \frac{2 \delta^4}{b_0^2 \al \twonorm{A}}.
\eens
\end{proofof2}

\begin{remark}
\label{rem::upperxi}
First we obtain an upper bound on $\frac{\xi}{1-\kappa}$ for $\zeta =
\alpha_u \asymp \frac{3\lambda_{\max}(A) }{2}$ and $\frac{59}{60} \frac{5}{8} \lambda_{\min}(A) 
\le \bal$
\bens
\frac{\xi}{1-\kappa} 
& \le & 
 6 \tau(\loss) + \frac{80 \zeta}{\bal} \tau(\loss)  \\
& \le & 
 6 \tau(\loss) + \frac{80 \zeta}{\frac{59}{60} \frac{5}{8}
   \lambda_{\min}(A) } \tau(\loss)  \\
& \approx &  200 \tau(\loss) \kappa(A).
\eens
\end{remark}
Now we obtain an upper bound using $R \le b_0 \sqrt{d}$ for $d \le \bar{d}$ as in \eqref{eq::bard},
\bens
R \frac{\xi}{1-\kappa}  & \le &  
200 \kappa(A) \tau b_0 \sqrt{d}  
\le  200 \kappa(A) \tau_0 \sqrt{\frac{\log m}{n}} \frac{b_0}{20 M_+ \sqrt{6 \kappa(A)}}\\
\nonumber
& = &   200 \kappa(A) \tau_0 \frac{ \lambda_{\min}(A)}{ \vp(s_0+1)}
\sqrt{\frac{\log m}{n}} \frac{b_0}{640 C \sqrt{6\kappa(A)}} \\ 
&\le & 10 b_0\frac{ \tau_0}{M_+ \sqrt{6}} 
\sqrt{\kappa(A)} \sqrt{ \frac{\log m}{n} } \\
& = & 
\frac{125 b_0}{\sqrt6} C \vp(s_0 +  1) \sqrt{\kappa(A)}\sqrt{\frac{\log m}{n} },
\eens
where we use \eqref{eq::dupperthm} and the fact that  $\frac{\tau_0}{M_+} = 12.5 C \vp(s_0+1)$.
We now discuss the implications of this bound on the choice of $\lambda$
in Section~\ref{sec::optdiscuss}. We consider two cases.
\bit
\item
When $\tau_B = \Omega(1)$.
It is sufficient to have for 
$\twonorm{\beta^*} \le b_0$ 
and $\tau_B \asymp 1$,
\bens
\lambda & \ge &
16 C b_0\left(50 \sqrt{\kappa(A)} \vp(s_0+1)
\bigvee \left(D_0' K( K \tau_B^{1/2} + \frac{M_{\e}}{b_0}) \right) \right) 
\sqrt{\frac{\log m}{n} } 
\eens
following the discussions in Section~\ref{sec::smallW},
where the first and the second term on the RHS are at the same order
except that the new lower bound involves the condition number
$\kappa(A)$, while the original bound in Theorem~\ref{thm::lassora}
involves only $D'_0 = \twonorm{B}^{1/2} + a_{\max}^{1/2}$.
\item
When $\tau_B = o(1)$ and  $M_{\e} = \Omega(\tau_B^{+/2} K
\twonorm{\beta^*})$. Now $d$ satisfies~\eqref{eq::dupper} and hence
$$b_0 \sqrt{d} \le \inv{4 \sqrt{5} M_+}  \sqrt{\frac{n}{\log  m}} \left\{\frac{\sqrt{c'} D_0' K M_{\e} }{\vp(s_0+1)}
  \wedge b_0\right\}.$$
Now combining this with the condition on $d$ as
in~\eqref{eq::dupperthm} implies that it is sufficient to set $R$ such
that 
\bens
R \frac{\xi}{1-\kappa} & \asymp &  
\frac{\kappa(A) \tau}{M_+} 
\left(\frac{b_0}{\sqrt{\kappa(A)}} \bigwedge \frac{D_0' K
    M_{\e}}{\vp(s_0 + 1)}\right) 
\sqrt{\frac{n}{\log  m}}\\
& = &  
 \kappa(A) \vp(s_0 + 1)
 \left(\frac{b_0}{\sqrt{\kappa(A)}} \bigwedge  \frac{D_0' K  M_{\e}}{\vp(s_0 + 1)}\right) \sqrt{\frac{\log m}{n} }\\
& \approx & 
\left(b_0 \vp(s_0 +  1) \sqrt{\kappa(A)} \wedge \kappa(A)  D_0' K  M_{\e} \right) \sqrt{\frac{\log  m}{n}} =: \bar{U}.
\eens
Hence it is sufficient to have for $\psi \asymp D_0' K \left(M_{\e} +  K \tau_B^{+/2} \twonorm{\beta^*}\right)$ as in
\eqref{eq::psijune15},
\bens
\lambda 
& \ge & \left(\bar{U} \vee \psi \right)\sqrt{\frac{\log  m}{n} }.
\eens
\eit
\section{Proof of Theorem~\ref{thm::oracle}}
\label{sec::thmoracle}

\begin{proofof2}
Clearly the condition on the stable rank of $B$ guarantees that 
$$n \ge r(B) = \frac{\tr(B)}{\twonorm{B}} =
\frac{\tr(B)\twonorm{B}}{\twonorm{B}^2}\ge \fnorm{B}^2 /\twonorm{B}^2 
 \ge \log m.$$
Thus the conditions in
Lemmas~\ref{lemma::Tclaim1} and \ref{lemma::trBest} hold.
First notice that 
\bens
\hat\gamma & = & 
\onen \left(X_0^T X_0 \beta^*
 +  W^T X_0 \beta^* + X_0^T \e +  W^T \e\right) \\
(\onen X^T X - \frac{\hat\tr(B)}{n} I_{m})\beta^* 
& = & 
\onen (X_0^T X_0 + W^T X_0 + X_0^T W 
+ W^T W - \frac{\hat\tr(B)}{n} I_{m})\beta^*.
\eens
Thus 
\bens
\norm{\hat\gamma - \hat\Gamma \beta^*}_{\infty} 
& \le &
\norm{\hat\gamma - \onen \left(X^T X - \hat\tr(B) I_{m}\right)\beta^*}_{\infty} \\
 & = & 
\onen \norm{ X_0^T \e +  W^T \e- \left(W^T W + X_0^T W - \hat\tr(B) I_m\right)\beta^* }_{\infty} \\
 & \le & 
\onen \norm{ X_0^T \e +  W^T \e}_{\infty}  +\onen \norm{(W^T W-
  \hat\tr(B) I_{m}) \beta^*}_{\infty} 
+\norm{\onen X_0^T W \beta^*}_{\infty}  \\
 & \le & 
 \onen \norm{ X_0^T \e +  W^T \e}_{\infty} + \onen(\norm{(Z^T B
     Z- \tr(B) I_{m}) \beta^*}_{\infty}) + \onen \norm{X_0^T W \beta^*}_{\infty} \\
&& + \onen \abs{\hat\tr(B) - \tr(B)} \norm{\beta^*}_{\infty} =:  U_1+  U_2 + U_3 + U_4.
\eens
By Lemma~\ref{lemma::Tclaim1} we have  on $\B_4$ for $D_0 := \sqrt{\tau_B} + a_{\max}^{1/2}$,
\bens
U_1 = \onen \norm{ X_0^T \e +  W^T \e}_{\infty}   = 
\onen \norm{A^{\half} Z_1^T \e +  Z_2^T B^{\half} \e}_{\infty}  \le
\rho_{n}M_{\e} D_0,
\eens
and on event $\B_5$ for  $D'_0 :=\sqrt{\twonorm{B}} + a_{\max}^{1/2}$,
\bens
\lefteqn{U_2 +  U_3 = 
\onen \norm{(Z^T B Z- \tr(B) I_{m}) \beta^*}_{\infty} +
\onen \norm{X_0^T W \beta^*}_{\infty}} \\
& \le & \rho_{n} K \twonorm{\beta^*}
\left(\frac{\fnorm{B}}{\sqrt{n}} + \sqrt{\tau_B}
  a^{1/2}_{\max}\right) \le  K \rho_{n} \twonorm{\beta^*}  \tau_B^{1/2} D_0',
\eens
where recall $\fnorm{B} \le \sqrt{\tr(B)}\twonorm{B}^{1/2}$.
Denote by $\B_0 := \B_4 \cap \B_5 \cap \B_6$.
We have on $\B_0$ and under (A1),  by Lemmas~\ref{lemma::Tclaim1}
and~\ref{lemma::trBest} and $D_1$ defined therein,
\bens
\nonumber
\norm{\hat\gamma - \hat\Gamma \beta^*}_{\infty} 
& \le &
U_1+  U_2 + U_3 + U_4 \\
& \le & 
\nonumber
\rho_{n} M_{\e} D_0 + D_0' \tau_B^{1/2} K \rho_{n} \twonorm{\beta^*} 
+ \onen \abs{\hat\tr(B) - \tr(B)} \norm{\beta^*}_{\infty} \\
& \le & 
\label{eq::oracleone}
D_0 M_{\e} \rho_{n} + D_0' K  \tau_B^{1/2} \twonorm{\beta^*} \rho_{n}  +
D_1 \norm{\beta^*}_{\infty} r_{m,m} \\
& \le & 
D_0 M_{\e} \rho_{n} + D_0' K  \tau_B^{1/2} \twonorm{\beta^*} \rho_{n}
+ 2 D_1 K \inv{\sqrt{m}} \rho_{n}.
\eens
Finally, we have by the union bound, $\prob{\B_0} \ge 1 -  16/m^3$.
This is the end of the proof of Theorem~\ref{thm::oracle}.
\end{proofof2}

\section{Conclusion}
\label{sec::conclude}
In this paper, we provide a unified analysis on the rates of convergence for both the
corrected Lasso estimator~\eqref{eq::origin} and the Conic programming
estimator \eqref{eq::Conic}.
As $n$ increases or as the measurement error metric $\tau_B$
decreases, we see performance gains over the entire paths for both
$\ell_1$ and $\ell_2$ error for both estimators as expected. 
When we focus on the lowest $\ell_2$ error along the paths as we vary the penalty factor
$f \in [0.05, 0.8]$, the corrected Lasso via the composite gradient descent algorithm
performs slightly better than the Conic programming estimator as shown
in  Figure~\ref{fig:conic-gd-chain-chain-3}.

For the Lasso estimator, when we require that the stochastic error
$\e$ in the response variable $y$ as in~\eqref{eq::oby}
does not approach $0$ as quickly as the measurement error $W$ in
\eqref{eq::obX} does, then the sparsity constraint becomes essentially
unchanged as $\tau_B \to 0$. 
These tradeoffs are somehow different from the behavior of the Conic
programming estimator versus the Lasso estimator; however, we believe the
differences are minor. 
Eventually, as $\tau_B \to 0$, the relaxation
on $d$ as in \eqref{eq::ora-sparsity-rem} enables the Conic
programming estimator to achieve bounds which are essentially
identical to  the Dantzig Selector when the design matrix $X_0$ is a
subgaussian random matrix satisfying the Restricted Eigenvalue conditions;
See for example~\cite{CT07,BRT09,RZ13}.

When $\tau_B \to 0$ and $M_{\e} = \Omega(\tau_B^+ K \twonorm{\beta^*})$, we set 
\ben
\label{eq::ordlassobound}
\lambda \ge 2 \psi \sqrt{\frac{\log m}{n}}, \; \; \text{ where } \;\;
\psi:= 4 C_0 D_0' K  M_{\e},
\een 
so as to recover the regular lasso bounds in
$\ell_q$ loss for $q = 1, 2$ in~\eqref{eq::lassobounds} in Theorem~\ref{thm::lassora}.
Moreover, suppose that $\tr(B)$ is given, then one can drop the second
term  in $\psi$  as in \eqref{eq::psijune15}  involving
$\twonorm{\beta^*}$  entirely and hence recover
the lasso bound as well.

Finally, we note that the bounds corresponding to the Upper $\RE$ 
condition as stated in Corollary~\ref{coro::BC}, Theorem~\ref{thm::AD}
and Lemma~\ref{lemma::lowerREI} are  not needed for
Theorems~\ref{thm::lasso} and~\ref{thm::lassora}.
They are useful to ensure algorithmic convergence and to bound the optimization error for the gradient
descent-type of algorithms as considered in~\cite{ANW12, LW12}, when one is
interested in approximately solving the nonconvex optimization
function~\eqref{eq::origin}. Our Theorem~\ref{thm::corrlinear}
illustrates this result. 
Our theory in Theorem~\ref{thm::corrlinear} predicts the dependencies of
the computational and statistical rates of convergence for the
corrected Lasso via gradient descent algorithm on the
condition number $\kappa(A)$, 
the trace parameter $\tau_B$ and the radius $R$ as
\bens
\lambda \asymp \frac{R \xi}{1-\kappa} \asymp \tau _0\kappa(A) \frac{R \log m}{n},
\; \;\text{ where } \; \;  \tau_0 \asymp  \frac{(\rho_{\max}(s_0, A) +
  \tau_B)^2}{\lambda_{\min}(A)}
\eens
depends on $\tau_B$, sparse and minimal eigenvalues of $A$.
Therefore, we need to increase the penalty when we
increase the $\ell_1$-ball radius $R$ in~\eqref{eq::origin2} in order
to ensure algorithmic and statistical convergence as predicted in
Theorem~\ref{thm::corrlinear}.
This is well-aligned with the observation in Figure~\ref{fig:changeR}.
Our numerical results validate such algorithmic and statistical convergence properties.

\section*{Acknowledgements}
The authors thank the Editor, the Associate editor and two referees
for their constructive comments. 
We thank Rob Kass, Po-Ling Loh, Seyoung Park, Kerby Shedden and Martin Wainwright for
helpful discussions. We thank Professor Alexander Belloni for providing us the code
implementing the Conic programming estimator.

\appendix

\section{Outline}
We prove  Theorem~\ref{thm::opt-linear} in
Section~\ref{sec::proofofoptlinear}. In Sections~\ref{sec::aux}, we 
present variations of  the Hanson-Wright inequality
as recently derived in~\cite{RV13} (cf. Lemma~\ref{lemma::oneeventA}).
We prove Lemma~\ref{lemma::Tclaim1} in
Section~\ref{sec::proofTclaim1}.
In Sections~\ref{sec::lassoall} and~\ref{sec::proofofDSlemma}, we prove the
technical lemmas for Theorems~\ref{thm::lasso} and~\ref{thm::DS} respectively. 
In Section~\ref{sec::DSoracleproof}, we prove the Lemmas needed for Proof of Theorem~\ref{thm::DSoracle}.
In order to prove Corollary~\ref{coro::BC},
we need to first state some geometric analysis results Section~\ref{sec::geometry}.
We prove Corollary~\ref{coro::BC} in Section~\ref{sec::appendLURE} and
Theorem~\ref{thm::AD} in Section~\ref{sec::proofofthmAD}.

\section{Proof of Theorem~\ref{thm::opt-linear}}
\label{sec::proofofoptlinear}
Let us first define the following shorthand notation
\bens
\hat\Delta^t & = & \beta^t - \hat{\beta} 
\; \;\text{ and } \; \; \delta^t \; = \; 
\phi(\beta^t) -\phi(\hat{\beta}).
\eens
The proof of the theorem requires two technical
Lemmas~\ref{lemma::oneextra} and 
~\ref{lemma::twoextra}.
Both are stated under assumption~\eqref{eq::etatollone}, which is stated
in terms of a given tolerance $\bar\eta > 0$ and integer $T > 0$ such that 
\ben
\label{eq::etatollone}
\phi(\beta^t) - \phi(\hat\beta) \le \bar\eta, \; \; \forall t \ge T,
\een
where the distance between $\beta^t$ and the global optimizer $\hat\beta$ is
measured in terms of  the objective function $\phi$, namely, $\delta^t
= \phi(\beta^t) -\phi(\hat{\beta})$. 

We first show Lemma~\ref{lemma::oneextra}, which ensures that 
the vector $\hat\Delta^t := \beta^t - \hat{\beta}$ satisfies a
certain cone-type condition. The proof is omitted, as it is 
a shortened proof of Lemma~1 of~\cite{LW15}.

\begin{lemma}{(\bf Iterated Cone Bound)}
\label{lemma::oneextra}
Under the conditions of Theorem~\ref{thm::opt-linear}, 
suppose there exists a pair
$(\bar\eta, T)$ such that \eqref{eq::etatollone} holds. 
Then for any iteration $t \ge T$, we have
\bens
\onenorm{\beta^t - \hat\beta} \le 
4\sqrt{d} \twonorm{\beta^t - \hat\beta} + 8 \sqrt{d}
\twonorm{\hat\beta - \beta^*} + 2 \cdot \min\left(\frac{2 \bar\eta}{\lambda}, R\right). 
\eens
\end{lemma}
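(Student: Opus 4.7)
The plan is to reduce the bound on $\shnorm{\beta^t - \hat\beta}_1$ to two separate cone-type bounds, one on $\nu^t := \beta^t - \beta^*$ and one on $\hat\nu := \hat\beta - \beta^*$, then combine them by the triangle inequality
\[
\|\hat\Delta^t\|_1 \le \|\nu^t\|_1 + \|\hat\nu\|_1.
\]
For $\hat\nu$, the standard Lasso cone argument (using the Karush–Kuhn–Tucker optimality of $\hat\beta$ together with $\lambda \ge 12\maxnorm{\grad\loss(\beta^*)}$) yields $\|\hat\nu\|_1 \le 4\sqrt{d}\,\|\hat\nu\|_2$, which contributes the middle term $8\sqrt{d}\,\twonorm{\hat\beta-\beta^*}$ once combined with the $\nu^t$ bound via $\twonorm{\nu^t} \le \twonorm{\hat\Delta^t} + \twonorm{\hat\nu}$.

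For $\nu^t$, the key step is to show the approximate cone inclusion
\[
\|\nu^t_{S^c}\|_1 \;\le\; 3\|\nu^t_S\|_1 + 2\min\!\Big(\tfrac{2\bar\eta}{\lambda},\,R\Big),
\]
where $S=\supp(\beta^*)$ and $|S|\le d$. Since $\hat\beta$ is a global minimizer and $\beta^*$ is feasible, $\phi(\hat\beta)\le \phi(\beta^*)$, so \eqref{eq::etatollone} gives $\phi(\beta^t)\le \phi(\beta^*)+\bar\eta$. Expanding with the quadratic identity $\loss(\beta^t)-\loss(\beta^*) = \langle\grad\loss(\beta^*),\nu^t\rangle + \tfrac12(\nu^t)^T\hat\Gamma\nu^t$ and using the decomposability identity $\|\beta^*\|_1-\|\beta^t\|_1 \le \|\nu^t_S\|_1 - \|\nu^t_{S^c}\|_1$, I would obtain
\[
\lambda\|\nu^t_{S^c}\|_1 \;\le\; \bar\eta + \lambda\|\nu^t_S\|_1 + |\langle\grad\loss(\beta^*),\nu^t\rangle| - \tfrac12(\nu^t)^T\hat\Gamma\nu^t .
\]
Then I bound the gradient term by $\tfrac{\lambda}{12}\|\nu^t\|_1$ (from $\lambda\ge 12\maxnorm{\grad\loss(\beta^*)}$), and apply Lower-RE to get $-\tfrac12(\nu^t)^T\hat\Gamma\nu^t \le \tfrac{\tau_{\ell}}{2}\|\nu^t\|_1^2$.

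The main obstacle, and the point that calls for care, is controlling this quadratic tolerance term $\tfrac{\tau_{\ell}}{2}\|\nu^t\|_1^2$ so that it can be absorbed into $\tfrac{\lambda}{12}\|\nu^t\|_1$ with room to spare. Here I would use feasibility of both $\beta^t$ and $\beta^*$, namely $\|\nu^t\|_1 \le \|\beta^t\|_1 + \|\beta^*\|_1 \le 2R$, to reduce it to $R\tau_{\ell}\|\nu^t\|_1$, and then invoke the condition $\lambda\ge \tfrac{16R\xi}{1-\kappa}$ together with $\xi\ge 10\tau(\loss)$ to conclude that $R\tau_\ell$ is a sufficiently small multiple of $\lambda$. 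After collecting the $\|\nu^t\|_1$ coefficients and using $\|\nu^t\|_1 = \|\nu^t_S\|_1+\|\nu^t_{S^c}\|_1$, the inequality rearranges to the desired relaxed cone bound with error $\bar\eta/\lambda$, while the alternative $R$ in the minimum comes from the trivial crude bound $\|\nu^t_{S^c}\|_1\le \|\nu^t\|_1\le 2R$. Finally, $\|\nu^t\|_1 \le \sqrt{d}\twonorm{\nu^t_S} + \|\nu^t_{S^c}\|_1 \le 4\sqrt{d}\twonorm{\nu^t} + 2\min(2\bar\eta/\lambda,R)$, and combining with $\twonorm{\nu^t}\le \twonorm{\hat\Delta^t}+\twonorm{\hat\nu}$ and the Lasso cone bound on $\hat\nu$ yields the stated inequality.
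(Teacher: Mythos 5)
Your argument is correct and is essentially the proof the paper points to (it omits the proof and defers to Lemma~1 of Loh--Wainwright, which is exactly this argument): derive the relaxed cone inequality for $\nu^t=\beta^t-\beta^*$ from $\phi(\beta^t)\le\phi(\hat\beta)+\bar\eta\le\phi(\beta^*)+\bar\eta$, absorb the RSC tolerance term via $\onenorm{\nu^t}\le 2R$ together with $\lambda\ge 16R\xi/(1-\kappa)$ and $\xi>10\tau(\loss)$, apply the same bound with $\bar\eta=0$ to $\hat\nu=\hat\beta-\beta^*$, and combine by the triangle inequality with $\twonorm{\nu^t}\le\twonorm{\beta^t-\hat\beta}+\twonorm{\hat\nu}$. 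One small correction: the cone bound for $\hat\nu$ should be justified by global optimality of $\hat\beta$ (i.e., $\phi(\hat\beta)\le\phi(\beta^*)$, the $\bar\eta=0$ case of your own argument) rather than by KKT stationarity, which for this nonconvex program would not by itself yield the basic inequality.
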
 
 We next state the following auxiliary result on the loss function. We
 use Lemma~\ref{lemma::Delta1} in the proof of
 Lemma~\ref{lemma::oneextra} and Corollary~\ref{coro::deer}.
\begin{lemma}
\label{lemma::Delta1}
Denote by $\tau_{\ell}(\loss) := \tau_0 \frac{\log m}{n}$ and
$\nu_{\ell} = 64 d  \tau_{\ell}(\loss)$.
Let $\bare = 8\sqrt{d} \ve_{\static}$, where 
$\ve_{\static} = \twonorm{\hat\beta - \beta^*}$ and 
$\e =2 \cdot \min\left(\frac{2 \bar\eta}{\lambda}, R\right)$.
Under the assumptions of Lemma~\ref{lemma::oneextra},
we have for $\hat\Delta^t := \beta^t - \hat{\beta}$ and $t > T$,
\ben
\label{eq::Tlower1}
\T(\hat\beta, \beta^t)   
& \ge &
  \frac{\alpha_{\ell} - \nu_{\ell}}{2} \twonorm{\hat\Delta^t}^2  - 2 \tau_{\ell}(\loss)(\bare + \e)^2 \; \; \text{ and } \; \\
\label{eq::Tupper1}
\phi(\beta^t) - \phi(\hat\beta) &  \ge &  
\T(\beta^t, \hat\beta)  \ge 
\frac{\alpha_{\ell} - \nu_{\ell}}{2} \twonorm{\hat\Delta^t}^2
  - 2 \tau_{\ell}(\loss)(\bare + \e)^2.
\een
\end{lemma}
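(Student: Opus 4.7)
\textbf{Proof proposal for Lemma~\ref{lemma::Delta1}.}
The plan is to combine the Restricted Strong Convexity bound~\eqref{eq::lowerRSC} on $\loss$ with the iterated cone bound from Lemma~\ref{lemma::oneextra} to convert the $\onenorm{\cdot}^2$ tolerance term in RSC into a controlled $\twonorm{\cdot}^2$ term plus an additive constant. Applying~\eqref{eq::lowerRSC} to the ordered pair $(\beta^t, \hat\beta)$ with $\hat\Delta^t = \beta^t - \hat\beta$ yields
\[
\T(\beta^t, \hat\beta) \;\ge\; \tfrac{\alpha_{\ell}}{2} \twonorm{\hat\Delta^t}^2 - \tau_{\ell}(\loss) \onenorm{\hat\Delta^t}^2,
\]
and the same bound holds for the reversed pair $\T(\hat\beta, \beta^t)$ because $\twonorm{\cdot}$ and $\onenorm{\cdot}$ are symmetric in their arguments.

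Next I would invoke Lemma~\ref{lemma::oneextra} together with the definitions of $\bare = 8\sqrt{d}\ve_{\static}$ and $\e = 2\min(2\bar\eta/\lambda, R)$ to get $\onenorm{\hat\Delta^t} \le 4\sqrt{d}\twonorm{\hat\Delta^t} + \bare + \e$, then apply the elementary inequality $(a+b)^2 \le 2a^2 + 2b^2$ to obtain
\[
\onenorm{\hat\Delta^t}^2 \le 32 d \twonorm{\hat\Delta^t}^2 + 2(\bare + \e)^2.
\]
Multiplying by $\tau_{\ell}(\loss)$, recognizing $64 d\tau_{\ell}(\loss) = \nu_{\ell}$, and substituting into the RSC inequality produces exactly
\[
\T(\beta^t, \hat\beta) \;\ge\; \tfrac{\alpha_{\ell} - \nu_{\ell}}{2}\twonorm{\hat\Delta^t}^2 - 2\tau_{\ell}(\loss)(\bare+\e)^2,
\]
which is~\eqref{eq::Tupper1}. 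The same argument applied to $\T(\hat\beta, \beta^t)$ yields~\eqref{eq::Tlower1} since the right-hand side depends on the Taylor remainder through $\twonorm{\cdot}^2$ and $\onenorm{\cdot}^2$, both of which are invariant under swapping $\beta^t \leftrightarrow \hat\beta$.

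The remaining piece is the leftmost inequality in~\eqref{eq::Tupper1}, namely $\phi(\beta^t) - \phi(\hat\beta) \ge \T(\beta^t, \hat\beta)$. Expanding $\phi = \loss + \rho_{\lambda}$ and the definition of $\T(\beta^t, \hat\beta)$, this reduces to showing
\[
\rho_{\lambda}(\beta^t) - \rho_{\lambda}(\hat\beta) + \ip{\grad\loss(\hat\beta),\; \beta^t - \hat\beta} \;\ge\; 0.
\]
This is where I would use that $\hat\beta$ is a global minimizer of~\eqref{eq::global} over the feasible set $\{g(\beta) \le R\}$: first-order optimality gives $-\grad\loss(\hat\beta) = \xi + \eta$ with $\xi \in \partial \rho_{\lambda}(\hat\beta)$ and $\eta$ in the normal cone of the constraint set at $\hat\beta$. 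Since $\beta^t$ is feasible, $\ip{\eta, \beta^t - \hat\beta}\le 0$, and convexity of $\rho_{\lambda}(\beta)=\lambda\onenorm{\beta}$ gives $\rho_{\lambda}(\beta^t) - \rho_{\lambda}(\hat\beta) \ge \ip{\xi, \beta^t - \hat\beta}$, which combined yield the desired nonnegativity. The main conceptual point, and the only nontrivial step, is recognizing that the quadratic-in-$\twonorm{\hat\Delta^t}$ slack produced by squaring the cone bound is precisely what converts the $\alpha_{\ell}/2$ coefficient into the \emph{effective} RSC coefficient $(\alpha_{\ell} - \nu_{\ell})/2$; everything else is bookkeeping.
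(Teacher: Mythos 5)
Your proof is correct and follows essentially the same route as the paper: apply the RSC bound to both ordered pairs, square the iterated cone bound from Lemma~\ref{lemma::oneextra} via $(a+b)^2\le 2a^2+2b^2$ to absorb $32d\tau_{\ell}(\loss)\twonorm{\hat\Delta^t}^2$ into the curvature (yielding $(\alpha_{\ell}-\nu_{\ell})/2$), and use first-order optimality of $\hat\beta$ plus convexity of $\rho_{\lambda}$ for the inequality $\phi(\beta^t)-\phi(\hat\beta)\ge\T(\beta^t,\hat\beta)$. Your normal-cone decomposition of the optimality condition is just a rephrasing of the paper's variational inequality $\ip{\grad\phi(\hat\beta),\beta^t-\hat\beta}\ge 0$ combined with convexity of $g$.
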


\begin{lemma}{\bf (Lemma~3 of Loh-Wainwright (2015))}
\label{lemma::twoextra}
Suppose the RSC and RSM conditions as stated in~\eqref{eq::lowerRSC} 
and~\eqref{eq::upperRSC} hold 
with parameters  $(\alpha_{\ell}, \tau_{\ell}(\loss))$ and $(\alpha_{u},
\tau_{u}(\loss))$ respectively.  
Under the conditions of Theorem~\ref{thm::opt-linear}, 
suppose there exists a pair $(\bar\eta, T)$ such that
\eqref{eq::etatollone} holds.
Then for any iteration $t \ge T$, we have for $0< \kappa < 1$,
\bens
\phi(\beta^t) - \phi(\hat\beta) & \le &  \kappa^{t-T}( \phi(\beta^T) -
\phi(\hat\beta)) + \frac{\xi}{1-\kappa}(\bare^2 + \e^2) \; \; \text{ for } \\ 
\bare  & := & 8\sqrt{d} \ve_{\static} \; \; \text{ and } \; \; 
\e = 2 \cdot \min\left(\frac{2 \bar\eta}{\lambda}, R\right),
\eens
where the quantities $\kappa$ and $\psi$ are as defined in Theorem~\ref{thm::opt-linear} (cf.~\eqref{eq::definekappa} and \eqref{eq::definexi}).
\end{lemma}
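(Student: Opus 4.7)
The plan is to establish a one-step contraction inequality of the form $\delta^{t+1} \le \kappa\, \delta^t + \xi(\bare^2 + \e^2)$ and then iterate it to obtain the geometric bound. Writing $\hat\Delta^t = \beta^t - \hat\beta$ and $\delta^t = \phi(\beta^t) - \phi(\hat\beta)$, the proof proceeds in four steps, essentially following the composite gradient descent analysis of Agarwal--Negahban--Wainwright and its adaptation in Loh--Wainwright to our nonconvex corrected Lasso setting.

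First, I would exploit the defining property of the composite gradient step~\eqref{eq::optupdatelinear}: since $\beta^{t+1}$ is a minimizer of the surrogate quadratic, it satisfies a first-order optimality condition relative to any feasible point, in particular relative to $\hat\beta$ and to $\beta^t$ itself. Applying this with the RSM upper bound~\eqref{eq::upperRSC} on $\loss$ and convexity of $\rho_\lambda$ (recall $\mu = 0$ for $\rho_\lambda(\beta) = \lambda\onenorm{\beta}$) gives a progress inequality of the form
\begin{equation*}
\phi(\beta^{t+1}) - \phi(\hat\beta) \;\le\; \Bigl(1 - \tfrac{\bar\alpha_\ell}{4\zeta}\Bigr)\bigl(\phi(\beta^t) - \phi(\hat\beta)\bigr) + 2\tau_u(\loss)\bigl(\onenorm{\hat\Delta^{t+1}}^2 + \onenorm{\hat\Delta^t}^2\bigr),
\end{equation*}
where the tolerance terms appear because RSM only controls the quadratic part of $\loss$ up to an $\ell_1$ remainder.

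Second, I would invoke the iterated cone bound (Lemma~\ref{lemma::oneextra}): under the assumption~\eqref{eq::etatollone}, for all $s \ge T$ we have $\onenorm{\hat\Delta^s} \le 4\sqrt{d}\twonorm{\hat\Delta^s} + \bare + \e$, hence
\begin{equation*}
\onenorm{\hat\Delta^s}^2 \;\le\; 32\, d\,\twonorm{\hat\Delta^s}^2 + 2(\bare + \e)^2.
\end{equation*}
Third, I would translate $\twonorm{\hat\Delta^s}^2$ back into the objective gap via the RSC-type inequality~\eqref{eq::Tupper1} of Lemma~\ref{lemma::Delta1}:
\begin{equation*}
\twonorm{\hat\Delta^s}^2 \;\le\; \tfrac{2}{\bar\alpha_\ell}\Bigl(\delta^s + 2\tau_\ell(\loss)(\bare + \e)^2\Bigr).
\end{equation*}
Plugging the last two displays back into the progress inequality from Step~1 and collecting terms with the definitions $\nu(d,m,n) = 64 d\tau_u(\loss)$ and $\z = 2\nu/\bar\alpha_\ell$, one arrives (after some careful algebra) at
\begin{equation*}
(1 - \z)\delta^{t+1} \;\le\; \Bigl(1 - \tfrac{\bar\alpha_\ell}{4\zeta} + \z\Bigr)\delta^t + \tau(\loss)\Bigl(\tfrac{\bar\alpha_\ell}{4\zeta} + 2\z + 5\Bigr)(\bare^2 + \e^2),
\end{equation*}
which is precisely $\delta^{t+1} \le \kappa\,\delta^t + \tfrac{\xi}{2}(\bare^2+\e^2)$ for $\kappa$ and $\xi$ as defined in~\eqref{eq::definekappa} and~\eqref{eq::definexi}. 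The assumption $\kappa < 1$ built into Theorem~\ref{thm::opt-linear} ensures genuine contraction.

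Fourth, I iterate: applying the recursion $t - T$ times and summing the resulting geometric series in $\kappa$, the error term collapses to $\tfrac{\xi}{2(1-\kappa)}(\bare^2+\e^2) \le \tfrac{\xi}{1-\kappa}(\bare^2+\e^2)$, completing the proof.

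The main obstacle is Step~1 together with the bookkeeping in Step~4. Establishing the one-step descent at the sharp rate $1 - \bar\alpha_\ell/(4\zeta)$ from the prox-update requires carefully using both the minimizing property of $\beta^{t+1}$ and the RSM bound evaluated on the pair $(\beta^{t+1}, \beta^t)$, then introducing $\hat\beta$ via a two-point inequality against the global optimum; the $\ell_1$ remainders produced by RSM at both points $\beta^t$ and $\beta^{t+1}$ must be controlled simultaneously by the cone bound, which is why the iteration hypothesis~\eqref{eq::etatollone} must hold at both $t$ and $t+1$. Verifying that the precise constants assemble into the form of $\kappa$ and $\xi$ prescribed in~\eqref{eq::definekappa} and~\eqref{eq::definexi}, and that the conditions $\z < 1$ and $\kappa < 1$ actually hold under the assumptions of the theorem (via the requirement $\nu_\ell < \alpha_\ell/C$ and the choice $\zeta \ge \alpha_u$), is the delicate bookkeeping part of the argument.
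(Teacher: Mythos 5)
Your proposal is correct and follows essentially the same route as the source: the paper itself states this lemma without proof, citing it as Lemma~3 of Loh--Wainwright (2015), and your four-step argument (one-step progress from the prox-update plus RSM, the iterated cone bound of Lemma~\ref{lemma::oneextra} to control the $\ell_1$ remainders at both $\beta^t$ and $\beta^{t+1}$, RSC via Lemma~\ref{lemma::Delta1} to convert $\twonorm{\hat\Delta^s}^2$ back into $\delta^s$, and iteration of the resulting recursion $(1-\z)\delta^{t+1}\le(1-\frac{\bar\alpha_{\ell}}{4\zeta}+\z)\delta^t+\frac{(1-\z)\xi}{2}(\bare^2+\e^2)$) is exactly the composite-gradient contraction analysis underlying that result, with $\kappa$ and $\xi$ assembling as in~\eqref{eq::definekappa} and~\eqref{eq::definexi}. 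The only bookkeeping caveat is that the tolerance in your Step~1 should also carry the $\tau_{\ell}(\loss)\onenorm{\hat\Delta^t}^2$ contribution from the RSC side, which is harmlessly absorbed since $\tau(\loss)=\tau_{\ell}(\loss)\vee\tau_{u}(\loss)$ appears in $\xi$.
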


\begin{proofof}{Theorem~\ref{thm::opt-linear}} 
We are now ready to put together the final argument for the theorem.
First notice that \eqref{eq::twoerror}  follows from~\eqref{eq::loss}
directly in view of \eqref{eq::Tupper1} and
Lemma~\ref{lemma::oneextra}, where we set $\bar\eta = \delta^2$, 
$\bar\e_{\static} = 8 \sqrt{d} \ve_{\static}$ and
$\e = 2 \min\left(\frac{2 \delta^2}{\lambda}, R\right)$.

Following \eqref{eq::Tupper1}, we have for  $\nl = 64 d \tau_{\ell}(\loss)$,
\bens
\nonumber
\frac{\alpha_{\ell} - \nu_{\ell}}{2} 
\twonorm{\hat\Delta^t}^2
&  \le &  
\phi(\beta^t) - \phi(\hat\beta) +  2 \tau_{\ell}(\loss)(\bare + \e)^2,
\eens
and thus
\ben
\nonumber
\twonorm{\hat\Delta^t}^2
&  \le &  
\frac{2}{\bar\alpha_{\ell}}
(\phi(\beta^t) - \phi(\hat\beta) +  \frac{4}{\bar\alpha_{\ell}}
 \tau_{\ell}(\loss)(\bare + \e)^2 \\
\nonumber
&  \le &  
\frac{2}{\bar\alpha_{\ell}}
\left(\delta^2+  2  \tau_{\ell}(\loss)(2\bare^2 + 2\e^2) \right)\\
&  \le &  
\nonumber
\frac{2}{\bar\alpha_{\ell}} \left(\delta^2 +  2  \tau_{\ell}(\loss)(128 d \ve_{\static}^2 + 2\e^2) \right)\\
&  \le &  
\label{eq::twoerrorlocal}
\frac{2}{\bar\alpha_{\ell}} \left(\delta^2 + 4 \nu_{\ell} \ve_{\static}^2 +
4 \tau_{\ell}(\loss) \e^2 \right).
\een
The remainder of the proof follows an argument in~\cite{ANW12}. We
first prove the following inequality:
\bens
\phi(\beta^t) - \phi(\hat\beta) \le \delta^2, \; \;\quad \forall t \ge T^*(\delta).
\eens
We divide the iterations $t \ge 0$ into a series of epochs $[T_{\ell},
T_{\ell+1}]$ and defend the tolerances $\bar\eta_0 > \bar\eta_1 >
\ldots$ such that 
\bens
\phi(\beta^t) - \phi(\hat\beta) \le \bar\eta_{\ell}, \; \;
\quad \forall t \ge T_{\ell}.
\eens
In the first iteration, we apply Lemma~\ref{lemma::twoextra} with 
$\bar\eta_0 := \phi(\beta^0) - \phi(\hat\beta)$ to obtain
\bens
\phi(\beta^t) - \phi(\hat\beta) \le \kappa^{t}( \phi(\beta^0) -
\phi(\hat\beta)) + \frac{\xi}{1-\kappa}(\bare^2 + 4R^2)
\quad \text{ for any iteration $t \ge 0$}.
\eens 
Set
\bens
\bar\eta_1 :=\frac{2 \xi}{1 - \kappa}(\bare^2 + 4R^2) \; \; \text{ and
} \quad T_1 := \left\lceil{\frac{\log(2 \bar{\eta}_0/\bar{\eta}_1)}{\log
    (1/\kappa)} }\right\rceil.
\eens
Then we have for any iteration $t \ge T_1$
\bens
\phi(\beta^t) - \phi(\hat\beta)  \le \bar\eta_1 :=\frac{4 \xi}{1 - \kappa}
\max\left\{\bare^2,4R^2\right\}.
\eens
The same argument can be now be applied in a recursive manner.
Suppose that for some $\ell \ge 1$, we are given a pair
$(\bar\eta_{\ell},T_{\ell})$ such that
\ben
\label{eq::etatollproof}
\phi(\beta^t) - \phi(\hat\beta) \le \bar\eta_{\ell}, \; \; \forall t \ge T_{\ell}.
\een
We now define 
\bens
\bar\eta_{\ell+1} :=
\frac{2 \xi}{1 - \kappa}(\bare^2 + \e_{\ell}^2) \; \; \text{ and
} \quad T_{\ell+1} := 
\left\lceil{
\frac{\log(2 \bar{\eta}_{\ell}/\bar{\eta}_{\ell+1})}{\log
    (1/\kappa)} }\right\rceil + T_{\ell}.
\eens
We can apply Lemma~\ref{lemma::twoextra} to obtain 
 for any iteration $t \ge T_{\ell}$ and $\ve_{\ell} := 2 \min\{\frac{\bar\eta_{\ell}}{\lambda}, R\}$,
\bens
\phi(\beta^t) - \phi(\hat\beta) \le 
 \kappa^{t-T_{\ell}}( \phi(\beta^{T_{\ell}}) - \phi(\hat\beta)) 
+ \frac{\xi}{1-\kappa}(\bare^2 + \ve_{\ell}^2),
\eens
which implies that  for all $t \ge T_{\ell+1}$,
\bens
\phi(\beta^t) - \phi(\hat\beta) \le 
\bar\eta_{\ell+1}\le \frac{4\xi}{1-\kappa}\max\{\bare^2, \ve_{\ell}^2\}
\eens
by our choice of $\{\eta_{\ell}, T_{\ell}\}_{\ell\ge 1}$.
Finally, we use the recursion 
\ben
\label{eq::iterate}
\bar\eta_{\ell+1} 
\le \frac{4\xi}{1-\kappa}
\max(\bare^2, \ve_{\ell}^2) \quad  \text{and} \quad
T_{\ell} \le \ell +  
\frac{\log(2^{\ell} \bar{\eta}_{0}/\bar{\eta}_{\ell})}{\log (1/\kappa)} 
\een
to establish the recursion that 
\ben
\label{eq::induction}
\bar\eta_{\ell+1} \le \frac{\bar\eta_{\ell}}{4^{2^{\ell-1}}}  \quad \text{and} \quad
\ve_{\ell+1} := \frac{\bar\eta_{\ell+1}}{\lambda} \le \frac{R}{4^{2^{\ell}}} \; \;
\forall \ell=1, 2, \ldots.
\een
Taking these statements as given, we need to have 
\bens
\bar\eta_{\ell} \le \delta^2.
\eens
It is sufficient to establish that 
\bens
\frac{\lambda R}{4^{2^{\ell-1}}} \le \delta^2.
\eens
Thus we find that the error drops below $\delta^2$ after at most 
\bens
\ell_{\delta} \ge \log \left(\log(R \lambda/\delta^2) / \log (4)\right)
/\log 2 + 1 = \log \log(R \lambda/\delta^2)
\eens
epochs. Combining the above bound on $\ell_{\delta}$ with the
recursion~\eqref{eq::iterate} 
\bens
T_{\ell} \le \ell +  
\frac{\log(2^{\ell} \bar{\eta}_{0}/\bar{\eta}_{\ell})}{\log (1/\kappa)}, 
\eens
we conclude that
\bens
\phi(\beta^t) - \phi(\hat\beta) \le \delta^2
\eens
is guaranteed to hold for all iterations
\bens
t> \ell_{\delta}\left(1+ \frac{\log 2}{\log(1/\kappa)}\right) + 
\frac{\log (\bar\eta_0/\delta^2)}{\log(1/\kappa)}.
\eens
To establish \eqref{eq::induction}, we start with $\ell=0$ and establish 
that for $\bare = 8 \sqrt{d} \e_{\static} =o(\sqrt{d}) = o(R)$
\ben
\label{eq::baseR}
\frac{\bar\eta_1}{\lambda}
& := & \frac{4 \xi}{(1 - \kappa)\lambda}\max(\bare^2, 4R^2)  =\frac{16 R
  \xi}{(1 - \kappa)\lambda} R  \le \frac{R}{4}\; \\
 \text{ and thus } \; \; \ve_1 &  :=  & 2 \min\{\frac{\bar\eta_1}{\lambda}, R\}
= R/2 \le \ve_0 = R.
\een 
Assume that $\bare \le \ve_1$ (otherwise, we are
done at the first iteration).
First, we  obtain for $\ell=1$, 
\bens
\bar\eta_{2} & \le &  
\frac{4\xi}{1-\kappa} \max(\bare^2, \ve_{1}^2) = \frac{4\xi}{1-\kappa}
\ve_{1}^2 = \frac{4\xi}{1-\kappa} \left(\frac{2 \bar\eta_1}{\lambda}\right)^2 \\
& \le & 
\frac{16 \xi}{1-\kappa} \frac{\bar\eta_1^2}{\lambda^2} 
\le \frac{16 \xi R}{1-\kappa} \frac{\bar\eta_1}{4 \lambda} \le 
 \frac{\bar\eta_1}{4},  \\
\text{ and } \quad
 \frac{ \bar\eta_{2}}{\lambda}  & \le &  \frac{ \bar\eta_{1}}{4
   \lambda}   \le \frac{R}{16},  
\eens
where in the last three steps, we use the fact that $\lambda \ge
\frac{16 R \xi}{(1 - \kappa)}$ and  \eqref{eq::baseR}.
Thus~\eqref{eq::iterate} holds for $\ell=1$.
 
Now assume that \eqref{eq::induction} holds for $d \le \ell$.
In the induction step, we again use the assumption that $\ve_{\ell} :=
2 \frac{\bar\eta_{\ell}}{\lambda} \ge \bare$ and~\eqref{eq::iterate} to obtain 
\bens
\bar\eta_{\ell+1} & \le &   
\frac{4\xi}{1-\kappa} \max(\bare^2, \ve_{\ell}^2)
 =
\frac{16 \xi}{1-\kappa} \frac{\bar\eta_{\ell}^2}{\lambda^2}  \\
& \le & \frac{16 \xi}{1-\kappa} \frac{R}{4^{2^{(\ell-1)}}} 
\frac{\bar\eta_{\ell}}{\lambda} 
  = \frac{16 R \xi}{1-\kappa} \inv{\lambda} \frac{\bar\eta_{\ell}}
{4^{2^{(\ell-1)}}} \\
& \le &   \frac{\bar\eta_{\ell}}{4^{2^{(\ell-1)}}}.
\eens
Finally,  by the induction assumption
\bens
\frac{\bar\eta_{\ell}}{\lambda} \le \frac{R}{4^{2^{\ell-1}}},
\eens
we use the bound immediately above to obtain 
\bens
\frac{\bar\eta_{\ell+1}}{\lambda}  & \le &   
\frac{\bar\eta_{\ell}}{4^{2^{(\ell-1)}}} \inv{\lambda}
\le \frac{R}{4^{2^{\ell-1}}}\inv {4^{2^{(\ell-1)}}} \le 
\frac{R}{4^{2^{\ell}}}.
\eens
The rest of the proof follows from that of Corollary~\ref{coro::deer}.
This is the end of the proof for Theorem~\ref{thm::opt-linear}.
\end{proofof}

It remains to prove Lemma~\ref{lemma::Delta1}.

\begin{proofof}{Lemma~\ref{lemma::Delta1}}
Using the RSC condition, we have for $\tau_{\ell}(\loss) := \tau_0
\frac{\log m}{n}$ and 
$\nl  =  64 d \tau_{\ell}(\loss) \le
\frac{\alpha_{\ell}}{48}$,
\ben
\label{eq::TRSC1}
\T(\hat\beta, \beta^t) 
& \ge &
\frac{\alpha_{\ell}}{2} 
\twonorm{\hat\Delta^t}^2  - \tau_{\ell}(\loss)
\onenorm{\hat\Delta^t}^2  \\
\nonumber
& \ge & \frac{\alpha_{\ell}}{2} \twonorm{\hat\Delta^t}^2  - \tau_{\ell}(\loss)
\left(2 * 16 d \twonorm{\hat\Delta^t}^2  + 2 (\bare+ \e)^2\right) \\ 
\nonumber
& \ge &
\half\bar\alpha_{\ell}
\twonorm{\hat\Delta^t}^2   -2 \tau_{\ell}(\loss)(\bare + \e)^2
\een
and by Lemma~\ref{lemma::oneextra}, for any iteration $t \ge T$, 
\bens
\onenorm{\hat\Delta^t - \hat\beta} 
&\le &  
4 \sqrt{d} \twonorm{\beta^t - \hat\beta} + 8 \sqrt{d}
\twonorm{\hat\beta - \beta^*} + 2 \cdot \min\left(\frac{2
    \bar\eta}{\lambda}, R\right) \\
&\le &  
4 \sqrt{d} \twonorm{\hat\Delta^t} +(\bare + \e).
\eens
By convexity of function $g$, we have 
\ben
g(\beta^t) - g(\hat\beta)
 - \ip{\grad{g}(\hat\beta), \beta^t - \hat\beta} \ge 0.
\een
Thus
\bens
\lefteqn{
\phi(\beta^t) - \phi(\hat\beta)
 - \ip{\grad{\phi}(\hat\beta), \beta^t - \hat\beta} } \\
& = & 
\loss(\beta^t) - \loss(\hat\beta) -
\ip{\grad\loss(\hat\beta), \beta^t - \hat\beta}  
+ \lambda (g(\beta^t) - g(\hat\beta)
 - \ip{\grad{g}(\hat\beta), \beta^t - \hat\beta} ). 
\eens
Moreover, by the first order optimality condition for $\hat\beta$,  we have  for all feasible $\beta^t \in \Omega$
\bens
\ip{\grad{\phi}(\hat\beta), \beta^t - \hat\beta} \ge 0,
 \eens
and thus
\bens
\phi(\beta^t) - \phi(\hat\beta) & \ge &  
\loss(\beta^t) - \loss(\hat\beta) -
\ip{\grad\loss(\hat\beta), \beta^t - \hat\beta} = \T(\beta^t, \hat\beta),
\eens
where similar to \eqref{eq::TRSC1}, we have 
\bens
\T(\beta^t, \hat\beta)
 & \ge & \alpha_1 \twonorm{\hat\Delta^t}^2  -
\tau_{\ell}(\loss) \onenorm{\hat\Delta^t}^2 \\
& \ge &
 (\alpha_1 - 32 d \tau_{\ell}(\loss))
 \twonorm{\hat\Delta^t}^2  - 2 \tau_{\ell}(\loss)(\bare + \e)^2 \\
& = &
\half \bar\alpha_{\ell} \twonorm{\hat\Delta^t}^2  - 2 \tau_{\ell}(\loss)(\bare + \e)^2,
\eens
and by  Lemma~\ref{lemma::oneextra},
\bens
\onenorm{\hat\Delta^t}^2 
& \le & 32 d \twonorm{\hat\Delta^t}^2  + 
2 \left(8\sqrt{d} \ve_{\static} + 2 \cdot \min\left(\frac{2
      \bar\eta}{\lambda}, R\right)\right)^2 \\
& \le & 32 d \twonorm{\hat\Delta^t}^2  + 2 (\bare + \e)^2.
\eens
\end{proofof}

\section{Some auxiliary results}
\label{sec::aux}
We first need to state the following form of the Hanson-Wright
inequality as recently derived in Rudelson and Vershynin~\cite{RV13},
and an auxiliary result in Lemma~\ref{lemma::oneeventA} which may be
of independent interests.
\begin{theorem}
\label{thm::HW}
Let $X = (X_1, \ldots, X_m) \in \R^m$ be a random vector with independent components $X_i$ which satisfy
$\expct{X_i} = 0$ and $\norm{X_i}_{\psi_2} \leq K$. Let $A$ be an $m \times m$ matrix. Then, for every $t > 0$,
\bens
\prob{\abs{X^T A X - \expct{X^T A X} } > t} 
\leq 
2 \exp \left[- c\min\left(\frac{t^2}{K^4 \fnorm{A}^2}, \frac{t}{K^2 \twonorm{A}} \right)\right].
\eens
\end{theorem}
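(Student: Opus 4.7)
The plan is to follow the Rudelson--Vershynin strategy of estimating the moment generating function of $S := X^T A X - \mathbb{E}\, X^T A X$ and then applying a Chernoff bound optimized over the free parameter $\lambda > 0$, combined with a symmetric bound applied to $-S$. The starting point is the decomposition
\begin{equation*}
S \;=\; \sum_{i=1}^m a_{ii}(X_i^2 - \mathbb{E} X_i^2) \;+\; \sum_{i\neq j} a_{ij} X_i X_j \;=:\; S_d + S_o,
\end{equation*}
so that the diagonal and off-diagonal contributions can be treated separately and then combined by the union bound applied to the event $\{|S_d| > t/2\} \cup \{|S_o| > t/2\}$.

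For the diagonal part $S_d$, I would observe that each variable $X_i^2 - \mathbb{E} X_i^2$ is centered and subexponential with $\|\,\cdot\,\|_{\psi_1} \le 2K^2$ (an easy consequence of $\|X_i\|_{\psi_2}\le K$ and the identity $\|Y^2\|_{\psi_1}=\|Y\|_{\psi_2}^2$). Then Bernstein's inequality for independent centered subexponentials applied to the weights $\{a_{ii}\}$ yields a tail of the required form with variance proxy $K^4\sum_i a_{ii}^2 \le K^4\|A\|_F^2$ and sup-norm proxy $K^2 \max_i |a_{ii}| \le K^2\|A\|_2$, which is already of the mixed sub-Gaussian/sub-exponential shape appearing in the theorem.

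The heart of the argument, and the main obstacle, is the off-diagonal sum $S_o$, which is a Rademacher-type chaos of order 2. Here I would use a two-step reduction: first a decoupling inequality (the standard one for tetrahedral polynomials due to de la Peña--Montgomery-Smith) to replace $\sum_{i\neq j} a_{ij} X_i X_j$ by a constant multiple of the decoupled form $\sum_{i,j} a_{ij} X_i X_j'$ where $X'$ is an independent copy of $X$; and second, a comparison step replacing $X$ and $X'$ by independent standard Gaussians $g,g'$, using that subgaussian laws satisfy the right moment comparison up to factors of $K$. Conditioning on $g'$, the quantity $g^T A g'$ is Gaussian with variance $\|A g'\|_2^2$, so Gaussian concentration gives $\mathbb{E}_g \exp(\lambda g^T A g') \le \exp(\lambda^2 \|A g'\|_2^2/2)$; integrating over $g'$ using Hanson-style estimates for $\|A g'\|_2^2$ then produces a moment generating function bound valid for $|\lambda| \lesssim 1/(K^2\|A\|_2)$, with quadratic behavior governed by $K^4\|A\|_F^2$.

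With the two MGF bounds in hand, a Chernoff argument optimized over $\lambda$ gives $\mathbb{P}(|S_o| > t/2) \le 2\exp(-c\min(t^2/(K^4\|A\|_F^2), t/(K^2\|A\|_2)))$, and combining with the diagonal estimate of the same shape produces the claimed inequality (after adjusting the constant $c$). The main technical hurdle is keeping explicit track of the universal constants through decoupling and the Gaussian comparison, since the statement is sharp in its exponential rate; the diagonal piece is routine Bernstein, and all the work lies in proving the subgaussian chaos inequality for $S_o$ cleanly enough to match the two-regime exponent $\min(t^2/\|A\|_F^2, t/\|A\|_2)$.
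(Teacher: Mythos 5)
The paper does not prove Theorem~\ref{thm::HW} at all: it is quoted verbatim as a known result of Rudelson and Vershynin \cite{RV13}, so there is no in-paper argument to compare against. Your sketch is essentially the proof from that source — diagonal/off-diagonal split, Bernstein for the subexponential diagonal sum, and decoupling plus Gaussian comparison plus a Chernoff bound for the order-two chaos — and the plan is sound. One small point of hygiene: at the step where you integrate $\exp\left(\lambda^2 \twonorm{Ag'}^2/2\right)$ over $g'$, you should not invoke "Hanson-style estimates" (which would sound circular); instead diagonalize $A$ via its singular value decomposition so that $\twonorm{Ag'}^2$ is an explicit weighted chi-square, whose moment generating function equals $\prod_i (1-\lambda^2 s_i^2)^{-1/2}$ for $\lambda \twonorm{A} < 1$ and is bounded by $\exp\left(C\lambda^2 \fnorm{A}^2\right)$ in the range $\lambda \le 1/(2\twonorm{A})$; this closes the argument without any appeal to the inequality being proved.
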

We note that following the proof of Theorem~\ref{thm::HW}, it
is clear that the following holds:
Let $X = (X_1, \ldots, X_m) \in \R^m$ be a random vector 
as defined in Theorem~\ref{thm::HW}.
Let $Y, Y'$ be  independent copies of $X$. Let $A$ be an $m \times m$ matrix. Then, for every $t > 0$,
\ben
\label{eq::HWdecoupled} 
\prob{\abs{Y^T A Y'} > t} 
\leq 
2 \exp \left[- c\min\left(\frac{t^2}{K^4 \fnorm{A}^2}, \frac{t}{K^2 \twonorm{A}} \right)\right].
\een
We next need to state Lemma~\ref{lemma::oneeventA}, which we prove in Section~\ref{sec::proofoftensorA}.
\begin{lemma}
\label{lemma::oneeventA}
Let $u, w \in S^{n-1}$. 
Let $A \succ 0$ be an $m \times m$ symmetric positive definite matrix.
Let $Z$ be an $n \times m$
random matrix with independent entries $Z_{ij}$ satisfying
$\E Z_{ij} = 0$ and  $\norm{Z_{ij}}_{\psi_2} \leq K$.
Let $Z_1, Z_2$ be independent copies of $Z$.
Then for every $t > 0$,
\bens
\prob{\abs{u^T Z_1 A^{1/2} Z_2^T w} >  t}
& \le & 
2 \exp\left(-c\min\left(\frac{t^2}{K^4\tr(A)}, \frac{t}{K^2 \twonorm{A}^{1/2}}\right)\right), \\
\prob{\abs{u^T Z A Z^T w - \E u^T Z A Z^T w } >  t}
& \le & 
2 \exp\left(-c\min\left(\frac{t^2}{K^4\fnorm{A}^2}, \frac{t}{K^2 \twonorm{A}}\right)\right),
\eens
where $c$ is the same constant as defined in Theorem~\ref{thm::HW}.
\end{lemma}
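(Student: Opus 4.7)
\begin{proofof2}
\textit{(Proposal for Lemma~\ref{lemma::oneeventA}.)} The plan is to reduce both inequalities to the (standard, respectively decoupled) Hanson--Wright bound stated in Theorem~\ref{thm::HW} and equation~\eqref{eq::HWdecoupled}, by exhibiting each expression as a bi/quadratic form in a vector of independent subgaussian coordinates and computing the relevant Frobenius and operator norms.

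For the first bound, I would set $X := Z_1^{T} u \in \R^{m}$ and $Y := Z_2^{T} w \in \R^{m}$, so that $u^{T} Z_1 A^{1/2} Z_2^{T} w = X^{T} A^{1/2} Y$. Because $Z_1$ has independent entries, the coordinates $X_j = \sum_i u_i Z_{1,ij}$ are independent (they involve disjoint entries of $Z_1$), each mean zero, and by the standard rotation property of subgaussians $\norm{X_j}_{\psi_2} \le C K \twonorm{u} = CK$; the analogous statement holds for $Y$, and $X \perp Y$ since $Z_1 \perp Z_2$. Applying the decoupled Hanson--Wright inequality~\eqref{eq::HWdecoupled} to the matrix $A^{1/2}$ then gives a bound with $\fnorm{A^{1/2}}^2 = \tr(A)$ and $\twonorm{A^{1/2}} = \twonorm{A}^{1/2}$, which is exactly the first claim (the constant $C$ from the $\psi_2$ bound on $X_j, Y_j$ is absorbed into $c$).

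For the second bound, the key step is to realize the scalar $u^{T} Z A Z^{T} w$ as a quadratic form $z^{T} M z$ in the $nm$-dimensional vector $z = \vec(Z)$ of independent subgaussian coordinates, and to compute $\twonorm{M}$ and $\fnorm{M}$. Expanding coordinate-wise,
\begin{equation*}
u^{T} Z A Z^{T} w \;=\; \sum_{l,p,i,k} u_l\, A_{pk}\, w_i\, Z_{lp}\, Z_{ik},
\end{equation*}
so, indexing the entries of $z$ by pairs $(l,p)$ with $l \in \{1,\ldots,n\}$ and $p \in \{1,\ldots,m\}$ in lexicographic order, the matrix $M$ with entries $M_{(l,p),(i,k)} = u_l w_i A_{pk}$ is precisely the Kronecker product
\begin{equation*}
M \;=\; (u\, w^{T}) \otimes A.
\end{equation*}
By multiplicativity of Frobenius and operator norms under Kronecker products, and since $\twonorm{uw^{T}} = \fnorm{uw^{T}} = 1$, this gives $\fnorm{M} = \fnorm{A}$ and $\twonorm{M} = \twonorm{A}$. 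Symmetrizing to $\tilde M = (M + M^{T})/2$ (which does not increase either norm) and applying Theorem~\ref{thm::HW} to $z^{T} \tilde M z$ delivers the second inequality.

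I do not foresee a substantive obstacle: the only non-mechanical step is the Kronecker identification $M = (uw^{T}) \otimes A$, and once this is in hand the norm computations and application of Theorem~\ref{thm::HW}/\eqref{eq::HWdecoupled} are routine. The one small bookkeeping point is that replacing $K$ by $CK$ in the subgaussian bound on $X_j, Y_j$ (in part one) degrades the constant in the tail bound by a factor depending only on $C$, which can be absorbed into the absolute constant $c$.
\end{proofof2}
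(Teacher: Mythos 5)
Your proof of the second inequality is essentially the paper's own argument: you vectorize $Z$, identify the quadratic form's matrix as the Kronecker product $(uw^{T})\otimes A$ (the paper writes this as $(u\otimes w)\otimes A$ and records the norm bounds $\twonorm{(u\otimes w)\otimes A}\le\twonorm{A}$, $\fnorm{(u\otimes w)\otimes A}\le\fnorm{A}$ in its Lemma~\ref{lemma::Auv}), and apply Theorem~\ref{thm::HW}. The symmetrization step is harmless but unnecessary, since Theorem~\ref{thm::HW} is stated for an arbitrary square matrix.

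For the first inequality you take a genuinely different reduction. The paper stays in dimension $nm$: it writes $u^{T}Z_1A^{1/2}Z_2^{T}w=\mvec{Z_1^T}^{T}\bigl((u\otimes w)\otimes A^{1/2}\bigr)\mvec{Z_2^T}$ and applies the decoupled bound~\eqref{eq::HWdecoupled} directly to the raw entries of $Z_1,Z_2$, using $\fnorm{(u\otimes w)\otimes A^{1/2}}^2\le\tr(A)$ and $\twonorm{(u\otimes w)\otimes A^{1/2}}\le\twonorm{A}^{1/2}$. You instead collapse to dimension $m$ via $X=Z_1^{T}u$, $Y=Z_2^{T}w$ and apply~\eqref{eq::HWdecoupled} to $X^{T}A^{1/2}Y$. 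Your reduction is correct (the $X_j$ do live on disjoint columns of $Z_1$, hence are independent) and is arguably cleaner, but it costs two small things. First, the rotation step $\norm{X_j}_{\psi_2}\le CK\twonorm{u}$ carries an absolute constant $C>1$ for the Orlicz-norm definition of $\psi_2$ used here, so the tail bound you obtain has constant $c/C^4$ rather than literally ``the same constant $c$ as in Theorem~\ref{thm::HW}''; this is immaterial downstream (where only the existence of such a constant is used) but does not match the lemma's statement verbatim, whereas the paper's route preserves $c$ exactly. Second, $X$ and $Y$ are independent but not identically distributed when $u\neq w$ and the $Z_{ij}$ are not i.i.d., so you are invoking~\eqref{eq::HWdecoupled} slightly outside its stated hypothesis (``independent copies''); the decoupled Hanson--Wright proof does go through for two independent vectors with independent mean-zero subgaussian coordinates, but you should say so, since the paper's application keeps $\mvec{Z_1^T}$ and $\mvec{Z_2^T}$ as genuine independent copies.
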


\subsection{Proof of Lemma~\ref{lemma::trBest}}
\label{sec::proofofTR}
\begin{proofof2}
First we write
\bens
 X X^T - \tr(A) I_{n}
& = & 
\big( Z_1 A^{1/2} + B^{1/2} Z_2) \big(Z_1 A^{1/2} +B^{1/2} Z_2\big)^T - \tr(A) I_{n} \\ 
& = & 
\big( Z_1 A^{1/2} + B^{1/2} Z_2) \big(Z_2^T B^{1/2} +A^{1/2} Z_1^T\big) 
- \tr(A) I_{n} \\
& = & 
 Z_1 A^{1/2} Z_2^T B^{1/2} + B^{1/2} Z_2  Z_2^T B^{1/2} \\
&& + B^{1/2} Z_2  A^{1/2} Z_1^T+
 Z_1 A Z_1^T- \tr(A) I_{n}.
\eens
Thus we have for $\check\tr(B)  := \onem \big(\fnorm{X}^2 -n \tr(A)\big)$,
\bens
\lefteqn{\onen (\check\tr(B) - \tr(B)) := 
\inv{mn}\big(\fnorm{X}^2 - n\tr(A) -m \tr(B) \big) } \\
& = & \inv{mn} (\tr(XX^T) -   n \tr(A) - m\tr(B))\\
& = &  
\frac{2}{mn} 
\tr( Z_1 A^{1/2} Z_2^T B^{1/2} ) + 
\left(\frac{\tr(B^{1/2} Z_2  Z_2^T B^{1/2})}{mn} -\frac{
    \tr(B)}{n}\right) \\
&&
+ \frac{\tr( Z_1 A Z_1^T)}{mn}- \frac{\tr(A)}{m}.
\eens
By constructing a new matrix $A_n = I_n \otimes A$, which is block diagonal with $n$
identical submatrices $A$ along its diagonal, we prove the following
large deviation bound: for $t_1 = C_0 K^2 \fnorm{A} \sqrt{n \log m }$
and $n > \log m$,
\bens
\lefteqn{\prob{\abs{\tr(Z_1 A Z_1^T)- n \tr(A)} \ge t_1} = 
\prob{\abs{\mvec{Z_1}^T (I \otimes A) \mvec{Z_1}- n \tr(A)} \ge t_1}
}\\
& \le &  \exp\left(-c\min\left(\frac{t_1^2}{K^4 \fnorm{A_n}^2},
    \frac{t_1}{K^2 \twonorm{A_n}}\right)\right) \\
& \le & 2\exp\left(-c\min\left(\frac{ (C_0 K^2 \sqrt{n \log m}
      \fnorm{A})^2}{n K^4\fnorm{A}^2},
\frac{C_0 K^2 \sqrt{n \log m} \fnorm{A}}{K^2 \twonorm{A}}\right)\right) \\
& \le &  2 \exp\left(-4 \log m\right),
\eens
where the first inequality holds by Theorem~\ref{thm::HW} and the
second inequality holds given that $\fnorm{A_n}^2 = n \fnorm{A}^2$ and
$\twonorm{A_n} = \twonorm{A}$.

Similarly, by constructing a new matrix $B_m = I_m \otimes B$, which is block diagonal with $m$
identical submatrices $B$ along its diagonal, we prove the following
large deviation bound: for $t_2 =C_0  K^2\fnorm{B} \sqrt{m \log m}$
and $m \ge 2$,
\bens
\lefteqn{\prob{\abs{\tr(Z_2^T B Z_2)- m \tr(B)} \ge t_2} = 
\prob{\abs{\mvec{Z_2}^T (I_m \otimes B) \mvec{Z_2}- m \tr(B)} \ge t_2}}\\
& \le &  \exp\left(-c\min\left(\frac{t_2^2}{K^4 m \fnorm{B}^2}, \frac{t_2}{K^2 \twonorm{B}}\right)\right) \\
& \le & 2\exp\left(-c\min\left(\frac{ (C_0 K^2 \sqrt{m \log m}  \fnorm{B})^2}{K^4 m\fnorm{B}^2},
\frac{C_0 K^2 \sqrt{m \log m} \fnorm{B}}{K^2 \twonorm{B}}\right)\right) \\
& \le &  2 \exp\left(-4 \log m\right).
\eens
Finally, we have by~\eqref{eq::HWdecoupled} for $t_0 = C_0 K^2 \sqrt{\tr(A) \tr(B) \log m}$, 
\bens
\lefteqn{\prob{ \abs{\mvec{Z_1}^T B^{1/2} \otimes
    A^{1/2} \mvec{Z_2}} >t_0}} \\
& \le & 2\exp\left(-c\min\left(\frac{t_0^2}{K^4 
\fnorm{B^{1/2} \otimes A^{1/2}}^2},\frac{t_0}{K^2 \twonorm{B^{1/2}\otimes A^{1/2}}}\right)\right)\\
& = & 
2\exp\left(-c\min\left(\frac{(C_0 \sqrt{\tr(A) \tr(B) \log
        m})^2}{\tr(A) \tr(B)}, 
\frac{C_0\sqrt{\tr(A) \tr(B) \log m} }{\twonorm{B}^{1/2} \twonorm{A}^{1/2}}\right)\right)\\
& \le & 2 \exp(-4 \log m),
\eens
where we use the fact that $r(A) r(B) \ge \log m$,
$\twonorm{B^{1/2} \otimes A^{1/2}} = \twonorm{B}^{1/2} \twonorm{A}^{1/2}$ and
\bens
\fnorm{B^{1/2} \otimes A^{1/2}}^2 & = & 
\tr((B^{1/2} \otimes A^{1/2})(B^{1/2} \otimes A^{1/2}))
=\tr(B \otimes A) = \tr(A)\tr(B).
\eens
Thus we have with probability $1- 6/m^4$,
\bens
\lefteqn{\onen \abs{\check\tr(B) - \tr(B)}
= \inv{mn} \abs{\tr(XX^T) -   f\tr(A) - m\tr(B)}}\\
& \le &
\frac{2}{mn} \abs{\mvec{Z_1}^T (B^{1/2} \otimes A^{1/2})\mvec{Z_2}} \\
&& + \abs{\frac{\tr(Z_2^T B Z_2)}{mn} - \frac{\tr(B)}{n}}
+ \abs{\frac{\tr(Z_1 A Z_1^T)}{mn}- \frac{\tr(A)}{m }}\\
& \le &    \inv{mn} (2t_0 + t_1 + t_2 )  =\frac{\sqrt{\log
    m}}{\sqrt{m n}}  C_0 K^2 \left(\frac{\fnorm{A}  }{\sqrt{m} }+ 2\sqrt{\tau_A \tau_B} +
  \frac{\fnorm{B}}{\sqrt{n}}\right) \\
& \le & 2 C_0 \frac{\sqrt{\log m}}{\sqrt{m n}}  K^2 D_1 =: D_1 r_{m,m},
\eens
where recall  $ r_{m,m} = 2 C_0 K^2 \frac{\sqrt{\log m}}{\sqrt{m n}}$,
$D_1=\frac{\fnorm{A}  }{\sqrt{m} }+ \frac{\fnorm{B}}{\sqrt{n}}$, and
\bens
 2\sqrt{\tau_A \tau_B} \le  \tau_A + \tau_B \le \frac{\fnorm{A}
 }{\sqrt{m} }+  \frac{\fnorm{B}}{\sqrt{n}}.
\eens
To see this, recall
\ben
\label{eq::tracefnorm}
m \tau_A & = & \sum_{i=1}^m\lambda_{i}(A) \le \sqrt{m} (\sum_{i=1}^m
\lambda^2_{i}(A))^{1/2} = \sqrt{m} \fnorm{A} \; \quad \text{ and }\\
\nonumber
n \tau_B & = & \sum_{i=1}^n \lambda_{i}(B) \le \sqrt{n} (\sum_{i=1}^n
\lambda^2_{i}(B))^{1/2} = \sqrt{n} \fnorm{B} 
\een
where $\lambda_{i}(A), i=1, \ldots, m$ and $\lambda_{i}(B), i=1,
\ldots, n$  denote the eigenvalues of positive semidefinite covariance
matrices $A$ and $B$ respectively.

Denote  by $\B_6$ the following event  
$$\left\{\onen \abs{\check\tr(B) - \tr(B)}  \le D_1 r_{m,m}\right\}.$$
Clearly $\hat\tr(B) := (\check\tr(B))_+$ by definition~\eqref{eq::trBest}.
As a consequence, on $\B_6$, 
 $\hat\tr(B) = \check\tr(B) > 0$ when   $\tau_B > D_1 r_{m,m}$;  hence
\bens
\onen \abs{\hat\tr(B) - \tr(B)} = \onen \abs{\check\tr(B) - \tr(B)} \le
D_1 r_{m,m}.
\eens
Otherwise,  it is possible that $\check\tr(B) < 0$.
However, suppose we set  
$$\hat \tau_B : = 
\onen \hat\tr(B) :=\onen (\check\tr(B) \vee 0),$$ 
then we can also guarantee that 
\bens
\abs{\hat\tau_B - \tau_B} = \abs{\tau_B} \le D_1 r_{m,m} \; \; 
\text{ in case } \; \; \tau_B \le D_1 r_{m,m}.
\eens
The lemma is thus proved.
\end{proofof2}

\subsection{Proof of Lemma~\ref{lemma::Tclaim1}}
\label{sec::proofTclaim1}

Following Lemma~\ref{lemma::oneeventA}, we have for all $t >0$, $B
\succ 0$ being an $n \times n$ symmetric positive definite matrix, and 
$v, w \in \R^m$
\ben
\label{eq::crossB}
\prob{\abs{v^T Z_1^T B^{1/2} Z_2 w} >  t}
 \le  
2 \exp\left[-c\min\left(\frac{t^2}{K^4\tr(B)},
 \frac{t}{K^2 \twonorm{B}^{1/2}}\right) \right]
\een
and
\ben
\prob{\abs{v^T Z^T B Z w - \E v^T Z^T B Z w } >  t} \le 
2 \exp\left(-c\min\left(\frac{t^2}{K^4\fnorm{B}^2},
 \frac{t}{K^2 \twonorm{B}}\right)\right).
\een
\begin{proofof}{Lemma~\ref{lemma::Tclaim1}}
Let $e_1, \ldots, e_m \in \R^m$ be the canonical basis spanning
$\R^m$.
Let $x_1, \ldots, x_m, x'_1, \ldots, x'_m \in \R^n$ be the column vectors
$Z_1, Z_2$ respectively. Let $Y \sim e_1^T Z_0^T$. Let $w_i =
\frac{A^{1/2}  e_i}{\twonorm{A^{1/2} e_i}}$ for all $i$.
Clearly the condition on the stable rank of $B$ guarantees that 
$$n \ge r(B) = \frac{\tr(B)}{\twonorm{B}} =
\frac{\tr(B)\twonorm{B}}{\twonorm{B}^2}\ge \fnorm{B}^2 /\twonorm{B}^2 
 \ge \log m.$$
By~\eqref{eq::HWdecoupled}, we obtain for $t' = C_0 M_{\e} K
\sqrt{\tr(B)\log m}$
\bens
\lefteqn{
\prob{\exists j, \abs{\e^T B^{1/2} Z_2 e_j} >  t'}= } \\
&& 
\prob{\exists j, \frac{M_{\e}}{K}\abs{e_1^T Z_0^T B^{1/2} Z_2 e_j} > 
C_0 M_{\e} K \sqrt{\log m}\tr(B)^{\half}} \\
& \le &
\exp(\log m)\prob{\abs{Y^T B^{1/2} x'_j} > C_0 K^2
  \sqrt{\log m}\tr(B)^{\half} } 
\le  2/m^3
\eens
where the last inequality holds by the union bound, 
given that $\frac{\tr(B)}{\twonorm{B}} \ge \log m$;
Similarly, for all $j$ and $t = C_0 K^2 \sqrt{\log m} \tr(B)^{1/2}$,
\bens 
\prob{\abs{Y^T B^{1/2} x'_j} >  t} & \le &
2 \exp\left(-c\min\left(\frac{t^2}{K^4 \tr(B)},
 \frac{t}{K^2 \twonorm{B}^{1/2}}\right)\right), \\
& \le & 2 \exp\left(-c\min\left(C_0^2 \log m,
\frac{C_0 \log^{1/2} m \sqrt{\tr(B)}}{\twonorm{B}^{1/2}} \right)\right) \\
& \le &  2 \exp\left(-c \min(C_0^2, C_0) \log m\right)  \le 
2 \exp\left(-4 \log m\right).
\eens
Let $v, w \in S^{m-1}$. 
Thus we have by Lemma~\ref{lemma::oneeventA}, 
for $t_0 = C_0 M_{\e} K \sqrt{n\log m}$, $\tau= C_0 K^2 \sqrt{n\log m}$, 
$w_j= \frac{A^{1/2}  e_j}{\twonorm{A^{1/2} e_j}}$ and $n \ge \log m$,
\bens
\lefteqn{
\prob{\exists j, \abs{\e^T Z_1 w_j} >  t_0}
\le  \prob{\exists j, \frac{M_{\e}}{K}\abs{Y^T Z_1 w_j} > C_0 M_{\e} K \sqrt{n \log m} } }\\
& \le &
m \prob{\abs{Y^T  Z_1 w_j} > C_0 K^2 \sqrt{n \log m} } \\ 
& = & \exp(\log m)\prob{\abs{e_1^T Z_0^T Z_1 w_j} > \tau} 
 \le  2 \exp\left(-c\min\left(\frac{\tau^2}{n
       K^4},\frac{\tau}{K^2}\right)\right) =: V\\
\eens
where 
\bens
V & \le &
 2 \exp\left(-c\min\left(\frac{(C_0 K^2 \sqrt{n \log m})^2}{n  K^4}
 \frac{  C_0  K^2 \sqrt{n\log m}}{K^2}\right)+ \log m\right) \\
 & \le &
2 m \exp\left(-c\min\left(C_0^2 \log m, C_0 \log^{1/2} m \sqrt{n}
\right)\right) \\
& \le & 
2 m \exp\left(-c \min(C_0^2, C_0) \log m\right)  \le  2 \exp\left(-3\log m\right).
\eens
Therefore we have  with probability at least $1 - 4/m^3$,
\bens
\label{eq::eventB4a}
\norm{ Z_2^T B^{\half} \e}_{\infty}
& := & \max_{j=1, \ldots, m} \ip{\e^T B^{1/2} Z_2, e_j}  \le t' = C_0
M_{\e} K \sqrt{\tr(B)\log m}  \\
\norm{A^{\half} Z_1^T \e}_{\infty}
& := & \max_{j=1, \ldots, m} \ip{A^{1/2} e_j, Z_1^T  \e} \\
&\le &
\max_{j=1, \ldots, m} \twonorm{A^{1/2} e_j}
\max_{j=1, \ldots, m}  \ip{w_j, Z_1^T \e} \\
& \le & 
a_{\max}^{1/2} t_0 = a_{\max}^{1/2}  C_0 M_{\e} K \sqrt{n \log m}.
\eens
The ``moreover'' part follows exactly the same arguments as above.
Denote by $\bar\beta^* := \beta^*/\twonorm{\beta^*} \in E \cap
S^{m-1}$ and $w_i := A^{1/2} e_i/\twonorm{ A^{1/2} e_i}$.
By~\eqref{eq::crossB}
\bens
\lefteqn{\prob{\exists i,  \ip{w_i, Z_1^T B^{1/2} Z_2 \bar \beta^*}
    \ge  
C_0 K^2 \sqrt{\log m } {\tr(B)^{1/2}} }} \\
 & \le &
\sum_{i=1}^m \prob{\ip{w_i, Z_1^T B^{1/2} Z_2 \bar\beta^*} \ge  C_0 K^2
\sqrt{\log m \tr(B)}} \\
& \le & 
2 \exp\left(-c\min\left(C_0^2\log m,C_0 \log m\right)+ \log m\right) \le  2/ m^3.
\eens
Now for $t = C_0 K^2 \sqrt{\log m} \fnorm{B}$ and  $\fnorm{B}/\twonorm{B} \ge \sqrt{\log m}$,
\bens
\lefteqn{\prob{\exists e_i:  \ip{e_i, (Z^T B Z- \tr(B) I_{m}) \bar \beta^*} 
 \ge  C_0 K^2 \sqrt{\log m} \fnorm{B}}} \\
& \le & 
2m\exp \left[- c\min\left(\frac{t^2}{K^4  \fnorm{B}^2}, 
\frac{t}{K^2 \twonorm{B}} \right)\right] 
\le  2/ m^3.
\eens
By the two inequalities immediately above, we have with probability at
least $1 -  4 /m^3$, 
\bens
\lefteqn{\norm{X_0^T W \beta^*}_{\infty} 
= \norm{A^{1/2} Z_1^T B^{1/2} Z_2 \beta^*}_{\infty}  } \\
& \le & \twonorm{\beta^*} \max_{e_i}\twonorm{A^{1/2} e_i}
 \left(\sup_{w_i} \ip{w_i, Z_1^T B^{1/2} Z_2 \bar\beta^*}\right)  \\
& \le &  C_0 K^2 \twonorm{\beta^*}\sqrt{\log m} a_{\max}^{1/2}  \sqrt{\tr(B)}
\eens
and 
\bens
\lefteqn{
\norm{(Z^T B Z- \tr(B) I_{m}) \beta^*}_{\infty}
 =  \norm{(Z^T B Z- \tr(B) I_{m}) \bar \beta^*}_{\infty} \twonorm{\beta^*} }\\
&=  & \twonorm{\beta^*} \left(\sup_{e_i} \ip{e_i, (Z^T B Z- \tr(B)
    I_{m}) \bar \beta^*}\right)  \\
& \le &  C_0 K^2 \twonorm{\beta^*}\sqrt{\log m} \fnorm{B}.
\eens
The last two bounds follow exactly the same arguments as above, except
that we replace $\beta^*$ with $e_j, j=1, \ldots, m$ and apply the
union bounds to $m^2$ 
instead of $m$ events, and thus 
$\prob{\B_{10}} \ge 1 -  4 /m^2$.
\end{proofof}

\section{Proof of Corollary~\ref{coro::low-noise}}
\label{sec::lownoiseproof}
\begin{proofof2}
Now following \eqref{eq::oracle}, we have on event $\B_0$,
\bens
\norm{\hat\gamma - \hat\Gamma \beta^*}_{\infty} 
& \le &
\rho_{n} \left(\left(\frac{3}{4}D_2 + D_2 \inv{\sqrt{m}}\right)
 K \twonorm{\beta^*} + D_0  M_{\e} \right)
\eens
where $2D_1 \le 2 \twonorm{A} +  2 \twonorm{B} = D_2$, and for 
$(D'_0)^2 \le 2 \twonorm{B} + 2 a_{\max}$, 
\bens
D_0 \le D'_0  & \le &  \sqrt{2(\twonorm{B} + a_{\max})}
 \le 2(a_{\max} + \twonorm{B}) = D_2, \\
\text{ and } \; \; 
D'_0 \tau_{B}^{1/2} & \le &  (\twonorm{B}^{1/2} + a_{\max}^{1/2})\tau_{B}^{1/2} 
 \le \tau_B + \half(\twonorm{B} + a_{\max})  \le \frac{3}{4} D_2
\eens
given that under (A1) : $\tau_A = 1$, $\twonorm{A} \ge a_{\max} \ge a_{\max}^{1/2} \ge 1$.
Hence the lemma holds for $m \ge 16$ and $\psi = C_0 D_2 K \left(K \twonorm{\beta^*} + M_{\e} \right)$.
\end{proofof2}

\section{Proof of Corollary~\ref{coro::D2improv}}

\begin{proofof2}
Suppose that event $\B_0$ holds.
Recall $D_0'=\twonorm{B}^{1/2} + a_{\max}^{1/2}$.
Denote by $\rho_{n} := C_0 K  \sqrt{\frac{\log m}{n}}.$
By~\eqref{eq::oracle} and the fact that 
$2 D_1 := 2(\frac{\fnorm{A}  }{\sqrt{m}} +\frac{\fnorm{B}  }{\sqrt{n}}
)\le  2(\twonorm{A}^{1/2} + \twonorm{B}^{1/2}) (\sqrt{\tau_A} +
\sqrt{\tau_B}) \le 
D_{\ora} D_0'$,
\bens
\label{eq::oracleII}
\norm{\hat\gamma - \hat\Gamma \beta^*}_{\infty}
& \le &
D_0' K  \tau_B^{1/2} \twonorm{\beta^*} \rho_{n}
+ 2D_1 K \inv{\sqrt{m}} \norm{\beta^*}_{\infty} \rho_{n} + D_0 M_{\e} \rho_{n} \\
& \le &
D_0' K \twonorm{\beta^*} \rho_{n}
\left( \tau_B^{1/2}  +  \frac{D_{\ora}}{\sqrt{m}} \right) + D_0  M_{\e} \rho_{n} 
\eens
The corollary is thus proved.
\end{proofof2}

\section{Proof of Lemma~\ref{lemma::lowerREI}}
\label{sec::records}
\begin{proofof2}
In view of Remark~\ref{rem::error-bound}, Condition \eqref{eq::trBlem}
implies that  \eqref{eq::trB} in Theorem~\ref{thm::AD} holds for 
$k = s_0$ and $\ve =\inv{2 M_A}$.
Now, by Theorem~\ref{thm::AD}, we have $\forall u, v \in E \cap
S^{m-1}$, under (A1) and (A3), condition \eqref{eq::Deltacond} holds
under event $\A_0$, and so long as  
$m n \ge 4096 C_0^2 D_2^2 K^4 \log m/\lambda_{\min}(A)^2$, 
\bens
\abs{u^T \Delta v} & \le&
  8C \vp(s_0) \ve + 2 C_0 D_2 K^2\sqrt{\frac{\log m}{m n}} =: \delta  \text{ with } \; \\
\delta & \le & 
\frac{\lambda_{\min}(A)}{16} + 
\frac{\lambda_{\min}(A)}{32} = \frac{3}{32} \lambda_{\min}(A)  \le
\inv{8},
\eens
which holds for all
\bens
\ve \le \half \frac{\lambda_{\min}(A) }{64 C \vp(s_0)} := \inv{2M_A}  \le \inv{128 C}
\eens
with $\prob{\A_0} \ge 1- 4 \exp\left(-c_2\ve^2 \frac{\tr(B)}{K^4\twonorm{B}}\right)-2
  \exp\left(-c_2\ve^2 \frac{n}{K^4}\right) - 6 /m^3$. 

Hence, by Corollary~\ref{coro::BC},  $\forall  \theta \in \R^m$,
\bens
\theta^T \hat\Gamma_A \theta \ge \alpha \twonorm{\theta}^2 - \tau
\onenorm{\theta}^2\; \; \text{ and } \; \; 
\theta^T \hat\Gamma_A \theta \le \tilde\alpha \twonorm{\theta}^2 + \tau
\onenorm{\theta}^2,
\eens
where $\alpha = \frac{5}{8} \lambda_{\min}(A)$ and $\tilde\alpha =
\frac{11}{8} \lambda_{\max}(A)$ and $\tau = \frac{3}{8}
\frac{\lambda_{\min}(A)}{s_0}$.

Now for $s_0 \ge 32$ as defined in~\eqref{eq::s0cond}, 
we have 
\ben 
\label{eq::tights0}
s_0 & \le & \frac{n}{\log m}\frac{ \lambda^2_{\min}(A)} {1024 C^2
  \vp(s_0)^2} \\
\label{eq::tights0plus}
\text{ and } \quad s_0 + 1 & \ge &  \frac{n}{\log m} \frac{\lambda^2_{\min}(A)}{1024 C^2    \vp^2(s_0+1)} 
\een
given that $\tau_B + \rho_{\max}(s_0+1, A) = O(\lambda_{\max}(A))$  in
view of \eqref{eq::eigencond} and (A3).
Thus
\bens
\frac{384 C^2 \vp(s_0)^2}{\lambda_{\min}(A)}\frac{\log m}{n} \le 
\tau & = & \frac{3}{8} \frac{\lambda_{\min}(A)}{s_0} \\
& \le & \frac{33}{32(s_0+1)}\frac{3}{8} \frac{\lambda_{\min}(A)}{s_0} \\
& \le &\frac{396 C^2 \vp^2(s_0+1)}{\lambda_{\min}(A)}\frac{\log  m}{n}.
\eens 
The lemma is thus proved in view of Remark~\ref{rem::error-bound}.
\end{proofof2}

\begin{remark}
\label{rem::error-bound}
Clearly the condition on $\tr(B)/\twonorm{B}$ as stated in
Lemma~\ref{lemma::lowerREI} ensures that we have
for $\ve = \inv{2M_A}$ and $s_0 \asymp \frac{4 n}{M_A^2 \log m}$,
\bens
\ve^2 \frac{\tr(B)}{K^4\twonorm{B}} 
& \ge & \frac{\ve^2}{K^4} c' K^4 \frac{s_0}{\ve^2} \log\left(\frac{3e m}{s_0 \ve}\right) \\
& \ge & c' s_0 \log\left(\frac{6e m M_A}{s_0}\right),
\eens
and hence
\bens
\exp\left(-c_2\ve^2 \frac{\tr(B)}{K^4\twonorm{B}}\right)
& \le &
\exp\left(-c' c_2 s_0 \log\left(\frac{6e m M_A}{s_0}\right)\right) \\
& \asymp &
\exp\left(-c_3\frac{4 n}{M_A^2 \log m} \log\left(\frac{3e M_A^3 m \log m}{2n}\right)\right).
\eens
\end{remark}

\subsection{Comparing the two type of $\RE$ conditions in
 Theorems~\ref{thm::lasso} and~\ref{thm::DS}}
\label{sec::basediscuss}
We define $\W(d_0, k_0)$, where $0 < d_0 < m$
 and $k_0$ is a positive number,
as the set of vectors in $\R^m$ which satisfy the following cone constraint:
\bens
\label{eq::cone-init}
\W(d_0, k_0) = \left\{x \in \R^m \;|\; \exists I \in \{1, \ldots, m\}, \size{I} = d_0
\; \mbox{ s.t. } \; \norm{x_{I^c}}_1 \leq k_0 \norm{x_{I}}_1
\right\}. 
\eens
For each vector $x \in \R^m$, let ${T_0}$ denote the locations of the $d_0$
largest coefficients of $x$ in absolute values. The following
elementary estimate~\cite{RZ13} 
will be used in conjunction with the RE condition.
\begin{lemma}
\label{lemma::lower-bound-Az}
For each vector $x \in \W(d_0, k_0)$, let
${T_0}$ denotes the locations of the $d_0$
largest coefficients of $x$ in absolute values.  Then
\ben
\label{eq::cone-top-norm}
\twonorm{x_{T_{0}}} 
\geq \frac{\twonorm{x}}{\sqrt{1 + k_0}}.
\een
\end{lemma}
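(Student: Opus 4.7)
\begin{proofof}{Lemma~\ref{lemma::lower-bound-Az}}
The plan is to reduce the hypothesis to a cone inequality in which $T_0$ itself plays the role of the distinguished index set, and then turn the $\ell_1$ bound on $x_{T_0^c}$ into an $\ell_2$ bound via the flatness of the tail.

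First I will show that $x \in \W(d_0,k_0)$ implies the same cone constraint with $I$ replaced by $T_0$. Since $T_0$ indexes the $d_0$ largest entries of $x$ in absolute value, for any subset $I$ with $|I|=d_0$ one has $\onenorm{x_{T_0}} \ge \onenorm{x_I}$, equivalently $\onenorm{x_{T_0^c}} \le \onenorm{x_{I^c}}$. Combining with the hypothesis $\onenorm{x_{I^c}} \le k_0 \onenorm{x_I} \le k_0 \onenorm{x_{T_0}}$ yields
\[
\onenorm{x_{T_0^c}} \le k_0 \onenorm{x_{T_0}}.
\]

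Next I will pass from $\ell_1$ to $\ell_2$ on the tail. The key observation is that every entry of $x_{T_0^c}$ is bounded in absolute value by the smallest entry of $x_{T_0}$, which is at most $\onenorm{x_{T_0}}/d_0$. Hence
\[
\shtwonorm{x_{T_0^c}}^2 \le \norm{x_{T_0^c}}_\infty \cdot \onenorm{x_{T_0^c}} \le \frac{\onenorm{x_{T_0}}}{d_0}\cdot k_0 \onenorm{x_{T_0}} = \frac{k_0}{d_0}\onenorm{x_{T_0}}^2 \le k_0\shtwonorm{x_{T_0}}^2,
\]
where the last step uses Cauchy--Schwarz $\onenorm{x_{T_0}} \le \sqrt{d_0}\,\shtwonorm{x_{T_0}}$.

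Finally, splitting $\shtwonorm{x}^2 = \shtwonorm{x_{T_0}}^2 + \shtwonorm{x_{T_0^c}}^2$ and substituting the bound above gives $\shtwonorm{x}^2 \le (1+k_0)\shtwonorm{x_{T_0}}^2$, which is exactly \eqref{eq::cone-top-norm}. The proof involves no real obstacle; the only subtlety is the reduction to $T_0$ in the first step, which relies on the elementary rearrangement inequality $\onenorm{x_{T_0}} \ge \onenorm{x_I}$ for all $|I|=d_0$.
\end{proofof}
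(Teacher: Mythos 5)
Your proof is correct, and it is the standard argument for this estimate: the paper itself does not reprove the lemma (it cites \cite{RZ13}), and the reduction to the cone constraint on $T_0$ followed by the bound $\shnorm{x_{T_0^c}}_2^2 \le \norm{x_{T_0^c}}_{\infty}\onenorm{x_{T_0^c}} \le k_0\shnorm{x_{T_0}}_2^2$ is exactly how it is established there. No gaps.
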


\begin{lemma}
\label{lemma::translation}
Suppose all conditions in Lemma~\ref{lemma::lowerREI} hold.
Let $k_0 := 1+\lambda$. 
Suppose that $d_0 = o\left(s_0 /64(1+3\lambda/4)^2\right)$.
Now suppose that 
$$\tau (1 + 3k_0)^2 2 d_0 = 2 \tau (4 + 3\lambda)^2 d_0 \le \alpha/2.$$
Then on event $\A_0$, we have $\RE^2(2 d_0, 3 k_0, \hat\Gamma_A)$
condition holds on $\hat\Gamma_A$ in the sense that
\ben 
\label{eq::REGamma}
\min_{x \in \W(2d_0, 3k_0)}\frac{x^T \hat\Gamma_A x}{\twonorm{x_{T_0}}^2}
\ge \frac{\alpha}{2}. 
\een
Under (A2) and (A3), we could set $d_0$ such that for some large enough constant $C_A$, 
\ben
 \label{eq::2d0cond}
d_0 \le \frac{n}{C_A \kappa(A)^2 \log m} = O\left(\frac{\lambda^2_{\min}(A)}{\vp^2(s_0+1)}
\frac{n}{\log m}\right)
\een
where $\kappa(A) := \frac{\lambda_{\max}(A)}{\lambda_{\min}(A)}$,
so that $d_0 =O(s_0)$ and \eqref{eq::REGamma} holds.
\end{lemma}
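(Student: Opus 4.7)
The plan is to adapt Part I of Lemma~\ref{lemma::REcomp} to the random matrix $\hat\Gamma_A$, restricted to the cone $\W(2d_0, 3k_0)$ with $k_0 = 1+\lambda$.  On event $\A_0$, Lemma~\ref{lemma::lowerREI} guarantees that $\hat\Gamma_A$ satisfies the Lower-$\RE$ condition with curvature $\alpha = \frac{5}{8}\lambda_{\min}(A)$ and tolerance $\tau$ satisfying
\begin{equation*}
\tau \;\asymp\; \frac{\vp^2(s_0+1)}{\lambda_{\min}(A)} \cdot \frac{\log m}{n},
\end{equation*}
so the argument essentially amounts to showing that on the restricted cone, the $\onenorm{\cdot}^2$ penalty can be absorbed into half of the curvature term.

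First I would fix $x \in \W(2d_0, 3k_0)$ and let $T_0$ denote the index set of the $2d_0$ largest (in absolute value) coordinates of $x$, as in Lemma~\ref{lemma::lower-bound-Az}; by definition of the cone, $\onenorm{x_{T_0^c}} \le 3k_0\,\onenorm{x_{T_0}}$, whence
\begin{equation*}
\onenorm{x} \;\le\; (1+3k_0)\,\onenorm{x_{T_0}} \;\le\; (1+3k_0)\sqrt{2d_0}\,\twonorm{x_{T_0}}.
\end{equation*}
Applying Lower-$\RE$ to $\hat\Gamma_A$ and then this bound yields
\begin{equation*}
x^T \hat\Gamma_A x \;\ge\; \alpha \twonorm{x}^2 - \tau \onenorm{x}^2 \;\ge\; \alpha\twonorm{x_{T_0}}^2 - \tau (1+3k_0)^2\, 2 d_0\, \twonorm{x_{T_0}}^2,
\end{equation*}
where I have also used $\twonorm{x}^2 \ge \twonorm{x_{T_0}}^2$. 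The assumption $2\tau(4+3\lambda)^2 d_0 \le \alpha/2$ then immediately gives $x^T \hat\Gamma_A x \ge \frac{\alpha}{2}\twonorm{x_{T_0}}^2$, which is exactly \eqref{eq::REGamma}.

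For the ``Under (A2) and (A3)'' clause I would simply verify that the choice $d_0 \le n/(C_A \kappa(A)^2 \log m)$ with $C_A$ large enough satisfies both small-ness conditions on $d_0$. Plugging in $\tau \asymp \vp^2(s_0+1)\,\log m/(n\,\lambda_{\min}(A))$ and using $\vp(s_0+1) = O(\lambda_{\max}(A))$ from (A3) and $\tau_B = O(\lambda_{\max}(A))$, one gets
\begin{equation*}
2 d_0 \tau (4+3\lambda)^2 \;=\; O\!\left( d_0 \cdot \frac{\lambda_{\max}^2(A)}{\lambda_{\min}(A)} \cdot \frac{\log m}{n}\right) \;\le\; \frac{\alpha}{2},
\end{equation*}
provided the absolute constant $C_A$ absorbs the factor $(4+3\lambda)^2$ and the implicit constant from $\vp$. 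The same bound also shows $d_0 = O(s_0)$, since from \eqref{eq::tights0plus} we have $s_0 + 1 \gtrsim \lambda_{\min}^2(A)/\vp^2(s_0+1) \cdot n/\log m$, so both are of the same order up to the constant $C_A$.

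The argument is largely bookkeeping once Lemma~\ref{lemma::lowerREI} is in hand; the only mild subtlety is the choice of the top-$2d_0$ set $T_0$ on which the ``$\RE^2$'' condition is stated, and tracking that all quantitative thresholds ($\alpha/2$, the factor $(1+3k_0)^2$, and the tolerance $\tau$) are consistent with the $s_0$ chosen in \eqref{eq::s0cond}. The main (minor) obstacle is ensuring that these numerical constants line up so that the cone-dependent slack in the Lower-$\RE$ inequality can be absorbed into half of the curvature --- essentially the same calculus as in Part I of Lemma~\ref{lemma::REcomp}, but with the cone width $1+3k_0 = 4 + 3\lambda$ and the restricted sparsity $2d_0$ in place of the generic $(s_0, k_0)$ there.
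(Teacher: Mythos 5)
Your proposal is correct and follows essentially the same route as the paper's own proof: invoke the Lower-$\RE$ condition for $\hat\Gamma_A$ on event $\A_0$ from Lemma~\ref{lemma::lowerREI}, use the cone constraint to get $\onenorm{x}^2 \le (1+3k_0)^2\,2d_0\,\twonorm{x_{T_0}}^2$ together with $\twonorm{x}^2 \ge \twonorm{x_{T_0}}^2$, and absorb the tolerance term into half the curvature via the hypothesis $2\tau(4+3\lambda)^2 d_0 \le \alpha/2$. The verification of \eqref{eq::2d0cond} via the expression for $\tau$ and \eqref{eq::tights0}--\eqref{eq::tights0plus} also matches the paper's (terser) argument.
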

\begin{proof}
Now following the proof Lemma~\ref{lemma::REcomp}, Part I.
We have on $\A_0$, the Lower-$\RE$ condition holds for $\Gamma_A$.
Thus for $x \in \W(2d_0, 3k_0) \cap S^{m-1}$ and  $\tau (1 +3 k_0)^2 2 d_0 \le \alpha/2$, 
$$\onenorm{x}^2 \le (1+3 k_0)^2 \onenorm{x_{T_0}}^2 \le  
(1 + 3 k_0)^2 2 d_0 \twonorm{x_{T_{0}}}^2.$$
Thus
\bens
x^T \hat\Gamma_A x & \ge &  
\left(\alpha \twonorm{x}^2 - \tau \onenorm{x}^2\right) \\
& \ge &  
\left(\alpha \twonorm{x}^2 - \tau (1 + 3 k_0)^2 2d_0
  \twonorm{x_{T_{0}}}^2\right) \\
& \ge &  
\left(\alpha - \tau (1 + 3 k_0)^2 2 d_0 \right)
\twonorm{x_{T_{0}}}^2 
\ge \frac{\alpha}{2} \twonorm{x_{T_{0}}}^2.
\eens
Thus \eqref{eq::REGamma} holds. 
Now~\eqref{eq::2d0cond} follows from \eqref{eq::tights0}, which holds by
definition of $s_0$ as in \eqref{eq::s0cond}, where $s_0$ is tightly
bounded in the sense that both~\eqref{eq::tights0} and
\eqref{eq::tights0plus} need to hold.
\end{proof}

\begin{remark}
We note that \eqref{eq::REGamma} can be understood to be the
$\RE(2d_0, 3 k_0)$ condition on $\hat\Gamma_A$.
In view of Lemma~\ref{lemma::lowerREI}, it is clear that for $d_0
\asymp \sqrt{n/\log m}$, it holds that
$$4 d_0 (4 + 3 \lambda)^2 = o(s_0)$$
given that $ \tau s_0  = O(\alpha)$ on event $\A_0$; indeed, we have by Lemma~\ref{lemma::lowerREI} 
the Lower-$\RE$ condition holds for $\hat \Gamma_A := A^T A -
\hat{\tr}(B) I_m$, with $\alpha, \tau > 0$ such that 
\bens
\text{curvature} \; \; 
\alpha = \frac{5}{8}\lambda_{\min}(A) \; \text{ and tolerance }\;\;
\tau :=  \frac{3}{8}\frac{\lambda_{\min}(A)}{s_0}, 
\eens
where recall $s_0 \ge 32$ is as defined in  \eqref{eq::s0cond}; moreover,
we replaced the parameter $M_A \asymp \frac{\rho_{\max}(s_0,A) + \tau_B}{\lambda_{\min}(A)}$ with $\kappa(A)$ in view of 
\eqref{eq::eigencond} and (A3).
\end{remark}

\section{Proof of Theorem~\ref{thm::main}}
\label{sec::proofofmain}
Denote by  $\beta = \beta^*$. Let $S := \supp{\beta}$, $d = \size{S}$ and 
$$\upsilon = \hat{\beta} - \beta,$$
where $\hat\beta$ is as defined in \eqref{eq::origin}.

We first show Lemma~\ref{lemma:magic-number}, followed by the 
proof of Theorem~\ref{thm::main}.

\begin{lemma}{\textnormal~\cite{BRT09,LW12}}
\label{lemma:magic-number}
Suppose that~\eqref{eq::psimain} holds.
Suppose that there exists a parameter $\psi$ such that 
\bens
\sqrt{d} \tau \le \frac{\psi}{b_0} \sqrt{\frac{\log m}{n}} \quad \text{ and } \quad
\lambda \geq 4 \psi \sqrt{\frac{\log m}{n}},
\eens
where $b_0, \lambda$ are as defined in \eqref{eq::origin}. Then 
$$\norm{\upsilon_{S^c}}_1 \leq 3 \norm{\upsilon_{S}}_1.$$
\end{lemma}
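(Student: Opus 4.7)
The plan is to derive a basic inequality from the optimality of $\hat\beta$ in the program~\eqref{eq::origin}, and then separately control the two ``bad'' terms on its right-hand side: one coming from the gradient $\hat\gamma - \hat\Gamma\beta^*$ and one coming from the fact that $\hat\Gamma$ is not positive semidefinite. The conclusion will follow from standard $\ell_1$-decomposability on the support $S$ of $\beta^*$.

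First I would note that $\beta^*$ is feasible, since $\norm{\beta^*}_1 \le \sqrt{d}\, \twonorm{\beta^*} \le b_0\sqrt{d}$. The optimality of $\hat\beta$ then yields, after setting $\upsilon = \hat\beta - \beta^*$ and expanding the quadratic form,
\begin{equation*}
\lambda\bigl(\norm{\hat\beta}_1 - \norm{\beta^*}_1\bigr) \;\le\; \ip{\hat\gamma - \hat\Gamma\beta^*,\,\upsilon} \;-\; \tfrac{1}{2}\,\upsilon^T \hat\Gamma \upsilon.
\end{equation*}
The first term on the right is at most $\tfrac{1}{2}\norm{\hat\gamma - \hat\Gamma\beta^*}_\infty \cdot 2\norm{\upsilon}_1 \le (\lambda/2)\norm{\upsilon}_1$ by~\eqref{eq::psimain} and H\"older's inequality.

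Next I would control the quadratic term using the Lower-RE condition $\upsilon^T \hat\Gamma\upsilon \ge \alpha \twonorm{\upsilon}^2 - \tau\norm{\upsilon}_1^2$, which gives $-\tfrac{1}{2}\upsilon^T\hat\Gamma\upsilon \le \tfrac{\tau}{2}\norm{\upsilon}_1^2$. The key observation is that the feasibility constraint forces $\norm{\upsilon}_1 \le \norm{\hat\beta}_1 + \norm{\beta^*}_1 \le 2b_0\sqrt{d}$, so one of the two factors of $\norm{\upsilon}_1$ can be absorbed, leaving a linear bound
\begin{equation*}
\tfrac{\tau}{2}\norm{\upsilon}_1^2 \;\le\; b_0 \sqrt{d}\, \tau\,\norm{\upsilon}_1 \;\le\; \psi\sqrt{\tfrac{\log m}{n}}\,\norm{\upsilon}_1 \;\le\; (\lambda/4)\norm{\upsilon}_1,
\end{equation*}
where the last two inequalities use the two hypotheses on $\sqrt{d}\tau$ and $\lambda$.

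Combining these two estimates gives $\lambda(\norm{\hat\beta}_1 - \norm{\beta^*}_1) \le (3\lambda/4)\norm{\upsilon}_1$. For the left-hand side I would use the $\ell_1$-decomposability bound $\norm{\hat\beta}_1 - \norm{\beta^*}_1 \ge \norm{\upsilon_{S^c}}_1 - \norm{\upsilon_S}_1$ (triangle inequality applied separately on $S$ and $S^c$). Substituting $\norm{\upsilon}_1 = \norm{\upsilon_S}_1 + \norm{\upsilon_{S^c}}_1$ on the right and rearranging then gives a cone bound of the form $\norm{\upsilon_{S^c}}_1 \le c_0 \norm{\upsilon_S}_1$ with an absolute constant $c_0$ (the stated value $3$ is recovered by tightening the constants). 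The main obstacle is the non-convexity of $\hat\Gamma$: the term $-\tfrac{1}{2}\upsilon^T\hat\Gamma\upsilon$ has no sign a priori, and the entire point of introducing the $\ell_1$-ball constraint $\norm{\beta}_1\le b_0\sqrt{d}$ in~\eqref{eq::origin} is precisely to keep this term at the scale of the gradient term via the calibration $b_0\sqrt{d}\,\tau \lesssim \lambda$.
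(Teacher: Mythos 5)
Your proposal follows essentially the same route as the paper: the basic inequality from optimality of $\hat\beta$, H\"older's inequality on the gradient term $\ip{\hat\gamma-\hat\Gamma\beta^*,\upsilon}$, the Lower-$\RE$ condition together with the feasibility bound $\onenorm{\upsilon}\le 2b_0\sqrt{d}$ to linearize the quadratic deficit $\tau\onenorm{\upsilon}^2$, and $\ell_1$-decomposability on $S$ to extract the cone inequality. You also correctly identify the role of the side constraint $\onenorm{\beta}\le b_0\sqrt{d}$ as the device that keeps the nonconvex term at the scale of $\lambda\onenorm{\upsilon}$.

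The one point to tighten is the final constant. With the gradient term bounded by $\tfrac{\lambda}{2}\onenorm{\upsilon}$ (which is all that \eqref{eq::psimain} literally gives) and the quadratic deficit by $\tfrac{\lambda}{4}\onenorm{\upsilon}$, your total slack is $\tfrac{3\lambda}{4}\onenorm{\upsilon}$, and the rearrangement yields $\onenorm{\upsilon_{S^c}}\le 7\onenorm{\upsilon_S}$, not $3$; no amount of reshuffling of the same two estimates improves this. The constant $3$ requires the sharper bound $\norm{\hat\gamma-\hat\Gamma\beta^*}_{\infty}\le\psi\sqrt{(\log m)/n}\le\lambda/4$, which is how the lemma is actually invoked (via Corollary~\ref{coro::low-noise} and the choice $\lambda\ge 4\psi\sqrt{(\log m)/n}$) and is what the paper's own computation uses implicitly when it writes $\norm{\hat\gamma-\hat\Gamma\beta}_{\infty}\onenorm{\upsilon}\le 2\psi\sqrt{(\log m)/n}\,\onenorm{\upsilon}\le\tfrac{\lambda}{4}\onenorm{\upsilon}$. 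So your parenthetical claim that the stated value $3$ is ``recovered by tightening the constants'' is true, but only once you replace the $\lambda/2$ gradient bound by this $\lambda/4$ bound; you should make that substitution explicit, since the downstream argument (the step $\onenorm{\upsilon}\le 4\sqrt{d}\twonorm{\upsilon}$ and the requirement $16d\tau\le\alpha/2$ in Theorem~\ref{thm::main}) is calibrated to the constant $3$.
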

\begin{proof}
By the optimality of $\hat{\beta}$, we have
\begin{eqnarray*}
\lambda \norm{\beta}_1 - 
\lambda \norm{\hat\beta}_1 
& \geq & 
\inv{2} \hat\beta \hat\Gamma  \hat\beta - \inv{2} \beta \hat\Gamma \beta -
\ip{\hat\gamma, v} \\
& = & 
\inv{2} \up \hat\Gamma \up +\ip{\up, \hat\Gamma \beta} -
\ip{\up, \hat\gamma} \\
& = & 
\inv{2} \up \hat\Gamma \up -\ip{\up, \hat\gamma - \hat\Gamma \beta}.
\end{eqnarray*}
Hence,
we have for $\lambda \geq 4 \psi \sqrt{\frac{\log m}{n}}$,
\begin{eqnarray}
\label{eq::precondition}
\half \up \hat\Gamma \up 
& \leq &
\ip{\up, \hat\gamma - \hat\Gamma \beta} +
 \lambda \left(\norm{\beta}_1-  \norm{\hat\beta}_1 \right)\\
\nonumber
& \leq & \lambda
\left(\norm{\beta}_1-  \norm{\hat\beta}_1\right) +
\norm{\hat\gamma - \hat\Gamma \beta}_{\infty} \norm{\upsilon}_1.
\end{eqnarray}
Hence
\begin{eqnarray}
\label{eq::upperbound}
\up \hat\Gamma \up 
& \leq & 
\lambda \left(2 \norm{\beta}_1- 2 \norm{\hat\beta}_1\right) +
2 \psi \sqrt{\frac{\log m}{n}} \norm{\upsilon}_1 \\
& \leq & 
\nonumber
\lambda \left(2\norm{\beta}_1 - 2\norm{\hat\beta}_1 + \half
  \norm{\upsilon}_1\right) \\
\label{eq::finalupperbound}
& \leq & \lambda \half \left(5 \onenorm{\upsilon_{S}} - 3\onenorm{\upsilon_{S^c}}\right),
\end{eqnarray}
where by the triangle inequality, and $\beta_{\Sc} = 0$, we have
\begin{eqnarray} 
\nonumber
2 \onenorm{\beta} -  2 \onenorm{\hat\beta} + \half \onenorm{\upsilon}
& = &
2 \onenorm{\beta_S} - 2 \onenorm{\hat\beta_{S}} -
2 \onenorm{\upsilon_{\Sc}} + \half\onenorm{\upsilon_S} +\half \onenorm{\upsilon_{S^c}} \\
& \leq &
\nonumber
2 \norm{\up_{S}}_1 -2 \norm{\up_{\Sc}}_1 + \half \norm{\up_S}_1 
+ \half \norm{\up_{S^c}}_1 \\ 
& \leq & 
\label{eq::magic-number-2}
 \half \left(5 \onenorm{\upsilon_{S}} - 3\onenorm{\upsilon_{S^c}}\right).
\end{eqnarray}
We now give a lower bound on the LHS of~\eqref{eq::precondition},
applying the lower-$\RE$ condition as in Definition~\ref{def::lowRE},
\ben
\nonumber
\up^T \hat\Gamma \up
& \ge &
\alpha \twonorm{\up}^2 - \tau \onenorm{\up}^2
\ge  - \tau \onenorm{\up}^2\\
\text{ and hence } \;
- \up^T \hat\Gamma \up
& \le & 
\nonumber
\onenorm{\up}^2 \tau \le \onenorm{\up} 2 b_0 \sqrt{d} \tau\\
& \le &
\nonumber
 \onenorm{\up} 2 b_0 
\frac{\psi}{b_0} \sqrt{\frac{\log m}{n}}
= \onenorm{\up} 2 \psi \sqrt{\frac{\log m}{n}}\\
\label{eq::magic-number-neg}
& \le & \half \lambda (\onenorm{\up_S} + \onenorm{\up_{\Sc}}),
\een
where we use the assumption that 
\bens
\sqrt{d} \tau \le \frac{\psi}{b_0} \sqrt{\frac{\log m}{n}}
  \quad \text{ and } \;
\onenorm{\up} \le \onenorm{\hat{\beta}} + \onenorm{\beta}
\le 2 b_0 \sqrt{d},
\eens
which holds by the triangle inequality and the fact that both 
$\hat{\beta}$ and $\beta$ have $\ell_1$ norm being bounded by $b_0 \sqrt{d}$.
Hence by~\eqref{eq::finalupperbound}
and~\eqref{eq::magic-number-neg}
\ben
\label{eq::magic-number}
0 
& \le &
 - \up \hat\Gamma \up + \frac{5}{2} \lambda \onenorm{\upsilon_{S}} -
\frac{3}{2} 
\lambda \onenorm{\upsilon_{S^c}} \\
\nonumber
& \le &
\half \lambda \onenorm{\upsilon_{S}} + \half \lambda \onenorm{\upsilon_{S^c}}
+ \frac{5}{2}\lambda \onenorm{\upsilon_{S}} - \frac{3}{2} \lambda \onenorm{\upsilon_{S^c}} \\
& \le &
3 \lambda \onenorm{\upsilon_{S}} - \lambda \onenorm{\upsilon_{S^c}}.
\een
Thus we have 
\bens
 \onenorm{\upsilon_{S^c}} \le 3 \onenorm{\upsilon_{S}},
\eens
and the lemma holds.
\end{proof}

\begin{proofof}{Theorem~\ref{thm::main}}
Following the conclusion of Lemma~\ref{lemma:magic-number}, we have
\ben
\label{eq::onenorm}
\onenorm{\upsilon} \le 4 \onenorm{\upsilon_{S}} \le 4 \sqrt{d} \twonorm{\upsilon}.
\een
Moreover, we have 
by the lower-$\RE$ condition as in
Definition~\ref{def::lowRE}
\ben
\label{eq::prelow}
\up^T \hat\Gamma \up
& \ge & 
\alpha \twonorm{\up}^2 - \tau \onenorm{\up}^2 \ge 
(\alpha  - 16 d \tau) \twonorm{\up}^2 \ge \half \alpha \twonorm{\up}^2,
\een
where the  last inequality follows from the assumption that 
$16 d \tau \le \alpha/2$.

Combining the bounds in \eqref{eq::prelow}, \eqref{eq::onenorm} and \eqref{eq::upperbound},
we have
\bens
\half \alpha  \twonorm{\upsilon}^2 
&\le &
\up^T \hat\Gamma \up \le 
\lambda \left(2 \norm{\beta}_1- 2 \norm{\hat\beta}_1\right) +
2 \psi \sqrt{\frac{\log m}{n}} \norm{\upsilon}_1 \\
&\le &
\frac{5}{2} \lambda  \norm{\upsilon_S}_1 
\le 10 \lambda \sqrt{d} \twonorm{\up}.
\eens
And thus we have $\twonorm{\up} \le 20 \lambda \sqrt{d}$.
The theorem is thus proved. 
\end{proofof}

\section{Proofs for the Lasso-type estimator}
\label{sec::lassoall}
Let 
\bens 
M_{+} & = & \frac{32 C \vp(s_0+1)}{\lambda_{\min}(A)}  \text{ and }
\; \; \vp(s_0+1) = \rho_{\max}(s_0+1,A) + \tau_B =: D.
\eens
By definition of $s_0$, we have $s_0 M_A^2 \le \frac{4n}{\log m}$ and
\ben
\nonumber
(s_0+1) \ge \frac{n}{M_+^2 \log m} \\
\text{ given that }\; \;\; 
\label{eq::s0plus1}
\sqrt{s_0+1} \vp(s_0+1) & \ge & \frac{\lambda _{\min}(A)}{32
  C}\sqrt{\frac{n}{\log m}}.
\een
To prove the  first inequality in~\eqref{eq::taumain} and
\eqref{eq::dcond}, we need to show that 
\bens
d \le \frac{\alpha}{32 \tau} = \frac{\alpha}{32}
\frac {s_0}{\lambda_{\min}(A) - \alpha}
= \frac{5 s_0}{96}.
\eens
The first inequality in~\eqref{eq::taumain} holds so long as 
\ben 
\label{eq::dphiII}
d &\le &  
\frac{1}{20}\inv{M_{+}^2}\frac{n}{\log m}  \le \frac{s_0 + 1}{20} \le
\frac{5 (s_0+1)}{100} \le \frac{5 s_0}{96},
\een
where the last inequality holds so long as $s_0 \ge 24$.
To prove the second inequality in~\eqref{eq::dcond}, we need to show that
\bens
d \le \inv{\tau^2}\frac{\log m}{n}
\left(\frac{\psi}{b_0}\right)^2,
\quad \text{ where } \; \; \tau = \frac{3}{5}\frac{\alpha}{s_0} \quad
\text{ for } \; \quad \alpha = \frac{5}{8} \lambda_{\min}(A),
\eens
which in turn ensures that the second inequality
in~\eqref{eq::taumain} holds for $\lambda \ge 4 \psi$, for $\psi$
appropriately chosen.
We use the following inequality in the proof of
Lemma~\ref{lemma::dmain} and  Lemma~\ref{lemma::dmainoracle}:
\ben
\nonumber
\frac{s_0+ 1}{\alpha^2}
& \ge &  
\frac{64}{25 \lambda_{\min}(A)^2}\inv{M_+^2}\frac{n}{\log m}  \ge 
\left(\frac{8}{5} \inv{32 C\vp(s_0+1)}\right)^2
\frac{n}{\log m} \\
\label{eq::s0plus}
& = & \left(\inv{20 C D}\right)^2 \frac{n}{\log m} 
  \ge \left(\frac{1}{10 C D_2} \right)^2 \frac{n}{\log  m},
\een
where we use the fact that $D= \vp(s_0+1) = \rho_{\max}(s_0+1,A) +\tau_B \le
\twonorm{A} + \twonorm{B} :=  D_2/2$.

\subsection{Proof of Lemma~\ref{lemma::dmain}}
\label{sec::dmain}
\begin{proofof2}
Let $C_A = \inv{40 M^2_{+}}$. 
The first inequality in~\eqref{eq::dcond} holds in view of
\eqref{eq::dphiII}.
Recall that $ b_0^2 \ge \twonorm{\beta^*}^2 
\ge \phi b_0^2$  by definition of $ 0 < \phi \le 1$.
Let   $C = C_0/\sqrt{c'}$.
By \eqref{eq::dlassoproof} and \eqref{eq::s0plus},
\bens 
d  & \le  & C_A  c' D_{\phi}\frac{n}{\log m}  \le
\frac{1}{40 M_+^2}\left(\frac{C_0 D_2}{C D_2}\right)^2 D_{\phi}  \frac{n}{\log m} \\
& \le & 
\frac{25}{9}\frac{32}{33}\frac{32}{33} \frac{n}{M_+^2 \log m} 
 \left(\inv{10 C D_2}\right)^2  C_0^2 D_2^2 D_{\phi} \\
& \le  & 
\frac{25}{9}\frac{32}{33}\frac{32 (s_0+1)}{33}
\frac{(s_0 +1)}{\alpha^2} \frac{\log m}{n}
\left(\frac{\psi}{b_0}\right)^2 \\
& \le  & 
\frac{25}{9}\frac{(s_0)^2}{\alpha^2} \frac{\log m}{n}
\left(\frac{\psi}{b_0}\right)^2,
\eens 
where 
\ben
\nonumber
C_0^2 D_2^2 D_{\phi} & = & C_0^2
D_2^2\left(\frac{K^2M^2_{\e}}{b_0^2}+  K^4 \phi \right) \\
\label{eq::dphicondition}
& \le & C_0^2 D_2^2  \frac{K^2}{b_0^2} (M_{\e}+ K \twonorm{\beta^*})^2 =\left(\frac{\psi}{b_0}\right)^2,
\een 
for $\psi = C_0 D_2 K (K \twonorm{\beta^*}+ M_{\e})$ as defined in~\eqref{eq::psijune}.
We have shown that \eqref{eq::dcond} indeed holds, and the lemma is
thus proved.
\end{proofof2}

\subsection{Proof of Lemma~\ref{lemma::dmainoracle}}
\label{sec::dmainoracle}
\begin{proofof2}
Let $C_A = \inv{160 M^2_{+}}$. 
The proof for 
$d \le \frac{\alpha}{32 \tau}  = \frac{5 s_0}{96}$ 
follows from \eqref{eq::dphiII}. 
In order to show the second inequality, we follow the same line of
arguments except that we need to replace one inequality
\eqref{eq::dphicondition} with \eqref{eq::dphicondition2}. 
By definition of $D_0'$, we have 
$ \twonorm{B} + a_{\max} \le (D_0')^2 \le 2(\twonorm{B} + a_{\max}
)$. Let $D= \vp(s_0+1)$.

By \eqref{eq::doraclelocal},~\eqref{eq::s0plus1} and \eqref{eq::s0plus}, we have for $c''
\le \left(\frac{D_0'}{D}\right)^2$, 
\bens
d  & \le  &  
C_A  c' c'' D_{\phi}\frac{n}{\log m}  \le
\inv{160 M_+^2} \frac{n}{\log m} \left(\frac{C_0 D_0'}{C D}\right)^2 D_{\phi} \\
& \le & 
\frac{25}{9}\frac{32^2}{33^2} \left(\inv{20 C D}\right)^2 \left(C_0^2 (D_0')^2
  D_{\phi}\right)  \frac{n}{M_+^2 \log m} \\
& \le  & 
\frac{25}{9}\frac{32^2}{33^2}\frac{(s_0 +1)^2}{\alpha^2} \frac{\log m}{n}
\left(\frac{\psi}{b_0}\right)^2 
\le \frac{25}{9}
\frac{(s_0)^2}{\alpha^2} \frac{\log m}{n}
\left(\frac{\psi}{b_0}\right)^2,
\eens
where assuming that $s_0 \ge 32$, we have the following
 inequality by definition of $s_0$ and $\alpha = \frac{5}{8}\lambda_{\min}(A)$,
\bens
\frac{s_0+ 1}{\alpha^2} \frac{\log  m}{n}
& \ge & \left(\frac{8}{5} \inv{32 C\vp(s_0+1)}\right)^2  \ge \left(\frac{1}{20 C D} \right)^2.
\eens
We now replace~\eqref{eq::dphicondition} with 
\ben
\nonumber
C_0^2 (D_0')^2 D_{\phi} 
& = & C_0^2 (D_0')^2 \frac{K^4}{b_0^2} \left(\frac{M^2_{\e}}{K^2} +
  \tau_B^+ \phi b_0^2\right)\\
\label{eq::dphicondition2}
& \le  &  C_0^2 (D_0')^2  \frac{K^2}{b_0^2} 
\left(M_{\e}+ \tau_B^{+/2}  K\twonorm{\beta^*} \right)^2  \le 
\left(\frac{\psi}{b_0}\right)^2, \\
\nonumber
\text{ where  } \; \; 
D_{\phi} &  := & \frac{K^2M^2_{\e}}{b_0^2}+  \tau_B^+ K^4 \phi 
\le \frac{K^4}{b_0^2}\left(\frac{M^2_{\e}}{K^2} + \tau_B^{+}  \twonorm{\beta^*}^2 \right)
\een
and $\psi = C_0  D_0' K \left(K \tau_B^{+/2}
  \twonorm{\beta^*} + M_{\e} K\right)$ is now as defined in~\eqref{eq::psijune15}.
The lemma is thus proved.
\end{proofof2}

\silent{
Denote by 
\bens
C_{\phi} =  \frac{\twonorm{B} + a_{\max} }{\vp(s_0+1)^2} D_{\phi} =:
c'' D_{\phi}
\eens
where $1 \le D= \rho_{\max}(s_0+1,A) +\tau_B$ and  $C = C_0/\sqrt{c'}$.
\bens
\nonumber
\frac{2 s_0^2}{\alpha^2} & \ge & \left(\frac{s_0+1}{\alpha}\right)^2 
  \ge \frac{\alpha^2}{(16 C D )^4} \left(\frac{n}{\log m}\right)^2  
\eens}

\begin{remark}
Throughout this paper, we assume that $C_0$ is a large enough constant
such that  for $c$ as defined in Theorem~\ref{thm::HW},
\ben\label{eq::defineC0}
c \min\{C_0^2, C_0\} \ge 4.
\een 
By definition of $s_0$, we have for $\vp^2(s_0) \ge 1$,
\bens
s_0  \vp^2(s_0) & \le &  \frac{c'\lambda^2_{\min}(A)}{1024
  C_0^2}\frac{n}{\log m}, \; \; \text{ and hence}\\
s_0 & \le & 
\frac{c'\lambda^2_{\min}(A)}{1024
  C_0^2}\frac{n}{\log m} \le \frac{\lambda^2_{\min}(A)}{1024
  C_0^2}\frac{n}{\log m} =: \check{s}_0.
\eens
\end{remark}

\begin{remark}
The proof shows that one can take $C = C_0/\sqrt{c'}$, and take
\bens
\V = 3 e M_A^3/2 =
  \frac{3 e 64^3 C^3 \vp^3(s_0)}{2\lambda^3_{\min}(A)} \le 
  \frac{3 e 64^3 C_0^3 \vp^3(\check{s}_0)}
{2 (c')^{3/2}\lambda^3_{\min}(A)}.
\eens
Hence a sufficient condition on $r(B)$ is:
\ben
\label{eq::trBLassorem}
r(B) \ge 16c' K^4 \frac{n}{\log m}
\left(3\log\frac{ 64 C_0 \vp(\check{s}_0)}{\sqrt{c'}
\lambda_{\min}(A)}  + \log \frac{3 e m \log m }{2n} \right).
\een
\end{remark}

\section{Proofs for the Conic Programming estimator}
\label{sec::proofofDSlemma}
We next provide proof for Lemmas~\ref{lemma::DS} to~\ref{lemma::grammatrix} in this section.

\subsection{Proof of Lemma~\ref{lemma::DS}}
\begin{proofof2}
Suppose event $\B_0$ holds.
Then by the proof of Corollary~\ref{coro::low-noise},
\bens
\norm{\onen X^T(y - X \beta^*) + \onen \hat\tr(B) \beta^*}_{\infty} 
& = & \norm{\hat\gamma - \hat\Gamma  \beta^*}_{\infty}  \\
& \le & 
2 C_0 D_2 K^2\twonorm{\beta^*}\sqrt{\frac{\log m}{n}} + C_0 D_0 K M_{\e} \sqrt{\frac{\log m}{n}} \\
& =:& 
\mu \twonorm{\beta^*} + \omega.
\eens
The lemma follows immediately for the chosen $\mu, \omega$ as
in~\eqref{eq::paraDS} given that $(\beta^*, \twonorm{\beta^*}) \in \U$.
\end{proofof2}

\subsection{Proof of Lemma~\ref{lemma::DS-cone}}
\begin{proofof2}
By optimality of $(\hat\beta, \hat{t})$, we have
\bens
\onenorm{\hat\beta} 
+ \lambda \twonorm{\hat\beta} \le \norm{\hat\beta}_1 +
\lambda \hat{t} \le \onenorm{\beta^*} + \lambda \twonorm{\beta^*}.
\eens
Thus we have for $S := \supp(\beta^*)$,
\bens
\onenorm{\hat\beta} =
\onenorm{\hat\beta_{\Sc} }+ \onenorm{\hat\beta_{S}}
& \le &  \onenorm{\beta^*}   +   \lambda (\twonorm{\beta^*} -  
 \twonorm{\hat\beta}).
\eens
Now by the triangle inequality, 
\bens
\onenorm{\hat\beta_{\Sc} } =  
\onenorm{v_{\Sc} } 
& \le & 
 \onenorm{\beta^*_{S}} -   \onenorm{\hat\beta_{S}} + 
 \lambda (\twonorm{\beta^*} -  \twonorm{\hat\beta} ) \\
& \le & 
\onenorm{v_{S}} +  \lambda ( \twonorm{\beta^*} - 
 \twonorm{\hat\beta} ) \\
& \le & 
\onenorm{v_{S}} +  \lambda ( \twonorm{\beta^*} -  \twonorm{\hat\beta_S} ) \\
& = & 
\onenorm{v_{S}} 
+  \lambda  \twonorm{v_S}  \le (1+\lambda) \onenorm{v_{S}}.
 \eens
The lemma thus holds given 
\bens
\hat{t} & \le & 
\inv{\lambda }( \onenorm{\beta^*} -  \norm{\hat\beta}_1)+
\twonorm{\beta^*} 
\le \inv{\lambda }\onenorm{v} + \twonorm{\beta^*}.
\eens
\end{proofof2}

\subsection{Proof of Lemma~\ref{lemma::grammatrix}}
\begin{proofof2}
Recall the following shorthand notation:
\bens 
D_0 & = &  (\sqrt{\tau_B} + \sqrt{a_{\max}}) \; 
\;\text{ and }  D_2\; = \; 2 (\twonorm{A} + \twonorm{B}).
\eens 
First we rewrite an upper bound for $v = \hat\beta -\beta^*$, 
$D = \tr(B)$ and $\hat{D} = \hat\tr(B)$,
\bens
\norm{X_0^T X_0 v}_{\infty} & =
& \norm{(X-W)^T X_0 (\hat\beta- \beta^*)}_{\infty} 
\le 
\norm{X^T X_0 (\hat\beta - \beta^*)}_{\infty} +
\norm{ W^T X_0 v}_{\infty}\\
& \le & 
\norm{X^T(X \hat\beta - y)- \hat{D} \hat\beta}_{\infty} +  \norm{X^T \e}_{\infty}
+ \norm{(X^T W -D) \hat\beta}_{\infty} \\
&+& \norm{(\hat{D}-D) \hat\beta}_{\infty} +\norm{W^T X_0 v}_{\infty},
\eens
where 
\bens
\norm{X^T X_0 (\hat\beta - \beta^*)}_{\infty}
& \le & 
\norm{X^T (X_0 \hat\beta -  y + \e )}_{\infty}\\
& = & 
\norm{X^T ((X -W)\hat\beta -  y)}_{\infty} + \norm{X^T  \e }_{\infty} \\
& \le &
\norm{X^T (X \hat\beta -  y) - \hat{D} \hat\beta}_{\infty} + \norm{X^T
  \e }_{\infty} \\
&& +  
\norm{(X^T W - D) \hat \beta}_{\infty} + \norm{(\hat{D}- D) \hat \beta}_{\infty}.
\eens
On event $\B_0$, we have by Lemma~\ref{lemma::DS-cone} and the fact that $\hat\beta \in \U$,
\bens
I :=  \norm{\hat\gamma - \hat\Gamma  \hat\beta}_{\infty} 
& = & 
\norm{\onen X^T(y - X \hat\beta) + \onen \hat{D}
  \hat\beta}_{\infty}\le 
\mu \hat{t}  + \omega \\
& \le & 
 \mu (\inv{\lambda}\onenorm{v} +
\twonorm{\beta^*} )+ \omega  \\
& = &
2 D_2 K \rho_{n} (\inv{\lambda}\onenorm{v} +
\twonorm{\beta^*} )+ D_0 \rho_{n} M_{\e};
\eens
and on event $\B_4$,
\bens
II & := & \onen \norm{X^T \e}_{\infty}
\le  \onen (\norm{X_0^T \e}_{\infty} + \norm{W^T \e}_{\infty}) \\
& \le &  \rho_{n} M_{\e}(a_{\max}^{1/2} + \sqrt{\tau_B})  = D_0 \rho_{n} M_{\e}.
\eens
Thus on event $\B_0$, we have 
\bens
I + II \le 2 D_2 K \rho_{n} (\inv{\lambda}\onenorm{v} +\twonorm{\beta^*} )+ 2 D_0 \rho_{n} M_{\e} =
\mu(\inv{\lambda}\onenorm{v} +\twonorm{\beta^*})+ 2 \omega.
\eens
Now on event $\B_6$, we have for $2 D_1 \le  D_2$
\bens
IV := \norm{(\hat{D}- D) \hat \beta}_{\infty} 
& \le & 
\abs{\hat{D}- D} \norm{\hat\beta}_{\infty} 
\le  2 D_1  K  \inv{\sqrt{m}}  \rho_{n}  (\norm{\beta^*}_{\infty} + \norm{v}_{\infty} ) \\
& \le &  D_2 K \inv{\sqrt{m}} \rho_{n} (\twonorm{\beta^*} + \norm{v}_1).
\eens
On event $\B_5 \cap \B_{10}$, we have
\bens
III := \onen \norm{(X^T W -D) \hat \beta}_{\infty} 
& \le &
\onen \norm{(X^T W -D) \beta^*}_{\infty} + \onen \norm{(X^T W -D) v}_{\infty} \\
& \le &
 \onen \norm{X_0^T W\beta^*}_{\infty} + \onen \norm{(W^T W -D)
   \beta^*}_{\infty} \\
& + & 
\onen \left( \norm{(Z^T B Z- \tr(B) I_{m})}_{\max} + \norm{X_0^T
    W}_{\max}\right)\onenorm{v} \\
& \le &
\rho_{n} K \left( \frac{\fnorm{B}}{\sqrt{n}} +\sqrt{\tau_B}  a^{1/2}_{\max} \right)(\onenorm{v} + \twonorm{\beta^*}), \\
\text{ and } \; 
V = \onen  \norm{W^T X_0 v}_{\infty} & \le & 
 \onen \norm{W^T X_0 }_{\max} \onenorm{v} \le 
\rho_{n} K \sqrt{\tau_B}  a^{1/2}_{\max} \onenorm{v}.
\eens
Thus we have on $\B_0 \cap \B_{10}$, 
\bens 
III + IV + V
& \le &  
\rho_{n} K \left(\twonorm{B} +\tau_B+  a_{\max} +  \frac{2}{\sqrt{m}} 
 (\twonorm{A} +  \twonorm{B} ) \right)(\onenorm{v} + \twonorm{\beta^*}) \\
& \le &  
\rho_{n} K \left(4 \twonorm{B} + 3 \twonorm{A}\right)(\onenorm{v} +
\twonorm{\beta^*}) \\
& \le & 2 D_2 K  \rho_{n}  (\onenorm{v} +\twonorm{\beta^*}) \\
& \le &  \mu   (\onenorm{v}
+\twonorm{\beta^*}),
\eens
where $D_0 \le D_2$ and $\tau_A = 1$, and
\bens
\norm{\onen X_0^T X_0 v}_{\infty}
& \le &I + II + III + IV + V \\
& \le &  
\mu (\inv{\lambda}\onenorm{v} 
+\twonorm{\beta^*}) + 2 D_0 M_{\e} \rho_{n}
+  \mu  (\onenorm{v} +\twonorm{\beta^*})\\
& \le &  
2 \mu \twonorm{\beta^*} + \mu (\inv{\lambda} + 1)\onenorm{v} + 2 \omega.
\eens 
The lemma thus holds.
\end{proofof2}

\section{Proof for Theorem~\ref{thm::DSoracle}}
\label{sec::DSoracleproof}

We prove Lemmas~\ref{lemma::DSimprov} to~\ref{lemma::grammatrixopt} in
this section.

\subsection{Proof of Lemma~\ref{lemma::DSimprov}}
\begin{proofof2}
Suppose event $\B_0$ holds. 
Then by the proof of Corollary~\ref{coro::D2improv},
we have for $D_0' = \twonorm{B}^{1/2} + a_{\max}^{1/2}$,
\bens
\norm{\hat\gamma - \hat\Gamma \beta^*}_{\infty}
& \le & 
D_0' \tau_B^{+/2}  K \rho_{n} \twonorm{\beta^*} + D_0 M_{\e} \rho_{n},
\eens
where $\tau_B^{+/2} =
\sqrt{\tau_B} + \frac{D_{\ora}}{\sqrt{m}}$ and $D_{\ora} = 2(\twonorm{B}^{1/2} + \twonorm{A}^{1/2})$.
The lemma follows immediately for $\mu, \omega$ as chosen in \eqref{eq::paraDSimprov}.
\end{proofof2}

\subsection{Proof of Lemma~\ref{lemma::tauB}}
\begin{proofof2}
Suppose event $\B_6$ holds. 
We first show \eqref{eq::tildetauB} and~\eqref{eq::tildetauBbound}.
Recall $r_{m,m}:=2 C_0 K^2 \sqrt{\frac{\log m}{m n}} \ge  2C_0
K^2\frac{\log^{1/2} m}{m}$.
By Lemma~\ref{lemma::trBest}, we have on event $\B_6$,  
\bens
\label{eq::tauBoracle}
\abs{\hat\tau_{B} - \tau_B}
&  \le  & D_1 r_{m,m}.
\eens
Moreover, we have under (A1), 
\bens 
1 = \tau_A  \le  D_1 :=
\frac{\fnorm{A}}{m^{1/2}} + \frac{\fnorm{B}}{n^{1/2}}\le \twonorm{A} +
\twonorm{B} \le (\frac{D_{\ora}}{2})^2,
\eens
in view of \eqref{eq::tracefnorm}.
Hence
$$\sqrt{D_1} \le \frac{D_{\ora}}{2} = \twonorm{B}^{1/2} + \twonorm{A}^{1/2}.$$
By definition and construction, we have $\tau_B, \hat\tau_B \ge 0$,
\bens
\abs{\hat\tau_B^{1/2} - \tau^{1/2}_B} & \le &  \hat\tau_B^{1/2} +
\tau^{1/2}_B, \\
\text{ and } \; \;  \abs{\hat\tau_B^{1/2} - \tau^{1/2}_B}^2 & \le &  
\abs{(\hat\tau_B^{1/2} + \tau_B^{1/2})(\hat\tau_B^{1/2} - \tau_B^{1/2})} 
= \abs{\hat\tau_{B} - \tau_B}.
\eens
Thus,
\bens
\abs{\hat\tau_B^{1/2} - \tau^{1/2}_B} & \le &  
 \sqrt{\abs{\hat\tau_{B} - \tau_B}} \le
\sqrt{D_1} r^{1/2}_{m,m} \le \frac{D_{\ora}}{2} r^{1/2}_{m,m}
\eens
and for $C_{6} \ge D_{\ora} \ge 2\sqrt{D_1}$ and $D_{\ora}
=2 (\twonorm{A}^{1/2} +\twonorm{B}^{1/2})$,
\ben
\label{eq::right}
\hat\tau_B ^{1/2}  - \frac{D_{\ora}}{2} r_{m,m}^{1/2} \le
\tau_B ^{1/2}  \le  
\hat\tau_B ^{1/2}  + \frac{ D_{\ora}}{2} r_{m,m}^{1/2}.
\een
Thus we have for $\tau_B^{+/2}$ as defined
in~\eqref{eq::defineDtau}, \eqref{eq::right} and the fact that
\bens
r^{1/2}_{m,m} \ge \sqrt{2 C_0 } K  \frac{(\log m)^{1/4}}{\sqrt{m}} \ge
2/\sqrt{m} \; \text{  for $m \ge 16$ and $C_0 \ge 1$},
\eens
the following inequalities hold: for $K \ge 1$,
\ben 
\label{eq::tildeBbounds}
\tau_B^{+/2} &  := &   \tau_B^{1/2} + D_{\ora} m^{-1/2} \\
\nonumber
& \le &  
\hat\tau_B^{1/2} + \frac{D_{\ora}}{2} r_{m,m}^{1/2} +
\frac{D_{\ora}}{2} r^{1/2}_{m,m} \\
\nonumber
& \le & 
\hat\tau_B^{1/2} + D_{\ora} r_{m,m}^{1/2} \le \tilde\tau_B^{1/2},  
\een
where the last inequality holds by the choice of 
$\tilde\tau_B^{1/2} \ge \hat\tau_B^{1/2} + D_{\ora} r_{m,m}^{1/2}$ as in \eqref{eq::muchoice}.
Moreover, by \eqref{eq::right},
\bens
\tilde\tau_B^{1/2}  & := &  \hat\tau_B^{1/2} +   C_{6} r_{m,m}^{1/2}
\le  \tau_B^{1/2} +   \frac{D_{\ora}}{2} r_{m,m}^{1/2} +   C_{6} r_{m,m}^{1/2} \\
& \le &  \tau_B^{1/2} + \frac{3}{2} C_{6} r_{m,m}^{1/2}, \\
\text{ and } \quad \tilde\tau_B  
& := &  
(\PaulBhalf+ C_{6} r_{m,m}^{1/2})^2 \le 
2 \hat \tau_B +2 C_{6}^2  r_{m,m} \\
& \le &  2 \tau_B + 2 D_1 r_{m,m} + 2 C_{6}^2 r_{m,m} \\
& \le &  2 \tau_B + \frac{D_{\ora}^2}{2} r_{m,m} + 2 C_{6}^2 r_{m,m}
\le  2 \tau_B + 3 C_{6}^2 r_{m,m}.
\eens
Thus \eqref{eq::tildetauB} and~\eqref{eq::tildetauBbound} hold
given that $2 D_1 \le D_{\ora}^2/2 \le C_6^2/2$.

Finally, we have  for $\tau_B^-$ as defined in~\eqref{eq::ora-sparsity},
\bens
\tilde\tau_B^{1/2}  \tau_B^- \le 
(\tau_B^{1/2} + \frac{3}{2} C_{6} r_{m,m}^{1/2} )
\tau_B^- \le \frac{\tau_B^{1/2} + \frac{3}{2} C_{6}  r_{m,m}^{1/2}}
{\tau_B^{1/2} + 2C_{6} r_{m,m}^{1/2}} \le 1.
\eens
\end{proofof2}

\begin{remark}
The set $\U$ in our setting is equivalent to the following:
for $\mu, \omega$ as defined in \eqref{eq::muchoice} and $\beta \in \R^m$,
\ben
\label{eq::defineGamma}
\; \; \; \; 
\U=  \left\{(\beta, t) \; : \; \norm{\onen X^T(y - X \beta) + 
\onen \hat\tr(B) \beta}_{\infty} \le \mu t + \omega, \twonorm{\beta} \le t\right\}.
\een
\end{remark}

\subsection{Proof of Lemma~\ref{lemma::grammatrixopt}}
\begin{proofof2}
For the rest of the proof, we will follow the notation in the proof for
Lemma~\ref{lemma::grammatrix}.
Notice that the bounds as stated in Lemma~\ref{lemma::DS-cone} remain
true with $\omega, \mu$ chosen as in~\eqref{eq::paraDSimprov}, so long
as  $(\beta^*, \twonorm{\beta^*}) \in \U$. This indeed holds by Lemma~\ref{lemma::DSimprov}:
for $\omega$ and $\mu$  \eqref{eq::muchoice} as chosen  in Theorem~\ref{thm::DSoracle}, we have by
\eqref{eq::tildeBbounds}, 
\bens
 \mu \asymp D_0' \tilde\tau_B^{1/2}  K \rho_{n}   \ge D_0' K \rho_{n}
 \tau_B^{+/2}, \; \; 
\text{ where } \; \; \tau_B^{+/2} =  ( \sqrt{\tau_B} + \frac{D_{\ora}}{\sqrt{m}}),
\eens
which ensures that  $(\beta^*, \twonorm{\beta^*}) \in \U$ by
Lemma~\ref{lemma::DSimprov}.

On event $\B_0$, we have by Lemma~\ref{lemma::DS-cone} and 
the fact that $\hat\beta \in \U$ as in \eqref{eq::defineGamma}
\bens
I + II & := & 
\norm{\hat\gamma - \hat\Gamma  \hat\beta}_{\infty} + \onen
\norm{X^T \e}_{\infty} \\
& \le & 
\norm{\onen X^T(y - X \hat\beta) + \onen \hat{D}  \hat\beta}_{\infty}
+ \omega \le \mu \hat{t}  + 2 \omega \\
& \le & 
\mu (\inv{\lambda}\onenorm{v} + \twonorm{\beta^*} )+2 \omega,
\eens
for  $\omega, \mu$ as chosen in \eqref{eq::muchoice}. 
Now on event $\B_6$, we have under (A1),
\bens
IV := \norm{(\hat{D}- D) \hat \beta}_{\infty} 
& \le & 
\abs{\hat{D}- D} \norm{\hat\beta}_{\infty} 
\le  2 D_1  K  \inv{\sqrt{m}}  \rho_{n}  (\norm{\beta^*}_{\infty} + \norm{v}_{\infty} ) \\
& \le &  D_0' \frac{D_{\ora}}{\sqrt{m}} K  \rho_{n} (\twonorm{\beta^*} + \norm{v}_1),
\eens
where $2D_1 \le D_{\ora} D_0'$ for $1 \le D_0' :=  \twonorm{B}^{1/2} +
a_{\max}^{1/2}$, for $a_{\max} \ge \tau_A = 1$ and $D_{\ora} =   2 \left(\twonorm{B}^{1/2} +
  \twonorm{A}^{1/2} \right)$.
Hence
\bens
\lefteqn{III + IV + V 
\le \rho_{n} K \sqrt{\tau_B} \left(\twonorm{B}^{1/2} + a^{1/2}_{\max} \right)(\onenorm{v} + \twonorm{\beta^*})} \\
&&
+ 2 D_1 K \inv{\sqrt{m}} \rho_{n} (\twonorm{\beta^*} + \norm{v}_1)
+ \rho_{n} K \sqrt{\tau_B}  a^{1/2}_{\max} \onenorm{v} \\
& \le &  
D_0' K \rho_{n}  (\onenorm{v} + \twonorm{\beta^*})  
( \sqrt{\tau_B} +  \frac{D_{\ora}}{\sqrt{m}}) + \rho_{n} K \sqrt{\tau_B}  a^{1/2}_{\max} \onenorm{v} \\ 
&\le &
D_0' K \rho_{n} \tau_B^{+/2} (\onenorm{v} + \twonorm{\beta^*} ) + D_0'  K \rho_{n} \sqrt{\tau_B} 
\onenorm{v} \\ 
&\le &
C_0 D_0' K^2 \sqrt{\frac{\log m}{n}}
(\tau_B^{1/2} + \frac{D_{\ora}}{\sqrt{m}}) (2\onenorm{v} + \twonorm{\beta^*} ) \\
& \le & \mu (2\onenorm{v} + \twonorm{\beta^*}),
\eens
for $\mu$ as defined in   \eqref{eq::muchoice} in view of
\eqref{eq::tildeBbounds}.

Thus we have 
\bens
I + II + III + IV + V 
& \le &  \mu  (\inv{\lambda}\onenorm{v} + \twonorm{\beta^*} )+  2 \omega
+  \mu (2\onenorm{v} + \twonorm{\beta^*})  \\
& = & 
2\mu ((1 + \inv{2\lambda})\onenorm{v} + \twonorm{\beta^*}) + 2 \omega,
\eens
and the improved bound as stated in the Lemma thus holds.
\end{proofof2}

\section{Some geometric analysis results}
\label{sec::geometry}
Let us define the following set of vectors in $\R^m$:
\bens
\Cone(s) := \{\up: \onenorm{\up} \le \sqrt{s} \twonorm{\up}\}
\eens
For each vector $x \in \R^m$, let ${T_0}$ denote the locations of the $s$
largest coefficients of $x$ in absolute values.
Any vector $x \in S^{m-1}$ satisfies:
\ben
\label{eq::init-norm-inf}
\norm{x_{T_0^c}}_{\infty}  \leq \norm{x_{T_0}}_{1}/s
& \leq & \frac{ \twonorm{x_{T_0}}}{\sqrt{s}}.
\een
We need to state the following result from~\cite{MPT08}.
Let $S^{m-1}$ be the unit sphere in $\R^m$, for $1 \leq s \leq m$, 
\beq
U_s \; := \; \{ x \in \R^{m}: |\supp(x)| \leq s \}.
\eeq
The sets $U_s$ is an union of the $s$-sparse vectors.
The following three lemmas are well-known and mostly standard; 
See~\cite{MPT08} and~\cite{LW12}.
\begin{lemma}
\label{eq::embedding}
For every $1\le s \le m$ and every $I \subset \{1, \ldots, m\}$ with
$\abs{I} \le s$,
\bens
\sqrt{\abs{I}} B_1^m \cap S^{m-1} \subset 2 \conv( U_{s} \cap S^{m-1}) 
=: 2 
\conv\left(\bigcup_{\size{J} \leq s} E_J \cap S^{m-1}\right) 
\eens
and moreover, for $\rho \in (0, 1]$,
\bens
\sqrt{\abs{I}} B_1^m \cap \rho B_2^{m} \subset (1+\rho) \conv( U_{s} \cap B_2^{m}) 
=: (1+\rho) \conv\left(\bigcup_{\size{J} \leq {s}} E_J \cap S^{m-1}\right).
\eens
\end{lemma}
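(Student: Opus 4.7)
The plan is a chaining/peeling decomposition by magnitude (the standard argument from Mendelson--Pajor--Tomczak-Jaegermann). Fix $x$ in the set on the left, so $\|x\|_1\le \sqrt{|I|}\le\sqrt{s}$ and either $\|x\|_2=1$ (first part) or $\|x\|_2\le\rho$ (second part). Sort the coordinates of $x$ by decreasing absolute value and partition the indices into consecutive blocks $B_1,B_2,\ldots$ of size $s$ each, so that $B_1$ holds the $s$ largest coordinates, $B_2$ the next $s$, and so on. Write $x=\sum_{k\ge1}x^{(k)}$ where $x^{(k)}$ is $x$ restricted to $B_k$; each $x^{(k)}$ lies in $U_s$.

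The key estimate is a bound on $\sum_k \|x^{(k)}\|_2$. For $k\ge 2$, every coordinate in $B_k$ is dominated by the minimum entry in $B_{k-1}$, which is in turn at most the average $\|x_{B_{k-1}}\|_1/s$. Hence $\|x^{(k)}\|_2^2\le s\cdot(\|x_{B_{k-1}}\|_1/s)^2$, giving
\begin{equation*}
\sum_{k\ge 2}\|x^{(k)}\|_2\ \le\ \frac{1}{\sqrt{s}}\sum_{k\ge 1}\|x_{B_k}\|_1\ =\ \frac{\|x\|_1}{\sqrt{s}}\ \le\ 1,
\end{equation*}
while $\|x^{(1)}\|_2\le\|x\|_2$, which is $1$ in the first setting and $\rho$ in the second. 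Thus $\sum_k\|x^{(k)}\|_2\le 2$ in the first case and $\le 1+\rho$ in the second.

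Writing each nonzero block as $\|x^{(k)}\|_2\cdot\bar x^{(k)}$ where $\bar x^{(k)}:=x^{(k)}/\|x^{(k)}\|_2\in U_s\cap S^{m-1}$, the identity $x=\sum_k\|x^{(k)}\|_2\,\bar x^{(k)}$ expresses $x$ as a nonnegative combination of points of $U_s\cap S^{m-1}$ whose coefficients sum to at most $2$ (resp.\ $1+\rho$). To promote this to a convex combination I will absorb the slack $2-\sum_k\|x^{(k)}\|_2$ (resp.\ $(1+\rho)-\sum_k\|x^{(k)}\|_2$) against the zero vector, which itself lies in $\conv(U_s\cap S^{m-1})$ since $0=\tfrac12 e_1+\tfrac12(-e_1)$; this yields the first inclusion. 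For the second inclusion the same calculation applies verbatim, except I keep $\bar x^{(k)}\in U_s\cap B_2^m$ rather than normalizing to the sphere, and absorb the slack against $0\in\conv(U_s\cap B_2^m)$.

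There is no real obstacle here beyond being careful about the indexing; the one point worth double-checking is that the ``$s$ largest coordinates'' bound $\|x^{(1)}\|_2\le\|x\|_2$ is used only for the first block, while the magnitude-decay bound is used for all subsequent blocks---otherwise the tail bound would produce an extra factor. With that caveat in mind, the combinatorial decomposition is entirely deterministic and no probabilistic (Maurey) argument is required.
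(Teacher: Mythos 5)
Your proof is correct, but it takes a genuinely different route from the paper's. The paper argues by duality: it computes the support functions of $L=\sqrt{s}B_1^m\cap\rho B_2^m$ and $R=(1+\rho)\conv(U_s\cap B_2^m)$, shows $\sup_{z\in L}\langle x,z\rangle\le \rho\twonorm{x_{T_0}}+\sqrt{s}\,\norm{x_{T_0^c}}_\infty\le(1+\rho)\twonorm{x_{T_0}}=\sup_{z\in R}\langle x,z\rangle$ for every $x$, and concludes the inclusion from the fact that $R$ is closed and convex. You instead give the primal, constructive version: the block decomposition by decreasing magnitude, the estimate $\sum_{k\ge2}\shnorm{x^{(k)}}\le\onenorm{x}/\sqrt{s}\le 1$ together with $\shnorm{x^{(1)}}\le\twonorm{x}$, and the explicit sub-convex combination with the slack absorbed by $0=\tfrac12 e_1+\tfrac12(-e_1)$. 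Both are standard and both yield the same constants $2$ and $1+\rho$; your version has the advantage of exhibiting the convex combination explicitly and needing nothing beyond the triangle inequality, while the paper's is shorter once one grants the support-function characterization of inclusion among closed convex sets. Your caveats are well placed: the magnitude-decay bound $\shnorm{x^{(k)}}\le\onenorm{x_{B_{k-1}}}/\sqrt{s}$ is only valid for $k\ge2$ (and implicitly uses that $B_{k-1}$ is a full block of size $s$ whenever $B_k$ is nonempty, which holds since only the last block can be short), and the first block must be controlled by $\twonorm{x}$ rather than by decay. The only cosmetic mismatch is that the lemma's right-hand sides are written with an ``$=:$'' identifying $\conv(U_s\cap B_2^m)$ with $\conv\bigl(\bigcup_J E_J\cap S^{m-1}\bigr)$; these sets do coincide, so your landing in either one is fine.
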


\begin{proof}
Fix  $x \in \R^m$.
Let $x_{T_0}$ denote the subvector of $x$
confined to the locations of its $s$ largest coefficients in absolute values;
moreover, we use it to represent its $0$-extended
version $x' \in \R^m$ such that $x'_{T^c} =0$ and
 $x'_{T_0} =x_{T_0}$.
Throughout this proof, $T_0$ is understood to be the locations of the $s$ largest 
coefficients in absolute values in $x$.

Moreover, let $(x_i^*)_{i=1}^m$ be non-increasing rearrangement of 
$(\abs{x_i})_{i=1}^m$.
Denote by 
\bens
L & = & \sqrt{s} B_1^m \cap \rho B_2^m \quad \text{and} \\ 
R & = & 2 \conv\left(\bigcup_{\size{J} \leq s} E_J \cap B_2^{m}\right) =
 2 \conv\big( E \cap B_2^{m}\big).
\eens
Any vector $x \in \R^{m}$ satisfies:
\ben
\label{eq::init-norm-inf}
\norm{x_{T_0^c}}_{\infty}  
\leq \norm{x_{T_0}}_{1}/s
& \leq & \frac{ \twonorm{x_{T_0}}}{\sqrt{s}}.
\een
It follows that for any $\rho >0$, $s \ge 1$ and 
for all $z \in L$, we have the $i^{th}$ largest coordinate in absolute value in $z$ 
is at most $\sqrt{s}/i$, and
\bens
\sup_{z \in L} \ip{x, z} & \le &
\max_{\twonorm{z} \le \rho} \ip{x_{T_0}, z}
+ \max_{\onenorm{z} \le \sqrt{s}} \ip{x_{T_0^c}, z}   \\
 & \le &  \rho \twonorm{x_{T_0}} + 
\norm{x_{T_0^c}}_{\infty} \sqrt{s} \\
 & \le & 
\twonorm{x_{T_0}} \left( \rho  + 1\right),
\eens
where clearly
$\max_{\twonorm{z} \le \rho} \ip{x_{T_0}, z} = \rho  \sum_{i=1}^{s} (x_i^{*2})^{1/2}$.
And denote by $S^J := S^{m-1} \cap E_J$,
\bens
\sup_{z \in R} \ip{x, z} & = & (1+\rho)
\max_{J: \size{J} \le s} \max_{z \in S^J} \ip{x, z} \\
 & = & (1+ \rho) \twonorm{x_{T_0}},
\eens
given that for a convex function $\ip{x, z}$, the maximum happens at
an extreme point; and in this case, it happens for $z$ such that $z$
is supported on $T_0$,  such that $z_{T_0} =
\frac{x_{T_0}}{\twonorm{x_{T_0}}}$ and $z_{T_0^c} =0$.
\end{proof}

\begin{lemma}
\label{lemma::bigcone}
Let $1/5 > \delta > 0$.
Let $E=\cup_{|J| \leq s} E_J$ for $0 < s < m/2$ and $k_0>0$.
Let $\Delta$ be a $m \times m$ matrix such that
\ben
\label{eq::conecond}
\abs{u^T \Delta v} \le \delta,  \;\; \; \forall u, v \in E \cap S^{m-1}
\een
Then for all $v \in \big(\sqrt{s} B_1^m \cap B_2^m\big)$, 
\ben
\label{eq::origset}
\abs{\up^T \Delta \up} & \le & 4\delta.
\een
\end{lemma}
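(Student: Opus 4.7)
The plan is to reduce the bound on $v^T \Delta v$ for $v$ in the set $\sqrt{s}B_1^m \cap B_2^m$ to the hypothesized bound on pairs $(u,w) \in (E \cap S^{m-1})^2$, via the covering estimate in Lemma~\ref{eq::embedding}. The hypothesis \eqref{eq::conecond} controls $\Delta$ only on sparse unit vectors, and the set $\sqrt{s}B_1^m \cap B_2^m$ is the natural relaxation we care about (it contains every cone-constraint vector of interest elsewhere in the paper). The factor $4$ in the conclusion \eqref{eq::origset} will come directly from the factor $(1+\rho)=2$ in the embedding with $\rho=1$, squared when the bilinear form is expanded.

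More concretely, I would first apply the second inclusion in Lemma~\ref{eq::embedding} with $\rho=1$ to get
\[
\sqrt{s}\,B_1^m \cap B_2^m \;\subset\; 2\,\conv\bigl(U_s \cap B_2^m\bigr) \;=\; 2\,\conv\bigl(E \cap B_2^m\bigr).
\]
Thus any $v \in \sqrt{s}\,B_1^m \cap B_2^m$ can be written as $v = 2 \sum_i \lambda_i w_i$, where $\lambda_i \geq 0$, $\sum_i \lambda_i = 1$, and each $w_i \in E \cap B_2^m$, i.e.\ each $w_i$ is supported on some $J_i$ with $|J_i| \le s$ and satisfies $\twonorm{w_i} \le 1$. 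For any $w_i \neq 0$, the rescaled vector $\tilde w_i := w_i / \twonorm{w_i}$ lies in $E \cap S^{m-1}$, so by hypothesis \eqref{eq::conecond},
\[
\abs{w_i^T \Delta w_j} \;=\; \twonorm{w_i}\,\twonorm{w_j}\,\abs{\tilde w_i^T \Delta \tilde w_j} \;\le\; \delta.
\]

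Expanding the quadratic form bilinearly gives
\[
\abs{v^T \Delta v} \;=\; 4\,\Bigl|\sum_{i,j} \lambda_i \lambda_j\, w_i^T \Delta w_j\Bigr| \;\le\; 4\delta \sum_{i,j} \lambda_i \lambda_j \;=\; 4\delta,
\]
which yields \eqref{eq::origset}. If the representation as a convex combination requires infinitely many terms (as is permitted by taking closures of convex hulls), the same bound passes through by a standard limiting argument, since the bilinear form $(x,y) \mapsto x^T \Delta y$ is continuous on bounded sets.

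The argument is essentially mechanical once the embedding is in hand, so I do not anticipate any real obstacle: the only minor subtlety is the handling of the (possibly) infinite convex combination, which can be dispatched by approximation, and the bookkeeping of the factor $4 = (1+\rho)^2$ at $\rho=1$. The assumption $\delta < 1/5$ is not needed for the derivation itself; it enters only to make $4\delta < 4/5 < 1$ for the downstream use of \eqref{eq::origset}.
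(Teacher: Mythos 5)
Your proof is correct and follows essentially the same route as the paper: both rest on the inclusion $\sqrt{s}\,B_1^m \cap B_2^m \subset 2\,\conv\bigl(E\cap B_2^m\bigr)$ from Lemma~\ref{eq::embedding} and extract the factor $4 = 2\times 2$ by applying that inclusion in each argument of the bilinear form. The only difference is stylistic — you expand the convex combination bilinearly, whereas the paper invokes convexity of $w\mapsto \abs{w^T\Delta u}$ (and then of the resulting maximum in $u$) to pass to extreme points; the two are interchangeable, and your worry about infinite convex combinations is moot since by Carath\'eodory the convex hull of a compact set in $\R^m$ consists of finite combinations.
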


\begin{proof}
First notice that
\ben
\label{eq::decoupledbig}
\max_{\up \in \big(\sqrt{s} B_1^m \cap B_2^m\big)} \abs{\up^T \Delta \up}
& \le & 
\max_{w, u \in  \big(\sqrt{s} B_1^m \cap B_2^m\big)}  \abs{w^T \Delta u}.
\een
Now that we have decoupled $u$ and $w$ on the RHS of
\eqref{eq::decoupledbig},
 we first fix $u$.

Then for any fixed $u \in S^{m-1}$ and matrix $\Delta \in \R^{m \times m}$,
$f(w) = \abs{w^T \Delta u}$ is a convex function of $w$, and hence 
for $w \in \big(\sqrt{s} B_1^m \cap  B_2^{m}\big) \subset 
2 \conv\left(\bigcup_{\size{J} \leq s} E_J \cap S^{m-1}\right)$,
\bens
\max_{w \in \big(\sqrt{s} B_1^m \cap B_2^m\big)}  \abs{w^T \Delta u} 
& \le & 
2 \max_{w \in \conv( E \cap  S^{m-1})} \abs{w^T \Delta  u} \\
& =& 
2 \max_{w \in E \cap  S^{m-1}} \abs{w^T \Delta u},
\eens
where the maximum occurs at an extreme point of the set 
$ \conv( E \cap  S^{m-1})$ because of the convexity of the function
$f(w)$.

Clearly the RHS of \eqref{eq::decoupledbig} is bounded by
\bens
\max_{u, w \in \big(\sqrt{s} B_1^m \cap B_2^m\big)}  \abs{w^T
  \Delta u}
& = &
\max_{u \in \big(\sqrt{s} B_1^m \cap B_2^m\big)} 
\max_{w \in \big(\sqrt{s} B_1^m \cap B_2^m\big)}  \abs{w^T \Delta u}
\\
& \le &
2 \max_{u \in \big(\sqrt{s} B_1^m \cap B_2^m\big)} 
\max_{w \in \big(E\cap S^{m-1}\big)}  \abs{w^T \Delta u}\\
& = &
2 \max_{u \in \big(\sqrt{s} B_1^m \cap B_2^m\big)} g(u),
\eens
where the function $g$ of $u \in \big(\sqrt{s} B_1^m \cap
B_2^m\big)$ is defined as
\bens
g(u) =\max_{w \in \big(E \cap S^{m-1}\big)}  \abs{w^T \Delta u};
\eens
$g(u)$ is convex since it is the maximum of a function 
$f_w(u) := \abs{w^T \Delta u}$ 
which is convex in $u$ for each $w \in (E \cap S^{m-1})$.

Thus we have 
for $u \in (\sqrt{s} B_1^m \cap  B_2^{m}) \subset 2 \conv\left(\bigcup_{\size{J} \leq s} E_J \cap  S^{m-1}\right) =: 2\conv\left(E \cap S^{m-1}\right)$,
\ben
\nonumber
\max_{u \in  \big(\sqrt{s} B_1^m \cap B_2^m\big)} g(u)
& \le & 
2 \max_{u\in \conv(E \cap  S^{m-1})} g(u) \\
& = & 
\label{eq::extremeg}
2 \max_{u\in E \cap  S^{m-1}} g(u) \\
& = & 
\label{eq::last}
2 \max_{u \in  E \cap  S^{m-1}} \max_{w \in  E \cap  S^{m-1}} \abs{w^T  \Delta u}
\le 4 \delta,
\een
where~\eqref{eq::extremeg} holds given that the maximum occurs at an extreme point of the set 
$ \conv( E \cap  B_2^{m})$, because of the convexity of the function $g(u)$.
\end{proof}

\begin{corollary}
\label{coro::bigcone} 
Suppose all conditions in Lemma~\ref{lemma::bigcone} hold.
Then $\forall \up \in \Cone(s)$,
\ben
\label{eq::cone}
\abs{\up^T \Delta \up} & \le & 4\delta  \twonorm{\up}^2.
\een
\end{corollary}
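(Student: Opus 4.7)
The plan is to reduce the corollary to Lemma~\ref{lemma::bigcone} by a standard homogeneity (normalization) argument. The key observation is that the bilinear form $\up \mapsto \up^T \Delta \up$ is homogeneous of degree $2$ in $\up$, and the cone $\Cone(s)$ is invariant under positive scaling, so it suffices to control $|v^T \Delta v|$ on the unit-norm slice of the cone.

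First I would dispose of the trivial case $\up = 0$, where both sides of \eqref{eq::cone} vanish. For $\up \neq 0$, I would set $v := \up/\twonorm{\up}$, so that $\twonorm{v} = 1$. The defining condition $\onenorm{\up} \le \sqrt{s}\twonorm{\up}$ of $\Cone(s)$ then gives $\onenorm{v} \le \sqrt{s}$, which together with $\twonorm{v} \le 1$ places $v$ in the convex body $\sqrt{s} B_1^m \cap B_2^m$. This is precisely the set on which the conclusion \eqref{eq::origset} of Lemma~\ref{lemma::bigcone} applies, yielding $|v^T \Delta v| \le 4\delta$.

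The final step is to unwind the normalization: multiplying the bound $|v^T \Delta v| \le 4\delta$ through by $\twonorm{\up}^2$ and using bilinearity gives
\[
\abs{\up^T \Delta \up} \;=\; \twonorm{\up}^2 \abs{v^T \Delta v} \;\le\; 4\delta \twonorm{\up}^2,
\]
which is exactly \eqref{eq::cone}. There is no real obstacle here—the entire content of the corollary is packaged in Lemma~\ref{lemma::bigcone}; the corollary merely extends the deterministic bound from the normalized set $\sqrt{s} B_1^m \cap B_2^m$ to the full scaling-invariant cone $\Cone(s)$ via a one-line homogeneity argument.
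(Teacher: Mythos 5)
Your proof is correct and follows exactly the paper's own argument: reduce to the unit-norm slice of the cone, observe that $\Cone(s)\cap S^{m-1}\subset \sqrt{s}B_1^m\cap B_2^m$, invoke Lemma~\ref{lemma::bigcone}, and rescale by homogeneity. No further comment is needed.
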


\begin{proof}
It is sufficient to show that $ \forall \up \in \Cone(s) \cap S^{m-1}$,
\bens
\abs{\up^T \Delta \up} & \le & 4\delta.
\eens
Denote by $\Cone := \cone(s)$.
Clearly this set of vectors satisfy:
\bens
\cone \cap S^{m-1}\subset \big(\sqrt{s} B_1^m \cap B_2^m\big).
\eens
Thus~\eqref{eq::cone} follows from~\eqref{eq::origset}.
\end{proof}

\begin{remark}
Suppose we relax the definition of $\Cone(s)$ to be:
\bens
\Cone(s) := \{\up: \onenorm{\up} \le 2 \sqrt{s} \twonorm{\up}\}.
\eens
Clearly, $\Cone(s, 1) \subset \Cone(s)$.
given that $\forall u \in  \Cone(s, 1)$, we have 
\bens
\onenorm{u} \le 2 \onenorm{u_{T_0}} \le 2 \sqrt{s} \twonorm{u_{T_0}} \le 
2 \sqrt{s} \twonorm{u}.
\eens
\end{remark}

\begin{lemma}
\label{lemma::bigconeII}
Suppose all conditions in Lemma~\ref{lemma::bigcone} hold.
Then for all $\up \in \R^m$,
\ben
\label{eq::bigcone}
\abs{\up^T \Delta \up} \le 4 \delta (\twonorm{\up}^2 + \inv{s} \onenorm{\up}^2).
\een
\end{lemma}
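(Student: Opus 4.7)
The plan is to deduce Lemma~\ref{lemma::bigconeII} from Lemma~\ref{lemma::bigcone} by a homogeneity/rescaling argument, avoiding any new geometric analysis. The crucial observation is that Lemma~\ref{lemma::bigcone} already supplies the uniform quadratic bound $|v^T\Delta v|\le 4\delta$ over \emph{every} $v\in \sqrt{s}\,B_1^m\cap B_2^m$, i.e.\ every $v$ with $\onenorm{v}\le\sqrt{s}$ \emph{and} $\twonorm{v}\le 1$. Thus for an arbitrary $\up\in\R^m$ it suffices to choose a scalar that rescales $\up$ into this set and then use $2$-homogeneity of $\up\mapsto \up^T\Delta \up$.

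Concretely, I would set
\[
M := \max\{\twonorm{\up},\,\onenorm{\up}/\sqrt{s}\},
\]
so that $v:=\up/M$ satisfies $\twonorm{v}\le 1$ and $\onenorm{v}\le \sqrt{s}$, and hence $v\in\sqrt{s}\,B_1^m\cap B_2^m$. Applying Lemma~\ref{lemma::bigcone} to $v$ gives $|v^T\Delta v|\le 4\delta$, and multiplying by $M^2$ yields
\[
|\up^T\Delta \up|\;\le\;4\delta M^2 \;=\;4\delta \max\{\twonorm{\up}^2,\,\onenorm{\up}^2/s\}\;\le\;4\delta\bigl(\twonorm{\up}^2+\onenorm{\up}^2/s\bigr),
\]
which is exactly the desired inequality.

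There is no real obstacle: the geometric heavy lifting (the embedding Lemma~\ref{eq::embedding}, convexity of the objective over extreme points of $\conv(E\cap S^{m-1})$, and the decoupling of the two copies of $\up$) has already been performed in the proof of Lemma~\ref{lemma::bigcone}, so only the simple homogeneity step remains. As a backup, if one preferred a proof that bypasses Lemma~\ref{lemma::bigcone}, one could instead partition $\up$ into consecutive blocks $T_0,T_1,T_2,\ldots$ of size $s$ ordered by decreasing magnitude, use $|\up_{T_i}^T\Delta \up_{T_j}|\le \delta\twonorm{\up_{T_i}}\twonorm{\up_{T_j}}$ (valid since each $\up_{T_i}$ has support of size $\le s$, so belongs to $\twonorm{\up_{T_i}}(E\cap S^{m-1})$), and combine with the standard shelling estimate $\sum_{i\ge 1}\twonorm{\up_{T_i}}\le \onenorm{\up}/\sqrt{s}$ to obtain $|\up^T\Delta\up|\le \delta(\twonorm{\up}+\onenorm{\up}/\sqrt{s})^2\le 2\delta(\twonorm{\up}^2+\onenorm{\up}^2/s)$. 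This is actually a factor of $2$ stronger than claimed, but the scaling route is cleaner and reuses what has already been proved.
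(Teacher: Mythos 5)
Your proposal is correct and is essentially the paper's own argument: the paper splits into the two cases $\up\in\Cone(s)$ and $\up\notin\Cone(s)$ and rescales $\up$ into $\sqrt{s}\,B_1^m\cap B_2^m$ by $\twonorm{\up}$ or by $\onenorm{\up}/\sqrt{s}$ accordingly, which is exactly your single rescaling by $M=\max\{\twonorm{\up},\onenorm{\up}/\sqrt{s}\}$ written as a case analysis. Your unified version (and the observation that it actually yields the slightly stronger bound $4\delta\max\{\twonorm{\up}^2,\onenorm{\up}^2/s\}$) is fine; no gap.
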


\begin{proof}
The lemma follows given that $\forall \up \in \R^m$, one of the following must hold:
\ben
\label{eq::cone-II}
\text{ if }  \up \in \Cone(s) \;\; \;
\abs{\up^T \Delta \up} & \le & 4 \delta \twonorm{\up}^2; \\
\label{eq::anticone}
\text{ otherwise }  \;\;\;
\abs{\up^T \Delta \up} & \le &  \frac{4\delta} 
{s}\onenorm{\up}^2,
\een
leading to the same conclusion in \eqref{eq::bigcone}.

We have shown~\eqref{eq::cone-II} in Lemma~\ref{lemma::bigcone}.
Let $\Cone(s)^c$ be the  complement set of $\cone(s)$ in $\R^{m}$. 
That is, we focus now on the set of vectors such that 
\bens
\Cone(s)^c := \{\up: \onenorm{\up} \ge \sqrt{s} \twonorm{\up}\}
\eens
and show that for $u = \sqrt{s} \frac{v}{\onenorm{v}}$,
\bens
\frac{\abs{v^T \Delta v} }{\onenorm{v}^2} 
& := & \inv{s} \abs{u^T \Delta u}  \le \inv{s} \delta.
\eens
Now, the last inequality holds by Lemma~\ref{lemma::bigcone} given that 
\bens
u \in (\sqrt{s} B_1^m \cap B_2^m) \subset 
2 \conv\left(\bigcup_{\size{J} \leq s} E_J \cap B_2^{m}\right) 
\eens 
and thus 
\bens
\frac{\abs{v^T \Delta v} }{\onenorm{v}^2} 
& \le & \inv{s} \sup_{u \in \sqrt{s} B_1^m \cap B_2^m}
 \abs{u^T \Delta u}  \le  \inv{s}4 \delta.
\eens
\end{proof}

\section{Proof of Corollary~\ref{coro::BC}}
\label{sec::appendLURE}
\begin{proofof2}
First we show that for all $\up \in \R^m$, \eqref{eq::conebound} holds.
It is sufficient to check that the condition~\eqref{eq::conecond} in
Lemma~\ref{lemma::bigcone} holds. Then, \eqref{eq::conebound} follows
from Lemma~\ref{lemma::bigconeII}: for $\up \in \R^m$, 
\ben
\label{eq::conebound}
\abs{\up^T \Delta \up} \le 4 \delta (\twonorm{\up}^2 + \inv{k} \onenorm{\up}^2) 
\le \frac{3}{8}\lambda_{\min}(A)  (\twonorm{\up}^2 + \inv{k} \onenorm{\up}^2).
\een
The Lower and Upper $\RE$ conditions thus immediately follow.
The Corollary is thus proved.
\end{proofof2}

\section{Proof of Theorem~\ref{thm::AD}}
\label{sec::proofofthmAD}
We first state the following preliminary results in 
Lemmas~\ref{lemma::normA}  and~\ref{lemma::orthogSp}; their proofs 
appear in Section~\ref{sec::proofofnormA}.
Throughout this section, the choice of $C = C_0/\sqrt{c'}$ satisfies the conditions on $C$ in Lemmas~\ref{lemma::normA}
and~\ref{lemma::orthogSp}, where recall $\min\{C_0, C_0^2\} \ge 4/c$ for $c$ as
defined in Theorem~\ref{thm::HW}.
For a set $J \subset \{1, \ldots, m\}$, denote $F_J=A^{1/2} E_J$, where recall
$E_J = \spin\{e_j: j \in J\}$.
Let $Z$ be an $n \times m$ random matrix with independent entries $Z_{ij}$ satisfying
$\E Z_{ij} = 0$, $1 = \E Z_{ij}^2 \le \norm{Z_{ij}}_{\psi_2} \leq K$. 
Let $Z_1, Z_2$ be independent copies of $Z$.
\begin{lemma}
\label{lemma::normA}
Suppose all conditions in Theorem~\ref{thm::AD} hold.
Let $$E  = \bigcup_{\abs{J}=k} E_J \cap S^{m-1}.$$
Suppose that for some $c' > 0$ and $\ve \le \inv{C}$, where $C = C_0/\sqrt{c'}$,
\ben
 \label{eq::ALocalkronsum}
r(B) := \frac{\tr(B)}{\twonorm{B}} & \ge & c' k K^4 \frac{\log(3e m/k \ve)}{\ve^2}.
\een
Then for all vectors $u, v \in E \cap S^{m-1}$, on event $\B_1$, 
where $\prob{\B_1} \ge 1- 2 \exp\left(-c_2\ve^2 \frac{\tr(B)}{K^4\twonorm{B}}\right)$ for $c_2 \ge 2$,
\bens
\abs{u^T Z^T B Z v - \E u^T Z^T B Z v } 
& \le & 4 C \ve \tr(B).
\eens
\end{lemma}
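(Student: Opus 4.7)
The plan is to control $\sup_{u,v \in E \cap S^{m-1}} |u^T \Delta v|$ where $\Delta := Z^T B Z - \tr(B) I_m$; note that $\E[Z^T B Z] = \tr(B) I_m$ by independence of the entries of $Z$, so $\E u^T Z^T B Z v = u^T v\, \tr(B)$. I would combine a pointwise Hanson--Wright tail bound with a standard $\varepsilon$-net argument over the union of coordinate subspaces $E$.

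For the pointwise bound, relabeling $\tilde Z = Z^T$ and $\tilde A = B$ puts $u^T Z^T B Z v$ into the form $u^T \tilde Z \tilde A \tilde Z^T v$ treated by the second inequality of Lemma~\ref{lemma::oneeventA}. For fixed $u, v \in S^{m-1}$ and any $t > 0$,
$$\prob{|u^T \Delta v| > t} \le 2\exp\!\Bigl(-c\min\bigl(\tfrac{t^2}{K^4 \fnorm{B}^2},\ \tfrac{t}{K^2 \twonorm{B}}\bigr)\Bigr).$$
Taking $t = 2C\ve\,\tr(B)$ with $\ve \le 1/C$, and using $\fnorm{B}^2 \le \twonorm{B}\tr(B)$ so that $\tr(B)^2/\fnorm{B}^2 \ge r(B)$, the subgaussian term is binding (since $K \ge 1$) and the tail collapses to $2\exp(-4cC^2 \ve^2 r(B)/K^4)$.

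For the net argument, pick a $1/4$-net $\mathcal{N}_J$ of $E_J \cap S^{m-1}$ of cardinality at most $9^k$ for each $J \subseteq \{1,\dots,m\}$ with $|J|=k$, and set $\mathcal{N} = \bigcup_J \mathcal{N}_J$; then $|\mathcal{N}| \le \binom{m}{k} 9^k \le (27 e m/k)^k$, so $\log|\mathcal{N}|^2 \le 2k\log(27em/k)$. Under the hypothesis $r(B) \ge c'kK^4 \log(3em/(k\ve))/\ve^2$ with $C = C_0/\sqrt{c'}$ and $C_0$ satisfying \eqref{eq::defineC0}, a short arithmetic check (dividing both sides by $\ve^2 r(B)/K^4$ and comparing $\log(27em/k)$ with $\log(3em/(k\ve))$ for $\ve \le 1/3$) shows that $4cC^2 \ve^2 r(B)/K^4$ exceeds $2\log|\mathcal{N}|^2 + c_2 \ve^2 r(B)/K^4$ for some $c_2 \ge 2$. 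A union bound then yields the event $\B_1 := \{\max_{\mathcal{N}\times \mathcal{N}} |u_0^T \Delta v_0| \le 2C\ve\, \tr(B)\}$ with probability at least $1 - 2\exp(-c_2 \ve^2 r(B)/K^4)$.

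Finally, approximation transfers the bound from $\mathcal{N}$ to $E \cap S^{m-1}$. Given $u \in E_J \cap S^{m-1}$ and $v \in E_{J'} \cap S^{m-1}$, pick nearest $u_0 \in \mathcal{N}_J$, $v_0 \in \mathcal{N}_{J'}$ with $\|u-u_0\|, \|v-v_0\| \le 1/4$. The crucial observation is that $u - u_0 \in E_J$ and $v - v_0 \in E_{J'}$ remain $k$-sparse, so after normalization they still lie in $E \cap S^{m-1}$. Setting $M_k := \sup_{u,v \in E \cap S^{m-1}} |u^T \Delta v|$ and expanding $u^T \Delta v - u_0^T \Delta v_0 = (u-u_0)^T \Delta v + u_0^T \Delta (v - v_0)$ gives $|u^T \Delta v - u_0^T \Delta v_0| \le (\|u-u_0\| + \|v-v_0\|)M_k \le \tfrac{1}{2} M_k$, hence $M_k \le 2\max_{\mathcal{N}\times \mathcal{N}} |u_0^T \Delta v_0| \le 4C\ve\, \tr(B)$ on $\B_1$. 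The main obstacle is not the method but the bookkeeping: forcing the final constant to be exactly $4C$ rigidly couples the net radius ($1/4$), the factor $2C$ in the pointwise $t$, and the lower bound on $C_0$ in~\eqref{eq::defineC0}, so there is little slack and each of the three choices must be tuned simultaneously.
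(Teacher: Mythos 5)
Your proof is correct and follows essentially the same route as the paper's: a pointwise Hanson--Wright bound via Lemma~\ref{lemma::oneeventA}, a union bound over a net of $\bigcup_{|J|\le k}E_J\cap S^{m-1}$, and an approximation step to pass from the net to the whole set (the paper uses an $\ve$-net with the $(1-\ve)^{-2}$ factor where you use a $1/4$-net with the absorb-into-the-supremum trick, but this is a cosmetic difference). One small quibble: with $t=2C\ve\tr(B)$ and only $\ve\le 1/C$, $K\ge 1$, the subexponential branch of the Hanson--Wright minimum is merely within a factor $2$ of the subgaussian branch rather than dominated by it, so your exponent should read $2cC^2\ve^2 r(B)/K^4$ rather than $4cC^2\ve^2 r(B)/K^4$ — a constant that the remaining slack in $c_2$ absorbs.
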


\begin{lemma}
\label{lemma::orthogSp}
Suppose that $\ve \le 1/C$, where $C$ is as defined in Lemma~\ref{lemma::normA}.
Suppose that \eqref{eq::ALocalkronsum} holds.
Let
\ben
\label{eq::spsetEF}
E = \bigcup_{\abs{J} =k} E_J \; \; \text{ and } \; \; F =
\bigcup_{\abs{J} =k} F_J.
\een
Then on event $\B_2$, 
where $\prob{\B_2} \ge 1- 2 \exp\left(-c_2\ve^2
  \frac{\tr(B)}{K^4\twonorm{B}}\right)$ for $c_2 \ge 2$,
we have for all vectors $u \in E \cap  S^{m-1}$ and $w \in F \cap  S^{m-1}$,
\bens
\abs{w^T Z_1^T B^{1/2} Z_2 u} 
& \le & 
\frac{C \ve \tr(B)}{(1-\ve)^2\twonorm{B}^{1/2}} \le
{4C \ve \tr(B)}/{\twonorm{B}^{1/2}}.
\eens
\end{lemma}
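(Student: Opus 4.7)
The strategy parallels that of Lemma~\ref{lemma::normA}: establish a pointwise subgaussian-vs-subexponential tail bound for the decoupled chaos $w^T Z_1^T B^{1/2} Z_2 u$ via a Hanson--Wright inequality, then lift it to a uniform bound over $E \cap S^{m-1}$ and $F \cap S^{m-1}$ by $\ve$-nets and successive approximation.

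For the pointwise step, I would fix unit vectors $u, w \in \R^m$ and write
\[
w^T Z_1^T B^{1/2} Z_2 u = x^T B^{1/2} y, \qquad x := Z_1 w \in \R^n,\ y := Z_2 u \in \R^n.
\]
Because the rows of $Z_1$ and $Z_2$ are independent, $x$ and $y$ are independent random vectors in $\R^n$ with independent, mean-zero, unit-variance coordinates whose $\psi_2$-norm is $\le C_1 K$ (a unit-norm linear combination of subgaussians). The decoupled Hanson--Wright bound~\eqref{eq::HWdecoupled} applied to the $n \times n$ matrix $B^{1/2}$ then gives
\[
\prob{\abs{w^T Z_1^T B^{1/2} Z_2 u} > t} \le 2 \exp\!\left(-c \min\!\left(\frac{t^2}{K^4 \tr(B)}, \frac{t}{K^2 \twonorm{B}^{1/2}}\right)\right),
\]
using $\fnorm{B^{1/2}}^2 = \tr(B)$ and $\twonorm{B^{1/2}} = \twonorm{B}^{1/2}$.

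Next, I would build $\ve$-nets $\mathcal{N}_E \subset E \cap S^{m-1}$ and $\mathcal{N}_F \subset F \cap S^{m-1}$ in the Euclidean metric. Since $E$ and $F$ are each unions of $\binom{m}{k}$ subspaces of dimension $k$---and $F_J = A^{1/2} E_J$ remains $k$-dimensional regardless of how $A^{1/2}$ distorts angles---a standard volumetric estimate gives $\abs{\mathcal{N}_E}, \abs{\mathcal{N}_F} \le (3em/(k\ve))^k$. Choosing the pointwise threshold $t = C\ve\,\tr(B)/\twonorm{B}^{1/2}$, the quadratic branch dominates in the Hanson--Wright exponent for $\ve \le 1/C$, yielding exponent $\asymp \ve^2 \tr(B)/(K^4 \twonorm{B})$. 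A union bound over $\mathcal{N}_E \times \mathcal{N}_F$ incurs a log-cardinality cost $2k \log(3em/(k\ve))$; by hypothesis~\eqref{eq::ALocalkronsum} this is absorbed into half the exponent, leaving
\[
\sup_{u_0 \in \mathcal{N}_E,\, w_0 \in \mathcal{N}_F} \abs{w_0^T Z_1^T B^{1/2} Z_2 u_0} \le C\ve\,\tr(B)/\twonorm{B}^{1/2}
\]
on an event $\B_2$ of probability at least $1 - 2\exp(-c_2 \ve^2 \tr(B)/(K^4 \twonorm{B}))$ with $c_2 \ge 2$, for an appropriate choice of $C = C_0/\sqrt{c'}$.

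Finally, I would extend from the nets to the full sphere sections by successive approximation: for each $u \in E \cap S^{m-1}$ I iteratively write $u = u_0 + \ve u_0^{(1)} + \ve^2 u_0^{(2)} + \cdots$, with each $u_0^{(i)}$ in a $\ve$-net of $E \cap B_2^m$ (legitimate because $u$ and all prior approximants lie in the common subspace $E_J$), and I expand $w$ analogously. Bilinearity then writes $w^T Z_1^T B^{1/2} Z_2 u$ as a double geometric series whose magnitude is bounded by $(1-\ve)^{-2}$ times the net supremum, yielding the first inequality; the second follows from $(1-\ve)^{-2} \le 4$ under $\ve \le 1/C \le 1/2$. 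The main technical care will be in balancing the union-bound logarithmic cost against the Hanson--Wright exponent via precisely the rank condition~\eqref{eq::ALocalkronsum}---the successive-approximation and covering arguments themselves are routine.
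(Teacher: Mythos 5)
Your proposal is correct and follows essentially the same route as the paper: a pointwise decoupled Hanson--Wright tail bound for the bilinear chaos with $\fnorm{B^{1/2}}^2=\tr(B)$ and $\twonorm{B^{1/2}}=\twonorm{B}^{1/2}$, $\ve$-nets of cardinality $\exp(k\log(3em/(k\ve)))$ over both $E\cap S^{m-1}$ and $F\cap S^{m-1}$, a union bound absorbed into half the exponent via \eqref{eq::ALocalkronsum}, and a successive-approximation step yielding the $(1-\ve)^{-2}$ factor. The only cosmetic difference is that you obtain the pointwise bound by collapsing to $x=Z_1w$, $y=Z_2u\in\R^n$ and applying decoupled Hanson--Wright to $B^{1/2}$ directly (which requires the mild extension of \eqref{eq::HWdecoupled} to independent but not identically distributed subgaussian vectors), whereas the paper vectorizes and applies it to $(u\otimes w)\otimes B^{1/2}$ via Lemma~\ref{lemma::oneeventA}; both give the identical tail.
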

In fact, the same conclusion holds for all $y, w \in F\cap S^{m-1}$; 
and in particular, for $B = I$, we have the following.
\begin{corollary}
\label{coro::tartan}
Suppose all conditions in 
Lemma~\ref{lemma::normA} hold.
Suppose that $F = A^{1/2} E$ for $E$ as defined in Lemma~\ref{lemma::normA}.
Let
\ben
\label{eq::BI} 
n & \ge &  c'k K^4 \frac{\log(3e m/k \ve)}{\ve^2}. 
\een
Then on event $\B_3$, 
where $\prob{\B_3} \ge 1- 2 \exp\left(-c_2\ve^2 n \inv{K^4}\right)$,
we have for all vectors $w,y \in F \cap S^{m-1}$ and  $\ve \le 1/C$ for $C$ is as defined in Lemma~\ref{lemma::normA},
\ben
\label{eq::wyFnorm}
\abs{y^T (\onen  Z^T Z - I) w } & \le & 4 C\ve.
\een
\end{corollary}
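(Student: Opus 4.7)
The plan is to mimic the proof of Lemma~\ref{lemma::normA} verbatim, but replace the role of $B$ by the identity matrix $I_n$ (so that $\tr(B) = n$, $\twonorm{B}=1$) and replace the role of the sparse set $E$ by the transformed set $F = A^{1/2}E$. Writing the quadratic form as $y^T Z^T Z w = \mvec{Z}^T (I_n \otimes M)\mvec{Z}$ with symmetrization $M = \tfrac12(wy^T + yw^T)$, one notes that for unit vectors $y, w \in S^{m-1}$ the spectral and Frobenius norms satisfy $\twonorm{I_n \otimes M} \le 1$ and $\fnorm{I_n \otimes M}^2 \le n$. The Hanson--Wright inequality (Theorem~\ref{thm::HW}) then gives, for each fixed pair $(y,w)$, the pointwise bound
\begin{equation*}
\prob{\bigl|y^T(\tfrac1n Z^T Z - I)w\bigr| > C\ve} \le 2\exp\!\bigl(-c_2\ve^2 n / K^4\bigr)
\end{equation*}
for $\ve$ below an absolute constant, with the same absolute constants $c_2, C$ as in Lemma~\ref{lemma::normA}.

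Next I would cover $F \cap S^{m-1}$ by an $\ve$-net $\mathcal N$ of cardinality at most $\binom{m}{k}(3/\ve)^k \le (3em/(k\ve))^k$, exactly as in the net step for $E$: each subspace $F_J = A^{1/2} E_J$ has dimension at most $k$, so $F_J \cap S^{m-1}$ admits an $\ve$-net of size $(3/\ve)^k$, and there are $\binom{m}{k}$ choices of support $J$. A union bound over the $|\mathcal N|^2$ pairs gives a total failure probability bounded by $2(3em/(k\ve))^{2k}\exp(-c_2\ve^2 n/K^4)$, which is $\le 2\exp(-c_2\ve^2 n/K^4)$ once the sample size lower bound~\eqref{eq::BI} holds (with the constant $c'$ calibrated to absorb the entropy term $2k\log(3em/(k\ve))$ into the exponent).

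Finally, I would extend the bound from the net to all of $F \cap S^{m-1}$ by a bilinear approximation argument. Given arbitrary $y, w \in F \cap S^{m-1}$, pick $\tilde y, \tilde w \in \mathcal N$ with $\twonorm{y - \tilde y}, \twonorm{w - \tilde w} \le \ve$, write
\begin{equation*}
y^T\!\bigl(\tfrac1n Z^T Z - I\bigr)w - \tilde y^T\!\bigl(\tfrac1n Z^T Z - I\bigr)\tilde w = (y-\tilde y)^T\!\bigl(\tfrac1n Z^T Z - I\bigr)w + \tilde y^T\!\bigl(\tfrac1n Z^T Z - I\bigr)(w - \tilde w),
\end{equation*}
and bound the residual by $2\ve \cdot \sup_{y',w' \in F\cap S^{m-1}} |y'^T(\tfrac1n Z^T Z - I)w'|$; rearranging gives $\sup \le 2(C\ve + 2\ve\sup)$, hence $\sup \le 4C\ve$ once $\ve$ is below the absolute threshold $1/C$ used throughout. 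This is the same geometric symmetrization trick that underlies Lemma~\ref{lemma::bigcone} and Lemma~\ref{lemma::normA}.

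The main obstacle is step three: one must ensure that the absorption of the residual into the supremum does not introduce any dependence on the spectrum of $A$, so that the final constant $4C$ is truly absolute. This works here because the entire argument is carried out intrinsically on $F \cap S^{m-1}$ (Euclidean geometry of the ambient $\R^m$), never passing through $u = A^{-1/2}y$ in $E$, so no factors of $\lambda_{\min}(A)^{-1}$ or $\lambda_{\max}(A)$ enter. Everything else—the pointwise Hanson--Wright bound, the combinatorial bound on the net, and the calibration of the sample-size constant $c'$—is a direct transcription of the corresponding steps in the proof of Lemma~\ref{lemma::normA} with $\tr(B)$ replaced by $n$ and $\twonorm{B}$ by $1$.
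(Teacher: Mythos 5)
Your proposal is correct and takes essentially the same route as the paper: the paper's own proof of this corollary simply reruns the proof of Lemma~\ref{lemma::normA} with $B=I_n$ (so $\tr(B)=n$, $\twonorm{B}=1$) and with the $\ve$-net taken over $F=A^{1/2}E$ instead of $E$, which is exactly your three steps (pointwise Hanson--Wright bound, union bound over a net of cardinality at most $\binom{m}{k}(3/\ve)^k$ built inside the $k$-dimensional subspaces $F_J$, and the standard self-bounding approximation argument to pass from the net to all of $F\cap S^{m-1}$). Your observation that the argument lives intrinsically in the Euclidean geometry of $F$ and therefore picks up no spectral factors of $A$ is the right reason the constant stays absolute, and the minor factor-of-two slack in your absorption inequality is harmless since $\ve\le 1/C$ with $C$ large.
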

We prove Lemmas~\ref{lemma::normA} and~\ref{lemma::orthogSp}
  and Corollary~\ref{coro::tartan} in Section~\ref{sec::proofofnormA}.
We are now ready to prove Theorem~\ref{thm::AD}.

\begin{proofof}{Theorem~\ref{thm::AD}}
Let
\bens
\lefteqn{
\Delta := \hat\Gamma_{A} -A := \onen X^TX - \onen  \hat\tr(B) I_{m} -A} \\
& = & (\onen  X_0^T X_0 -A)+  
 \onen \big(W^T X_0 + X_0^T W\big) + \onen \big(W^T W  - \hat\tr(B) I_{m}\big),
\eens
where recall $X_0 = Z_1 A^{1/2}$.
Notice that 
\bens
\lefteqn{
\abs{u^T(\hat\Gamma_A -A) \up} =  
\abs{u^T(X^T X - \hat\tr(B) I_{m} -A) \up} }\\
& \le &  
\abs{u^T  (\onen X_0^T X_0 - A) \up} + 
\abs{u^T \onen (W^T X_0 + X_0^T W) \up} + 
\abs{u^T (\onen W^T W - \frac{\hat\tr(B)}{f} I_{m})\up}\\
& \le &  
\abs{u^T A^{1/2}\onen Z_1^T Z_1 A^{1/2} \up - u^T A \up} +
\abs{u^T \onen (W^T X_0 + X_0^T W) \up} \\
&& +
\abs{u^T (\onen Z_2^T B Z_2 - \tau_B I_{m})\up} + 
\onen \abs{\hat\tr(B) - \tr(B)} \abs{u^T \up} =: I + II + III + IV.
\eens
For $u  \in E \cap S^{m-1}$, define $h(u) := \frac{A^{1/2}
  u}{\twonorm{A^{1/2} u}}$.
The conditions in~\eqref{eq::ALocalkronsum} and~\eqref{eq::BI} hold
for $k$.

We first bound the middle term as follows.
Fix $u, \up \in E \cap S^{m-1}$.
Then on event $\B_2$, for $\Upsilon = Z_1^T B^{1/2} Z_2$,
\bens
\abs{u^T (W^T X_0 + X_0^T W) \up} & = & 
 \abs{u^T Z_2^T B^{1/2} Z_1 A^{1/2} \up + u^T A^{1/2} Z_1^T B^{1/2} Z_2 \up} \\ 
& \le & 
\abs{u^T \Upsilon^T h(v)}\twonorm{A^{1/2} v}
 + \abs{h(u)^T \Upsilon \up} \twonorm{A^{1/2} u}\\
& \le &  2\max_{w \in F \cap S^{m-1}, \up \in E \cap S^{m-1}}
\abs{w^T \Upsilon \up} \rho_{\max}^{1/2}(k, A) \\
& \le & 8 C \ve \tr(B)\left(\frac{\rho_{\max}(k, A)}{\twonorm{B}}\right)^{1/2}.
 \eens
We now use Lemma~\ref{lemma::normA} to bound both $I$ and $III$.
We have for $C$ as defined in Lemma~\ref{lemma::normA}, on event $\B_1 \cap \B_3$,
\bens
 \abs{u^T (Z_2^T B Z_2 - \tr(B)I_{m})\up} 
& \le  4 C \ve \tr(B).
\eens 
Moreover, by Corollary~\ref{coro::tartan}, we have on event $\B_3$, for all 
 $u, v \in E \cap S^{m-1}$,
\bens
\abs{u^T  (\onen X_0^T X_0 - A) \up} 
& = & 
\abs{u^T A^{1/2} Z^T Z A^{1/2} \up - u^T A \up} \\
& = & 
\abs{h(u)^T (\onen Z^T Z -I) h(\up)} \twonorm{A^{1/2} u} \twonorm{A^{1/2} \up} \\
& \le &
\onen \max_{w, y \in F \cap S^{m-1}} \abs{w^T (Z^T Z -I) y} \rho_{\max}(k, A) \\
& \le & 4 C \ve  \rho_{\max}(k, A).
\eens
Thus we have on event $\B_1 \cap \B_2 \cap \B_3$ and for $\tau_B := \tr(B)/n$,
\bens
I + II + III 
& \le & 
 4 C \ve \left(\rho_{\max}(k, A)  +  
2 \tau_B \left(\frac{\rho_{\max}(k, A)}{\twonorm{B}}\right)^{1/2} + \tau_B \right) \\
& \le &  8 C \ve\left(\tau_B +  \rho_{\max}(k, A) \right).
\eens
On event $\B_6$, we have for $D_1$ as defined in Lemma~\ref{lemma::trBest},
\bens
IV \le \abs{\hat\tau_B - \tau_B} \le  2 C_0 D_1 K^2 \sqrt{\frac{\log
    m}{m n}}.
\eens
The theorem thus holds by the union bound.
\end{proofof}

\section{Proof of Lemma~\ref{lemma::oneeventA}} 
\label{sec::proofoftensorA}
Lemma~\ref{lemma::Auv} is a well-known fact.
\begin{lemma}
\label{lemma::Auv}
Let $A_{u w} := (u \otimes w) \otimes A, \; \text{where } \; u, w \in
\Sp^{m-1}$ for $m \ge 2$. Then $\twonorm{A_{uw}}  \le \twonorm{A} \; \text{ and } \;
\fnorm{A_{uw}}  \le \fnorm{A}.$
\end{lemma}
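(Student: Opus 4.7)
The claim asserts that the Kronecker product $(u\otimes w)\otimes A$ inherits both its operator norm and Frobenius norm from $A$ when $u$ and $w$ are unit vectors. The plan is to invoke the two standard multiplicative identities for Kronecker products, namely $\twonorm{M\otimes N}=\twonorm{M}\,\twonorm{N}$ and $\fnorm{M\otimes N}=\fnorm{M}\,\fnorm{N}$, after verifying that the ``small'' factor $u\otimes w$ has both norms equal to $1$.

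First, I would identify $u\otimes w$ with the rank-one matrix $uw^T\in\R^{m\times m}$ (so that the outer Kronecker product $(u\otimes w)\otimes A$ is an unambiguous matrix-matrix Kronecker product, of size $m^2\times m^2$). Since $u,w\in S^{m-1}$, the operator norm of $uw^T$ equals its largest singular value, namely $\twonorm{u}\,\twonorm{w}=1$, and its Frobenius norm satisfies $\fnorm{uw^T}^2=\tr\!\left(wu^Tuw^T\right)=\twonorm{u}^2\twonorm{w}^2=1$.

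Second, applying the multiplicative identities with $M=u\otimes w$ and $N=A$ yields
\begin{equation*}
\twonorm{A_{uw}}=\twonorm{(u\otimes w)\otimes A}=\twonorm{u\otimes w}\cdot\twonorm{A}=\twonorm{A},
\end{equation*}
\begin{equation*}
\fnorm{A_{uw}}=\fnorm{(u\otimes w)\otimes A}=\fnorm{u\otimes w}\cdot\fnorm{A}=\fnorm{A},
\end{equation*}
which are in fact equalities, strictly sharper than the stated $\le$ bounds.

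There is no real obstacle: the result is a direct application of the multiplicativity of Kronecker products under the spectral and Hilbert--Schmidt norms. The only minor care needed is to fix a convention so that $(u\otimes w)\otimes A$ is interpreted consistently as a matrix (via $uw^T$), which allows both multiplicative identities to be applied without ambiguity.
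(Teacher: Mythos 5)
Your proof is correct and is the standard argument: the paper itself states this lemma without proof, calling it ``a well-known fact,'' and your route via $\twonorm{uw^T}=\fnorm{uw^T}=1$ together with multiplicativity of the spectral and Frobenius norms under Kronecker products is exactly the intended justification (and, as you note, actually yields equality).
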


\begin{proofof}{Lemma~\ref{lemma::oneeventA}}
\silent{
Let $x_1, \ldots, x_m, x'_1, \ldots, x'_m \in \R^n$ be the column vectors
$Z_1, Z_2$ respectively.
We can re-write the quadratic forms as follows:
\bens
u^T Z_1^T B^{1/2} Z_2 w
& = & \sum_{i,j=1, m} u_i w_j x_i B^{1/2} x'_j  \\
& = & \mvec{Z_1}^T \big((u \otimes w) \otimes B^{1/2} \big)\mvec{Z_2} \\
& =: & \mvec{Z_1}^T B_{uw}^{1/2} \mvec{Z_2} \\
\text{and } \; \; u^T Z^T B Z w
& = & \mvec{Z}^T \big((u \otimes w) \otimes B \big)\mvec{Z}
=:\mvec{Z}^T B_{uw} \mvec{Z},
\eens
where clearly by independence of $Z_1, Z_2$,
\bens
\E \mvec{Z_1}^T \big((u \otimes w) \otimes B^{1/2}\big) \mvec{Z_2} & = & 0 \\
\E \mvec{Z}^T \big((u \otimes u) \otimes B\big) \mvec{Z} & = & 
\tr\big((u \otimes u) \otimes B\big) = \tr(B).
\eens
Thus we invoke Theorem~\ref{thm::HW},~\eqref{eq::HWdecoupled} and Lemma~\ref{lemma::Auv} to  conclude
\bens
\prob{\abs{u^T Z_1^T B^{1/2} Z_2 w} >  t}
& \le & 
2 \exp\left(-c\min\left(\frac{t^2}{K^4\fnorm{B^{1/2}_{uw}}^2},
 \frac{t}{K^2 \twonorm{B_{uw}^{1/2}}}\right)\right)  \\
& \le & 
2 \exp\left(-c\min\left(\frac{t^2}{K^4\tr(B)},
 \frac{t}{K^2 \twonorm{B^{1/2}}}\right)\right)   \\
\prob{\abs{u^T Z^T B Z w - \E u^T Z^T B Z w } >  t}
& \le & 
2 \exp\left(-c\min\left(\frac{t^2}{K^4\fnorm{B_{uw}}^2},
 \frac{t}{K^2 \twonorm{B_{uw}}}\right)\right)  \\
& \le & 
2 \exp\left(-c\min\left(\frac{t^2}{K^4\fnorm{B}^2},
 \frac{t}{K^2 \twonorm{B}}\right)\right).
\eens}
Let $z_1, \ldots, z_n, z'_1, \ldots, z'_n \in \R^m$ be the row vectors
$Z_1, Z_2$ respectively.
Notice that we can write the quadratic form as follows:
\bens
u^T Z_1 A^{1/2} Z_2^T w
& = &
\sum_{i,j=1, m} u_i w_j z_i A^{1/2} z'_j  \\
& = & \mvec{Z_1^T}^T \big((u \otimes w) \otimes A^{1/2}
\big)\mvec{Z_2^T} \\
& =: & \mvec{Z_1^T}^T A_{uw}^{1/2} \mvec{Z_2^T}, \\
u^T Z A Z^T w
& = & \mvec{Z^T}^T \big((u \otimes w) \otimes A \big)\mvec{Z^T} \\
& =: & \mvec{Z^T}^T A_{uw} \mvec{Z^T}
\eens
where clearly by independence of $Z_1, Z_2$,
\bens
\E \mvec{Z_1^T}^T \big((u \otimes w) \otimes A^{1/2}\big) \mvec{Z_2^T}
& = & 0, \; \; \text{ and }  \\
\E \mvec{Z}^T \big((u \otimes u) \otimes A\big) \mvec{Z} & = & 
\tr\big((u \otimes u) \otimes A\big) = \tr(A).
\eens
Thus we invoke~\eqref{eq::HWdecoupled} and
Lemma~\ref{lemma::Auv} to show the concentration bounds on event
$\{\abs{u^T Z_1 A^{1/2} Z_2^T w} >  t\}$:
\bens
\prob{
 \abs{u^T Z_1 A^{1/2} Z_2^T w} >  t}
& \le & 
2 \exp\left(-\min\left(\frac{t^2}{K^4\fnorm{A^{1/2}_{uw}}^2},
 \frac{t}{K^2 \twonorm{A_{uw}^{1/2}}}\right)\right)  \\
& \le &
2 \exp\left(-\min\left(\frac{t^2}{K^4\tr(A)},
 \frac{t}{K^2 \twonorm{A^{1/2}}}\right)\right).
\eens
Similarly, we have  by Theorem~\ref{thm::HW} and Lemma~\ref{lemma::Auv},
\bens
\lefteqn{
\prob{\abs{u^T Z A Z^T w - \E u^T Z A Z^T w } >  t}}\\
& \le & 
2 \exp\left(-c\min\left(\frac{t^2}{K^4\fnorm{A_{uw}}^2},
 \frac{t}{K^2 \twonorm{A_{uw}}}\right)\right)  \\
& \le & 
2 \exp\left(-c\min\left(\frac{t^2}{K^4\fnorm{A}^2},
 \frac{t}{K^2 \twonorm{A}}\right)\right).
\eens
The Lemma thus holds.
\end{proofof}

\section{Proof of  Lemmas~\ref{lemma::normA} and~\ref{lemma::orthogSp}
  and Corollary~\ref{coro::tartan}}
\label{sec::proofofnormA}
Throughout the following proof, we denote by $r(B) =
\frac{\tr(B)}{\twonorm{B}}$. Let $\ve \le \inv{C}$ where $C$ is large enough so that 
$ c c' C^2 \ge 4$, and hence the choice of $C = C_0/\sqrt{c'}$
satisfies our need.

\begin{proofof}{Lemma~\ref{lemma::normA}}
First we prove concentration bounds for all pairs of $u, v \in \Pi'$, where 
$\Pi' \subset \Sp^{m-1}$ is an $\ve$-net of $E$. 
Let $t  = C K^2 \ve \tr(B)$.
We have by Lemma~\ref{lemma::oneeventA}, and the union bound,
\bens
\lefteqn{
\prob{\exists u, v \in \Pi', \; 
\abs{u^T Z^T B Z v - \E u^T Z^T B Z v} >  t}} \\
& \leq &
2 \abs{\Pi'}^2 \exp \left[- c\min\left(\frac{t^2}{K^4  \fnorm{B}^2}, 
\frac{t}{K^2 \twonorm{B}} \right)\right] \\
& \leq &
2 \abs{\Pi'}^2
\exp \left[- c\min\left(C^2, \frac{C K^2}{\ve} \right)\frac{\ve^2
    r(B)}{K^4}\right] \\
& \le &  2 \exp\left(-c_2\ve^2 r(B) /K^4\right),
\eens
where we use the fact that $\fnorm{B}^2 \le  \twonorm{B}\tr(B)$ and 
\bens
\abs{\Pi'} \le {m \choose k}(3/\ve)^k \le \exp(k\log(3 e m/k\ve)),
\eens
while 
\bens
c \min\left(C^2, \frac{C K^2}{\ve} \right) \ve^2
\frac{r(B)}{K^4} & = & c C^2 \ve^2 \frac{\tr(B)}{\twonorm{B} K^4} \\
&\ge & c C_0^2 k \log\big(\frac{3e m}{k \ve}\big)
\ge 4k\log\big(\frac{3e m}{k \ve}\big).
\eens
Denote by $\B_2$ the event such that for $\Lambda := \inv{\tr(B)} (Z^T B Z - I)$, 
\bens
\sup_{u, v \in \Pi'} \abs{v^T \Lambda u} 
& \le & C  \ve =:r'_{k,n}
\eens
holds. 
A standard approximation argument shows that under $\B_2$ and
for $\ve \le 1/2$, 
\ben
\label{eq::sphereL}
\sup_{x, y\in \Sp^{m-1} \cap E} \abs{y^T \Lambda  x} 
\le \frac{r'_{k,n}}{(1-\ve)^2} \le 4 C  \ve.
\een
The lemma is thus proved.
\end{proofof}

\begin{proofof}{Lemma~\ref{lemma::orthogSp}}
By Lemma~\ref{lemma::oneeventA}, we have 
for $t = C \ve \tr(B)/\twonorm{B}^{1/2}$ for $C = C_0/\sqrt{c'}$,
\bens
\prob{\abs{w^T Z_1^T B^{1/2} Z_2 u} >  t}
& \le &
\exp\left(-c\min\left(\frac{C^2 \frac{\tr(B)^2}{\twonorm{B}} \ve^2}
{K^4\tr(B)}, \frac{C\ve \tr(B)}{K^2\twonorm{B}}
\right)\right) \\
& \le & 
2 \exp\left(-c\min\left(\frac{C^2\ve^2 r_B}{K^4}, \frac{C\ve r_B}{K^2}\right)\right) \\
& \le & 
2 \exp\left(-c \min\left(C^2, \frac{CK^2}{\ve} \right) \ve^2 r_B/K^4\right).
\eens
Choose an $\ve$-net $\Pi' \subset S^{m-1}$ such that
\ben
\label{eq::Enet}
\Pi' = \bigcup_{\abs{J} = k} \Pi'_{J} \; \; \text{ where } \;\; 
\Pi'_{J} \subset E_J \cap S^{m-1} 
\een
is an $\ve$-net for $E_J \cap S^{m-1}$ and
\bens
\abs{\Pi'} \le {m \choose k}(3/\ve)^k \le \exp(k\log(3 e m/k\ve)).
\eens
Similarly, choose $\ve$-net $\Pi$ of $F \cap S^{m-1}$ of size at most 
$\exp(k\log(3 e m/k\ve))$.
By the union bound and Lemma~\ref{lemma::oneeventA}, and for $K^2 \ge 1$,
\bens
\lefteqn{
\prob{\exists w \in \Pi, u \in \Pi'\; s.t. \; 
\abs{w^T Z_1^T B^{1/2} Z_2 u} 
\ge C \ve{\tr(B)}/{\twonorm{B}^{1/2}}}}\\
& \le &
\abs{\Pi'} \abs{\Pi}  2 \exp\left(-c \min\left(C K^2/\ve, C^2\right) \ve^2 r_B/K^4\right)\\
& \le & 
\exp\left(2 k \log(3em/k\ve)\right) 2 \exp\left(-cC^2 \ve^2 r_B/K^4 \right) \\
& \le & 
2 \exp\left(-c_2\ve^2 r_B/K^4\right),
\eens
where $C$ is large enough such that 
$c c' C^2 := C' > 4$ and  for $\ve \le \inv{C}$,
$$c \min\left(C K^2/\ve, C^2\right) \ve^2\frac{\tr(B)}{\twonorm{B}K^4} 
\ge C' k \log(3 e  m/k\ve)
\ge 4 k \log(3 e  m/k\ve).$$ 
Denote by $\U := Z_1^T B^{1/2} Z_2$.
A standard approximation argument shows that if 
\bens
\sup_{w \in \Pi, u \in \Pi'} \; \; \abs{w^T \U u} 
\le C \ve \frac{\tr(B)}{\twonorm{B}^{1/2}} =: r_{k,n},
\eens
an event which we denote by $\B_2$, then for all $u\in E$ and $w \in F$,
\ben
\label{eq::complete}
\abs{w^T Z_1^T B^{1/2} Z_2 u} \le \frac{r_{k,n}}{(1-\ve)^2}.
\een
The lemma thus holds for $c_2 \ge C'/2 \ge 2$.
\end{proofof}

\begin{proofof}{Corollary~\ref{coro::tartan}}
Clearly~\eqref{eq::wyFnorm} implies that \eqref{eq::ALocalkronsum}
holds for $B=I$.
Clearly~\eqref{eq::BI}  holds following the analysis of 
Lemma~\ref{lemma::normA} by setting $B = I$, while replacing event $\B_1$ with
$\B_3$, which denotes an event such that 
\bens
\sup_{u, v \in \Pi} \onen \abs{v^T (Z^T Z - I) u} 
& \le &  C \ve.
\eens
The rest of the proof follows by replacing $E$ with $F$ everywhere.
The corollary thus holds.
\end{proofof}

\bibliography{subgaussian}

\end{document}